\newtheorem{theorem}{Theorem}
\newtheorem*{theorem*}{Theorem}
\newtheorem*{proof*}{proof}
\newtheorem{definition}[theorem]{Definition}
\newtheorem*{definition*}{Definition}
\newtheorem*{assumption*}{Assumption}
\newtheorem{lemma}[theorem]{Lemma}
\newtheorem*{lemma*}{Lemma}
\newtheorem{proposition}[theorem]{Proposition}
\newtheorem*{proposition*}{Proposition}
\newtheorem*{corollary*}{Corollary}
\renewcommand{\tilde}{\widetilde}
\renewcommand{\hat}{\widehat}
\def\E{\mathbb{E}}
\def\P{\mathbb{P}}
\def\R{\mathbb{R}}
\def\Z{\mathbb{Z}}
\def\calA{\mathcal{A}}
\def\calB{\mathcal{B}}
\def\calC{\mathcal{C}}
\def\calE{\mathcal{E}}
\def\calF{\mathcal{F}}
\def\calG{\mathcal{G}}
\def\calN{\mathcal{N}}
\def\calR{\mathcal{R}}
\def\calS{\mathcal{S}}
\def\calU{\mathcal{U}}
\def\calV{\mathcal{V}}
\def\calX{\mathcal{X}}
\def\calY{\mathcal{Y}}
\def\calZ{\mathcal{Z}}
\newcommand{\cmark}{\ding{51}}
\newcommand{\xmark}{\ding{55}}
\newcommand{\one}{\mbox{(i)}}
\newcommand{\two}{\mbox{(i\hspace{-.1em}i)}}
\newcommand{\three}{\mbox{(i\hspace{-.1em}i\hspace{-.1em}i)}}
\newcommand{\sign}{\mathrm{sgn}}
\DeclareMathOperator*{\argmax}{arg\,max}
\newcommand{\ind}[1]{\mathds{1}_{\{#1\}}}
\renewcommand{\d}{\mathrm{d}}
\newcommand{\dx}{\d{x}}
\def\tp{\mathsf{TP}}
\def\fn{\mathsf{FN}}
\def\fp{\mathsf{FP}}
\def\tn{\mathsf{TN}}
\begin{document}

\twocolumn[

\runningtitle{Calibrated Surrogate Maximization of Linear-fractional Utility in Binary Classification}

\aistatstitle{Calibrated Surrogate Maximization of Linear-fractional Utility \\ in Binary Classification}

\aistatsauthor{
  Han Bao \And
  Masashi Sugiyama
}

\aistatsaddress{
  The University of Tokyo \\ RIKEN AIP \\
  \href{mailto:tsutsumi@ms.k.u-tokyo.ac.jp}{\texttt{tsutsumi@ms.k.u-tokyo.ac.jp}} \And
  RIKEN AIP \\ The University of Tokyo \\
  \href{mailto:sugi@k.u-tokyo.ac.jp}{\texttt{sugi@k.u-tokyo.ac.jp}}
}
]

\begin{abstract}
  Complex classification performance metrics such as the F${}_\beta$-measure and Jaccard index are often used,
  in order to handle class-imbalanced cases such as information retrieval and image segmentation.
  These performance metrics are not decomposable, that is, they cannot be expressed in a per-example manner,
  which hinders a straightforward application of M-estimation widely used in supervised learning.
  In this paper, we consider \emph{linear-fractional metrics}, which are a family of classification performance metrics that encompasses many standard ones such as the F${}_\beta$-measure and Jaccard index,
  and propose methods to directly maximize performances under those metrics.
  A clue to tackle their direct optimization is a \emph{calibrated surrogate utility},
  which is a tractable lower bound of the true utility function representing a given metric.
  We characterize sufficient conditions which make the surrogate maximization coincide with the maximization of the true utility.
  Simulation results on benchmark datasets validate the effectiveness of our calibrated surrogate maximization
  especially if the sample sizes are extremely small.
\end{abstract}

\section{Introduction}
\label{sec:introduction}

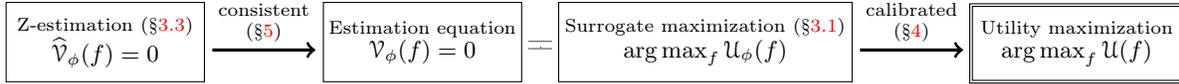
\begin{figure*}[t]
    \centering
    \scriptsize
    \begin{tikzpicture}
        \draw (0pt, 0pt) rectangle (75pt, 30pt)
            node [midway, align=center] {
                Z-estimation (\S\ref{sec:gradient-direction}) \\
                \footnotesize
                $\hat\calV_\phi(f) = 0$
            };
        \draw (120pt, 0pt) rectangle (195pt, 30pt)
            node [midway, align=center] {
                Estimation equation \\
                \footnotesize
                $\calV_\phi(f) = 0$
            };
        \draw (209pt, 0pt) rectangle (320pt, 30pt)
            node [midway, align=center] {
                Surrogate maximization (\S\ref{sec:surrogate-utility}) \\
                \footnotesize
                $\argmax_f \calU_\phi(f)$
            };
        \draw [double] (365pt, 0pt) rectangle (445pt, 29pt)
            node [midway, align=center] {
                Utility maximization \\
                \footnotesize
                $\argmax_f \calU(f)$
            };

        \draw [->, very thick] (78pt, 12pt) -- (117pt, 12pt)
            node [above, midway, align=center] {
                consistent \\ (\S\ref{sec:consistency-analysis})
            };

        \draw (202pt, 13pt) node {\Large =} -- (202pt, 13pt);

        \draw [->, very thick] (323pt, 12pt) -- (362pt, 12pt)
            node [above, midway, align=center] {
                calibrated \\ (\S\ref{sec:calibration})
            };
    \end{tikzpicture}
    \caption{
        Overview of this work.
        Intuitively, we can obtain the utility maximizer by solving $\hat\calV_\phi(f) = 0$.
    }
    \label{fig:overview}
\end{figure*}

Binary classification, one of the main focuses in machine learning, is a problem to predict binary responses for input covariates.
Classifiers are usually evaluated by the \emph{classification accuracy},
which is the expected proportion of correct predictions.
Since the accuracy cannot evaluate classifiers appropriately
under class imbalance~\citep{Menon:2013} or in the presence of label noises~\citep{Menon:2015},
alternative performance metrics have been employed
such as the F${}_\beta$-measure~\citep{Rijsbergen:1974,Joachims:2005,Nan:2012,Koyejo:2014}, Jaccard index~\citep{Koyejo:2014,Berman:2018}, and balanced error rate (BER)~\citep{Brodersen:2010,Menon:2013,Menon:2015,Charoenphakdee:2019}.
Once a performance metric is given, it is a natural strategy to optimize the performance of classifiers directly under the given performance metric.
However, the alternative performance metrics have difficulty in direct optimization in general,
because they are non-decomposable, for which per-example loss decomposition is unavailable.
In other words, the M-estimation procedure~\citep{vdG:2000} cannot be applied,
which makes the optimization of non-decomposable metrics hard.

One of the earliest works tackling the non-traditional metrics~\citep{Koyejo:2014} generalized performance metrics into the linear-fractional metrics,
which are the linear-fractional form of entries in the confusion matrix,
and encompasses the BER, F${}_\beta$-measure, Jaccard index, Gower-Legendre index~\citep{Gower:1986,Natarajan:2016}, and weighted accuracy~\citep{Koyejo:2014}.
\citet{Koyejo:2014} formulated the optimization problem in two ways.
One is a plug-in rule~\citep{Koyejo:2014,Narasimhan:2014,Yan:2018} to estimate the class-posterior probability and its optimal threshold,
and the other is an iterative weighted empirical risk minimization approach~\citep{Koyejo:2014,Parambath:2014} to find a better cost with which the minimizer of the cost-sensitive risk~\citep{Scott:2012} achieves higher utilities.
Although they provide statistically consistent esitmators,
the former suffers from high sample complexity due to the class-posterior probability estimation,
while the latter is computationally demanding because of iterative classifier training.

Our goal is to seek for computationally more efficient procedures to directly optimize the linear-fractional metrics,
without sacrificing the statistical consistency.
Specifically, we provide a novel calibrated surrogate utility which is a tractable lower bound
of the true utility representing the metric of our interest.
The surrogate maximization is formulated as the combination of concave and quasi-concave programs,
which can be optimized efficiently.
Then, we derive sufficient conditions on the surrogate calibration,
under which the surrogate maximization implies the maximization of the true utility.
In addition, we show the consistency of the empirical estimation procedure based on the theory of Z-estimation~\citep{vdV:2000}.
An overview of our proposed method is illustrated in Fig.~\ref{fig:overview}.

\textbf{Contributions: }
\one~\emph{Surrogate calibration} (Sec.~\ref{sec:calibration}):
We propose a tractable lower bound of the linear-fractional metrics with calibration conditions,
which guarantee that the surrogate maximization implies the maximization of the true utility.
This approach is model-agnostic differently from many previous approaches~\citep{Koyejo:2014,Narasimhan:2014,Narasimhan:2015,Yan:2018}.
\two~\emph{Efficient gradient-based optimization} (Secs.~\ref{sec:hybrid-optimization} and~\ref{sec:gradient-direction}):
The surrogate utility has affinity with gradient-based optimization
because of its non-vanishing gradient and an unbiased estimator of the gradient direction.
Even though the linear-fractional objective does not admit concavity in general,
our proposed algorithm is a two-step approach combining concave and quasi-concave programs
and hence computationally efficient.
\three~\emph{Consistency analysis} (Sec.~\ref{sec:consistency-analysis}):
The estimator obtained via the surrogate maximization with a finite sample is shown to be consistent
to the maximizer of the expected utility.
\section{Preliminaries}
\label{sec:background}

Throughout this work, we focus on binary classification.
Let $[n] \doteq \{1, \dots, n\}$.
Let $\ind{A} \doteq 1$ if the predicate $A$ holds and $0$ otherwise.
Let $\calX \subset \R^d$ be a feature space and $\calY = \{\pm 1\}$ be the label space.
We assume that a sample $\calS \doteq \{(x_i, y_i)\}_{i=1}^n \subset \calX \times \calY$ independently follows the joint distribution $\P$ with a density $p$.
We often split $\calS$ into two independent samples $\calS_0 = \{(x_i, y_i)\}_{i=1}^{m}$ and $\calS_1 = \{(x_i, y_i)\}_{i=m+1}^n$.
Usually, $m = \lfloor \frac{n}{2} \rfloor$.
For a function $h: \calX \times \calY \to \R$,
we write $\E[h(X,Y)] = \int_{\calX \times \calY} h(X, Y) \d\P$.
An expectation with respect to $X$ is written as $\E_X[h(X)] \doteq \int_\calX h(X) \d\P_\calX$ for a function $h: \calX \to \R$,
where $\P_\calX$ denotes the $\calX$-marginal distribution.
A classifier is given as a function $f: \calX \to \R$,
where $\sign(f(\cdot))$ determines predictions.
Here we adopt the convention $\sign(0) = -1$.
Let $\calF \subset \R^\calX$ be a hypothesis set of classifiers.
Let $\pi \doteq \P(Y = +1)$ and $\eta(X) \doteq \P(Y = +1 | X)$ be the class-prior/-posterior probabilities of $Y = +1$, respectively.
The 0/1-loss is denoted as $\ell(t) \doteq \ind{t \leq 0}$,
while $\phi: \R \to \R_{\geq 0}$ denotes a surrogate loss.
The norm $\|\cdot\|$ without a subscript is the $\mathbb{L}^2$-norm.
For a set $\calA \subset \calF$, denote $\calA^c$ as the complementary set of $\calA$, namely, $\calA^c \doteq \calF \setminus \calA$.

The following four quantities are focal targets in binary classification:
the true positives ($\tp$), false negatives ($\fn$), false positives ($\fp$), and true negatives ($\tn$).
\begin{definition}[Confusion matrix]
    Given a classifier $f \in \calF$ and a distribution $\P$,
    its confusion matrix is defined as
    $C(f, \P) \doteq [\tp, \fn; \fp, \tn]$,
    where
    \begin{align*}
        \tp(f, \P) &\doteq \P(Y = +1, \sign(f(X)) = +1), \\
        \fn(f, \P) &\doteq \P(Y = +1, \sign(f(X)) = -1), \\
        \fp(f, \P) &\doteq \P(Y = -1, \sign(f(X)) = +1), \\
        \tn(f, \P) &\doteq \P(Y = -1, \sign(f(X)) = -1).
    \end{align*}
\end{definition}
$\fn$ and $\tp$ as well as $\tn$ and $\fp$ can be transformed to each other:
$\fn(f, \P) = \pi - \tp(f, \P)$ and $\tn(f, \P) = (1 - \pi) - \fp(f, \P)$.
They can be expressed with $\ell$ and $\eta$, such as $\tp(f, \P) = \E_X[\ell(-f(X))\eta(X)]$.
The goal of binary classification is to obtain a classifier that ``maximizes'' $\tp$ and $\tn$,
while keeping $\fp$ and $\fn$ as ``low'' as possible.
Classifiers are evaluated by performance metrics that trade off those four quantities.
Performance metrics need to be chosen based on user's preference on the confusion matrix~\citep{Sokolova:2009,Menon:2015}.
In this work, we focus on the following family of utilities
representing the linear-fractional metrics.
\begin{definition}[Linear-fractional utility\footnote{
    As mentioned by \citet{Dembczynski:2017}, there is a dichotomy in the definition of performance metrics: population utility (PU) and expected test utility (ETU).
    We adopt PU, which is defined as the linear-fractional transform of the population confusion matrix in this context.
    This is convenient to avoid estimating $\eta$ compared to ETU.
}]
    \label{def:metric}
    A \emph{linear-fractional utility} $\calU: \calF \to [0, 1]$ is defined as
    \begin{align}
        \calU(f) \doteq
        \frac{\E_X[W_{0}(f(X), \eta(X))]}{\E_X[W_{1}(f(X), \eta(X))]}
        ,
        \label{eq:generalized-metric}
    \end{align}
    where $W_{0}, W_{1}: \R \times [0, 1] \to \R$ are class-conditional score functions given as
    \begin{align*}
        W_{k}(\xi, q) \doteq a_{k,+1}\ell(-\xi)q + a_{k,-1}\ell(-\xi)(1 - q) + b_k,
    \end{align*}
    and $a_{0,+1} > 0, a_{0,-1} \leq 0, b_0 \in \R, a_{1,+1} \geq 0, a_{1,-1} \geq 0, b_1 \in \R$ are constants
    such that $0 \leq \calU(f) \leq 1$ ($\forall f$).
\end{definition}
The class-conditional score functions correspond to a linear-transformation of $\tp$ and $\fp$:
$\E_X[W_{k}(f(X), \eta(X))] = a_{k,+1}\tp(f, \P) + a_{k,-1}\fp(f, \P) + b_k$.
Examples of $\calU$ are shown in Tab.~\ref{tab:generalized-performance-metrics}.
\begin{table*}[t]
    \centering
    \caption{
        Examples of the linear-fractional performance metrics.
        $\beta > 0$ is a trade-off parameter for the F${}_\beta$-measure,
        while $\alpha \geq 0$ is for the Gower-Legendre index.
    }
    \label{tab:generalized-performance-metrics}
    \begin{tabular}{|c||c|c|c|} \hline
        \renewcommand{\arraystretch}{1.3}
        Metric & \shortstack{F$_\beta$-measure \\ \citep{Rijsbergen:1974}} & \shortstack{Jaccard index \\ \citep{Jaccard:1901}} & \shortstack{Gower-Legendre index \\ \citep{Gower:1986}}
        \\ \hline \hline
        \renewcommand{\arraystretch}{1.2}
        Definition &
        $\frac{(1 + \beta^2)\tp}{(1 + \beta^2)\tp + \beta^2\fn + \fp}$ &
        $\frac{\tp}{\tp + \fn + \fp}$ &
        $\frac{\tp + \tn}{\tp + \alpha(\fp + \fn) + \tn}$
        \\ \hline
        $(a_{0,+1}, a_{0,-1})$ &
        $(1 + \beta^2, 0)$ &
        $(1, 0)$ &
        $(1, -1)$
        \\
        $b_0$ &
        $0$ &
        $0$ &
        $1 - \pi$
        \\
        $(a_{1,+1}, a_{1,-1})$ &
        $(1, 1)$ &
        $(0, 1)$ &
        $(1 - \alpha, \alpha - 1)$
        \\
        $b_1$ &
        $\beta^2\pi$ &
        $\pi$ &
        $1 + (\alpha - 1)\pi$
        \\ \hline
    \end{tabular}
\end{table*}
Given a utility function $\calU$, our goal is to obtain a classifier $f^\dagger$ that maximizes $\calU$.
\begin{align}
    f^\dagger = \argmax_{f \in \calF} \calU(f).
    \label{eq:utility-maximization}
\end{align}

\textbf{Traditional Supervised Classification:}
Here, we briefly review a traditional procedure for supervised classification~\citep{Vapnik:1998}.
Our aim is to obtain a classifier with high accuracy,
which corresponds to minimizing the classification risk $\calR(f) \doteq \E[\ell(Yf(X))]$.
Since optimizing the 0/1-loss $\ell$ is a computationally infeasible problem~\citep{Ben-David:2003,Feldman:2012},
it is a common practice to instead minimize a surrogate risk $\calR_\phi(f) \doteq \E[\phi(Yf(X))]$,
where $\phi: \R \to \R_{\geq 0}$ is a surrogate loss.
If $\phi$ is a classification-calibrated loss~\citep{Bartlett:2006}, it is known that minimizing $\calR_\phi$ corresponds to minimizing $\calR$.
Eventually, what we actually minimize is the empirical (surrogate) risk $\hat{\calR}_\phi(f) \doteq \frac{1}{n}\sum_{i=1}^n\phi(y_if(x_i))$.
The empirical risk $\hat{\calR}_\phi(f)$ is an unbiased estimator of the true risk $\calR_\phi(f)$ for a fixed $f \in \calF$,
and the uniform law of large numbers guarantees that $\hat{\calR}_\phi(f)$ converges to $\calR_\phi(f)$ for any $f \in \calF$ in probability~\citep{Vapnik:1998,vdG:2000,Mohri:2012}.
This strategy to minimize $\hat{\calR}_\phi$ is called empirical risk minimization (ERM).

The traditional ERM is devoted to maximizing the accuracy,
which is not necessarily suitable when another metric is used for evaluation.
Our aim is to give an alternative procedure to maximize $\calU$ directly as in Eq.~\eqref{eq:utility-maximization}.
In the next section, we introduce a tractable counterpart of the true utility $\calU$
because $\calU$ contains the 0/1-loss $\ell$ and is intractable as $\calR_\phi$ above.
\section{Surrogate Utility and Optimization}
\label{sec:method}

The true utility in Eq.~\eqref{eq:generalized-metric} consists of the 0/1-loss $\ell$,
which is difficult to optimize.
In this section, we introduce a surrogate utility in order to make the optimization problem in Eq.~\eqref{eq:utility-maximization} easier.

\subsection{Lower Bounding True Utility}
\label{sec:surrogate-utility}

Assume that we are given a surrogate loss $\phi: \R \to \R_{\geq 0}$.
We hope that the surrogate utility should lower-bound the true utility $\calU$,
and that $\tp$ / $\fp$ should become larger / smaller as a result of optimization, respectively.
We realize these ideas by constructing surrogate class-conditional score functions $W_{0,\phi}$ and $W_{{}1,\phi}$ as follows:
\begin{align}
    \begin{aligned}
        &W_{0,\phi}(\xi, q) \\ &\;\; \doteq a_{0,+1}(1 - \phi(\xi))q + a_{0,-1}\phi(-\xi)(1 - q) + b_0, \\
        &W_{1,\phi}(\xi, q) \\ &\;\; \doteq a_{1,+1}(1 + \phi(\xi))q + a_{1,-1}\phi(-\xi)(1 - q) + b_1.
    \end{aligned}
    \label{eq:surrogate-scores}
\end{align}
We often abbreviate $\E_X[W_{k,\phi}(f(X), \eta(X))]$ as $\E[W_{{}k,\phi}]$ if it is clear from the context.
Given the surrogate class-conditional scores, define the surrogate utility as follows.
\begin{align}
    \calU_\phi(f) \doteq \frac{\E_X[W_{0,\phi}(f(X), \eta(X))]}{\E_X[W_{{}1,\phi}(f(X), \eta(X))]}
    = \frac{\E[W_{0,\phi}]}{\E[W_{1,\phi}]}.
    \label{eq:surrogate-utility}
\end{align}

To construct $\calU_\phi$, the 0/1-losses appearing in the true utility $\calU$ are replaced with the surrogate loss $\phi$.
The surrogate class-conditional scores in~\eqref{eq:surrogate-scores} are designed
so that the surrogate utility in~\eqref{eq:surrogate-utility} bounds $\calU$ from below.
\begin{lemma}
    \label{lem:utility-lower-bound}
    For all $f$ and a surrogate loss $\phi: \R \to \R_{\geq 0}$ such that $\phi(t) \geq \ell(t)$ for all $t \in \R$,
    $\calU_\phi(f) \leq \calU(f)$.
\end{lemma}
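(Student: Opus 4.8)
The plan is to show the numerator of $\calU_\phi$ dominates (is $\geq$) the numerator of $\calU$ while the denominator of $\calU_\phi$ dominates the denominator of $\calU$, and then combine these two facts using positivity of the denominators. Since $\calU(f) \in [0,1]$, both $\E_X[W_1]$ and $\E_X[W_{1,\phi}]$ should be positive (they must be, for the ratios to be well-defined and in $[0,1]$), so the final combination is the elementary inequality: if $0 \le A \le A'$ and $0 < B \le B'$ and $A'/B' \le 1$, then $A/B \le A'/B'$ — wait, that direction is not automatic, so I will need to be more careful; see the obstacle paragraph below.

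First I would establish the numerator inequality $\E_X[W_{0,\phi}(f(X),\eta(X))] \ge \E_X[W_0(f(X),\eta(X))]$ pointwise in $x$. Comparing term by term: for the $q = \eta(x)$ coefficient, $a_{0,+1}(1-\phi(\xi))$ versus $a_{0,+1}\ell(-\xi)$; since $\phi \ge \ell$ pointwise means $\phi(\xi) \ge \ell(\xi) = \ind{\xi \le 0}$, I need $1 - \phi(\xi) \le 1 - \ind{\xi \le 0} = \ell(-\xi)$ with $\xi \ne 0$ handled by the $\sign(0)=-1$ convention — actually $\ell(-\xi) = \ind{-\xi \le 0} = \ind{\xi \ge 0}$, and $1 - \ind{\xi \le 0} = \ind{\xi > 0}$, so these differ at $\xi = 0$; I would track that boundary case explicitly, but since $a_{0,+1} > 0$ the direction is $W_{0,\phi} \le W_0$?? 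This is the opposite of what I want, so I must recheck: actually the design must be that $1 + \phi(\xi) \ge \ell(-\xi)$-type bounds push $W_{1,\phi}$ up and $1 - \phi(\xi)$ together with $a_{0,-1} \le 0$ pushes $W_{0,\phi}$ down relative to the "natural" surrogate but still the claimed inequality $\calU_\phi \le \calU$ must follow from $\E[W_{0,\phi}] \le \E[W_0]$-in-numerator and $\E[W_{1,\phi}] \ge \E[W_1]$-in-denominator, i.e. smaller numerator over larger denominator. So the correct two pointwise claims are: (a) $W_{0,\phi}(\xi,q) \le W_0(\xi,q)$ using $1 - \phi(\xi) \le \ell(-\xi)$ (valid since $\phi(\xi) \ge \ell(\xi) = 1 - \ell(-\xi)$, modulo the $\xi=0$ convention) together with $a_{0,+1} > 0$, and for the second term $a_{0,-1}\phi(-\xi) \le a_{0,-1}\ell(-\xi)$ since $a_{0,-1} \le 0$ and $\phi(-\xi) \ge \ell(-\xi)$; and (b) $W_{1,\phi}(\xi,q) \ge W_1(\xi,q)$ using $1 + \phi(\xi) \ge \ell(-\xi)$ (here $\phi \ge 0$ suffices, in fact $\phi(\xi) \ge \ell(\xi) \ge 0$ and one checks $1 + \phi(\xi) \ge 1 \ge \ell(-\xi)$) with $a_{1,+1} \ge 0$, and $a_{1,-1}\phi(-\xi) \ge a_{1,-1}\ell(-\xi)$ since $a_{1,-1} \ge 0$. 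I would integrate these over $\P_\calX$ to get $\E[W_{0,\phi}] \le \E[W_0]$ and $\E[W_{1,\phi}] \ge \E[W_1]$.

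Then I would conclude: $\calU_\phi(f) = \E[W_{0,\phi}]/\E[W_{1,\phi}] \le \E[W_0]/\E[W_{1,\phi}] \le \E[W_0]/\E[W_1] = \calU(f)$, where the first step uses $\E[W_{0,\phi}] \le \E[W_0]$ with positive denominator, and the second step uses $\E[W_{1,\phi}] \ge \E[W_1] > 0$ together with $\E[W_0] \ge 0$ (which holds since $\calU(f) \ge 0$ and the denominator is positive). The positivity claims on denominators and nonnegativity of $\E[W_0]$ follow from the constraint $0 \le \calU(f) \le 1$ baked into Definition 4; I would state these as immediate consequences.

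\textbf{Main obstacle.} The only genuinely delicate point is the $\xi = 0$ boundary, where the convention $\sign(0) = -1$ makes $\ell(-0) = \ell(0) = 1$, so $\ell(-\xi)\big|_{\xi=0} = 1$ while $1 - \phi(0) \le 1 - \ell(0) = 0 \le 1$, which is still consistent with (a); similarly $1 + \phi(0) \ge 1 = \ell(-0)$ is consistent with (b). So in fact the convention does no harm, but I would spell it out rather than wave it through. A secondary subtlety is ensuring $\E[W_{1,\phi}] > 0$: this is not immediate from $\E[W_1] > 0$ since we only proved $\E[W_{1,\phi}] \ge \E[W_1]$, which makes it even more positive, so it is fine — I would remark that the same sign constraints that guarantee $\calU$ is well-defined transfer upward to $\calU_\phi$. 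Everything else is routine linearity of expectation and monotonicity of division.
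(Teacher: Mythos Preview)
Your proposal is correct and follows essentially the same approach as the paper: establish the pointwise inequalities $W_{0,\phi}\le W_0$ and $W_{1,\phi}\ge W_1$ using the sign constraints on the $a_{k,\pm1}$ together with $\phi\ge\ell$, then divide. In fact you are more careful than the paper, which writes $\ell(-\xi)=1-\ell(\xi)$ without comment (this fails at $\xi=0$ under the stated convention) and leaves the final ``smaller numerator over larger positive denominator'' step implicit; your explicit handling of the $\xi=0$ boundary and the two-step chain $\E[W_{0,\phi}]/\E[W_{1,\phi}]\le\E[W_0]/\E[W_{1,\phi}]\le\E[W_0]/\E[W_1]$ using $\E[W_0]\ge 0$ is a welcome clarification.
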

\begin{proof}
    Fix $\xi \in \R$ and $q \in [0, 1]$.
    Since $\ell(-\xi) = 1 - \ell(\xi)$,
    $a_{0,+1}\ell(-\xi) = a_{0,+1}(1 - \ell(\xi)) \geq a_{0,+1}(1 - \phi(\xi))$ ($\because$ $a_{0,+1} \geq 0$).
    Together with $a_{0,-1}\ell(-\xi) \geq a_{0,-1}\phi(-\xi)$ ($\because$ $a_{0,-1} \leq 0$), we confirm $W_0(\xi,q) \geq W_{0,\phi}(\xi,q)$.
    It can be confirmed that $W_1(\xi,q) \leq W_{1,\phi}(\xi,q)$ as well.
    Hence, $\calU(f) \geq \calU_\phi(f)$ is easy to see.
\end{proof}
Due to this property, maximizing $\calU_\phi$ is at least maximizing a lower bound of $\calU$.
We will discuss the goodness of this lower bound in Sec.~\ref{sec:calibration},
but we can immediately see $\calU_\phi(f) (\leq \calU(f)) \leq 1$ for any $f$.
In the rest of this paper, we assume that $\calU_\phi$ is Fr{\'e}chet differentiable.

\subsection{Tractability of Surrogate Utility}
\label{sec:hybrid-optimization}

The surrogate utility $\calU_\phi$ comes to have a non-vanishing gradient by using a smooth $\phi$,
and is guaranteed to be a lower bound of $\calU$.
In this subsection, we discuss how it can be maximized efficiently.

Let us consider an empirical estimator of $\calU_\phi$:
\begin{align}
    \hat\calU_\phi(f) = \frac{\frac{1}{m}\sum_{i=1}^m \tilde{W}_{0,\phi}(f(x_i), y_i)}{\frac{1}{n-m}\sum_{i=m+1}^n \tilde{W}_{1,\phi}(f(x_i), y_i)},
    \label{eq:empirical-utility}
\end{align}
where
\begin{align*}
    \tilde{W}_{0,\phi}(\xi, y)
    &\doteq \begin{cases}
        a_{0,+1}(1 - \phi(\xi)) + b_0 & \text{if $y = +1$}, \\
        a_{0,-1}\phi(-\xi) + b_0 & \text{if $y = -1$},
    \end{cases}
    \\
    \tilde{W}_{1,\phi}(\xi, y)
    &\doteq \begin{cases}
        a_{1,+1}(1 + \phi(\xi)) + b_1 & \text{if $y = +1$}, \\
        a_{1,-1}\phi(-\xi) + b_1 & \text{if $y = -1$}.
    \end{cases}
\end{align*}
A global maximizer of $\hat\calU_\phi$ could be efficiently obtained if $\hat\calU_\phi$ were concave.
However, this is hard to achieve in our case regardless of the choice of $\phi$ due to its fractional form.
Nonetheless, we may formulate our optimization problem as a \emph{quasi-concave} program under a certain condition.
First, we introduce the notion of quasi-concavity.
\begin{definition}[Quasi-concavity~\citep{Boyd:2004}]
    A function $h: A \to \R$ is said to be quasi-concave if the super-level set $\{x \in A \mid h(x) \geq \alpha\}$ is a convex set for $\forall \alpha \in \R$.
\end{definition}
A quasi-concave function is a generalization of a concave function and has the unimodality though it is not necessarily concave,
which ensures the uniqueness of the solution.
Next, we have the following result, whose proof is given in App.~\ref{sec:supp:quasi-concavity}.
Let $\hat\calU_\phi^\mathrm{n}(f) \doteq \frac{1}{m}\sum_{i=1}^m \tilde{W}_{0,\phi}(f(x_i), y_i)$ be the numerator of $\hat\calU_\phi$.
\begin{lemma}
    \label{lem:quasi-concavity}
    Let $\bar{\mathcal{F}} \doteq \{f \mid \hat\calU_\phi^\mathrm{n}(f) \geq 0\} \subset \calF$.
    If $\phi$ is convex,
    $\hat\calU_\phi$ in Eq.~\eqref{eq:empirical-utility} is quasi-concave over $\bar{\calF}$ and
    $\hat\calU_\phi^\mathrm{n}$ is concave over $\calF$.
\end{lemma}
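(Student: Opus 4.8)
The plan is to handle the two claims separately, starting with the easier one. For the concavity of $\hat\calU_\phi^\mathrm{n}$ over $\calF$: write $\hat\calU_\phi^\mathrm{n}(f) = \frac{1}{m}\sum_{i=1}^m \tilde{W}_{0,\phi}(f(x_i), y_i)$ and inspect each summand. When $y_i = +1$ the term is $a_{0,+1}(1 - \phi(f(x_i))) + b_0$; since $a_{0,+1} > 0$ and $\phi$ is convex, $-a_{0,+1}\phi(\cdot)$ is concave, and $f \mapsto f(x_i)$ is linear (here I use that $\calF$ is a linear hypothesis class, or at least convex, so that evaluation functionals are affine — this should be stated or is implicit), so the composition is concave. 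When $y_i = -1$ the term is $a_{0,-1}\phi(-f(x_i)) + b_0$; since $a_{0,-1} \leq 0$ and $\phi(-\cdot)$ is convex, $a_{0,-1}\phi(-\cdot)$ is again concave. A nonnegative sum (here, average) of concave functions is concave, so $\hat\calU_\phi^\mathrm{n}$ is concave on $\calF$.

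For the quasi-concavity of $\hat\calU_\phi$ over $\bar{\calF}$: the standard fact (see \citet{Boyd:2004}) is that a ratio $g/h$ of a nonnegative concave numerator $g$ and a positive convex denominator $h$ on a convex domain is quasi-concave. So the steps are: (i) verify the denominator $\hat\calU_\phi^\mathrm{d}(f) \doteq \frac{1}{n-m}\sum_{i=m+1}^n \tilde{W}_{1,\phi}(f(x_i), y_i)$ is convex in $f$ — each summand is $a_{1,+1}(1 + \phi(f(x_i))) + b_1$ with $a_{1,+1} \geq 0$ (convex) or $a_{1,-1}\phi(-f(x_i)) + b_1$ with $a_{1,-1} \geq 0$ (convex), and a nonnegative average of convex functions is convex; (ii) note the numerator $\hat\calU_\phi^\mathrm{n}$ is concave by the first part, and nonnegative on $\bar{\calF}$ by the definition of $\bar{\calF}$; (iii) recall that the denominator is positive — this follows because $\calU_\phi$ is assumed well-defined (and in the population analogue $\E[W_{1,\phi}]$ is implicitly positive, being the denominator of a ratio in $[0,1]$); the empirical denominator should likewise be taken positive for $\hat\calU_\phi$ to be meaningful, and one can check $\tilde{W}_{1,\phi}$ inherits appropriate sign constraints. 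Then directly verify the super-level set: for $\alpha \leq 0$, $\{f \in \bar{\calF} \mid \hat\calU_\phi(f) \geq \alpha\} = \bar{\calF}$ (since $\hat\calU_\phi \geq 0$ there), which is convex because $\hat\calU_\phi^\mathrm{n}$ is concave; for $\alpha > 0$, $\hat\calU_\phi(f) \geq \alpha \iff \hat\calU_\phi^\mathrm{n}(f) - \alpha\,\hat\calU_\phi^\mathrm{d}(f) \geq 0$, and the left side is concave (concave minus $\alpha$ times convex, $\alpha > 0$), so its super-level set at $0$ is convex.

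The main obstacle I anticipate is not the algebra but pinning down the domain and positivity hypotheses precisely: one must argue that $\bar{\calF}$ is itself convex (which follows from concavity of $\hat\calU_\phi^\mathrm{n}$, so this is fine), and one must ensure the denominator $\hat\calU_\phi^\mathrm{d}$ is strictly positive on $\bar{\calF}$ so that the ratio and its super-level sets are well-behaved — this requires either an explicit assumption on the constants $a_{k,\pm1}, b_k$ (analogous to the population-level constraint $0 \leq \calU \leq 1$) or restricting attention to the region where the denominator is positive. I would state this positivity carefully at the outset of the proof. The composition-with-affine-evaluation step also tacitly uses convexity/linearity of $\calF$; if $\calF$ is a general nonlinear class the claim would need $f \mapsto f(x_i)$ to be affine along the segments used, so I would either restrict to linear-in-parameter models or flag this assumption.
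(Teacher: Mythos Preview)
Your proposal is correct and follows essentially the same route as the paper's proof: both show concavity of the numerator term-by-term from the sign constraints on $a_{0,\pm1}$ and convexity of $\phi$, convexity of the denominator likewise from $a_{1,\pm1}\geq 0$, and then verify quasi-concavity by rewriting the super-level condition $\hat\calU_\phi(f)\geq\alpha$ as $\hat\calU_\phi^\mathrm{n}(f)-\alpha\,\hat\calU_\phi^\mathrm{d}(f)\geq 0$ and observing this is a super-level set of a concave function. Your treatment is in fact more careful than the paper's in several places it leaves implicit: you separate the cases $\alpha\leq 0$ and $\alpha>0$, you note that convexity of $\bar\calF$ must be established (and observe it follows from concavity of $\hat\calU_\phi^\mathrm{n}$), you flag that positivity of the denominator is needed to multiply through, and you point out that concavity ``in $f$'' tacitly requires the evaluation maps $f\mapsto f(x_i)$ to be affine, hence an assumption on $\calF$ such as linearity in parameters.
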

From Lemma~\ref{lem:quasi-concavity}, we observe the following two important facts.
First, in the range of $f \not\in \bar{\calF}$, our objective $\hat\calU_\phi$ is generally neither concave nor quasi-concave,
but its numerator $\hat\calU_\phi^\mathrm{n}$ is concave.
Second, $\hat\calU_\phi$ is quasi-concave over $\bar{\calF}$.
These observations motivate us to employ Algorithm~\ref{alg:hybrid},
which first increases the numerator $\hat\calU_\phi^\mathrm{n}$ only to make it positive and then maximizes the fractional form $\hat\calU_\phi$.
Since the former is a concave program and the latter is a quasi-concave program within $\bar{\calF}$,
the entire optimization can be performed computationally efficiently.
For quasi-concave optimization, normalized gradient ascent (NGA)~\citep{Hazan:2015} is applied,
which is guaranteed to find a global solution of quasi-concave objectives.
The behavior of Algorithm~\ref{alg:hybrid} is illustrated in Figure~\ref{fig:hybrid}.

\begin{algorithm}[t]
    \caption{Hybrid Optimization Algorithm}
    \label{alg:hybrid}
    \Input{$\phi$ convex loss, $\theta$ initial classifier parameter}
    \While{$\hat\calU_\phi^\mathrm{n}(f_\theta) \leq 0$}{
        $g^\mathrm{n} \longleftarrow \nabla_{\theta} \hat\calU_\phi^\mathrm{n}(f_\theta)$\;
        $\theta \longleftarrow \texttt{gradient\_based\_update}(\theta, g^\mathrm{n})$\;
    }
    \While{stopping criterion is not satisfied}{
        $g \longleftarrow \nabla_{\theta} \hat\calU_\phi(f_\theta)$, $\hat g = g / \|g\|$\;
        $\theta \longleftarrow \texttt{gradient\_based\_update}(\theta, \hat g)$\;
    }
    \Output{maximizer $f_\theta$}
\end{algorithm}

\begin{figure}[t]
    \centering
    \begin{tikzpicture}[scale=1.4]
        \clip (-2,-1.6) rectangle (2,2.5);
        \draw (-2,1.0) -- (2,1.0);
        \draw (-2,-1.0) -- (2,-1.0);
        \draw [line width=0.4mm,blue] plot [smooth] coordinates {
            (-2,0.2) (-1.5,0.8) (-0.9,0.5) (-0.5,0.97)
            (-0.2,1.1) (0.2,2.0) (0.6,1.1)
            (1,0.96) (1.5,0.2) (2,0.1)
        };
        \draw [line width=0.4mm,blue] plot [smooth] coordinates {
            (-2,-2.0) (-0.5,-1.0) (0.5,-0.5) (1,-1.0) (1.5,-2.0)
        };
        \node [draw] at (-1.7,2.2) {$\hat\calU_\phi$};
        \node [draw] at (-1.7,-0.5) {$\hat\calU_\phi^\mathrm{n}$};
        \draw [dashed] (-0.5,-2) -- (-0.5,3);
        \draw [dashed] (1.0,-2) -- (1.0,3);
        \draw [<->, line width=0.30mm] (-0.48,2.15) -- node[above] {$\bar{\calF}$} (0.98,2.15);
        \draw [->, line width=0.30mm] (-2,1.65) -- node[above] {$\bar{\calF}^c$} (-0.52,1.65);
        \draw [<-, line width=0.30mm] (1.02,1.65) -- node[above] {$\bar{\calF}^c$} (2,1.65);
        \fill [red] (-1.27,-1.5) circle (1.3pt);
        \fill [red] (-0.5,-1.0) circle (1.3pt);
        \fill [red] (-0.5,1.0) circle (1.3pt);
        \fill [red] (0.2,2.0) circle (1.3pt);
        \draw [->,red,line width=0.27mm] plot [smooth] coordinates {
            (-1.37,-1.4) (-0.9,-1.08) (-0.6,-0.9)
        };
        \draw [->,red,line width=0.27mm] plot [smooth] coordinates {
            (-0.4,-0.85) (-0.4,0.9)
        };
        \draw [->,red,line width=0.27mm] plot [smooth] coordinates {
            (-0.45,1.1) (-0.3,1.2) (-0.05,1.85) (0.1,2.1)
        };
        \draw [red] (-1.25,-1.13) circle [radius=0.1] node [red] {1};
        \draw [red] (-0.22,0) circle [radius=0.1] node [red] {2};
        \draw [red] (-0.28,1.7) circle [radius=0.1] node [red] {3};
    \end{tikzpicture}
    \caption{
        Illustration of our hybrid optimization approach in Algorithm~\ref{alg:hybrid}.
        {\normalsize \textcircled{\small{1}}} maximize the numerator $\hat\calU_\phi^\mathrm{n}$ (concave),
        {\normalsize \textcircled{\small{2}}} once $\hat\calU_\phi^\mathrm{n}(f) \geq 0$, optimize the fraction $\hat\calU_\phi$,
        {\normalsize \textcircled{\small{3}}} maximize the fraction $\hat\calU_\phi$ (quasi-concave only in $\bar\calF$).
    }
    \label{fig:hybrid}
\end{figure}
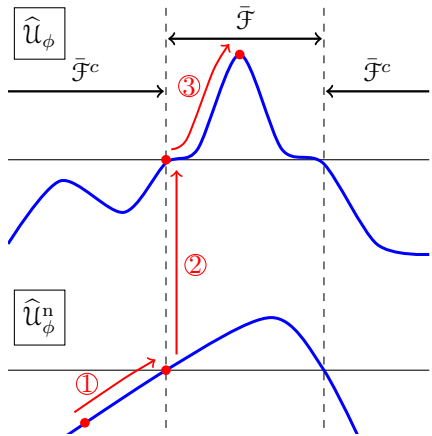

\subsection{Gradient Direction Estimator}
\label{sec:gradient-direction}

The empirical estimator $\hat\calU_\phi$ in Eq.~\eqref{eq:empirical-utility} is generally biased due to its fractional form.
Nevertheless, its gradient $\nabla_f\hat\calU_\phi$ is unbiased to the expected gradient $\nabla_f\calU_\phi$ up to a positive scalar multiple.
Hence, we may safely use $\nabla_f\hat\calU_\phi$ as the update direction in NGA.

Below, we state this idea formally.
Under the interchangeability of the expectation and derivative,
the gradient of the expected utility $\calU_\phi$ is expressed as
\begin{align*}
    &
    \nabla_f\calU_\phi(f) \\
    &=
    \underbrace{ \frac{1}{(\E[W_{1,\phi}])^2} }_\text{positive scalar}
    \underbrace{
        \E[W_{1,\phi}] \E[\nabla W_{0,\phi}]
        - \E[W_{0,\phi}] \E[\nabla W_{1,\phi}]
    }_\text{gradient direction ($\doteq \calV_\phi(f)$)}
    \\
    &= c \calV_\phi(f), \qquad \text{where} \; c = (\E[W_{1,\phi}])^{-2} > 0
    ,
\end{align*}
from which we can see that its gradient direction is parallel to $\calV_\phi$.
$\calV_\phi$ can be unbiasedly estimated.

\begin{lemma}
    \label{lem:unbiased-gradient}
    Denote $\tilde{W}_{0,\phi}(f(x_i), y_i) = \tilde{W}_{0,\phi}(z_i)$ for simplicity.
    Define
    \begin{align}
        \hat\calV_\phi(f)
        \doteq
        \frac{1}{m(n - m)} \sum_{i=1}^m & \sum_{j=m+1}^{n} \left\{
            \tilde{W}_{1,\phi}(z_j) \nabla_f\tilde{W}_{0,\phi}(z_i)
        \right. \nonumber \\ & \quad \left.
            -
            \tilde{W}_{0,\phi}(z_i) \nabla_f\tilde{W}_{1,\phi}(z_j)
        \right\}
        \label{eq:gradient-estimator}
        .
    \end{align}
    Then, we have $\calV_\phi(f) = \E_\calS[\hat\calV_\phi(f)]$.
\end{lemma}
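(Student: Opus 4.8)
The plan is to exploit two structural facts about the estimator in Eq.~\eqref{eq:gradient-estimator}: first, the per-example surrogate scores $\tilde W_{k,\phi}$ are designed so that taking the conditional expectation over the label recovers the population scores $W_{k,\phi}$; and second, the index blocks $\{1,\dots,m\}$ and $\{m+1,\dots,n\}$ correspond to the two \emph{independent} subsamples $\calS_0$ and $\calS_1$, so that every product term in $\hat\calV_\phi$ factorizes into a product of expectations.

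First I would record the identity $\E[\tilde W_{k,\phi}(f(X), Y)] = \E_X[W_{k,\phi}(f(X),\eta(X))] = \E[W_{k,\phi}]$ for $k \in \{0,1\}$ and any fixed $f$. This follows by conditioning on $X$ and using $\P(Y = +1 \mid X) = \eta(X)$: for $k=0$,
$\E[\tilde W_{0,\phi}(f(X), Y) \mid X] = \eta(X)\,(a_{0,+1}(1-\phi(f(X))) + b_0) + (1-\eta(X))\,(a_{0,-1}\phi(-f(X)) + b_0)$,
which is exactly $W_{0,\phi}(f(X),\eta(X))$; the $k=1$ case is identical. Differentiating in $f$ and invoking the interchangeability of $\E$ and $\nabla_f$ already assumed for the expression of $\nabla_f\calU_\phi$, I also get $\E[\nabla_f\tilde W_{k,\phi}(f(X),Y)] = \nabla_f\E[W_{k,\phi}] = \E[\nabla W_{k,\phi}]$.

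Next I would take $\E_\calS$ of $\hat\calV_\phi(f)$ and push it inside the finite double sum by linearity. For each pair $(i,j)$ with $1 \le i \le m < j \le n$, the points $z_i$ and $z_j$ lie in the disjoint subsamples $\calS_0$ and $\calS_1$ respectively and are therefore independent, so $\E_\calS[\tilde W_{1,\phi}(z_j)\nabla_f\tilde W_{0,\phi}(z_i)] = \E[\tilde W_{1,\phi}(f(X),Y)]\cdot\E[\nabla_f\tilde W_{0,\phi}(f(X),Y)] = \E[W_{1,\phi}]\,\E[\nabla W_{0,\phi}]$, and likewise $\E_\calS[\tilde W_{0,\phi}(z_i)\nabla_f\tilde W_{1,\phi}(z_j)] = \E[W_{0,\phi}]\,\E[\nabla W_{1,\phi}]$, using the identities from the previous step and that $z_i, z_j \sim \P$. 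Hence each of the $m(n-m)$ summands equals $\E[W_{1,\phi}]\E[\nabla W_{0,\phi}] - \E[W_{0,\phi}]\E[\nabla W_{1,\phi}] = \calV_\phi(f)$, and dividing by $m(n-m)$ yields $\E_\calS[\hat\calV_\phi(f)] = \calV_\phi(f)$.

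The computation is essentially routine; the only point that genuinely needs care — and the reason the numerator and denominator are estimated on \emph{different} halves of the sample — is that the factorization of $\E_\calS[\tilde W_{1,\phi}(z_j)\nabla_f\tilde W_{0,\phi}(z_i)]$ into a product of expectations requires $z_i$ and $z_j$ to be independent. Were both estimated on the same sample, the diagonal terms $i=j$ would contribute a covariance and the estimator would be biased; the split $\calS = \calS_0 \sqcup \calS_1$ is exactly what removes this. The secondary technicality is the interchange of $\nabla_f$ and $\E$ used in the first step, which is already in force by the standing assumptions.
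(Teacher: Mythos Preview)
Your proposal is correct and follows exactly the approach the paper indicates: the paper does not spell out a detailed proof but simply states that the lemma ``can be confirmed by simple algebra, noting that two samples $\calS_0$ and $\calS_1$ are independent and identically drawn from $\P$,'' which is precisely the independence-plus-linearity argument you have written out in full.
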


\begin{algorithm}[t]
    \caption{Normalized Gradient Ascent}
    \label{alg:u-ga}
    \Input{$\theta$ initial classifier parameter, $\gamma$ learning rate}
    \While{stopping criterion is not satisfied}{
        $g \longleftarrow \hat\calV_\phi(f_\theta)$, $\hat g = g / \|g\|$\;
        $\theta \longleftarrow \theta + \gamma \hat g$\;
    }
    \Output{learned classifier parameter $\theta$}
\end{algorithm}

Lemma~\ref{lem:unbiased-gradient} can be confirmed by simple algebra,
noting that two samples $\calS_0$ and $\calS_1$ are independent and identically drawn from $\P$.
Since solving $\nabla\hat\calU_\phi(f) = 0$ is identical to solving $\hat\calV_\phi(f) = 0$,
gradient updates using $\nabla\hat\calU_\phi$ is aligned to the maximization of $\calU_\phi$.
Hence, optimization procedures that only need gradients such as gradient ascent and quasi-Newton methods~\citep{Boyd:2004}, e.g., the Broyden-Fletcher-Goldfarb-Shanno (BFGS) algorithm~\citep{Fletcher:2013} can be applied to maximize $\calU_\phi$.
Note that Algorithm~\ref{alg:u-ga} can be regarded as an extension of the traditional gradient ascent using $\hat\calV_\phi$.
We plug either Algorithm~\ref{alg:u-ga} or BFGS using the normalized gradient to the second half of Algorithm~\ref{alg:hybrid}.
\section{Calibration Analysis: Bridging Surrogate Utility and True Utility}
\label{sec:calibration}

\begin{figure}[t]
    \centering
    \includegraphics[width=\columnwidth]{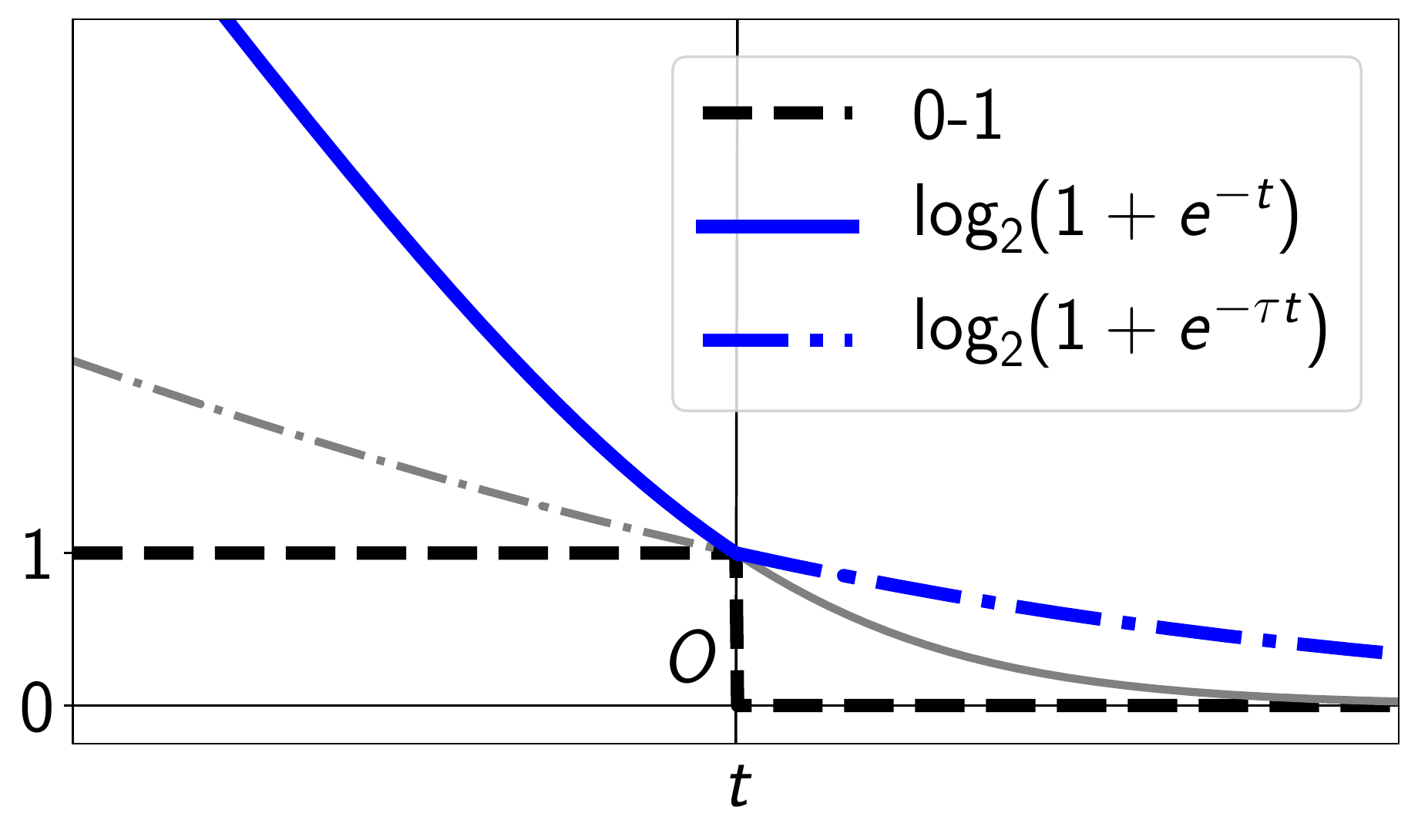}
    \caption{
        An example of $\tau$-discrepant loss with $\tau > 0$:
        $\phi(t) = \log_2(1 + e^{-t})$ for $t \leq 0$ and $\phi(t) = \log_2(1 + e^{-\tau t})$ for $t > 0$.
    }
    \label{fig:fisher-consistent-surrogate-loss}
\end{figure}

In Sec.~\ref{sec:method}, we formulated the tractable surrogate utility.
Given the surrogate utility $\calU_\phi$,
a natural question arises in the same way as the classification calibration in binary classification~\citep{Zhang:2004a,Bartlett:2006}:
\emph{Does maximizing the surrogate utility $\calU_\phi$ imply maximizing the true utility $\calU$?}
In this section, we study sufficient conditions on the surrogate loss $\phi$
in order to connect the maximization of $\calU_\phi$ and the maximization of $\calU$.
All proofs in this section are deferred to App.~\ref{sec:supp:calibration}.

First, we define the notion of \emph{$\calU$-calibration}.
\begin{definition}[$\calU$-calibration]
    \label{def:consistency}
    The surrogate utility $\calU_\phi$ is said to be $\calU$-calibrated
    if for any sequence of measurable functions $\{f_l\}_{l \geq 1}$ and any distribution $\P$,
    it holds that $\calU_\phi(f_l) \to \calU_\phi^* \Longrightarrow \calU(f_l) \to \calU^\dagger$ when $l \to \infty$,
    where $\calU_\phi^* \doteq \sup_f\calU_\phi(f)$ and $\calU^\dagger \doteq \sup_f\calU(f)$ are the suprema taken over all measurable functions.
\end{definition}
This definition is motivated by calibration in other learning problems
such as binary classification~\citep[Theorem 3]{Bartlett:2006}, multi-class classification~\citep[Theorem 3]{Zhang:2004b},
structured prediction~\citep[Theorem 2]{Osokin:2017}, and AUC optimization~\citep[Definition 1]{Gao:2015}.
If a surrogate utility is $\calU$-calibrated, we may safely optimize the surrogate utility instead of the true utility $\calU$.
Note that $\calU$-calibration is a concept to reduce the surrogate maximization to the maximization of $\calU$
\emph{within all measurable functions}.
The approximation error of $\calU_\phi$ is not the target of our analysis as in \citet{Bartlett:2006}.

Next, we give a property of loss functions that is needed to guarantee $\calU$-calibration.
\begin{definition}[$\tau$-discrepant loss]
    \label{def:discrepant-loss}
    For a fixed $\tau > 0$,
    a convex loss function $\phi: \R \to \R_{\geq 0}$ is said to be \emph{$\tau$-discrepant}
    if $\phi$ satisfies $\lim_{t \searrow 0}\phi'(t) \geq \tau \lim_{t \nearrow 0}\phi'(t)$.
\end{definition}
Intuitively, $\tau$-discrepancy means that the gradient of $\phi$ around the origin
is steeper in the negative domain than the positive domain (see Figure~\ref{fig:fisher-consistent-surrogate-loss}).
The value $\tau$ controls \emph{steepness} of the $\tp$ / $\fp$ surrogates appearing in the surrogate utility $\calU_\phi$.
Note that $\phi(\xi)$ and $\phi(-\xi)$ appearing in Eqs.~\eqref{eq:surrogate-scores} and~\eqref{eq:surrogate-utility} correspond to $\tp$ and $\fp$, respectively, by their constructions.

Below, we see calibration properties for specific linear-fractional metrics,
the F${}_\beta$-measure and Jaccard index.
Note that those calibration analyses can be extended to general linear-fractional utilities,
which is deferred to App.~\ref{sec:supp:general-calibration-analysis}.

\textbf{F${}_\beta$-measure:}
The F${}_\beta$-measure has been widely used especially in the field of information retrieval where relevant items are rare~\citep{Manning:2008}.
Since it is defined as the weighted harmonic mean of the precision and recall (see Tab.~\ref{tab:generalized-performance-metrics}),
its optimization is difficult in general.
Although much previous work has tried to directly optimize it in the context of the class-posterior probability estimation~\citep{Nan:2012,Koyejo:2014,Yan:2018}
or the iterative cost-sensitive learning~\citep{Koyejo:2014,Parambath:2014},
we show that there exists a calibrated surrogate utility that can be used in the direct optimization as well.

For the F${}_\beta$-measure $\frac{(1 + \beta^2)\tp}{(1 + \beta^2)\tp + \beta^2\fn + \fp} = \frac{(1 + \beta^2)\tp}{\tp + \fp + \beta^2\pi}$,
define the true utility $\calU^{\mathsf{F}_\beta}$ and the surrogate utility $\calU_\phi^{\mathsf{F}_\beta}$ as
\begin{align*}
    \calU^{\mathsf{F}_\beta}(f) &= \frac{
        \E_X\left[(1 + \beta^2)\ell(-f)\eta\right]
    }{
        \E_X\left[\ell(-f)\eta + \ell(-f)(1 - \eta) + \beta^2\pi\right]
    },
    \\
    \calU_\phi^{\mathsf{F}_\beta}(f) &= \frac{
        \E_X\left[(1 + \beta^2)(1 - \phi(f))\eta\right]
    }{
        \E_X\left[(1 + \phi(f))\eta + \phi(-f)(1 - \eta) + \beta^2\pi\right]
    }.
\end{align*}
As for $\calU_\phi^{\mathsf{F}_\beta}$, we have the following F${}_\beta$-calibration guarantee.
Denote $(\calU_\phi^{\mathsf{F}_\beta})^* \doteq \sup_f \calU_\phi^{\mathsf{F}_\beta}(f)$.
\begin{theorem}[F${}_\beta$-calibration]
    \label{thm:f-measure-calibration}
    Assume that a surrogate loss $\phi: \R \to \R_{\geq 0}$ is differentiable almost everywhere, convex, and non-increasing,
    and that $(\calU_\phi^{\mathsf{F}_\beta})^* \geq \frac{(1 + \beta^2)\tau}{\beta^2 - \tau}$ and $\phi$ is $\tau$-discrepant for some constant $\tau \in (0, \beta^2)$.\footnote{Note that $(\calU_\phi^{\mathsf{F}_\beta})^*$ is non-negative and therefore such $\tau$ always exists. The non-negativity is discussed in App.~\ref{sec:supp:non-negativity-of-surrogate-utilities}.}
    Then, $\calU_\phi^{\mathsf{F}_\beta}$ is F${}_\beta$-calibrated.
\end{theorem}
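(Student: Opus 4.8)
The plan is the classical calibration route: linearize both fractional objectives by a Dinkelbach-type scalarization, reduce $\mathsf{F}_\beta$-calibration to a pointwise inequality between conditional regrets, and prove that inequality by analyzing the one-dimensional convex minimization that defines the surrogate conditional risk. First I would set $\gamma^*\doteq(\calU_\phi^{\mathsf{F}_\beta})^*$, $\gamma^\dagger\doteq\sup_f\calU^{\mathsf{F}_\beta}(f)$, and introduce $L^\phi_\gamma(f)\doteq\E_X[(1+\beta^2)(1-\phi(f))\eta]-\gamma\,\E_X[(1+\phi(f))\eta+\phi(-f)(1-\eta)+\beta^2\pi]$ together with its $0/1$-loss analogue $L_\gamma(f)$, so that $\gamma^*-\calU_\phi^{\mathsf{F}_\beta}(f)=-L^\phi_{\gamma^*}(f)/\E_X[W_{1,\phi}]$ and $\gamma^\dagger-\calU^{\mathsf{F}_\beta}(f)=-L_{\gamma^\dagger}(f)/\E_X[W_1]$ with nonnegative numerators. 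Since the true denominator $\E_X[W_1]=\tp+\fp+\beta^2\pi$ lies in $[\beta^2\pi,1+\beta^2\pi]$, the surrogate denominator is $\geq(1+\beta^2)\pi>0$, the hypothesis forces $\gamma^*\geq\frac{(1+\beta^2)\tau}{\beta^2-\tau}>0$, and the surrogate numerator is $\leq(1+\beta^2)\pi$ (hence surrogate denominators stay bounded along maximizing sequences), one obtains $\sup_f L^\phi_{\gamma^*}(f)=0=\sup_f L_{\gamma^\dagger}(f)$ together with the equivalences $\calU_\phi^{\mathsf{F}_\beta}(f_l)\to\gamma^*\Leftrightarrow L^\phi_{\gamma^*}(f_l)\to0$ and $\calU^{\mathsf{F}_\beta}(f_l)\to\gamma^\dagger\Leftrightarrow L_{\gamma^\dagger}(f_l)\to0$. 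It then suffices to show $L^\phi_{\gamma^*}(f_l)\to0\Rightarrow L_{\gamma^\dagger}(f_l)\to0$.

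Expanding pointwise, $-L^\phi_{\gamma^*}(f)=\E_X[\Delta_{\eta(X)}(f(X))]$ with $\Delta_q(\xi)\doteq\psi_q(\xi)-\inf_{\xi'}\psi_q(\xi')\geq0$, $\psi_q(\xi)\doteq q(1+\beta^2+\gamma^*)\phi(\xi)+\gamma^*(1-q)\phi(-\xi)$, and $-L_{\gamma^\dagger}(f)=\E_X[\tilde\Delta_{\eta(X)}(f(X))]$ with $\tilde\Delta_q(\xi)\doteq((1+\beta^2)q-\gamma^\dagger)_+-((1+\beta^2)q-\gamma^\dagger)\ell(-\xi)$, the latter vanishing exactly when $\ell(-\xi)$ realizes the Bayes-$\mathsf{F}_\beta$ decision at $q$ and otherwise equal to $|(1+\beta^2)q-\gamma^\dagger|\leq1+\beta^2$. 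The heart of the proof is a pointwise calibration inequality: for each $\epsilon>0$ there is $\delta(\epsilon)>0$ such that $\Delta_q(\xi)<\delta(\epsilon)$ and $|(1+\beta^2)q-\gamma^\dagger|\geq\epsilon$ imply $\tilde\Delta_q(\xi)=0$. Granting this, $\E_X[\Delta_\eta(f_l)]\to0$ gives $\Delta_{\eta(X)}(f_l(X))\to0$ in $\P_\calX$-probability, so for fixed $\epsilon$ the probability of $\{|(1+\beta^2)\eta-\gamma^\dagger|\geq\epsilon,\,\tilde\Delta_\eta(f_l)>0\}$ tends to $0$; since $\tilde\Delta\leq1+\beta^2$ and $\tilde\Delta_\eta(f_l)<\epsilon$ on the complement, dominated convergence yields $\limsup_l\E_X[\tilde\Delta_\eta(f_l)]\leq\epsilon$, and $\epsilon\downarrow0$ closes the argument.

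To establish the pointwise inequality I would analyze $\psi_q$, which is convex (a nonnegative combination of the non-increasing $\phi(\cdot)$ and the non-decreasing $\phi(-\cdot)$) with one-sided derivatives $\psi_q'(0^{\pm})=q(1+\beta^2+\gamma^*)\phi'(0^{\pm})-\gamma^*(1-q)\phi'(0^{\mp})$. Reading off the sign of the minimizer of $\psi_q$ from these, and using $\tau$-discrepancy $\phi'(0^+)\geq\tau\phi'(0^-)$, one gets an $\eta$-threshold $q^+(\gamma^*)$ above which the surrogate-optimal decision is $+1$ and a symmetric one below which it is $-1$; a direct computation gives $q^+\!\big(\tfrac{(1+\beta^2)\tau}{\beta^2-\tau}\big)=\tfrac1{1+\beta^2}$, and $q^+(\cdot)$ is increasing, so the assumption $\gamma^*\geq\frac{(1+\beta^2)\tau}{\beta^2-\tau}$ places the surrogate thresholds on the appropriate side of the Bayes-$\mathsf{F}_\beta$ threshold $\theta^\dagger=\gamma^\dagger/(1+\beta^2)\leq\tfrac1{1+\beta^2}$. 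Convexity of $\psi_q$ then converts this into the quantitative gap: for $q$ with $(1+\beta^2)q$ at distance $\geq\epsilon$ from $\gamma^\dagger$, any $\xi$ of the wrong sign keeps $\psi_q(\xi)-\inf_{\xi'}\psi_q(\xi')$ above a constant depending only on $\epsilon$ and the fixed data $\phi'(0^{\pm}),\beta,\tau,\gamma^*$, which supplies $\delta(\epsilon)$.

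The main obstacle is precisely this last step, and within it the behaviour near $\xi=0$: when $\phi$ has a kink at the origin (which $\tau$-discrepancy with $\tau<1$ permits and typically requires), for a band of conditional probabilities $q$ the minimizer of $\psi_q$ sits exactly at $\xi=0$, so $\Delta_q(\xi)\to0$ as $\xi\to0$ from either side and a small conditional surrogate regret carries no sign information by itself. The role of the assumption $(\calU_\phi^{\mathsf{F}_\beta})^*\geq\frac{(1+\beta^2)\tau}{\beta^2-\tau}$ is to pin the weights $q(1+\beta^2+\gamma^*)$ and $\gamma^*(1-q)$ so that this degenerate band lands where $\tilde\Delta_q$ is harmless; making that alignment precise uniformly over all $\P$ and quantifying $\delta(\epsilon)$ is the technical core, whereas the Dinkelbach scalarization of the first paragraph and the convergence-in-probability passage of the second are routine.
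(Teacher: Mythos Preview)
Your route differs from the paper's. The paper does not linearize and argue pointwise; it first reduces $\mathsf{F}_\beta$-calibration to the strict gap $\sup_{f\notin\calB^{\mathsf{F}_\beta}}\calU_\phi^{\mathsf{F}_\beta}(f)<\sup_f\calU_\phi^{\mathsf{F}_\beta}(f)$ via a short lemma in the appendix, and then proves that gap by contradiction. Assuming a surrogate maximizer $f^*\notin\calB^{\mathsf{F}_\beta}$, the G\^ateaux stationarity of the \emph{fractional} objective $\calU_\phi^{\mathsf{F}_\beta}$ at $f^*$ in a one-point direction pins $\eta(\bar x)$ to an explicit expression in $\phi'(\pm f^*(\bar x))$ and $\gamma^*$; the $\tau$-discrepancy together with $\gamma^*\geq(1+\beta^2)\tau/(\beta^2-\tau)$ then forces this $\eta(\bar x)$ to the wrong side of the threshold $\calU^{\mathsf{F}_\beta}(f^*)/(1+\beta^2)$, contradicting $f^*\notin\calB^{\mathsf{F}_\beta}$. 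Crucially, the threshold used is the true utility \emph{at the specific $f^*$}, not $\gamma^\dagger$; this self-referential coupling is what makes the two-case stationarity argument close without any quantitative $\delta(\epsilon)$.

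Your pointwise calibration inequality, by contrast, genuinely fails in the degenerate band you flag, and the assumption does not rescue it in the way you suggest. Your own threshold computation gives $q^+(\gamma^*)\geq\tfrac{1}{1+\beta^2}\geq\theta^\dagger=\gamma^\dagger/(1+\beta^2)$, so when $\phi$ has a kink the band $[q^-,q^+]$ lies (at least partly) \emph{above} $\theta^\dagger$. Take any $q$ in the band with $q>\theta^\dagger$: the minimizer of $\psi_q$ is at $\xi=0$, so $\Delta_q(\xi)\to0$ as $\xi\uparrow0$, yet $\tilde\Delta_q(\xi)=(1+\beta^2)q-\gamma^\dagger>0$ for every $\xi<0$. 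No $\delta(\epsilon)>0$ can exist there; the band is not ``harmless,'' it sits exactly where the $0/1$ regret is strictly positive. There is also a structural mismatch you do not address: your surrogate thresholds $q^\pm$ are functions of $\gamma^*$ while $\theta^\dagger$ depends on $\gamma^\dagger$, and the bound $\gamma^*\leq\gamma^\dagger$ alone does not align them. The paper's global stationary-point argument sidesteps both issues; to salvage a pointwise route you would have to replace $\gamma^\dagger$ by $\calU^{\mathsf{F}_\beta}(f)$ in $\tilde\Delta_q$ and work with the self-referential form of the Bayes set, which essentially collapses back into the paper's contradiction argument.
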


An example of the $\tau$-discrepant surrogate loss is shown in Figure~\ref{fig:fisher-consistent-surrogate-loss}.
Here $\tau$ is a discrepancy hyperparameter.
From the fact $(\calU_\phi^{\mathsf{F}_\beta})^* \leq 1$, $\tau$ ranges over $(0, \tfrac{\beta^2}{2 + \beta^2}]$.
We may determine $\tau$ by cross-validation, or fix it at $\tau = \tfrac{\beta^2}{2 + \beta^2}$ by assuming $(\calU_\phi^{\mathsf{F}_\beta})^* \approx 1$.

\textbf{Jaccard Index:}
The Jaccard index, also referred to as the \emph{intersection over union (IoU)}, is a metric of similarity between two sets:
For two sets $A$ and $B$, it is defined as $\frac{|A \cap B|}{|A \cup B|} \in [0, 1]$~\citep{Jaccard:1901}.
The Jaccard index between the sets of examples predicted as positives and labeled as positives
becomes $\frac{\tp}{\tp + \fn + \fp}$,
as is shown in Tab.~\ref{tab:generalized-performance-metrics}.
This measure is not only used for measuring the performance of binary classification~\citep{Koyejo:2014,Narasimhan:2015},
but also for semantic segmentation~\citep{Everingham:2010,Csurka:2013,Ahmed:2015,Berman:2018}.

For the Jaccard index $\frac{\tp}{\tp + \fn + \fp} = \frac{\tp}{\fp + \pi}$, define the true utility $\calU^{\mathsf{Jac}}$ and the surrogate utility $\calU_\phi^{\mathsf{Jac}}$ as
\begin{align*}
    \calU^{\mathsf{Jac}}(f) = \frac{
        \E_X[\ell(-f)\eta]
    }{
        \E_X[\ell(-f)(1 - \eta) + \pi]
    },
    \\
    \calU_\phi^{\mathsf{Jac}}(f) = \frac{
        \E_X[(1 - \phi(f))\eta]
    }{
        \E_X[\phi(-f)(1 - \eta) + \pi]
    }.
\end{align*}
As for $\calU_\phi^{\mathsf{Jac}}$, we have the following Jaccard-calibration.
Denote $(\calU_\phi^{\mathsf{Jac}})^* \doteq \sup_f \calU_\phi^{\mathsf{Jac}}(f)$.
\begin{theorem}[Jaccard-calibration]
    \label{thm:jaccard-calibration}
    Assume that a surrogate loss $\phi: \R \to \R_{\geq 0}$ is differentiable almost everywhere, convex, and non-increasing,
    and that $(\calU_\phi^{\mathsf{Jac}})^* \geq \tau$ and $\phi$ is $\tau$-discrepant for some constant $\tau \in (0, 1)$.
    Then, $\calU_\phi^{\mathsf{Jac}}$ is Jaccard-calibrated.
\end{theorem}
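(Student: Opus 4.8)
The plan is to follow the standard calibration template: via a Dinkelbach-type fractional-programming transform, reduce both ratio objectives to pointwise optimizations in $\xi$, compare the thresholds on $\eta$ that the true and surrogate optima induce, and then lift that comparison from optimizers to arbitrary maximizing sequences. First I would characterize the true optimum. Writing $\calU^{\mathsf{Jac}}(f) = \tp(f,\P)/(\fp(f,\P)+\pi)$ and $\gamma^\dagger := (\calU^{\mathsf{Jac}})^\dagger$, the identity $\sup_f N(f)/D(f) = \gamma^\dagger \Longleftrightarrow \sup_f\{N(f)-\gamma^\dagger D(f)\}=0$ with the same maximizers reduces the problem to the pointwise one $\sup_\xi \ell(-\xi)\bigl(\eta - \gamma^\dagger(1-\eta)\bigr)$, solved by any $f$ with positive region $\{x : \eta(x)/(1-\eta(x)) > \gamma^\dagger\}$ --- the known threshold form of the Jaccard-optimal classifier. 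I would also record that $g(c)$, the value of $\calU^{\mathsf{Jac}}$ at the classifier with positive region $\{\eta/(1-\eta) > c\}$, satisfies $g'(c) \propto g(c)-c$ and is therefore unimodal with unique maximizer at $c = \gamma^\dagger$; this makes ``matching the threshold'' equivalent to ``achieving $\calU^\dagger$''.

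Next I would apply the same transform to the surrogate. With $\gamma^* := (\calU_\phi^{\mathsf{Jac}})^*$, maximizing $\calU_\phi^{\mathsf{Jac}}$ is equivalent to maximizing $f \mapsto \E_X[\Psi_{\gamma^*}(f(X),\eta(X))]$ for $\Psi_{\gamma^*}(\xi,q) := (1-\phi(\xi))q - \gamma^*(1-q)\phi(-\xi)$, which splits into the independent pointwise problems $\max_\xi\Psi_{\gamma^*}(\xi,q)$. Convexity of $\phi$ makes $\xi\mapsto\Psi_{\gamma^*}(\xi,q)$ concave with a finite maximizer whose sign is read off the one-sided derivatives of $\phi$ at the origin: it is positive iff $q/(1-q) > \gamma^*\,\phi'(0^-)/\phi'(0^+)$ and negative iff $q/(1-q) < \gamma^*\,\phi'(0^+)/\phi'(0^-)$, leaving a ``dead zone'' of $q$ on which the pointwise maximizer is $0$. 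Here $\tau$-discrepancy is used in the form $\phi'(0^+)/\phi'(0^-) \le \tau$ (equivalently $\phi'(0^-)/\phi'(0^+) \ge 1/\tau$), which pins down the width and placement of this dead zone.

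The step I expect to be the main obstacle is showing that the positive region of the surrogate-optimal classifier coincides, up to $\P_\calX$-null sets, with $\{x : \eta(x)/(1-\eta(x)) > \gamma^\dagger\}$, i.e.\ that the surrogate and true thresholds agree. This is exactly where both hypotheses are consumed: $\tau$-discrepancy balances the marginal true-positive gain (slope $-\phi'(0^+)$) against the marginal false-positive cost (slope $-\phi'(0^-)$) at $\xi=0$ up to the factor $\tau$, while the lower bound $(\calU_\phi^{\mathsf{Jac}})^* \ge \tau$ keeps the induced break-even point from landing too conservatively relative to $\gamma^\dagger$ and also certifies admissibility (non-negativity, and the upper bound $1$) of both sides of the matching. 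I would make this precise by substituting the candidate threshold classifier into the self-consistency relation $N_\phi(f^\star) = \gamma^* D_\phi(f^\star)$ and verifying that it forces the surrogate break-even condition to coincide with the Jaccard break-even condition $\eta/(1-\eta)=\gamma^\dagger$; the measure-zero ambiguity at $\xi=0$ between the conventions $\sign(0)=-1$ and $\ell(-0)=1$ on the dead zone has to be reconciled here, handled exactly as in the appendix's non-negativity discussion. (Alternatively, one may deduce the statement by specializing the general linear-fractional calibration theorem of App.~\ref{sec:supp:general-calibration-analysis} to the Jaccard constants.)

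Finally I would lift from optimizers to maximizing sequences. For any $f_l$ with $\calU_\phi^{\mathsf{Jac}}(f_l)\to\gamma^*$, boundedness of $N_\phi$ (by $\pi$) and $\gamma^* > 0$ keep $D_\phi(f_l)$ bounded, so $N_\phi(f_l)-\gamma^* D_\phi(f_l) = D_\phi(f_l)\bigl(\calU_\phi^{\mathsf{Jac}}(f_l)-\gamma^*\bigr)\to 0$; since $\sup_f\{N_\phi(f)-\gamma^* D_\phi(f)\}=0$ is attained pointwise, this says $\E_X[\max_\xi\Psi_{\gamma^*}(\xi,\eta(X)) - \Psi_{\gamma^*}(f_l(X),\eta(X))]\to 0$ with a nonnegative integrand, i.e.\ the pointwise suboptimality tends to $0$ in $\L^1(\P_\calX)$. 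Concavity of $\Psi_{\gamma^*}(\cdot,q)$ and its strictly positive slope away from the dead zone imply that, for each $\epsilon>0$, disagreeing in sign with the optimal threshold classifier at a point with $|\eta/(1-\eta)-\gamma^\dagger|\ge\epsilon$ costs a fixed positive amount of $\Psi_{\gamma^*}$, so the $\P_\calX$-measure of such disagreements tends to $0$; letting $\epsilon\downarrow 0$ and invoking continuity of $g$ at $\gamma^\dagger$ yields $\calU^{\mathsf{Jac}}(f_l)\to g(\gamma^\dagger)=\gamma^\dagger=(\calU^{\mathsf{Jac}})^\dagger$, which is Jaccard-calibration.
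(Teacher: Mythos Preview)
The paper takes a quite different route from yours. It invokes Proposition~\ref{prop:calibration} to reduce calibration to the strict gap $\sup_{f\notin\calB^{\mathsf{Jac}}}\calU_\phi^{\mathsf{Jac}}(f)<(\calU_\phi^{\mathsf{Jac}})^*$, and proves that by contradiction: assuming a surrogate maximizer $f^*\notin\calB^{\mathsf{Jac}}$, one picks a point $\bar x$ where the Bayes-sign condition fails; G\^ateaux stationarity of $\calU_\phi^{\mathsf{Jac}}$ at $f^*$ in the direction of a bump at $\bar x$ then pins $\eta(\bar x)$ to an explicit expression in $\phi'(\pm f^*(\bar x))$ and $(\calU_\phi^{\mathsf{Jac}})^*$, and $\tau$-discrepancy together with $(\calU_\phi^{\mathsf{Jac}})^*\ge\tau$ force this value to the wrong side of the Bayes threshold $\calU^{\mathsf{Jac}}(f^*)/(1+\calU^{\mathsf{Jac}}(f^*))$. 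There is no Dinkelbach transform and no threshold-matching step.

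Your Step~3 has a genuine gap that also undermines Step~4. By your own pointwise analysis the surrogate maximizer satisfies $\xi^*(q)=0$ on the entire dead zone $\gamma^*\phi'(0^+)/\phi'(0^-)\le q/(1-q)\le\gamma^*\phi'(0^-)/\phi'(0^+)$, so with $\sign(0)=-1$ the surrogate-optimal positive region is $\{\eta/(1-\eta)>\gamma^*\phi'(0^-)/\phi'(0^+)\}$. The hypotheses give only the one-sided bound $\gamma^*\phi'(0^-)/\phi'(0^+)\ge\gamma^*/\tau\ge 1\ge\gamma^\dagger$ --- an inequality, not an equality --- so nothing you have written forces this region to coincide with $\{\eta/(1-\eta)>\gamma^\dagger\}$, and the dead zone is not in general a ``measure-zero ambiguity.'' The fix you sketch (``substitute the candidate threshold classifier into $N_\phi(f^\star)=\gamma^*D_\phi(f^\star)$ and verify\ldots'') presupposes that $f^\star$ is already a threshold rule at $\gamma^\dagger$, which is exactly what is at issue. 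The same problem breaks Step~4: on the dead zone, disagreeing in sign with the Bayes threshold at $\gamma^\dagger$ costs nothing in $\Psi_{\gamma^*}$ (since $\xi=0$ is pointwise optimal there), so driving the $L^1$ suboptimality of $\Psi_{\gamma^*}$ to zero does not force $\sign f_l$ to agree with the Bayes sign on that set, and you cannot conclude $\calU^{\mathsf{Jac}}(f_l)\to\gamma^\dagger$. The paper avoids all of this by never asserting threshold equality: its contradiction argument compares $\eta(\bar x)$ against the \emph{self-referential} threshold $\calU^{\mathsf{Jac}}(f^*)/(1+\calU^{\mathsf{Jac}}(f^*))$ and needs only the crude bound $\calU^{\mathsf{Jac}}(f^*)\le 1$.
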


Theorem~\ref{thm:jaccard-calibration} also relies on the $\tau$-discrepancy as in Theorem~\ref{thm:f-measure-calibration}.
Thus, the loss shown in Figure~\ref{fig:fisher-consistent-surrogate-loss} can also be used in the Jaccard case with a certain range of $\tau$.
In the same manner as the F${}_\beta$-measure, a hyperparameter $\tau$ ranges over $(0, 1)$,
which we may either determine by cross-validation or fix to a certain value.

\textbf{Remark:}
The $\tau$-discrepancy is a technical assumption making stationary points of $\calU_\phi$ lie in the Bayes optimal set of $\calU$.
This is a mere sufficient condition for $\calU$-calibration,
while the classification-calibration~\citep{Bartlett:2006} is the necessary and sufficient condition for the accuracy.\footnote{We give the surrogate calibration conditions for the accuracy in App.~\ref{sec:supp:accuracy-calibration}.}
It is left as an open problem to seek for necessary conditions.
\section{Consistency Analysis: Bridging Finite Sample and Asymptotics}
\label{sec:consistency-analysis}

In this section, we analyze statistical properties of the estimator $\hat\calV_\phi$ in Eq.~\eqref{eq:gradient-estimator}.
To make our analysis simple, the linear-in-input model $f_\theta(x) = \theta^\top x$ is considered throughout this section,
where $\theta \in \Theta$ is a classifier parameter and $\Theta \subset \R^d$ is a compact parameter space.
The maximization procedure introduced above can be naturally seen as \emph{Z-estimation}~\citep{vdV:2000},
which is an estimation procedure to solve an estimation equation.
In our case, the maximization of $\calU_\phi$ is reduced to a Z-estimation problem to solve the system $\hat\calV_\phi(f) = 0$.
The first lemma shows that the derivative estimator $\hat\calV_\phi$ admits the uniform convergence.
Its proof is deferred to App.~\ref{sec:supp:proof-uniform-convergence}.
\begin{lemma}[Uniform convergence]
    \label{lem:uniform-convergence}
    For simplicity, assume that $m = n / 2$.
    For $k = 0, 1$, let $c_k \doteq \sup_{\xi \in \R, y \in \calY}|W_{k,\phi}(\xi, y)| < +\infty$.
    Assume that $W_{k}(\cdot, y)$ for $y \in \calY$ are $\rho_k$-Lipschitz continuous for some $0 < \rho_k < \infty$,
    and that $\|x\| < c_\calX$ ($\forall x \in \calX$) and $\|\theta\| < c_\Theta$ ($\forall \theta \in \Theta$) for some $0 < c_\calX, c_\Theta < \infty$.
    Then,
    \begin{align}
        \sup_{\theta \in \Theta} \left\|\hat\calV_\phi(f_\theta) - \calV_\phi(f_\theta)\right\| = \mathcal{O}_p(n^{-\frac{1}{2}}),
        \label{eq:uniform-convergence}
    \end{align}
    where $\mathcal{O}_p$ denotes the order in probability.
\end{lemma}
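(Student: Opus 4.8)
The plan is to bound the $U$-statistic-like deviation $\sup_{\theta}\|\hat\calV_\phi(f_\theta) - \calV_\phi(f_\theta)\|$ by a standard empirical-process argument, exploiting that $\hat\calV_\phi$ is a two-sample bilinear statistic over $\calS_0$ and $\calS_1$. First I would expand $\hat\calV_\phi(f_\theta)$ into its two pieces, $\frac{1}{m(n-m)}\sum_{i\le m}\sum_{j>m}\tilde W_{1,\phi}(z_j)\nabla_f\tilde W_{0,\phi}(z_i)$ and the symmetric term, and recognize each as a product of two \emph{independent} sample averages: e.g. $\bigl(\frac{1}{n-m}\sum_{j>m}\tilde W_{1,\phi}(z_j)\bigr)\bigl(\frac{1}{m}\sum_{i\le m}\nabla_f\tilde W_{0,\phi}(z_i)\bigr)$, whose expectation is exactly the corresponding term of $\calV_\phi(f_\theta)=\E[W_{1,\phi}]\E[\nabla W_{0,\phi}]-\E[W_{0,\phi}]\E[\nabla W_{1,\phi}]$ by independence of $\calS_0,\calS_1$. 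Writing $A_\theta,\hat A_\theta$ for $\E[W_{1,\phi}],\tfrac{1}{n-m}\sum\tilde W_{1,\phi}$ and $B_\theta,\hat B_\theta$ for $\E[\nabla W_{0,\phi}],\tfrac1m\sum\nabla\tilde W_{0,\phi}$, the identity $\hat A_\theta\hat B_\theta - A_\theta B_\theta = (\hat A_\theta - A_\theta)\hat B_\theta + A_\theta(\hat B_\theta - B_\theta)$ reduces everything to controlling $\sup_\theta|\hat A_\theta - A_\theta|$, $\sup_\theta\|\hat B_\theta - B_\theta\|$ and the analogous quantities for the second term, together with uniform boundedness of $\hat B_\theta$ and $\hat A_\theta$.

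Next I would establish the two ingredients needed for each of these four suprema. \emph{Boundedness:} the hypothesis $c_k=\sup_{\xi,y}|W_{k,\phi}(\xi,y)|<\infty$ gives $|\hat A_\theta|\le c_1$, $|A_\theta|\le c_1$, $|\hat\calU_\phi^{\mathrm n}|$-type terms $\le c_0$; and since $\nabla_f\tilde W_{k,\phi}(z)=\partial_\xi\tilde W_{k,\phi}(f_\theta(x),y)\,x$ with $|\partial_\xi\tilde W_{k,\phi}|$ bounded by the Lipschitz constant $\rho_k$ (as $W_k(\cdot,y)$ is $\rho_k$-Lipschitz) and $\|x\|<c_\calX$, we get $\|\nabla_f\tilde W_{k,\phi}(z)\|\le\rho_k c_\calX$, hence $\|\hat B_\theta\|,\|B_\theta\|\le\rho_0 c_\calX$. \emph{Uniform law of large numbers with rate:} each of $\theta\mapsto\hat A_\theta$, $\theta\mapsto\hat B_\theta$ is an empirical average of functions $z\mapsto g_\theta(z)$ that are uniformly bounded and Lipschitz in $\theta$ — Lipschitz because $f_\theta(x)=\theta^\top x$ changes by at most $c_\calX\|\theta-\theta'\|$ and $\tilde W_{k,\phi}(\cdot,y)$, $\partial_\xi\tilde W_{k,\phi}(\cdot,y)$ are Lipschitz (the latter by convexity of $\phi$ plus the Lipschitz bound, perhaps needing that $\phi'$ is bounded on the relevant compact range of $f_\theta$). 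The function class $\{g_\theta:\theta\in\Theta\}$ indexed by a compact $\Theta\subset\R^d$ is then a Lipschitz-parametrized, uniformly bounded class, so its covering numbers are polynomial and a chaining / bracketing argument (e.g. \citet{vdV:2000,Mohri:2012}) yields $\E\sup_\theta|\hat A_\theta - A_\theta| = \mathcal O(n^{-1/2})$, and likewise $\mathcal O_p(n^{-1/2})$ for the vector-valued $\hat B_\theta$ coordinatewise. Combining via the product decomposition and the triangle inequality, and using $m=n/2$, gives $\sup_\theta\|\hat\calV_\phi(f_\theta)-\calV_\phi(f_\theta)\| \le C_1\sup_\theta|\hat A_\theta - A_\theta| + C_2\sup_\theta\|\hat B_\theta - B_\theta\| + (\text{symmetric terms}) = \mathcal O_p(n^{-1/2})$.

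The main obstacle I anticipate is making the ``Lipschitz-in-$\theta$, uniformly bounded'' claim fully rigorous for the \emph{gradient} terms $\nabla_f\tilde W_{k,\phi}$: this requires that $\phi'$ be bounded and Lipschitz on $\{f_\theta(x):\theta\in\Theta,x\in\calX\}$, a compact interval of radius $c_\Theta c_\calX$, which follows from $\phi$ being convex and differentiable a.e.\ — one has to argue that on a compact interval a convex function has bounded subgradients and that the a.e.-derivative is of bounded variation, so the class of $\partial_\xi\tilde W_{k,\phi}(f_\theta(\cdot),\cdot)$ still has controlled complexity (alternatively, invoke the given $\rho_k$-Lipschitz continuity of $W_k(\cdot,y)$ directly to bound $|\partial_\xi W_{k,\phi}|$ and its variation). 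A secondary subtlety is that $\hat\calV_\phi$ is a genuine two-sample statistic, so one should not blindly apply a one-sample ULLN; the clean route is precisely the factorization above into a \emph{product} of two independent one-sample empirical processes, after which the two-sample structure causes no extra difficulty beyond a union bound over the two factors and the two terms. Everything else is routine: assembling the pieces with the triangle inequality and absorbing constants depending only on $c_0,c_1,\rho_0,\rho_1,c_\calX,c_\Theta,d$.
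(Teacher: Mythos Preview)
Your high-level strategy---factor the two-sample bilinear statistic into a product of independent one-sample averages and then control each factor via a uniform law of large numbers---is correct and matches the paper's own decomposition (the paper also splits $\hat A_\theta\hat B_\theta-A_\theta B_\theta$ in exactly this way when bounding its term (A)). Where you diverge is in the machinery: the paper first establishes a bounded-difference property and applies McDiarmid's inequality to reduce to bounding $\E_\calS[\calE(\calS)]$, then uses symmetrization and the Ledoux--Talagrand contraction inequality to pass to the Rademacher complexity $\mathfrak R_{n/2}(\calF)$ of the linear class, which is $c_\calX c_\Theta/\sqrt{n}$. Your route through covering numbers of a Lipschitz-parametrized class over compact $\Theta$ would also yield $\mathcal O_p(n^{-1/2})$, but the paper's Rademacher route gives explicit constants and avoids the metric-entropy bookkeeping.

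The one place your plan is genuinely thinner than the paper is the gradient class $\{z\mapsto\partial_\xi\tilde W_{k,\phi}(\theta^\top x,y)\,x:\theta\in\Theta\}$. You correctly flag this as the main obstacle, but your proposed fixes (``$\phi'$ bounded and Lipschitz on a compact interval'' or ``bounded variation'') do not quite close the gap: for a $\tau$-discrepant loss, $\phi'$ has a \emph{jump} at the origin ($\lim_{t\searrow0}\phi'(t)\ne\lim_{t\nearrow0}\phi'(t)$), so $\theta\mapsto\partial_\xi\tilde W_{k,\phi}(\theta^\top x,y)$ is \emph{not} Lipschitz in $\theta$ across $\theta^\top x=0$, and a direct Lipschitz-parametrization covering argument breaks down there. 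The paper resolves this with a dedicated lemma: it writes $\phi=\phi_{+1}\ind{m>0}+\phi_{-1}\ind{m\le0}$, introduces separate smoothness constants $\gamma_{+1},\gamma_{-1}$ for the two pieces, rewrites the indicator via $\ind{m\le0}=\tfrac12(1-m/|m|)$, and then applies contraction to each smooth piece, obtaining $\mathfrak R_n(\tilde W_{k,\phi}'\circ\calF)\le 2(\gamma_{+1}+\gamma_{-1})\mathfrak R_n(\calF)$. If you stick with your covering-number approach you would need an analogous splitting, or else exploit monotonicity of $\phi'$ (convexity) to control the class complexity directly; the ``bounded variation'' remark points in a workable direction but is not yet a proof.
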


The Lipschitz continuity and smoothness assumptions in Lemma~\ref{lem:uniform-convergence} can be satisfied if the surrogate loss $\phi$ satisfies a certain Lipschitzness and smoothness.
Note that Lemma~\ref{lem:uniform-convergence} still holds for $\tau$-discrepant surrogates
since we allow surrogates to have different smoothness parameters for both positive and negative domains.
Lemma~\ref{lem:uniform-convergence} is the basis for showing the consistency.
Let $\theta^* \doteq \argmax_{\theta \in \Theta} \calU_\phi(f_\theta)$ and
$\hat\theta_n = \argmax_{\theta \in \Theta} \hat\calU_\phi(f_\theta)$.
Under the identifiability described below, $f_{\theta^*}$ and $f_{\hat\theta_n}$ are roots of $\calV_\phi$ and $\hat\calV_\phi$, respectively.
Then, we can show the consistency of $\hat\theta_n$.

\begin{theorem}[Consistency]
    \label{thm:consistency}
    Assume that $\theta^*$ is \emph{identifiable}, that is,
    $\inf\{ \|\calV_\phi(f_\theta)\| \mid \|\theta - \theta^*\| \geq \epsilon \} > \|\calV_\phi(f_{\theta^*})\| = 0$ for all $\epsilon > 0$,
    and that Eq.~\eqref{eq:uniform-convergence} holds for $\hat\calV_\phi$.
    Then, $\hat\theta_n \overset{p}{\to} \theta^*$.
\end{theorem}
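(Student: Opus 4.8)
The plan is to run the textbook consistency argument for Z-estimators, e.g.\ \citet[Theorem 5.9]{vdV:2000}, viewing $\theta \mapsto \hat\calV_\phi(f_\theta)$ and $\theta \mapsto \calV_\phi(f_\theta)$ as $\R^d$-valued estimating maps on the compact set $\Theta$. The three ingredients it needs are all available here: (i) $f_{\hat\theta_n}$ is an exact root of the empirical map, i.e.\ $\hat\calV_\phi(f_{\hat\theta_n}) = 0$ (as noted just before the statement, the quasi-concave maximizer $\hat\theta_n$ of $\hat\calU_\phi$ is a stationary point, and $\nabla_\theta\hat\calU_\phi \propto \hat\calV_\phi$); (ii) uniform convergence $\sup_{\theta \in \Theta}\|\hat\calV_\phi(f_\theta) - \calV_\phi(f_\theta)\| = \calO_p(n^{-1/2}) = o_p(1)$ from Lemma~\ref{lem:uniform-convergence}; and (iii) the well-separatedness of the population root $\theta^*$ of $\calV_\phi$, which is precisely the identifiability hypothesis.

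First I would transfer the (approximate) root property from the empirical to the population map by the triangle inequality:
\[
\|\calV_\phi(f_{\hat\theta_n})\|
\leq \|\calV_\phi(f_{\hat\theta_n}) - \hat\calV_\phi(f_{\hat\theta_n})\| + \|\hat\calV_\phi(f_{\hat\theta_n})\|
\leq \sup_{\theta \in \Theta}\|\hat\calV_\phi(f_\theta) - \calV_\phi(f_\theta)\| + 0,
\]
so $\|\calV_\phi(f_{\hat\theta_n})\| = o_p(1)$ by Lemma~\ref{lem:uniform-convergence}. Next I would convert this smallness in ``$\calV_\phi$-space'' into closeness in parameter space via identifiability: fix $\epsilon > 0$ and put $\delta \doteq \inf\{\|\calV_\phi(f_\theta)\| : \|\theta - \theta^*\| \geq \epsilon\} > 0$. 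On the event $\{\|\hat\theta_n - \theta^*\| \geq \epsilon\}$ we necessarily have $\|\calV_\phi(f_{\hat\theta_n})\| \geq \delta$, hence
\[
\P\!\left(\|\hat\theta_n - \theta^*\| \geq \epsilon\right) \leq \P\!\left(\|\calV_\phi(f_{\hat\theta_n})\| \geq \delta\right) \longrightarrow 0
\]
since $\|\calV_\phi(f_{\hat\theta_n})\| \overset{p}{\to} 0$. As $\epsilon>0$ was arbitrary, $\hat\theta_n \overset{p}{\to} \theta^*$.

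The combination above is short and routine; the genuine work has already been done in Lemma~\ref{lem:uniform-convergence}, whose proof must control the $U$-statistic-type object $\hat\calV_\phi$ uniformly over $\Theta$ using boundedness of $W_{k,\phi}$, Lipschitzness of $W_k$, and compactness of $\calX$ and $\Theta$. Conceptually, the step I expect to be the crux of the statement (rather than the calculation) is the role of the identifiability assumption: because $\hat\calU_\phi$ is only quasi-concave on $\bar\calF$ and need not be globally well-behaved, $\calV_\phi$ could a priori have spurious zeros away from $\theta^*$, and identifiability is exactly the condition ruling this out, i.e.\ the hinge on which the ``$o_p(1)$ in $\calV_\phi$ $\Rightarrow$ $o_p(1)$ in $\theta$'' implication turns. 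The remaining care-points are minor: measurability of the events above (standard given continuity of $\theta \mapsto \hat\calV_\phi(f_\theta)$ and separability of $\Theta$), and the implicit requirement that the $\argmax$ defining $\hat\theta_n$ be attained so that the stationarity $\hat\calV_\phi(f_{\hat\theta_n}) = 0$ genuinely holds, both consistent with the assumptions stated at the outset of the section.
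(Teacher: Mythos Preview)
Your proposal is correct and takes essentially the same approach as the paper: the paper's proof is literally the one-line remark that the theorem ``is an immediate result of \citet[Theorem~5.9]{vdV:2000}, given the uniform convergence (Lemma~\ref{lem:uniform-convergence}) and the identifiability assumption.'' You have simply unpacked that citation, writing out the triangle-inequality transfer and the well-separatedness step explicitly, which is more detail than the paper provides but identical in substance.
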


Theorem~\ref{thm:consistency} is an immediate result of \citet[Theorem~5.9]{vdV:2000},
given the uniform convergence (Lemma~\ref{lem:uniform-convergence}) and the identifiability assumption.
Note that the identifiability assumes that $\calV_\phi$ has a unique zero $f_{\theta^*}$,
which is also usual in the M-estimation: The global optimizer is identifiable.
Since Algorithm~\ref{alg:hybrid} is a combination of concave and quasi-concave programs,
the identifiability would be reasonable to assume.
\section{Related Work}
\label{sec:related}

In this section, we summarize the existing lines of research on the optimization of generalized performance metrics,
which elucidates advantages of our approach.

\emph{\one~Surrogate optimization:}
One of the earliest attempts to optimize non-decomposable performance metrics dates back to \citet{Joachims:2005},
formulating the structured SVM as a surrogate objective.
However, \citet{Dembczynski:2013} showed that this surrogate is inconsistent,
which means that the surrogate maximization does not necessarily imply the maximization of the true metric.
\citet{Kar:2014} showed the sublinear regret for the structural surrogate by \citet{Joachims:2005} in online setting.
Later, \citet{Yu:2015}, \citet{Eban:2017}, and \citet{Berman:2018} have tried different surrogates,
but their calibration has not been studied yet.

\emph{\two~Plug-in rule:}
Instead of the surrogate optimization, \citet{Dembczynski:2013} mentioned that a plug-in rule is consistent,
where $\eta$ and a threshold parameter are estimated independently.
We can estimate $\eta$ by minimizing strictly proper losses~\citep{Reid:2009}.
The plug-in rule has been investigated in many settings~\citep{Nan:2012,Dembczynski:2013,Koyejo:2014,Narasimhan:2014,Busa-Fekete:2015,Yan:2018}.
However, one of the weaknesses of the plug-in rule is that it requires an accurate estimate of $\eta$,
which is less sample-efficient than the usual classification with convex surrogates~\citep{Bousquet:2004,Tsybakov:2008}.
Moreover, estimation of the threshold parameter heavily relies on an estimate of $\eta$.

\emph{\three~Cost-sensitive risk minimization:}
On the other hand, \citet{Parambath:2014} is a pioneering work to focus on the \emph{pseudo-linearity} of the metrics,
which reduces their maximization to an alternative optimization with respect to a classifier and the sublevel.
This can be formulated as an iterative cost-sensitive risk minimization~\citep{Koyejo:2014,Narasimhan:2015,Narasimhan:2016,Sanyal:2018}.
Though these methods are blessed with the consistency,
they need to train classifiers many times, which may lead to high computational costs,
especially for complex hypothesis sets.

\textbf{Remark:}
Our proposed methods can be considered to belong to the family \one,
while one of the crucial differences is the fact that we have calibration guarantee.
We do not need to estimate the class-posterior probability as in \two, or train classifiers many times as in \three.
This comparison is summarized in Tab.~\ref{tab:comparison}.

\begin{table}[t]
    \small
    \caption{Comparison of related work.}
    \begin{tabular}{c||c|c|c} \hline
        Method & Consistency & \shortstack{Avoids \\ to estimate $\eta$} & \shortstack{Efficient \\ optimization} \\ \hline \hline
        \textbf{ours} & \cmark & \cmark & \cmark \\ \hline
        \one & \xmark & \cmark & \cmark \\
        \two & \cmark & \xmark & \cmark \\
        \three & \cmark & \cmark & \xmark \\ \hline
    \end{tabular}
    \label{tab:comparison}
\end{table}
\section{Experiments}
\label{sec:experiments}

In this section, we investigate empirical performances of the surrogate optimizations (Algorithm~\ref{alg:hybrid} with NGA and normalized BFGS).
Details of datasets, baselines, and full experimental results are shown in App.~\ref{sec:supp:experiment}.

\begin{table*}[t]
    \centering
    \caption{
        Benchmark results:
        50 trials are conducted for each pair of a method and dataset.
        Standard errors (multiplied by $10^4$) are shown in parentheses.
        Bold-faces indicate outperforming methods, chosen by one-sided t-test with the significant level 5\%.
        We emphasize that the same number of gradient updates are executed for both U-GD and U-BFGS.
    }
    \label{tab:benchmark-result}
    \scalebox{0.9}{
    \begin{tabular}{cccccc} \hline
        (F${}_1$-measure) & \multicolumn{2}{c}{Proposed} & \multicolumn{3}{c}{Baselines} \\ \cmidrule(lr){2-3} \cmidrule(lr){4-6}
        Dataset & U-GD & U-BFGS & ERM & W-ERM & Plug-in
        \\ \hline
        adult & 0.617 (101)  & 0.660 (11)  & 0.639 (51)  & 0.676 (18)  & \textbf{0.681 (9)} \\
        breast-cancer & \textbf{0.963 (31)} & \textbf{0.960 (32)} & 0.950 (37)  & 0.948 (44)  & 0.953 (40)  \\
        diabetes & \textbf{0.834 (32)} & \textbf{0.828 (31)} & 0.817 (50)  & 0.821 (40)  & 0.820 (42)  \\
        sonar & \textbf{0.735 (95)} & \textbf{0.740 (91)} & 0.706 (121)  & 0.655 (189)  & \textbf{0.721 (113)} \\
        \hline
    \end{tabular}
    }
    \scalebox{0.9}{
    \begin{tabular}{cccccc} \hline
        (Jaccard index) & \multicolumn{2}{c}{Proposed} & \multicolumn{3}{c}{Baselines} \\ \cmidrule(lr){2-3} \cmidrule(lr){4-6}
        Dataset & U-GD & U-BFGS & ERM & W-ERM & Plug-in
        \\ \hline
        adult & 0.499 (44)  & 0.498 (11)  & 0.471 (51)  & 0.510 (20)  & \textbf{0.516 (10)} \\
        breast-cancer & \textbf{0.921 (54)} & \textbf{0.918 (55)} & 0.905 (66)  & 0.903 (78)  & \textbf{0.913 (69)} \\
        diabetes & \textbf{0.714 (44)} & 0.702 (50)  & 0.692 (70)  & 0.698 (56)  & 0.695 (60)  \\
        sonar & \textbf{0.600 (125)} & \textbf{0.600 (111)} & 0.552 (147)  & 0.495 (202)  & \textbf{0.572 (134)} \\
        \hline
    \end{tabular}
    }
\end{table*}

\textbf{Implementation Details of Proposed Methods:}
The linear-in-input model $f_\theta(x) = \theta^\top x$ was used for the hypothesis set $\calF$.
As the initializer of $\theta$, the ERM minimizer trained by SVM was used.
For both NGA and BFGS, gradient updates were iterated 300 times.
NGA and normalized BFGS are referred to as U-GD and U-BFGS below, respectively.
The surrogate loss shown in Fig.~\ref{fig:fisher-consistent-surrogate-loss} was used:
$\phi(m) = \log_2(1+e^{-m})$ when $m \leq 0$ and $\phi(m) = \log_2(1+e^{-\tau m})$ when $m > 0$,
where $\tau$ was set to $0.33$ in the F${}_1$-measure case and $0.75$ in the Jaccard index case.\footnote{
    The discrepancy parameter $\tau$ should be chosen within $(0, \frac{1}{3})$ and $(0, 1)$ for the F${}_1$-measure and Jaccard index, respectively.
    Here, we fix them to the slightly small values than the upper limits of their ranges.
    In App.~\ref{sec:supp:tau-sensitivity}, we study the relationship between performance sensitivity on $\tau$.
}
The training set was divided into 4 to 1 and the latter set was used for validation.
We used a common learning rate in Algorithm~\ref{alg:hybrid}, which was chosen from $\{10^1, 10^{-1}, 10^{-3}, 10^{-5}\}$ by cross validation.

\begin{figure}[t]
    \centering
    \begin{minipage}{0.49\columnwidth}
        \includegraphics[width=\columnwidth]{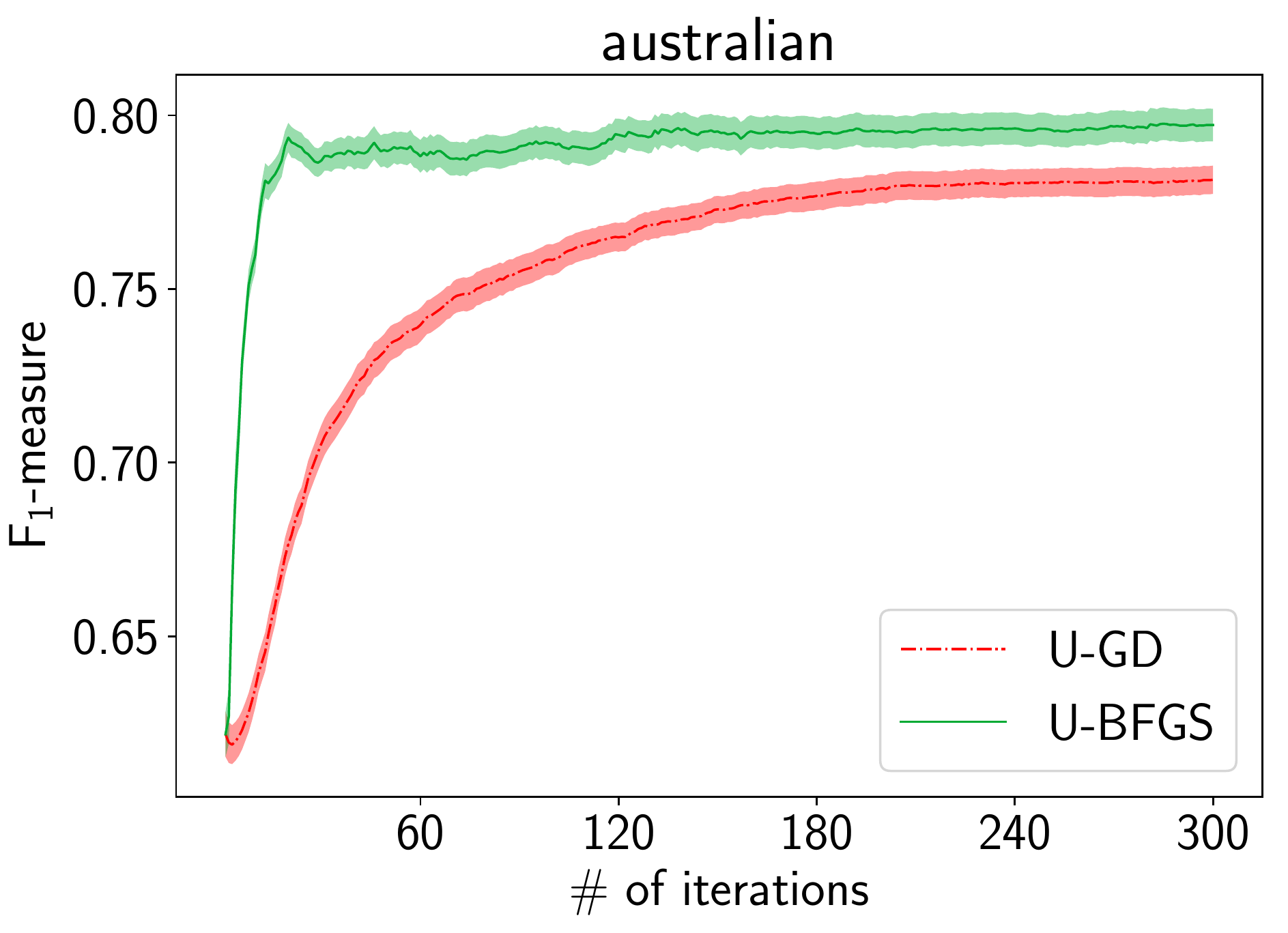}
    \end{minipage}
    \begin{minipage}{0.49\columnwidth}
        \includegraphics[width=\columnwidth]{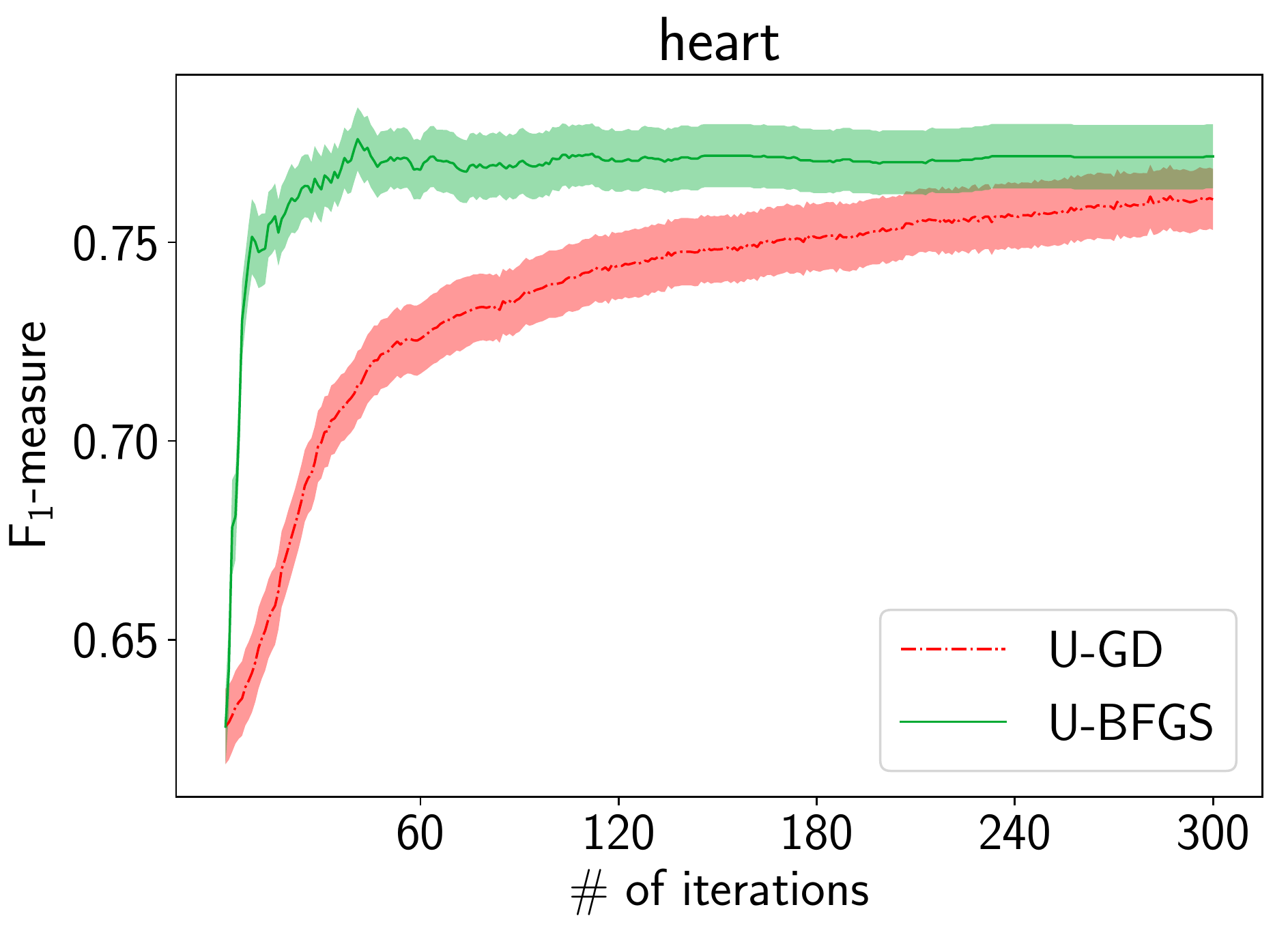}
    \end{minipage}
    \caption{
        Convergence comparison of the F${}_1$-measure (left two figures) and Jaccard index (right two figures).
        Standard errors of 50 trials are shown as shaded areas.
    }
    \label{fig:itertest}
\end{figure}

\begin{figure}[t]
    \centering
    \begin{minipage}{0.49\columnwidth}
        \includegraphics[width=\columnwidth]{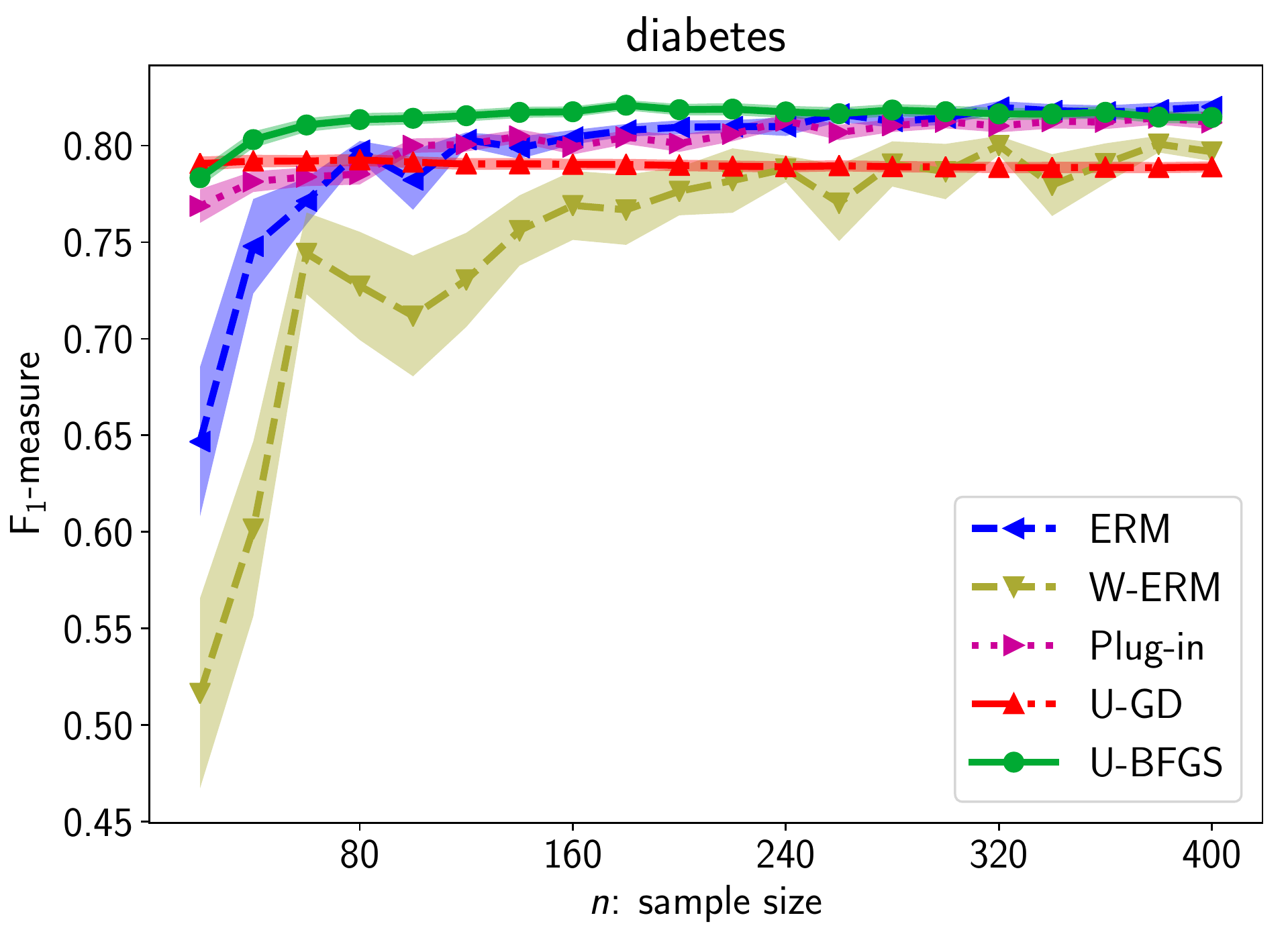}
    \end{minipage}
    \begin{minipage}{0.49\columnwidth}
        \includegraphics[width=\columnwidth]{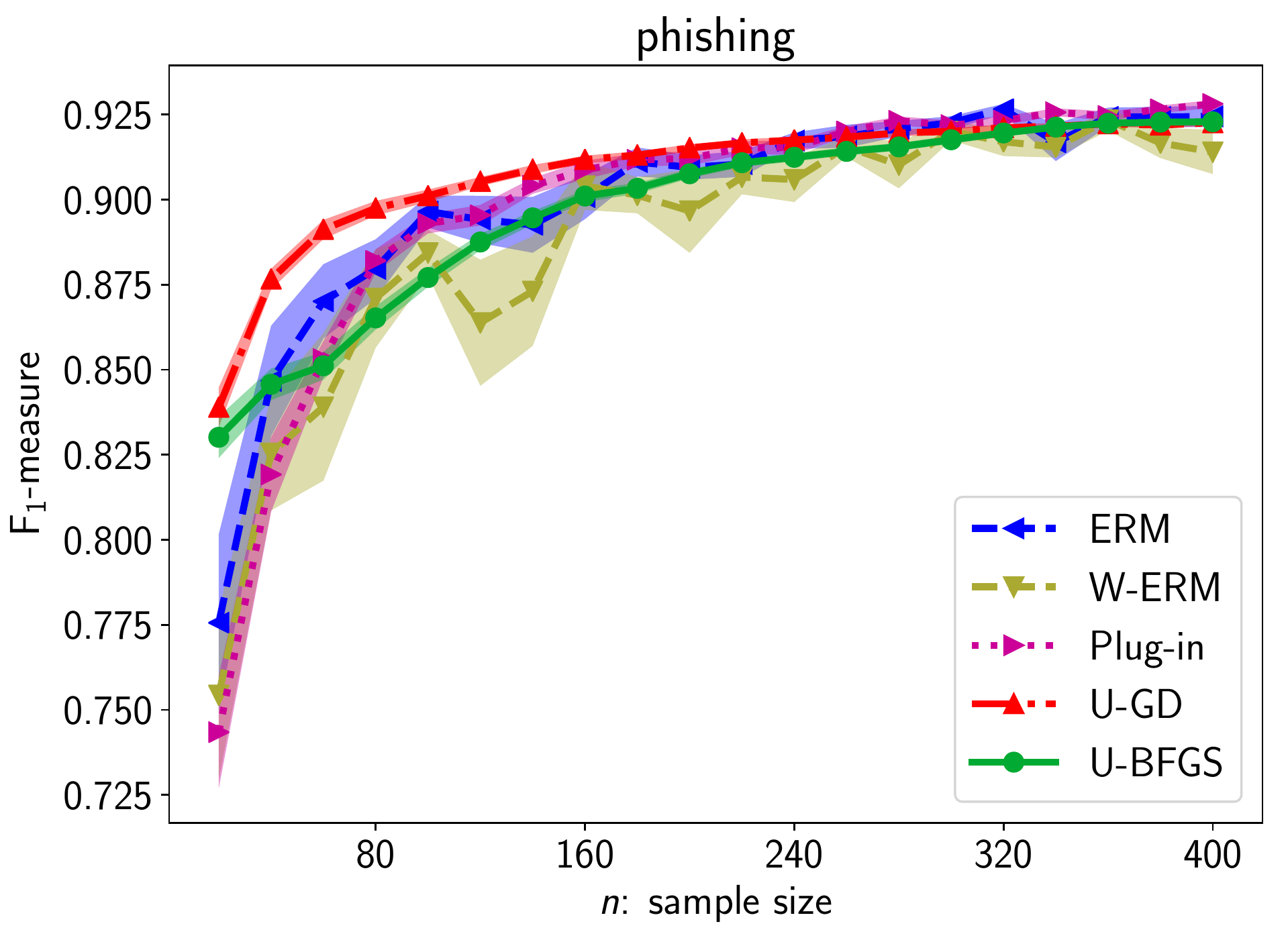}
    \end{minipage}
    \caption{
        The relationship of the test F${}_1$-measure (left two figures) / Jaccard index (right two figures) and sample size (horizontal axes).
        Standard errors of 50 trials are shown as shaded areas.
    }
    \label{fig:sample-complexity}
\end{figure}

\textbf{Convergence Comparison:}
We compare convergence behaviors of U-GD and U-BFGS.
In this experiment, we ran them 300 iterations from randomly initialized parameters drawn from $\calN(0_d, I_d)$.
The results are summarized in Fig.~\ref{fig:itertest}.
As we expected, U-BFGS converges much faster than U-GD in most of the cases, up to 30 iterations.
Note that U-BFGS and U-GD are in the trade-off relationship in that the former converges within fewer steps while the latter can update the solution faster in each step.

\textbf{Performance Comparison in Benchmark:}
We compared the proposed methods with baselines.
The results of the F${}_1$-measure and Jaccard index are summarized in Tab.~\ref{tab:benchmark-result}, respectively,
from which we can see the better or at least comparable performances of the proposed methods.

\textbf{Sample Complexity:}
We empirically study the relationship between the performance and the sample size.
We randomly subsample each original dataset to reduce the sample sizes to $\{20, 40, \dots, 400\}$,
and train all methods on the reduced samples.
The experimental results are shown in Fig.~\ref{fig:sample-complexity}.
Overall, U-GD and U-BFGS outperform,
which is especially significant when the sample sizes are quite small.
It is worth noting that U-GD works even better than U-BFGS in some cases,
though U-GD does not behave significantly better in Tab.~\ref{tab:benchmark-result}.
This can happen because the Hessian approximation in BFGS might not work well when the sample sizes are extremely small.

\section{Conclusion}
\label{sec:conclusion}

In this work, we gave a new insight into the calibrated surrogate for the linear-fractional metrics.
Sufficient conditions for the surrogate calibration were stated,
which is the first calibration result for the linear-fractional metrics to the best of our knowledge.
The surrogate maximization can be performed by the combination of concave and quasi-concave programs, and its performance is validated via simulations.

\section*{Acknowledgement}
We would like to thank Nontawat Charoenphakdee, Junya Honda, and Akiko Takeda for fruitful discussions.
HB was supported by JST, ACT-I, Japan, Grant Number JPMJPR18UI.
MS was supported by JST CREST Grant Number JPMJCR18A2.

\bibliography{bao}
\bibliographystyle{apalike}

\clearpage

\newpage
\appendix
\onecolumn

\begin{center}
    \fontsize{14.4pt}{16pt}\selectfont
    \textbf{
        Supplementary Material for ``Calibrated Surrogate Maximization of Linear-fractional Utility in Binary Classification''
    }
\end{center}

\section{Calibration Analysis and Deferred Proofs from Section~\ref{sec:calibration}}
\label{sec:supp:calibration}

In this section, we analyze calibration of the surrogate utility.
Before proceeding, we need to describe Bayes optimal classifier for a given metric.

\begin{definition}
    Given a linear-fractional utility $\calU$, \emph{Bayes optimal} set $\calB \subset \R^\calX$ is a set of functions that achieve the supremum of $\calU$,
    that is, $\calB \doteq \{f \mid \calU(f) = \calU^\dagger = \sup_{f'}\calU(f') \}$.
\end{definition}

Classifiers in $\calB$ are referred to as \emph{Bayes optimal classifiers}.
Note that they are not necessarily unique.
In this work, we assume that $\calB \ne \emptyset$.
First, we characterize Bayes optimal set $\calB$.

\def\da{\Delta a}

\begin{proposition}
    \label{prop:bayes-optimal-set}
    Given a linear-fractional utility $\calU$ in Eq.~\eqref{eq:generalized-metric},
    the Bayes optimal set $\calB$ for $\calU$ is
    \begin{align*}
        \calB = \{f \mid f(x)\{(\da_0 - \da_1\calU(f))\eta(x) - (a_{1,-1}\calU(f) - a_{0,-1})\} > 0 \; \forall x \in \calX\},
    \end{align*}
    where $\da_0 \doteq a_{0,+1} - a_{0,-1}$ and $\da_1 \doteq a_{1,+1} - a_{1,-1}$.
\end{proposition}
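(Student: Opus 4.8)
The plan is to exploit the standard ``linearization'' trick for fractional objectives together with the fact that the linearized objective is decomposable. Write $N(f) \doteq \E_X[W_0(f(X),\eta(X))]$, $D(f) \doteq \E_X[W_1(f(X),\eta(X))]$, and set $u^\dagger \doteq \calU^\dagger = \sup_{f'}\calU(f')$. Assuming $D(f) > 0$ for every $f$ (which holds for the metrics of interest; e.g.\ $D \geq \beta^2\pi$ for the F${}_\beta$-measure), define the auxiliary functional
\[
    G(f) \doteq N(f) - u^\dagger D(f) = D(f)\bigl(\calU(f) - u^\dagger\bigr).
\]
Since $\calU(f) \leq u^\dagger$ and $D(f) > 0$, we have $G(f) \leq 0$ for all $f$, with $G(f) = 0$ exactly when $f \in \calB$; because $\calB \neq \emptyset$ by assumption, $\sup_f G(f) = 0$ is attained, so $\calB = \argmax_f G(f)$. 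It thus suffices to characterize the maximizers of $G$.

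The key point is that $G$ is \emph{decomposable}: $G(f) = \E_X[\,W_0(f(X),\eta(X)) - u^\dagger W_1(f(X),\eta(X))\,]$, and the integrand depends on $f$ only through $\ell(-f(x)) = \ind{\sign(f(x)) = +1}$ at each $x$. Comparing the value of the integrand at a point $x$ when predicting $+1$ versus $-1$, the constants $b_0, b_1$ cancel and the ``gain'' of predicting $+1$ equals
\[
    V(x) \doteq \bigl(a_{0,+1}\eta(x) + a_{0,-1}(1-\eta(x))\bigr) - u^\dagger\bigl(a_{1,+1}\eta(x) + a_{1,-1}(1-\eta(x))\bigr) = (\da_0 - \da_1 u^\dagger)\eta(x) - (a_{1,-1}u^\dagger - a_{0,-1}),
\]
after collecting terms using $\da_k = a_{k,+1} - a_{k,-1}$. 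Maximizing $G$ therefore reduces to pointwise maximization: an optimal $f$ must predict $+1$ wherever $V(x) > 0$ and $-1$ wherever $V(x) < 0$, i.e.\ $\sign(f(x)) = \sign(V(x))$, which is precisely $f(x)V(x) > 0$. Finally, for $f \in \calB$ we have $u^\dagger = \calU(f)$, so $V$ may be rewritten with $\calU(f)$ in place of $u^\dagger$, giving the stated description of $\calB$.

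The substantive step is this reduction via $G$ and the observation that, since $\calB$ is nonempty, $\argmax_f G(f)$ coincides with $\calB$; after that, pointwise optimization of a bounded measurable integrand is routine (a measurable selector achieving the pointwise maximum exists, and any maximizer agrees with it $\P_\calX$-a.e.). I expect the remaining obstacles to be minor: (i) justifying $D(f) > 0$ from the admissibility constraints on $(a_{1,\pm 1}, b_1)$ that guarantee $0 \le \calU \le 1$ — this is where the structure of the coefficients enters; and (ii) handling the decision boundary $\{x : V(x) = 0\}$ (and, relatedly, the convention $\sign(0) = -1$), on which either prediction is optimal, so the strict-inequality characterization should be read modulo a $\P_\calX$-null set, i.e.\ under a mild non-degeneracy assumption on $\eta$.
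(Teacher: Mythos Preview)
Your argument is correct and takes a genuinely different route from the paper's proof. The paper works directly with the fractional functional $\bar\calU(\lambda)$ (where $\lambda(x)=\ell(-f(x))$), computes its Fr{\'e}chet derivative, and then invokes a variational-inequality lemma from \citet{Koyejo:2014} to conclude that the optimal $\lambda^\dagger$ must satisfy $\sign(\lambda^\dagger(x))=\sign([\nabla_\lambda\bar\calU(\lambda^\dagger)]_x)$; the derivative at $x$ turns out to be a positive multiple of exactly your $V(x)$. You instead linearize via the Dinkelbach-type auxiliary $G(f)=N(f)-u^\dagger D(f)$, observe that $\calB=\argmax_f G(f)$, and exploit the decomposability of $G$ to reduce to pointwise sign-matching with $V(x)$. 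Your route is more elementary and self-contained (no external lemma, no differentiation of the ratio), and it makes transparent why the characterization is a \emph{cost-sensitive} threshold rule; the paper's route, on the other hand, yields the derivative expression that is reused verbatim in the subsequent calibration proofs. Both approaches share the same two loose ends you flagged: positivity of the denominator $D(f)$ and the measure-zero handling of $\{x:V(x)=0\}$, the latter of which the paper also sweeps under an explicit $\eta$-continuity assumption stated just before the proof of Theorem~\ref{thm:f-measure-calibration}.
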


\begin{proof}
    The maximization problem in Eq.~\eqref{eq:utility-maximization} can be restated as follows.
    \begin{align*}
        \max_{\lambda \in \Lambda} \bar\calU(\lambda);
        \quad \bar\calU(\lambda) \doteq \frac{
            \E_X[a_{0,+1}\lambda(X)\eta(X) + a_{0,-1}\lambda(X)(1-\eta(X)) + b_0]
        }{
            \E_X[a_{1,+1}\lambda(X)\eta(X) + a_{1,-1}\lambda(X)(1-\eta(X)) + b_1]
        },
    \end{align*}
    where $\Lambda \doteq \{x \mapsto \ell(-f(x)) \mid f \in \calF\} \subset \R^\calX$.
    First, the Fr{\'e}chet derivative of $\bar\calU$ evaluated at $x$ is obtained as follows.
    \begin{align*}
        [\nabla_\lambda\bar\calU(\lambda)]_x
        &= \frac{(\da_0\eta(x) + a_{0,-1}) \E[W_1] - (\da_1\eta(x) + a_{1,-1})\E[W_0]}{\E[W_1]^2} p(x)
        \\
        &= \frac{p(x)}{\E[W_{1}]}\left\{ \left(\da_0 - \da_1\frac{\E[W_{0}]}{\E[W_{1}]}\right)\eta(x) - \left(a_{1,-1}\frac{\E[W_{0}]}{\E[W_{1}]} - a_{0,-1}\right) \right\}
        \\
        &= \frac{p(x)}{\E[W_{1}]}\left\{ (\da_0 - \da_1\bar\calU(\lambda))\eta(x) - (a_{1,-1}\bar\calU(\lambda) - a_{0,-1}) \right\}
    \end{align*}
    Let $f^\dagger \in \calF$ be a function that maximizes $\calU$,
    and $\lambda^\dagger \doteq \ell(-f^\dagger(\cdot))$.
    Then, $\lambda^\dagger$ maximizes $\bar\calU$, and it satisfies \citep[lemma 12]{Koyejo:2014}
    \begin{align*}
        \int_\calX [\nabla_\lambda\bar\calU(\lambda^\dagger)]_x\lambda^\dagger(x)dx
        \geq \int_\calX [\nabla_\lambda\bar\calU(\lambda^\dagger)]_x\lambda(x)dx
        \quad \forall \lambda \in \Lambda.
    \end{align*}
    Thus, the necessary condition for local optimality is that $\sign(\lambda^\dagger(x)) = \sign([\nabla_\lambda\bar\calU(\lambda^\dagger)]_x)$ for all $x \in \calX$.\footnote{
        This can be confirmed in a similar manner to the proof of \citet[Theorem 3.1]{Yan:2018}.
    }
    Since $\sign(\lambda^\dagger(x)) = \sign(\ell(-f^\dagger(x))) = \sign(f^\dagger(x))$,
    the above condition is $\sign(f^\dagger(x)) = \sign([\nabla_\lambda\bar\calU(\lambda^\dagger)]_x)$ for all $x \in \calX$,
    which is equivalent to the condition $f^\dagger(x)\{ (\da_0 - \da_1\calU(f^\dagger))\eta(x) - (a_{1,-1}\calU(f^\dagger) - a_{0,-1}) \} > 0$ for all $x \in \calX$.
    This concludes the proof.
    Note that $p(x) / \E[W_{1}]$ is a positive value, and $\bar\calU(\lambda^\dagger) = \calU(f^\dagger)$.
\end{proof}

You may confirm that Proposition~\ref{prop:bayes-optimal-set} is consistent with Bayes optimal classifier in the classical case, accuracy~\citep{Bartlett:2006}:
a Bayes optimal classifier $f^\dagger$ should satisfy $f^\dagger(x)(2\eta(x) - 1) > 0$ for all $x \in \calX$,
since $a_{0,+1} = 1$, $a_{0,-1} = -1$, $a_{1,+1} = a_{1,-1} = b_0 = b_1 = 0$.

Next, we state a proposition which gives a direction to prove the surrogate calibration of a surrogate utility.
This proposition follows a latter half of \citet[Theorem 2]{Gao:2015}.
\begin{proposition}
    \label{prop:calibration}
    Fix a true utility $\calU$, a surrogate utility $\calU_\phi$,
    and let $\calB$ a Bayes optimal set corresponding to the utility $\calU$.
    Assume that
    \begin{align}
        \sup_{f \not\in \calB}\calU_\phi(f) < \sup_f\calU_\phi(f)
        \label{eq:utility-calibration}
        .
    \end{align}
    Then, the surrogate utility $\calU_\phi$ is $\calU$-calibrated.
\end{proposition}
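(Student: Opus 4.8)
The plan is to prove the contrapositive of the implication defining $\calU$-calibration: if a sequence fails to drive $\calU$ to its supremum, then it also fails to drive $\calU_\phi$ to its supremum. Concretely, I would fix an arbitrary distribution $\P$ (so that $\calU$, $\calU_\phi$, $\calB$, $\calU^\dagger$, and $\calU_\phi^*$ are all determined) and take any sequence $\{f_l\}_{l\ge1}$ of measurable functions with $\calU_\phi(f_l)\to\calU_\phi^*$. Suppose, toward a contradiction, that $\calU(f_l)\not\to\calU^\dagger$. Since $\calU(f_l)\le\calU^\dagger$ for every $l$ by definition of the supremum, non-convergence is equivalent to the existence of a constant $\epsilon>0$ and a subsequence $\{f_{l_k}\}_{k\ge1}$ with $\calU(f_{l_k})\le\calU^\dagger-\epsilon$ for all $k$.

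The key observation is then that each $f_{l_k}$ lies outside the Bayes optimal set: because $\calB=\{f\mid\calU(f)=\calU^\dagger\}$ and $\calU\le\calU^\dagger$ pointwise, we have $\calB^c=\{f\mid\calU(f)<\calU^\dagger\}$, and $\calU(f_{l_k})\le\calU^\dagger-\epsilon<\calU^\dagger$ forces $f_{l_k}\in\calB^c$. Writing $\delta\doteq\calU_\phi^*-\sup_{f\notin\calB}\calU_\phi(f)$, the standing hypothesis~\eqref{eq:utility-calibration} gives $\delta>0$, hence $\calU_\phi(f_{l_k})\le\sup_{f\notin\calB}\calU_\phi(f)=\calU_\phi^*-\delta$ for every $k$. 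On the other hand, a subsequence of a convergent sequence converges to the same limit, so $\calU_\phi(f_{l_k})\to\calU_\phi^*$, contradicting the uniform bound $\calU_\phi(f_{l_k})\le\calU_\phi^*-\delta<\calU_\phi^*$. Therefore $\calU(f_l)\to\calU^\dagger$; since $\P$ and the sequence were arbitrary, $\calU_\phi$ is $\calU$-calibrated.

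I do not expect a genuine obstacle here: the whole argument is a one-line consequence of the identity $\calB^c=\{f\mid\calU(f)<\calU^\dagger\}$ combined with the hypothesis~\eqref{eq:utility-calibration}. The only minor points to handle with care are (i) phrasing the negation of $\calU(f_l)\to\calU^\dagger$ correctly — using $\calU\le\calU^\dagger$ to get a subsequence bounded \emph{below} away from $\calU^\dagger$, which is what licenses the extraction of $\{f_{l_k}\}$ — and (ii) the degenerate case $\calB^c=\emptyset$, where $\calU$ is constant equal to $\calU^\dagger$ and $\calU$-calibration is automatic (the hypothesis being read with $\sup_{f\notin\calB}\calU_\phi(f)=-\infty$). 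The substantive work of the section is not in this proposition but in verifying its hypothesis~\eqref{eq:utility-calibration} for the concrete metrics (Theorems~\ref{thm:f-measure-calibration} and~\ref{thm:jaccard-calibration}), which is exactly where the $\tau$-discrepancy assumption will be invoked to exclude near-optimal surrogate values outside $\calB$.
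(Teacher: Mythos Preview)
Your proposal is correct and follows essentially the same idea as the paper: define the gap $\delta=\calU_\phi^*-\sup_{f\notin\calB}\calU_\phi(f)>0$ and use it to rule out near-optimal surrogate values outside $\calB$. The only cosmetic difference is packaging: the paper argues directly that for all large $l$ one has $\calU_\phi^*-\calU_\phi(f_l)<\delta/2$, which forces $f_l\in\calB$ (hence $\calU(f_l)=\calU^\dagger$ exactly, not just approximately), whereas you run the same gap argument through a subsequence and a contradiction.
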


\begin{proof}
    Remind that $\calU_\phi \doteq \sup_f\calU_\phi(f)$ and let
    \begin{align*}
        \delta \doteq \calU_\phi^* - \sup_{f \not\in \calB}\calU_\phi(f) > 0,
    \end{align*}
    and $\{f_l\}_{l \geq 1}$ be any sequence such that $\calU_\phi(f_l) \overset{l \to \infty}{\longrightarrow} \calU_\phi^*$.
    Then, for any $\varepsilon > 0$, there exists $l_0 \in \Z$ such that $\calU_\phi^* - \calU_\phi(f_l) < \varepsilon$ for $l \geq l_0$.
    Here we set $\varepsilon = \frac{\delta}{2}$: $\calU_\phi^* - \calU_\phi(f_l) < \frac{\delta}{2}$ for $l \geq l_0$.
    If we assume that $f_l \not\in \calB$, this contradicts with the following facts:
    for a function $f \not\in \calB$,
    \begin{align*}
        \calU_\phi^* - \calU_\phi(f)
        = \underbrace{\calU_\phi^* - \sup_{f' \not\in \calB}\calU_\phi(f')}_{= \delta} + \underbrace{\sup_{f' \not\in \calB}\calU_\phi(f') - \calU_\phi(f)}_{\geq 0}
        \geq \delta.
    \end{align*}
    Thus, it holds that $f_l \in \calB$ for $l \geq l_0$, that is, $\calU(f_l) = \calU^\dagger$,
    which indicates $\calU$-calibration.
\end{proof}

Thus, the proof of $\calU$-calibration of $\calU_\phi$ is reduced to show the condition~\eqref{eq:utility-calibration}.
Below, we show the surrogate calibration for the F${}_\beta$-measure and Jaccard index utilizing Propositions~\ref{prop:bayes-optimal-set} and~\ref{prop:calibration}.
The proofs are based on the above propositions, \citet[Lemma 6]{Gao:2015} and \citet[Theorem 11]{Charoenphakdee:2019}.

Throughout the proofs, we assume that for the critical set $\calC^\dagger \doteq \{x \mid (\da_0 - \da_1\calU(f^\dagger)) \eta(x) - (a_{1,-1}\calU(f^\dagger) - a_{0,-1}) = 0\}$,
$\P(\calC^\dagger) = 0$, where $f^\dagger$ is the classifier attaining the supremum of $\calU$.
For example, this holds for any \emph{$\eta$-continuous} distribution~\citep[Assumption~2]{Yan:2018}.

\subsection{Proof of Theorem~\ref{thm:f-measure-calibration}}

\def\UFb{\calU^{\mathsf{F}_\beta}}
\def\LFbnum{W_{0,\phi}^{\mathsf{F}_\beta}}
\def\LFbden{W_{1,\phi}^{\mathsf{F}_\beta}}
\def\BFb{\calB^{\mathsf{F}_\beta}}
\def\df{\delta\!f}

\begin{figure}[t]
    \centering
    \begin{minipage}{0.45\columnwidth}
        \centering
        \includegraphics[width=\columnwidth]{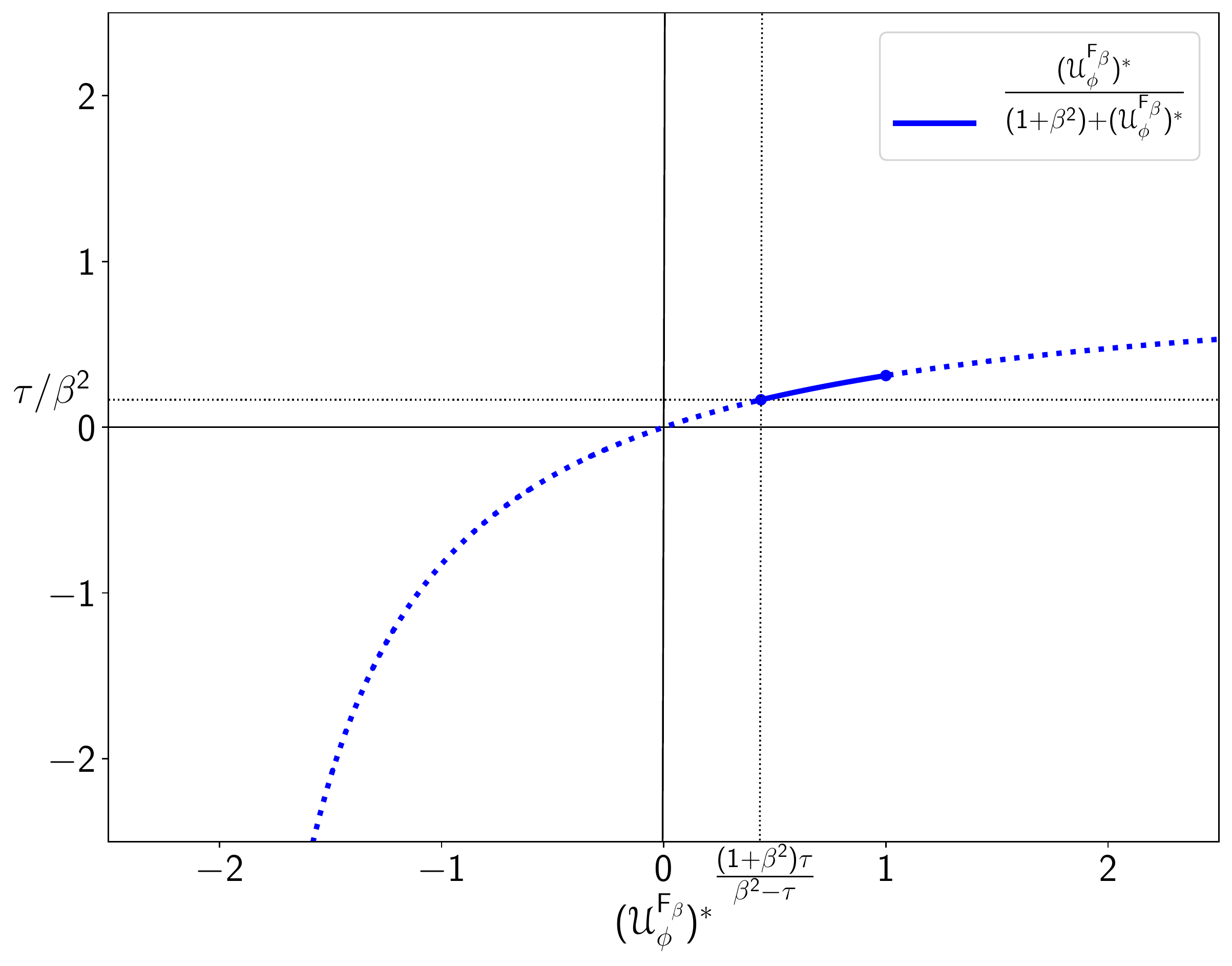}
        \caption{\protect\raggedright The range of $\frac{(\UFb_\phi)^*}{(1 + \beta^2) + (\UFb_\phi)^*}$ in $\frac{(1 + \beta^2)\tau}{\beta^2 - \tau} \leq (\UFb_\phi)^* \leq 1$.}
        \label{fig:supp:f1-calibration-1}
    \end{minipage}
    \begin{minipage}{0.45\columnwidth}
        \centering
        \includegraphics[width=\columnwidth]{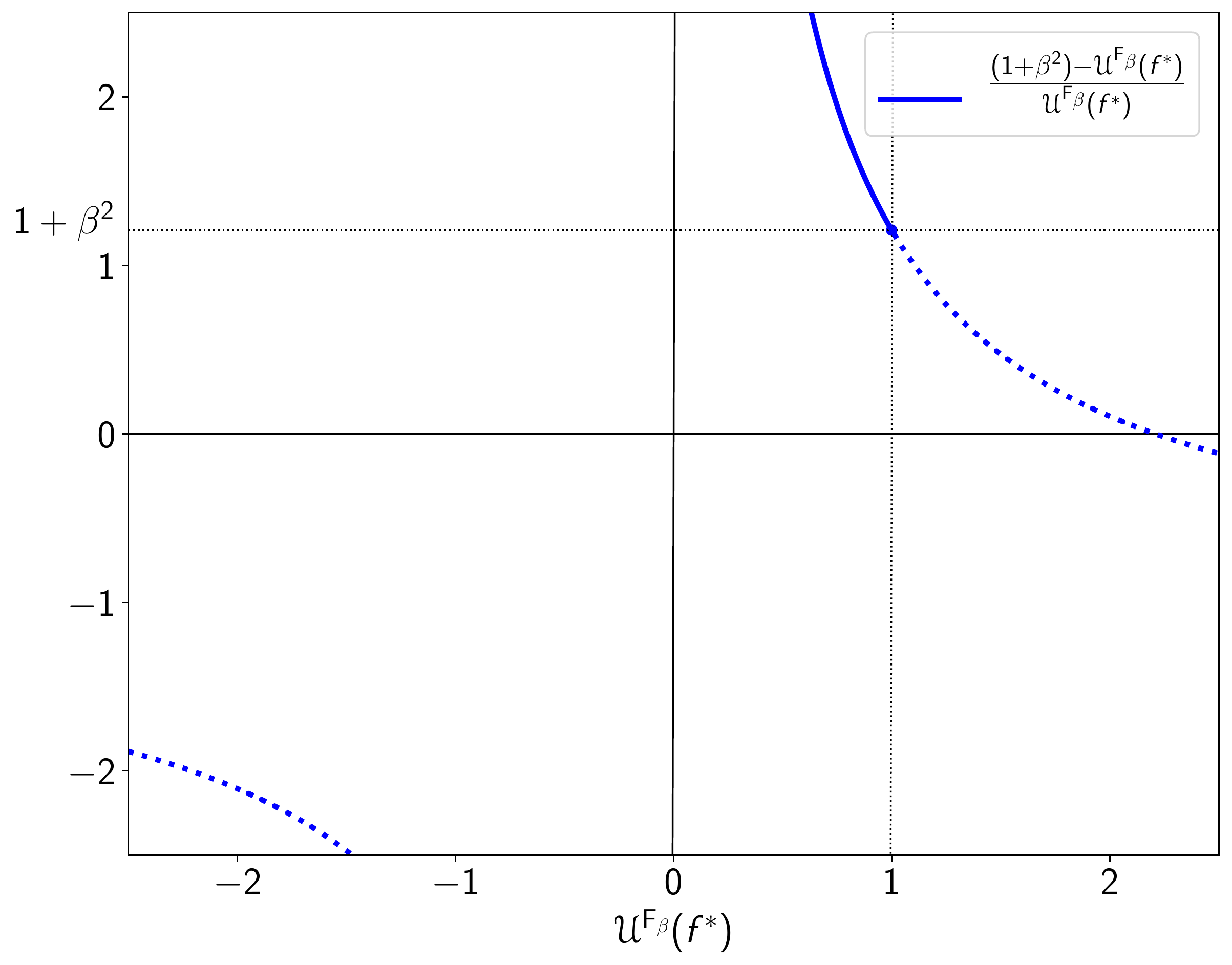}
        \caption{The range of $\frac{(1 + \beta^2) - \UFb(f^*)}{\UFb(f^*)}$ in $0 < \UFb(f^*) \leq 1$.}
        \label{fig:supp:f1-calibration-2}
    \end{minipage}
    \begin{minipage}{0.45\columnwidth}
        \centering
        \includegraphics[width=\columnwidth]{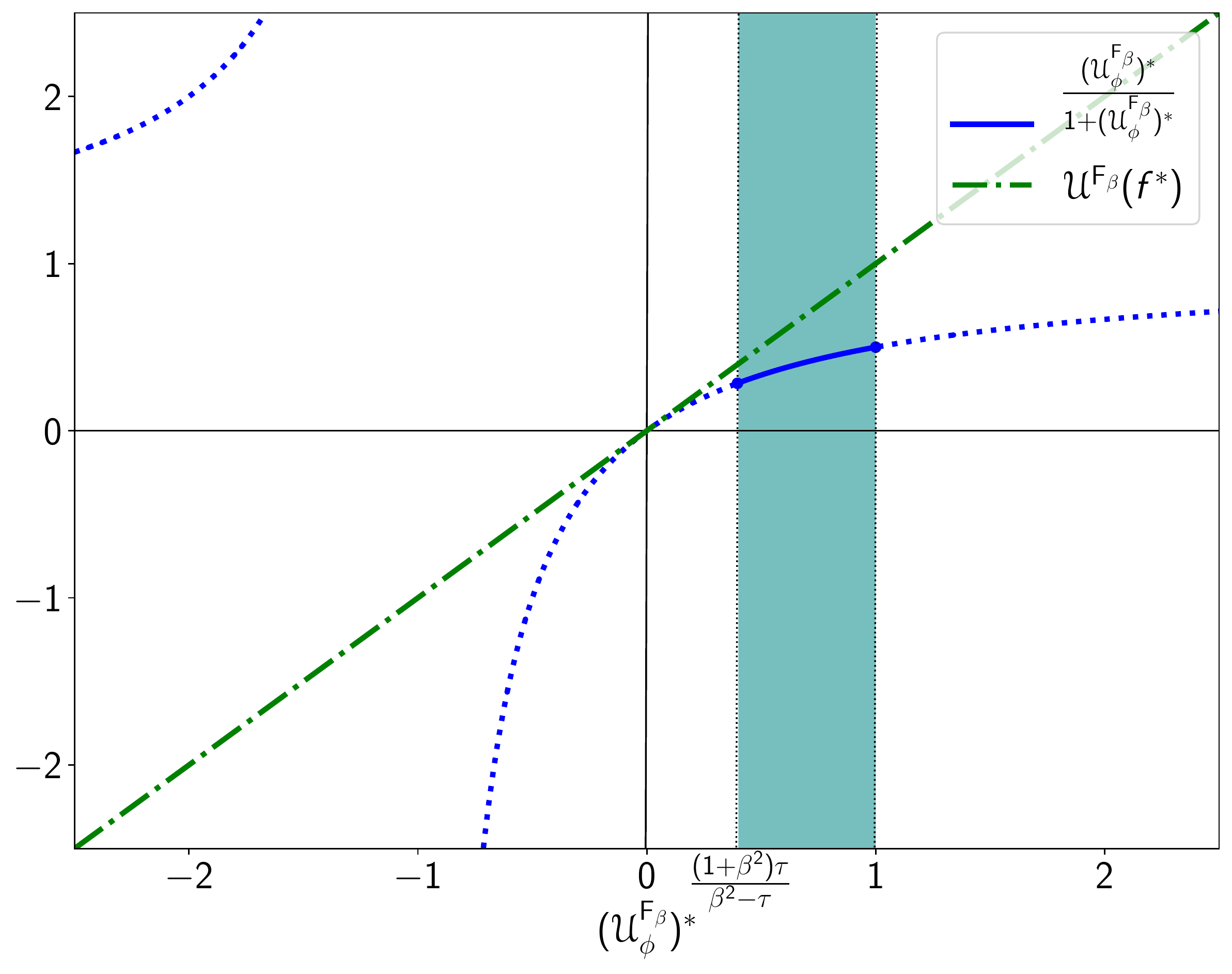}
        \caption{If $\frac{(1 + \beta^2)\tau}{\beta^2 - \tau} \leq (\UFb_\phi)^* \leq 1$, then $\frac{(\UFb_\phi)^*}{1 + (\UFb_\phi)^2} \leq \UFb(f^*)$.}
        \label{fig:supp:f1-calibration-3}
    \end{minipage}
\end{figure}

\begin{proof}[\unskip\nopunct]
    As a surrogate utility of the F${}_\beta$-measure following Eq.~\eqref{eq:surrogate-utility}, we have
    \begin{align*}
        \UFb_\phi(f) &= \frac{
            \int_\calX\{(1 + \beta^2)(1 - \phi(f(x)))\eta(x)\}p(x)dx
        }{
            \int_\calX\{(1 + \phi(f(x)))\eta(x) + \phi(-f(x))(1 - \eta(x)) + \beta^2\pi\}p(x)dx
        }
        \\
        &\doteq \frac{\E_X[\LFbnum(f(X), \eta(X))]}{\E_X[\LFbden(f(X), \eta(X))]}
        ,
    \end{align*}
    where
    \begin{align*}
        \LFbnum(\xi, q) &\doteq (1 + \beta^2)(1 - \phi(\xi))q, \\
        \LFbden(\xi, q) &\doteq (1 + \phi(\xi))q + \phi(-\xi)(1 - q) + \beta^2 \pi.
    \end{align*}
    From Proposition~\ref{prop:bayes-optimal-set}, the Bayes optimal set $\BFb$ for the F${}_\beta$-measure is
    \begin{align*}
        \BFb \doteq \{ f \mid f(x)((1 + \beta^2)\eta(x) - \UFb(f)) > 0 \quad \forall x \in \calX \}.
    \end{align*}
    We will show F${}_\beta$-calibration by utilizing Proposition~\ref{prop:calibration},
    which casts our proof target into showing Eq.~\eqref{eq:utility-calibration}.
    We prove it by contradiction.
    Assume that
    \begin{align*}
        \sup_{f \not\in \BFb}\UFb_\phi(f) = \sup_f\UFb_\phi(f).
    \end{align*}
    This implies that there exists an optimal function $f^* \not\in \BFb$ that achieves $\UFb_\phi(f^*) = \sup_f\UFb_\phi(f) \doteq (\UFb_\phi)^*$,
    that is, $\UFb_\phi(f^*) = (\UFb_\phi)^*$ and $f^*(\bar{x})((1 + \beta^2)\eta(\bar{x}) - \UFb(f^*)) \leq 0$ for some $\bar{x} \in \calX$.

    Let us describe the \emph{stationary condition} of $f^*$.
    We introduce a function $\df$:
    \begin{align*}
        \df(x) \doteq \begin{cases}
            1 & \text{if $x = \bar{x}$}, \\
            0 & \text{if $x \ne \bar{x}$}.
        \end{cases}
    \end{align*}
    Let $G(\gamma) \doteq \UFb_\phi(f^* + \gamma\df)$.
    Since $\UFb_\phi$ is G{\^a}teaux differentiable\footnote{Fr{\'e}chet differentiability implies G{\^a}teaux differentiability.}
    and its G{\^a}teaux derivative at $f^*$ must be zero in any direction,
    we claim that $G'(0) = 0$, where $G'(0)$ corresponds to G{\^a}teaux derivative of $\UFb_\phi$ at $f^*$ in the direction of $\df$.
    Here, $G'(0)$ is computed as
    \begin{align*}
        G'(0)
        &= \frac{1}{\E[\LFbden(f^*)]^2} \left\{
            \E[\LFbden(f^*)] \int_\calX\{-(1 + \beta^2)\phi'(f^*(x))\df(x)\eta(x)\}p(x)dx \right.\\
            &\qquad \left. -
            \E[\LFbnum(f^*)] \int_\calX\{\phi'(f^*(x))\df(x)\eta(x) - \phi'(-f^*(x))\df(x)(1 - \eta(x))\}p(x)dx
        \right\}
        \\
        &= \frac{1}{\E[\LFbden(f^*)]} \left\{ \int_\calX\{-(1 + \beta^2)\phi'(f^*(x))\df(x)\eta(x)\}p(x)dx \right. \\
            &\qquad \left. -
            (\UFb_\phi)^* \int_\calX\{\phi'(f^*(x))\df(x)\eta(x) - \phi'(-f^*(x))\df(x)(1 - \eta(x))\}p(x)dx
        \right\}
        \\
        &= \frac{\{ -(1 + \beta^2)\phi'(f^*(\bar{x}))\eta(\bar{x}) - \phi'(f^*(\bar{x}))(\UFb_\phi)^*\eta(\bar{x}) + \phi'(-f^*(\bar{x}))(\UFb_\phi)^*(1 - \eta(\bar{x})) \} p(\bar{x})}{\E[\LFbden(f^*)]}
        ,
    \end{align*}
    where $\E[\LFbnum(f^*)] = \E_X[\LFbnum(f^*(X), \eta(X))]$ and $\E[\LFbden(f^*)] = \E_X[\LFbden(f^*(X), \eta(X))]$.
    Thus, the stationary condition is
    \begin{align}
        -(1 + \beta^2)\phi'(f^*(\bar{x}))\eta(\bar{x}) - \phi'(f^*(\bar{x}))(\UFb_\phi)^*\eta(\bar{x}) + \phi'(-f^*(\bar{x}))(\UFb_\phi)^*(1 - \eta(\bar{x}))
        &= 0 \nonumber
        \\
        \left\{-(1 + \beta^2)\phi'(f^*(\bar{x})) - \phi'(f^*(\bar{x}))(\UFb_\phi)^* - \phi'(-f^*(\bar{x}))(\UFb_\phi)^*\right\}\eta(\bar{x}) + \phi'(-f^*(\bar{x}))(\UFb_\phi)^*
        &= 0
        .
        \label{eq:supp:fbeta-stationary-condition}
    \end{align}
    Since $\phi'(\pm f^*(\bar{x})) < 0$, we have $-(1 + \beta^2)\phi'(f^*(\bar{x})) - \phi'(f^*(\bar{x}))(\UFb_\phi)^* - \phi'(-f^*(\bar{x}))(\UFb_\phi)^* > 0$.
    Thus, the condition~\eqref{eq:supp:fbeta-stationary-condition} becomes
    \begin{align}
        \eta(\bar{x}) = \frac{
            \phi'(-f^*(\bar{x}))(\UFb_\phi)^*
        }{
            (1 + \beta^2)\phi'(f^*(\bar{x})) + (\phi'(f^*(\bar{x})) + \phi'(-f^*(\bar{x})))(\UFb_\phi)^*
        }
        .
        \label{eq:supp:fbeta-stationary-eta}
    \end{align}

    From now on, we divide the cases to take care of the Bayes optimal condition $f^*(\bar{x})((1 + \beta^2)\eta(\bar{x}) - \UFb(f^*)) \geq 0$.

    \begin{enumerate}
        \renewcommand{\labelenumi}{\arabic{enumi}) }

        \item \underline{If $f^*(\bar{x}) > 0$ and $\eta(\bar{x}) < \frac{1}{1 + \beta^2}\UFb(f^*)$: }
        We show
        \begin{align}
            \frac{
                \phi'(-f^*(\bar{x}))(\UFb_\phi)^*
            }{
                (1 + \beta^2)\phi'(f^*(\bar{x})) + (\phi'(f^*(\bar{x})) + \phi'(-f^*(\bar{x})))(\UFb_\phi)^*
            } \geq \frac{\UFb(f^*)}{1 + \beta^2}
            .
            \label{eq:supp:f-case1-contradiction}
        \end{align}
        Take the difference of the left-hand side and the right-hand side:
        \begin{align*}
            &\frac{
                \phi'(-f^*(\bar{x}))(\UFb_\phi)^*
            }{
                (1 + \beta^2)\phi'(f^*(\bar{x})) + (\phi'(f^*(\bar{x})) + \phi'(-f^*(\bar{x})))(\UFb_\phi)^*
            } - \frac{\UFb(f^*)}{1 + \beta^2}
            \\
            &\quad= \frac{
                (1 + \beta^2)\phi'(-f^*(\bar{x}))(\UFb_\phi)^* - (1 + \beta^2)\phi'(f^*(\bar{x}))\UFb(f^*) -
                (\phi'(f^*(\bar{x})) + \phi'(-f^*(\bar{x}))(\UFb_\phi)^*\UFb(f^*)
            }{
                (1 + \beta^2)((1 + \beta^2)\phi'(f^*(\bar{x})) + (\phi'(f^*(\bar{x})) + \phi'(-f^*(\bar{x})))(\UFb_\phi)^*)
            },
        \end{align*}
        where the denominator is always negative,
        which reduces to show the numerator is always negative, too:
        \begin{align*}
            (1 + \beta^2)&\phi'(-f^*(\bar{x}))(\UFb_\phi)^* - (1 + \beta^2)\phi'(f^*(\bar{x}))\UFb(f^*) -
            (\phi'(f^*(\bar{x})) + \phi'(-f^*(\bar{x}))(\UFb_\phi)^*\UFb(f^*)
            \\
            &= \UFb(f^*)((1 + \beta^2) + (\UFb_\phi)^*)\left(
                \underbrace{\frac{(\UFb_\phi)^*}{(1 + \beta^2) + (\UFb_\phi)^*}}_{\geq \tau/\beta^2}
                \underbrace{\frac{(1 + \beta^2) - \UFb(f^*)}{\UFb(f^*)}}_{\geq \beta^2}
                \phi'(-f^*(\bar{x}))
                - \phi'(f^*(\bar{x}))
            \right)
            \\
            &\leq \UFb(f^*)((1 + \beta^2) + (\UFb_\phi)^*)\left(
                \tau \phi'(-f^*(\bar{x})) - \phi'(f^*(\bar{x}))
            \right)
            \\
            &\leq 0,
        \end{align*}
        where the first inequality holds because $\frac{(\UFb_\phi)^*}{(1 + \beta^2) + (\UFb_\phi)^*} \geq \frac{\tau}{\beta^2}$ when $\frac{(1 + \beta^2)\tau}{\beta^2 - \tau} \leq (\UFb_\phi)^* \leq 1$ (see Figure~\ref{fig:supp:f1-calibration-1})
        and $\frac{(1 + \beta^2) - \UFb(f^*)}{\UFb(f^*)} \geq \beta^2$ when $0 \leq \UFb(f^*) \leq 1$ (see Figure~\ref{fig:supp:f1-calibration-2}).
        Note that $\phi'(-f^*(\bar{x})) < 0$.
        The second inequality holds because of the assumption that $\lim_{m \searrow 0}\phi'(m) \geq \tau \lim_{m \nearrow 0}\phi'(m)$ and $\phi$ is convex,
        which implies $\tau \phi'(-m) - \phi'(m) \leq 0$ for every $m > 0$.

        Thus, the inequality~\eqref{eq:supp:f-case1-contradiction} holds,
        which implies the following contradiction.
        \begin{align*}
            \eta(\bar{x})
            =
            \frac{
                \phi'(-f^*(\bar{x}))(\UFb_\phi)^*
            }{
                (1 + \beta^2)\phi'(f^*(\bar{x})) + (\phi'(f^*(\bar{x})) + \phi'(-f^*(\bar{x})))(\UFb_\phi)^*
            } \geq \frac{\UFb(f^*)}{1 + \beta^2}
            > \eta(\bar{x})
            .
        \end{align*}

        \item \underline{If $f^*(\bar{x}) \leq 0$ and $\eta(\bar{x}) > \frac{1}{1 + \beta^2}\UFb(f^*)$: }
        As well as the previous case, we begin from the stationary condition~\eqref{eq:supp:fbeta-stationary-eta}.
        If $\phi'(-f^*(\bar{x})) < 0$,
        \begin{align*}
            \eta(\bar{x})
            &= \frac{
                \phi'(-f^*(\bar{x}))(\UFb_\phi)^*
            }{
                (1 + \beta^2)\phi'(f^*(\bar{x})) + (\phi'(f^*(\bar{x})) + \phi'(-f^*(\bar{x})))(\UFb_\phi)^*
            }
            \\
            &= \frac{(\UFb_\phi)^*}{
                (1 + \beta^2)\frac{\phi'(f^*(\bar{x}))}{\phi'(-f^*(\bar{x}))} + \left(\frac{\phi'(f^*(\bar{x}))}{\phi'(-f^*(\bar{x}))} + 1\right)(\UFb_\phi)^*
            }
            \\
            &\leq \frac{1}{1 + \beta^2} \frac{(\UFb_\phi)^*}{1 + (\UFb_\phi)^*}
            \\
            &\leq \frac{1}{1 + \beta^2}\UFb(f^*)
            \\
            &< \eta(\bar{x}),
            \qquad \text{(contradiction)}
        \end{align*}
        where the first inequality holds because $\frac{\phi'(-m)}{\phi'(m)} \geq 1$ for every $m \geq 0$ and $f^*(\bar{x}) \leq 0$,
        and the second inequality holds because $\UFb_\phi(f) \leq \UFb(f)$ ($\forall f$) implies $\frac{(\UFb_\phi)^*}{1 + (\UFb_\phi)^*} \leq \UFb(f^*)$ when $\frac{(1 + \beta^2)\tau}{\beta^2 - \tau} \leq (\UFb_\phi)^* \leq 1$ (see Figure~\ref{fig:supp:f1-calibration-3}).

        If $\phi'(-f^*(\bar{x})) = 0$, it is easy to see the contradiction.
    \end{enumerate}
    Combining the above cases, it follows that
    \begin{align*}
        \sup_{f \not\in \BFb}\UFb_\phi(f) < \sup_f\UFb_\phi(f).
    \end{align*}
    Eventually, we claim that $\UFb_\phi$ is F${}_\beta$-calibrated by using Proposition~\ref{prop:calibration}.
\end{proof}

\subsection{Proof of Theorem~\ref{thm:jaccard-calibration}}

\def\UJac{\calU^{\mathsf{Jac}}}
\def\BJac{\calB^{\mathsf{Jac}}}

\begin{figure}[t]
    \centering
    \includegraphics[width=0.45\columnwidth]{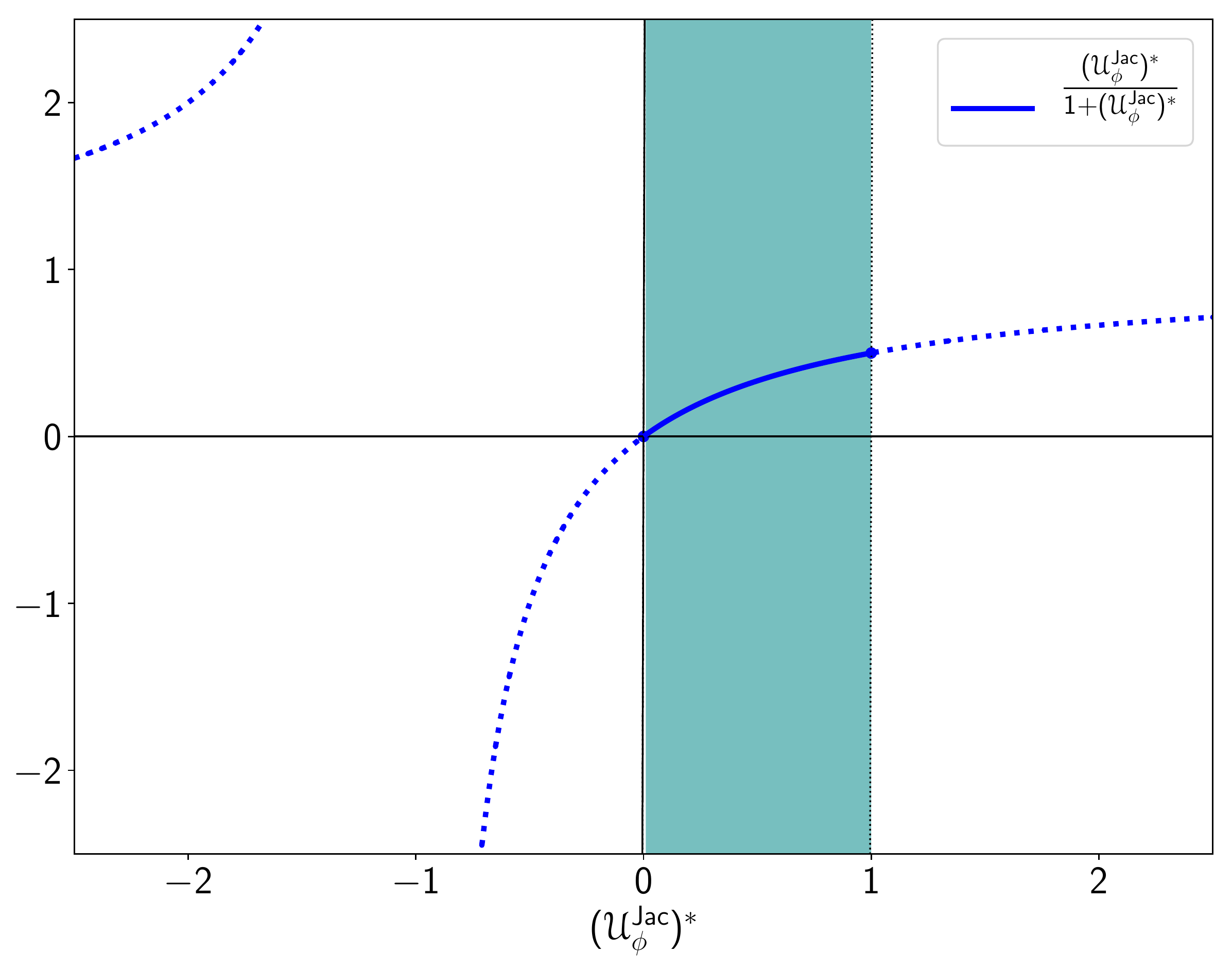}
    \caption{$\frac{(\UJac_\phi)^*}{1 + (\UJac_\phi)^*}$ is monotonically increasing if $0 \leq (\UJac_\phi)^* \leq 1$.}
    \label{fig:supp:jaccard-calibration}
\end{figure}

\begin{proof}[\unskip\nopunct]
    As a surrogate utility of the Jaccard index following Eq.~\eqref{eq:surrogate-utility},
    we have
    \begin{align*}
        \UJac_\phi(f)
        = \frac{
            \int_\calX (1 - \phi(f(x)))\eta(x)p(x)dx
        }{
            \int_\calX \{\phi(-f(x))(1 - \eta(x)) + \pi\}p(x)dx
        }
        ,
    \end{align*}
    and we have the Bayes optimal set $\BJac$ for the Jaccard index such as
    \begin{align*}
        \BJac \doteq \left\{ f \mid f(x)\{(1 + \UJac(f))\eta(x) - \UJac(f)\} > 0 \quad \forall x \in \calX \right\},
    \end{align*}
    utilizing Proposition~\ref{prop:bayes-optimal-set}.
    We follow the same proof technique, proof by contradiction, as we use in the proof of Theorem~\ref{thm:f-measure-calibration}.
    Assume that
    \begin{align*}
        \sup_{f \not\in \BJac}\UJac_\phi(f) = \sup_f\UJac_\phi(f),
    \end{align*}
    which implies that there exits an optimal function $f^* \not\in \BJac$ that achieves $\UJac_\phi(f^*) = \sup_f\UJac_\phi(f) \doteq (\UJac_\phi)^*$,
    that is, $\UJac_\phi(f^*) = (\UJac_\phi)^*$ and $f^*(\bar{x})\{(1 + \UJac(f^*))\eta(\bar{x}) - \UJac(f^*)\} \leq 0$ for some $\bar{x} \in \calX$.

    The stationary condition of $\UJac_\phi$ around $f^*$ can be stated as well as Eq.~\eqref{eq:supp:fbeta-stationary-eta} in Theorem~\ref{thm:f-measure-calibration}:
    \begin{align}
        \eta(\bar{x}) = \frac{\phi'(-f^*(\bar{x}))(\UJac_\phi)^*}{\phi'(f^*(\bar{x})) + \phi'(-f^*(\bar{x}))(\UJac_\phi)^*}
        \label{eq:supp:jaccard-stationary-eta}
        .
    \end{align}

    \begin{enumerate}
        \renewcommand{\labelenumi}{\arabic{enumi}) }

        \item \underline{If $f^*(\bar{x}) > 0$ and $\eta(\bar{x}) < \frac{\UJac(f^*)}{1 + \UJac(f^*)}$:}
        We show
        \begin{align*}
            \frac{\phi'(-f^*(\bar{x}))(\UJac_\phi)^*}{\phi'(f^*(\bar{x})) + \phi'(-f^*(\bar{x}))(\UJac_\phi)^*} \geq \frac{\UJac(f^*)}{1 + \UJac(f^*)}.
        \end{align*}
        First, take the difference of the left-hand side and the right-hand side.
        \begin{align*}
            &\frac{\phi'(-f^*(\bar{x}))(\UJac_\phi)^*}{\phi'(f^*(\bar{x})) + \phi'(-f^*(\bar{x}))(\UJac_\phi)^*} - \frac{\UJac(f^*)}{1 + \UJac(f^*)}
            \\
            &=\; \frac{
                \phi'(-f^*(\bar{x}))(\UJac_\phi)^* - \phi'(f^*(\bar{x}))\UJac(f^*)
            }{
                (\phi'(f^*(\bar{x})) + \phi'(-f^*(\bar{x}))(\UJac_\phi)^*)(1 + \UJac(f^*))
            }
            ,
        \end{align*}
        where the denominator is always negative,
        which reduces to show the numerator is always negative, too.
        If $\phi'(-f^*(\bar{x})) < 0$,
        \begin{align*}
            \phi'&(-f^*(\bar{x}))(\UJac_\phi)^* - \phi'(f^*(\bar{x}))\UJac(f^*)
            \\
            &= \phi'(-f^*(\bar{x}))\left((\UJac_\phi)^* - \frac{\phi'(f^*(\bar{x}))}{\phi'(-f^*(\bar{x})) }\UJac(f^*)\right)
            \\
            &\leq \phi'(-f^*(\bar{x}))\left((\UJac_\phi)^* - \frac{\phi'(f^*(\bar{x}))}{\phi'(-f^*(\bar{x}))}\right)
                && \text{($\because$ $\UJac(f^*) \leq 1$)}
            \\
            &\leq \phi'(-f^*(\bar{x}))((\UJac_\phi)^* - \tau)
            \\
            &\leq 0,
                && \text{($\because$ $(\UJac_\phi)^* \geq \tau$)}
        \end{align*}
        where the second inequality holds because of the assumption that $\lim_{m \searrow 0}\phi'(m) \geq \tau \lim_{m \nearrow 0}\phi'(m)$ for every $m > 0$, and $\phi$ is convex.
        Thus, we admit the contradiction.
        \begin{align*}
            \eta(\bar{x}) =
            \frac{\phi'(-f^*(\bar{x}))(\UJac_\phi)^*}{\phi'(f^*(\bar{x})) + \phi'(-f^*(\bar{x}))(\UJac_\phi)^*} > \frac{\UJac(f^*)}{1 + \UJac(f^*)}
            > \eta(\bar{x})
            .
        \end{align*}

        If $\phi'(-f^*(\bar{x})) = 0$, then $\phi'(f^*(\bar{x})) = 0$ from the assumption $\lim_{m \searrow 0}\phi'(m) \geq \tau \lim_{m \nearrow 0}\phi'(m)$,
        which immediately results in the contradiction.

        \item \underline{If $f^*(\bar{x}) \leq 0$ and $\eta(\bar{x}) > \frac{\UJac(f^*)}{1 + \UJac(f^*)}$:}
        We begin from the stationary condition in Eq.~\eqref{eq:supp:jaccard-stationary-eta}.
        If $\phi'(-f^*(\bar{x})) < 0$,
        \begin{align*}
            \eta(\bar{x}) &=
            \frac{\phi'(-f^*(\bar{x}))(\UJac_\phi)^*}{\phi'(f^*(\bar{x})) + \phi'(-f^*(\bar{x}))(\UJac_\phi)^*}
            \\
            &= \frac{(\UJac_\phi)^*}{\frac{\phi'(f^*(\bar{x}))}{\phi'(-f^*(\bar{x}))} + (\UJac_\phi)^*}
            \\
            &\leq \frac{(\UJac_\phi)^*}{1 + (\UJac_\phi)^*}
                && \text{$\left(\because \; \frac{\phi'(f^*(\bar{x}))}{\phi'(-f^*(\bar{x}))} \geq 1 \quad \forall f^*(\bar{x}) \leq 0\right)$}
            \\
            &\leq \frac{\UJac(f^*)}{1 + \UJac(f^*)}
            \\
            &< \eta(\bar{x}), && \text{(contradiction)}
        \end{align*}
        where the second inequality follows because $\UJac_\phi(f) \leq \UJac(f)$ ($\forall f$)
        and a function $x \mapsto \frac{x}{1 + x}$ ($0 \leq x \leq 1$) is monotonically increasing (see Figure~\ref{fig:supp:jaccard-calibration}).

        It is easy to see contradiction in case of $\phi'(-f^*(\bar{x})) = 0$.

    \end{enumerate}

    Combining the above cases, it follows that
    \begin{align*}
        \sup_{f \not\in \BJac}\UJac_\phi(f) < \sup_f\UJac(f).
    \end{align*}
    Eventually, we claim that $\UJac_\phi$ is Jaccard-calibrated by using Proposition~\ref{prop:calibration}.
\end{proof}

\subsection{Analysis of Accuracy-Calibration}
\label{sec:supp:accuracy-calibration}

\def\UAcc{\calU^{\mathsf{Acc}}}
\def\BAcc{\calB^{\mathsf{Acc}}}

In this subsection, we show accuracy-calibration conditions in the same manners as the F${}_\beta$-measure and Jaccard index,
and confirm that the $\tau$-discrepancy is not necessary in this case.
As the true and a surrogate utility of the accuracy following Eq.~\eqref{eq:surrogate-utility},
define
\begin{align*}
    \UAcc(f) &= \int_\calX \left\{ \ell(-f(x))\eta(x) - \ell(-f(x))(1 - \eta(x)) + (1 - \pi) \right\} p(x) \dx,
    \\
    \UAcc_\phi(f) &= \int_\calX \left\{ (1 - \phi(f(x)))\eta(x) - \phi(-f(x))(1 - \eta(x)) + (1 - \pi) \right\} p(x) \dx.
\end{align*}

\begin{proposition}[Accuracy-calibration]
    \label{prop:accuracy-calibration}
    Assume that a surrogate loss $\phi: \R \to \R_{\geq 0}$ is convex, differentiable almost everywhere,
    and $\phi'(0) < 0$.
    Then, $\UAcc_\phi$ is accuracy-calibrated.
\end{proposition}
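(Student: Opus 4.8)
The plan is to reuse the template of the proofs of Theorems~\ref{thm:f-measure-calibration} and~\ref{thm:jaccard-calibration}, which here becomes substantially lighter because $\UAcc_\phi$ is affine rather than linear-fractional in $\ell(-f)$: its ``denominator'' is the constant $1$, so the value of $\UAcc_\phi$ never enters the stationarity condition and no $\tau$-discrepancy (steepness) hypothesis is needed. First I would instantiate Proposition~\ref{prop:bayes-optimal-set} with the accuracy coefficients $(a_{0,+1},a_{0,-1},b_0)=(1,-1,1-\pi)$ and $(a_{1,+1},a_{1,-1},b_1)=(0,0,1)$, so that $\da_0=2$, $\da_1=0$ and the Bayes optimal set is the classical $\BAcc=\{f\mid f(x)(2\eta(x)-1)>0\ \forall x\}$; here the standing assumption $\P(\calC^\dagger)=0$ reads $\P(\eta(X)=\tfrac12)=0$. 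By Proposition~\ref{prop:calibration} it then suffices to show $\sup_{f\notin\BAcc}\UAcc_\phi(f)<\sup_f\UAcc_\phi(f)$.

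I would argue this by contradiction, exactly as in the two theorems above. Suppose some $f^*\notin\BAcc$ attains $\sup_f\UAcc_\phi(f)$, so there is $\bar x$ with $f^*(\bar x)(2\eta(\bar x)-1)\le0$; discarding the $\P$-null event $\{\eta(\bar x)=\tfrac12\}$, we may take this inequality to be strict. Perturbing $f^*$ at $\bar x$ by $\df=\ind{x=\bar x}$ and setting the G{\^a}teaux derivative of $\UAcc_\phi$ at $f^*$ to zero, the stationarity condition collapses --- up to the positive factor $p(\bar x)$ --- to $-\phi'(f^*(\bar x))\eta(\bar x)+\phi'(-f^*(\bar x))(1-\eta(\bar x))=0$; crucially, no factor $(\UAcc_\phi)^*$ survives, in contrast with Eqs.~\eqref{eq:supp:fbeta-stationary-eta} and~\eqref{eq:supp:jaccard-stationary-eta}. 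Solving gives $\eta(\bar x)=\phi'(-f^*(\bar x))\,/\,(\phi'(f^*(\bar x))+\phi'(-f^*(\bar x)))$, the denominator being nonzero because convexity makes $\phi'$ non-decreasing and $\phi'(0)<0$, so $\phi'$ on the side of the origin is strictly negative and cannot be cancelled.

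Then I would split on $\sign(f^*(\bar x))$. If $f^*(\bar x)>0$ then $\phi'(-f^*(\bar x))\le\phi'(0)\le\phi'(f^*(\bar x))$ with $\phi'(-f^*(\bar x))<0$, and a short sign analysis of the above fraction (distinguishing whether the denominator is negative or nonnegative, the latter forcing $\phi'(f^*(\bar x))>0$) shows $\eta(\bar x)\ge\tfrac12$ or $\eta(\bar x)\notin[0,1]$ --- either way contradicting $\eta(\bar x)<\tfrac12$. The case $f^*(\bar x)\le0$ is symmetric and yields $\eta(\bar x)\le\tfrac12$, contradicting $\eta(\bar x)>\tfrac12$ (the boundary subcase $f^*(\bar x)=0$ gives $\eta(\bar x)=\tfrac12$, already excluded). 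Combining the cases gives $\sup_{f\notin\BAcc}\UAcc_\phi(f)<\sup_f\UAcc_\phi(f)$, and Proposition~\ref{prop:calibration} concludes that $\UAcc_\phi$ is accuracy-calibrated.

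The part deserving the most care --- less a real obstacle than the conceptual point of the statement --- is verifying that $\tau$-discrepancy is genuinely dispensable here: the inequality $\tau\phi'(-m)-\phi'(m)\le0$ that powered the fractional proofs is replaced by $\phi'(-m)-\phi'(m)\le0$, which is merely monotonicity of $\phi'$ and hence automatic, so only convexity and $\phi'(0)<0$ are used. The one residual delicacy, shared with Theorems~\ref{thm:f-measure-calibration} and~\ref{thm:jaccard-calibration}, is making the single-point perturbation rigorous --- equivalently, noting that at a maximizer $f^*$ the pointwise surrogate reward $q\mapsto(1-\phi(q))\eta(x)-\phi(-q)(1-\eta(x))$ is maximized at $f^*(x)$ for $\P$-a.e.\ $x$ --- and discarding $\{\eta=\tfrac12\}$ under $\P(\calC^\dagger)=0$.
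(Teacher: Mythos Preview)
Your proposal is correct and mirrors the paper's own proof: contradiction via Proposition~\ref{prop:calibration}, the same stationarity equation $\eta(\bar x)=\phi'(-f^*(\bar x))/(\phi'(f^*(\bar x))+\phi'(-f^*(\bar x)))$, and the same case split on the sign of $f^*(\bar x)$. If anything you are slightly more careful than the paper, which in its Case~1 tacitly writes $\phi'(f^*(\bar x))<0$ even though the proposition does not assume $\phi$ is non-increasing; your sub-case analysis on the sign of the denominator handles this possibility cleanly.
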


\begin{proof}
    We have the Bayes optimal set $\BAcc$ for the accuracy such as
    \begin{align*}
        \BAcc \doteq \{ f \mid f(x)(2\eta(x) - 1) > 0 \quad \forall x \in \calX \}
    \end{align*}
    utilizing Proposition~\ref{prop:bayes-optimal-set}.
    In the same manner as the proofs of Theorems~\ref{thm:f-measure-calibration} and~\ref{thm:jaccard-calibration},
    assume that
    \begin{align*}
        \sup_{f \not\in \BAcc} \UAcc_\phi(f) = \sup_f \UAcc_\phi(f),
    \end{align*}
    and we prove by contradiction.
    The above assumption implies that there exists an optimal function $f^* \not\in \BAcc$
    such that $\UAcc_\phi(f^*) = \sup_f\UAcc_\phi(f) \doteq (\UAcc_\phi)^*$,
    that is, $\UAcc_\phi(f^*) = (\UAcc_\phi)^*$ and $f^*(\bar{x})(2\eta(\bar{x}) - 1) \leq 0$ for some $\bar{x} \in \calX$.

    The stationary condition of $\UAcc_\phi$ around $f^*$ can be stated in the same way as Eq.~\eqref{eq:supp:fbeta-stationary-condition}:
    \begin{align}
        (\phi'(f^*(\bar{x})) + \phi'(-f^*(\bar{x})))\eta(\bar{x}) - \phi'(-f^*(\bar{x})) = 0.
        \label{eq:supp:accuracy-stationary-condition}
    \end{align}
    We divide the cases based on the sign of $f^*(\bar{x})$.

    \begin{enumerate}
        \renewcommand{\labelenumi}{\arabic{enumi}) }

        \item \underline{$f^*(\bar{x}) > 0$ and $\eta(\bar{x}) < \frac{1}{2}$:}
        Since $\phi'(-f^*(\bar{x})) < \phi'(f^*(\bar{x})) < 0$ because of the convexity of $\phi$,
        \begin{align*}
            \eta(\bar{x}) = \frac{1}{1 + \frac{\phi'(f^*(\bar{x}))}{\phi'(-f^*(\bar{x}))}} > \frac{1}{2},
        \end{align*}
        which contradicts with $\eta(\bar{x}) < \frac{1}{2}$.
        Note that $\frac{\phi'(f^*(\bar{x}))}{\phi'(-f^*(\bar{x}))} \in (0, 1)$ when $\phi'(-f^*(\bar{x})) < \phi'(f^*(\bar{x})) < 0$.

        \item \underline{$f^*(\bar{x}) < 0$ and $\eta(\bar{x}) > \frac{1}{2}$:}
        Since $\phi'(f^*(\bar{x})) < \phi'(-f^*(\bar{x})) < 0$ because of the convexity of $\phi$,
        \begin{align*}
            \eta(\bar{x}) = \frac{1}{1 - \frac{\phi'(f^*(\bar{x}))}{\phi'(-f^*(\bar{x}))}} < \frac{1}{2},
        \end{align*}
        which contradicts with $\eta(\bar{x}) > \frac{1}{2}$.
        Note that $\frac{\phi'(f^*(\bar{x}))}{\phi'(-f^*(\bar{x}))} > 1$ when $\phi'(f^*(\bar{x})) < \phi'(-f^*(\bar{x})) < 0$.

        \item \underline{$f^*(\bar{x}) = 0$:}
        Since $\phi'(f^*(\bar{x})) = \phi'(-f^*(\bar{x})) = \phi'(0) < 0$,
        the stationary condition~\eqref{eq:supp:accuracy-stationary-condition} reduces to $\phi'(-f^*(\bar{x})) = 0$,
        which contradicts with $\phi'(-f^*(\bar{x})) = \phi'(0) < 0$.
    \end{enumerate}

    Thus, it follows that $\sup_{f \not\in \UAcc}\UAcc_\phi(f) < \sup_f \UAcc_\phi(f)$.
    Eventually, we claim that $\UAcc_\phi$ is accuracy-calibrated by using Proposition~\ref{prop:calibration}.
\end{proof}

As we can see from Proposition~\ref{prop:accuracy-calibration},
our surrogate calibration analysis can also be applied to the classification accuracy.
In addition, the $\tau$-discrepancy condition disappears from assumptions in the accuracy case,
which recovers the conditions Theorem~6 in \citet{Bartlett:2006}.
Even so, our analysis still remains to be sufficient conditions.
Further analysis towards the necessary and sufficient conditions in the general calibration analysis is left as an future work.

\subsection{Calibration Analysis of General Linear-fractional Metrics}
\label{sec:supp:general-calibration-analysis}

So far, we analyze the surrogate calibration for the F${}_\beta$-measure in Theorem~\ref{thm:f-measure-calibration},
and Jaccard index in Theorem~\ref{thm:jaccard-calibration}.
In addition, we take a look at how our analysis goes for the classification accuracy in Theorem~\ref{prop:accuracy-calibration}.
Now, we move on to the generalized result of the surrogate calibration which encompasses the entire linear-fractional metrics.
Let us consider the maximization of the true utility $\calU$ in Eq.~\eqref{eq:generalized-metric},
and the maximization of the corresponding surrogate utility $\calU_\phi$ in Eq.~\eqref{eq:surrogate-utility}.

\begin{theorem}[$\calU$-calibration in general case]
    \label{thm:general-calibration}
    Let $f^*$ be a measurable function that achieves $\calU_\phi(f^*) = \sup_f \calU_\phi(f) \doteq \calU_\phi^*$.
    Assume that a surrogate loss $\phi: \R \to \R_{\geq 0}$ is convex, non-increasing, and differentiable almost everywhere.
    On the true utility, we assume the following conditions.
    \begin{enumerate}
        \renewcommand{\labelenumi}{(\arabic{enumi}) }
        \item $\da_0 > 0$.
        \item $\da_1 \leq 0$.
        \item $a_{0,-1} \ne 0$ or $a_{1,-1} \ne 0$.
        \item $a_{1,-1} + a_{0,-1} \ne 0$.
        \item $a_{0,+1} a_{1,-1} + a_{0,-1} a_{1,+1} > 0$.
        \item If $a_{1,-1} > 0$, then $\calU(f^*) > -\frac{a_{0,-1}}{a_{1,-1}}$.
    \end{enumerate}
    Moreover, assume that there exists $\tau \in (0, 1)$ such that $\tau$ satisfies the following conditions.
    \begin{enumerate}
        \renewcommand{\labelenumi}{(\alph{enumi}) }
        \item $\phi$ is $\tau$-discrepant.
        \item $\calU_\phi^*$ satisfies
        \begin{align*}
            \tau \leq \frac{a_{0,+1} - a_{1,+1}}{a_{1,-1} + a_{0,-1}} \cdot \frac{-a_{0,-1} + a_{1,-1}\calU_\phi^*}{a_{0,+1} + a_{1,+1}\calU_\phi^*}
            .
        \end{align*}
        \label{enum:supp:tau:surrogate}
        \item $\calU_\phi^*$ and $\calU(f^*)$ satisfy
        \begin{align*}
            \tau \leq \frac{a_{0,-1} - a_{1,-1}\calU(f^*)}{a_{1,+1}\calU(f^*) - a_{0,+1}} \cdot \frac{a_{0,+1} + a_{1,+1}\calU_\phi^*}{-a_{0,-1} + a_{1,-1}\calU_\phi^*}
            .
        \end{align*}
        \label{enum:supp:tau:true}
    \end{enumerate}
    Then, the surrogate utility $\calU_\phi$ is $\calU$-calibrated.
\end{theorem}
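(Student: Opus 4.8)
The plan is to run the argument of the proofs of Theorems~\ref{thm:f-measure-calibration} and~\ref{thm:jaccard-calibration} at the level of the generic coefficients $(a_{k,\pm 1}, b_k)$, with Propositions~\ref{prop:bayes-optimal-set} and~\ref{prop:calibration} as the two structural pillars. First I would instantiate Proposition~\ref{prop:bayes-optimal-set}: conditions~(1)--(2) together with $\calU \geq 0$ make $\da_0 - \da_1\calU(f) > 0$, so the Bayes optimal set can be written
\begin{align*}
    \calB = \left\{ f \mid f(x)\left( \eta(x) - \frac{a_{1,-1}\calU(f) - a_{0,-1}}{\da_0 - \da_1\calU(f)} \right) > 0 \quad \forall x \in \calX \right\}.
\end{align*}
By Proposition~\ref{prop:calibration} it suffices to establish the strict gap $\sup_{f \not\in \calB}\calU_\phi(f) < \sup_f \calU_\phi(f)$. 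I would argue by contradiction: if equality held there would be an optimal $f^* \not\in \calB$ with $\calU_\phi(f^*) = \calU_\phi^*$, hence a point $\bar x$ at which the defining sign condition of $\calB$ fails.

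Next I would reproduce the stationary-point computation of Theorem~\ref{thm:f-measure-calibration}: perturbing $f^*$ by $\gamma\,\delta f$ with $\delta f = \ind{\cdot = \bar x}$ and setting the G\^ateaux derivative of $\calU_\phi$ to zero, then dividing by $\E[W_{1,\phi}]$ and substituting $\calU_\phi^* = \E[W_{0,\phi}]/\E[W_{1,\phi}]$, yields a linear equation in $\eta(\bar x)$ whose $\eta$-coefficient is sign-definite (off a handful of degenerate configurations treated separately) thanks to $\phi' \leq 0$ and conditions~(1)--(5) together with the constraints of Definition~\ref{def:metric}. Solving it gives the generic stationary identity
\begin{align*}
    \eta(\bar x) = \frac{(-a_{0,-1} + a_{1,-1}\calU_\phi^*)\,\phi'(-f^*(\bar x))}{(a_{0,+1} + a_{1,+1}\calU_\phi^*)\,\phi'(f^*(\bar x)) + (-a_{0,-1} + a_{1,-1}\calU_\phi^*)\,\phi'(-f^*(\bar x))},
\end{align*}
which collapses to Eq.~\eqref{eq:supp:fbeta-stationary-eta} on the F${}_\beta$ coefficients and to Eq.~\eqref{eq:supp:jaccard-stationary-eta} on the Jaccard coefficients.

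I would then split on $\sign(f^*(\bar x))$. When $f^*(\bar x) > 0$, the failed sign condition reads $\eta(\bar x) \leq \frac{a_{1,-1}\calU(f^*) - a_{0,-1}}{\da_0 - \da_1\calU(f^*)}$, and I would show the displayed stationary value is at least this threshold: take the difference of the two sides, clear the (sign-definite) common denominator, and bound the resulting numerator above by zero after factoring out a positive combination of the $a$'s and estimating the two remaining brackets --- the first from below by condition~(b) (the generic analogue of the ``$\geq \tau/\beta^2$'' bound, Figure~\ref{fig:supp:f1-calibration-1}), the second from below by condition~(c) combined with Lemma~\ref{lem:utility-lower-bound} (the analogue of the ``$\geq \beta^2$'' bound, Figures~\ref{fig:supp:f1-calibration-2}--\ref{fig:supp:f1-calibration-3}), and finally invoking $\tau$-discrepancy and convexity through $\tau\phi'(-m) - \phi'(m) \leq 0$ for $m > 0$. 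This forces $\eta(\bar x) \geq \frac{a_{1,-1}\calU(f^*) - a_{0,-1}}{\da_0 - \da_1\calU(f^*)} > \eta(\bar x)$ (with the strict inequality on the right being the case split used in the special-case proofs), a contradiction. When $f^*(\bar x) \leq 0$ the inequality on $\eta(\bar x)$ flips; here I would rewrite the stationary identity through the ratio $\phi'(f^*(\bar x))/\phi'(-f^*(\bar x)) \geq 1$ (valid for $f^*(\bar x) \leq 0$ since $\phi'$ is non-decreasing), use monotonicity of the relevant map of the form $x \mapsto \frac{cx}{d + ex}$, and close with $\calU_\phi \leq \calU$ exactly as in the second cases of Theorems~\ref{thm:f-measure-calibration} and~\ref{thm:jaccard-calibration}; the degenerate subcases $\phi'(-f^*(\bar x)) = 0$ (which forces $\phi'(f^*(\bar x)) = 0$ by $\tau$-discrepancy) and $f^*(\bar x) = 0$ are dispatched separately just as there. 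Conditions~(3)--(4) guarantee that the denominators appearing along the way never vanish, and condition~(6) keeps $a_{1,-1}\calU(f^*) - a_{0,-1}$ strictly positive in the branch $a_{1,-1} > 0$. A contradiction in every case yields the strict gap, and Proposition~\ref{prop:calibration} concludes $\calU$-calibration.

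The main obstacle is not any one estimate but the global sign bookkeeping. In the F${}_\beta$ and Jaccard specializations $a_{0,-1} = 0$ and the remaining coefficients are small integers, so the denominators are visibly one-signed; in general one must simultaneously control the signs of $\da_0 - \da_1\calU(f^*)$, $a_{0,+1} + a_{1,+1}\calU_\phi^*$, $-a_{0,-1} + a_{1,-1}\calU_\phi^*$, $a_{1,-1}\calU(f^*) - a_{0,-1}$, and $a_{0,+1} - a_{1,+1}\calU(f^*)$, and conditions~(1)--(6) are precisely the hypotheses that render all of these definite. Conditions~(b) and~(c) are then reverse-engineered so that the two bracket lower bounds in the $f^*(\bar x) > 0$ case go through; as a consistency check one verifies that, specialized to the F${}_\beta$ coefficients, (b) is equivalent to $(\calU_\phi^{\mathsf{F}_\beta})^* \geq \frac{(1 + \beta^2)\tau}{\beta^2 - \tau}$ while (c) follows from Lemma~\ref{lem:utility-lower-bound}, recovering Theorem~\ref{thm:f-measure-calibration}, and likewise for Theorem~\ref{thm:jaccard-calibration}.
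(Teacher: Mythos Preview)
Your overall architecture is exactly the paper's: invoke Proposition~\ref{prop:bayes-optimal-set} to write $\calB$ as a thresholded-$\eta$ set, reduce via Proposition~\ref{prop:calibration} to the strict gap, assume an optimal $f^*\notin\calB$, compute the G\^ateaux derivative in the direction $\delta f=\ind{\cdot=\bar x}$ to obtain the generic stationary identity (your displayed formula is correct), and then split on $\sign(f^*(\bar x))$. So far, identical.

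The discrepancy is in how conditions~(b), (c) and Lemma~\ref{lem:utility-lower-bound} are distributed over the two cases. In the paper's proof, case $f^*(\bar x)>0$ uses condition~(b) only, together with a monotonicity argument for the auxiliary function $H(t)\doteq\frac{a_{1,+1}t-a_{0,+1}}{a_{0,-1}-a_{1,-1}t}$ showing $H(\calU(f^*))\geq H(1)$; this is where conditions~(5) and~(6) enter (they make $H$ decreasing on the relevant interval). Condition~(c) is \emph{not} used in this branch. Conversely, case $f^*(\bar x)\leq 0$ in the paper uses condition~(c) directly: one applies $\tau$-discrepancy to $-f^*(\bar x)\geq 0$ to get $\frac{\phi'(-f^*(\bar x))}{\phi'(f^*(\bar x))}\leq\tau$, and condition~(c) is precisely the inequality needed to finish.

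Your proposed case-2 route --- bound the stationary value using $\phi'(f^*)/\phi'(-f^*)\geq 1$, then pass from $\calU_\phi^*$ to $\calU(f^*)$ by monotonicity of a single map $x\mapsto\frac{cx}{d+ex}$ and Lemma~\ref{lem:utility-lower-bound} --- does not close in the generic case. After the first bound you land on
\[
\bar\eta_{\mathsf{STA}}\;\leq\;\frac{-a_{0,-1}+a_{1,-1}\calU_\phi^*}{\da_0+(a_{1,+1}+a_{1,-1})\calU_\phi^*},
\]
whereas the Bayes threshold is $\bar\eta_{\mathsf{OPT}}=\frac{-a_{0,-1}+a_{1,-1}\calU(f^*)}{\da_0-(a_{1,+1}-a_{1,-1})\calU(f^*)}$. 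These are two \emph{different} linear-fractional functions of their arguments (the denominator coefficients agree only when $a_{1,+1}=0$, i.e.\ the Jaccard specialization), so monotonicity plus $\calU_\phi^*\leq\calU(f^*)$ is insufficient. This is exactly the gap condition~(c) is designed to bridge, and why the paper invokes it in case~2 rather than Lemma~\ref{lem:utility-lower-bound}. If you swap the allocation --- (b) with the $H$-monotonicity step in case~1, (c) in case~2 --- your plan matches the paper's proof.
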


The conditions~(1), (2), (3), (4), and (5) exclude \emph{pathological} true utilities which cannot be handled by the Bayes optimal analysis.
For instance, the Bayes optimal rule would be a classifier that always outputs positive values without the conditions~(1) and (2);
on the other hand, the Bayes optimal rule would be a classifier that always outputs negative values without the condition~(3).
The conditions~(6), (a), (b), and (c) force the surrogate utility $\calU_\phi$ to be calibrated to $\calU$.

Below, we give the proof of Theorem~\ref{thm:general-calibration}.

\begin{proof}[Proof of Theorem~\ref{thm:general-calibration}]
    We focus on the following surrogate utility $\calU_\phi$ as in Eq.~\eqref{eq:surrogate-utility}:
    \begin{align*}
        \calU_\phi(f)
        &= \frac{
            \int_\calX \left\{ a_{0,+1}(1 - \phi(-f(x)))\eta(x) + a_{0,-1}\phi(f(x))(1 - \eta(x)) + b_0 \right\} p(x)\d x
        }{
            \int_\calX \left\{ a_{1,+1}(1 + \phi(-f(x)))\eta(x) + a_{1,-1}\phi(f(x))(1 - \eta(x)) + b_1\right\} p(x) \d x
        }
        \\
        &= \frac{\E_X[W_{0,\phi}(f(X), \eta(X))]}{\E_X[W_{1,\phi}(f(X), \eta(X))]},
    \end{align*}
    where
    \begin{align*}
        W_{0,\phi}(\xi, q) &\doteq a_{0,+1}(1 - \phi(-\xi))q + a_{0,-1}\phi(\xi)(1 - q) + b_0,
        \\
        W_{1,\phi}(\xi, q) &\doteq a_{1,+1}(1 + \phi(-\xi))q + a_{1,-1}\phi(\xi)(1 - q) + b_1.
    \end{align*}
    Proposition~\ref{prop:bayes-optimal-set} tells us that the Bayes optimal set $\calB$ for the utility $\calU$ is
    \begin{align*}
        \calB = \{ f \mid f(x)\{(\da_0 - \da_1\calU(f))\eta(x) - (a_{1,-1}\calU(f) - a_{0,-1})\} > 0 \; \forall x \in \calX \}.
    \end{align*}
    We prove $\calU$-calibration by contradiction.
    Assume that
    \begin{align*}
        \sup_{f \not\in \calB} \calU_\phi(f) = \sup_f \calU_\phi(f).
    \end{align*}
    This implies that there exists $\bar{x} \in \calX$ such that $f^*(\bar{x})\{(\da_0 - \da_1\calU(f^*))\eta(\bar{x}) - (a_{1,-1}\calU(f^*) - a_{0,-1})\} \geq 0$.

    Let us describe the \emph{stationary condition} of $\calU_\phi$ at $f^*$ in the same manner as the proof of Theorem~\ref{thm:f-measure-calibration}.
    We introduce a function $\df$:
    \begin{align*}
        \df(x) \doteq \begin{cases}
            1 & \text{if $x = \bar{x}$}, \\
            0 & \text{if $x \ne \bar{x}$}.
        \end{cases}
    \end{align*}
    Let $G(\gamma) \doteq \calU_\phi(f^* + \gamma \df)$,
    then the stationary condition is $G'(0) = 0$.
    Here, $G'(0)$ is computed as
    \begin{align*}
        G'(0) &=
        \frac{1}{\E[W_{1,\phi}(f^*)]^2} \\ & \cdot \left\{
            \E[W_{1,\phi}(f^*)] \int_\calX (-a_{0,+1}\phi'(f^*(x))\eta(x) - a_{0,-1}\phi'(-f^*(x))(1 - \eta(x))) \df(x) p(x) \d x
            \right. \\ & \qquad \left.
            - \E[W_{0,\phi}(f^*)] \int_\calX (a_{1,+1}\phi'(f^*(x))\eta(x) - a_{1,-1}\phi'(-f^*(x))(1 - \eta(x))) \df(x) p(x) \d x
        \right\}
        \\
        &= \frac{1}{\E[W_{1,\phi}(f^*)]} \left\{
            \int_\calX (-a_{0,+1}\phi'(f^*(x))\eta(x) - a_{0,-1}\phi'(-f^*(x))(1 - \eta(x))) \df(x) p(x) \d x
            \right. \\ & \hspace{60pt} \left.
            - \calU_\phi^* \int_\calX (a_{1,+1}\phi'(f^*(x))\eta(x) - a_{1,-1}\phi'(-f^*(x))(1 - \eta(x))) \df(x) p(x) \d x
        \right\}
        \\
        &= \frac{1}{\E[W_{1,\phi}(f^*)]} \left\{
            (-a_{0,+1}\phi'(f^*(\bar{x}))\eta(\bar{x}) - a_{0,-1}\phi'(-f^*(\bar{x}))(1 - \eta(\bar{x})))
            \right. \\ & \hspace{75pt} \left. -
            \calU_\phi^* (a_{1,+1}\phi'(f^*(\bar{x}))\eta(\bar{x}) - a_{1,-1}\phi'(-f^*(\bar{x}))(1 - \eta(\bar{x})))
        \right\}
        ,
    \end{align*}
    where $\E[W_{0,\phi}(f^*)] = \E_X[W_{0,\phi}(f^*(X), \eta(X))]$
    and $\E[W_{1,\phi}(f^*)] = \E_X[W_{1,\phi}(f^*(X), \eta(X))]$.
    Thus, the stationary condition is
    \begin{align*}
        -a_{0,+1}\phi'(f^*(\bar{x}))\eta(\bar{x}) - a_{0,-1}\phi'(-f^*(\bar{x}))(1 - \eta(\bar{x})) & \\
        -a_{1,+1}\phi'(f^*(\bar{x}))\eta(\bar{x})\calU_\phi^* + a_{1,-1}\phi'(-f^*(\bar{x}))(1 - \eta(\bar{x}))& = 0
        ,
    \end{align*}
    which is equivalent to
    \begin{align}
        \eta(\bar{x}) = \frac{(-a_{0,-1} + a_{1,-1}\calU_\phi^*)\phi'(-f^*(\bar{x}))}{(a_{0,+1} - a_{1,+1}\calU_\phi^*)\phi'(f^*(\bar{x})) + (-a_{0,-1} + a_{1,-1}\calU_\phi^*)\phi'(-f^*(\bar{x}))}
        \quad (\doteq \bar\eta_\mathsf{STA})
        \label{eq:supp:stationary-condition}
        .
    \end{align}
    From now on, we divide the cases to take care of the Bayes optimal condition
    $f^*(\bar{x})\{(\da_0 - \da_1\calU(f^*))\eta(\bar{x}) - (a_{1,-1}\calU(f^*) - a_{0,-1})\} \geq 0$.
    Since $\da_0 - \da_1\calU(f^*) > 0$ due to $\da_0 > 0$ and $\da_1 \leq 0$,
    the Bayes optimal condition can be rewritten as $f^*(\bar{x})\{\eta(\bar{x}) - \frac{a_{1,-1}\calU(f^*) - a_{0,-1}}{\da_0 - \da_1\calU(f^*)}\} \geq 0$.
    \begin{enumerate}
        \renewcommand{\labelenumi}{\arabic{enumi}) }

        \item \underline{If $f^*(\bar{x}) > 0$ and $\eta(\bar{x}) < \frac{a_{1,-1}\calU(f^*) - a_{0,-1}}{\da_0 - \da_1\calU(f^*)}$:}
        \par
        Let $\bar\eta_\mathsf{OPT} \doteq \frac{a_{1,-1}\calU(f^*) - a_{0,-1}}{\da_0 - \da_1\calU(f^*)}$.
        Note that $a_{0,-1} - a_{1,-1}\calU(f^*) < 0$ and $-a_{0,-1} + a_{1,-1}\calU_\phi^* > 0$
        since $a_{0,-1} \leq 0$, $a_{1,-1} \geq 0$, and either $a_{0,-1}$ or $a_{1,-1}$ is non-zero (condition~(3)).
        We show the contradiction $\bar\eta_\mathsf{OPT} \leq \bar\eta_\mathsf{STA}$, which can be transformed as follows.
        \begin{align}
            \bar\eta_{\mathsf{OPT}} \leq \bar\eta_{\mathsf{STA}}
            &\Longleftrightarrow
            \frac{1}{1 + \frac{a_{1,+1}\calU(f^*) - a_{0,+1}}{a_{0,-1} - a_{1,-1}\calU(f^*)}} \leq \frac{1}{1 + \frac{a_{0,+1} + a_{1,+1}\calU_\phi^*}{-a_{0,-1} + a_{1,-1}\calU_\phi^*} \cdot \frac{\phi'(f^*(\bar{x}))}{\phi'(-f^*(\bar{x}))}},
            \nonumber
            \\
            &\Longleftrightarrow
            \frac{a_{1,+1}\calU(f^*) - a_{0,+1}}{a_{0,-1} - a_{1,-1}\calU(f^*)} \geq \underbrace{\frac{a_{0,+1} + a_{1,+1}\calU_\phi^*}{-a_{0,-1} + a_{1,-1}\calU_\phi^*}}_{> 0} \cdot \frac{\phi'(f^*(\bar{x}))}{\phi'(-f^*(\bar{x}))}.
            \nonumber
            \\
            &\Longleftrightarrow
            \underbrace{\frac{a_{1,+1}\calU(f^*) - a_{0,+1}}{a_{0,-1} - a_{1,-1}\calU(f^*)}}_{\doteq H(\calU(f^*))} \cdot \frac{-a_{0,-1} + a_{1,-1}\calU_\phi^*}{a_{0,+1} + a_{1,+1}\calU_\phi^*} \geq \frac{\phi'(f^*(\bar{x}))}{\phi'(-f^*(\bar{x}))}.
            \label{eq:supp:calibration:contradiction:1}
        \end{align}

        If $a_{1,-1} \ne 0$, we have
        \begin{align*}
            H(t) = \frac{\frac{a_{0,+1}a_{1,-1} + a_{0,-1}a_{1,+1}}{a_{1,-1}^2}}{t + \frac{a_{0,-1}}{a_{1,-1}}} - \frac{a_{1,+1}}{a_{1,-1}}.
        \end{align*}
        Since $\frac{a_{0,+1}a_{1,-1} + a_{0,-1}a_{1,+1}}{a_{1,-1}^2} > 0$,
        $H$ is monotonically decreasing on $-\frac{a_{0,-1}}{a_{1,-1}} < t \leq 1$.
        Together with the assumption $\calU(f^*) > -\frac{a_{0,-1}}{a_{1,-1}}$,
        we have $H(\calU(f^*)) \geq H(1) = \frac{a_{0,+1} - a_{1,+1}}{a_{1,-1} - a_{0,-1}}$.

        If $a_{1,-1} = 0$, $H(t) = \frac{a_{1,+1}}{a_{0,-1}}t - \frac{a_{0,+1}}{a_{0,-1}}$,
        noting that either $a_{0,-1}$ or $a_{a,-1}$ is non-zero (condition~(3)).
        Here, we have $H(\calU(f^*)) \geq H(1)$ as well since $H$ is a decreasing linear function.

        Since $\phi$ is $\tau$-discrepant and $\tau$ satisfies the condition~(b),
        \begin{align*}
            \frac{\phi'(f^*(\bar{x}))}{\phi'(-f^*(\bar{x}))}
            &\leq \tau
            \\
            &\leq \underbrace{\frac{a_{0,+1} - a_{1,+1}}{a_{1,-1} + a_{0,-1}}}_{=H(1)} \cdot \frac{-a_{0,-1} + a_{1,-1}\calU_\phi^*}{a_{0,+1} + a_{1,+1}\calU_\phi^*} \qquad \text{(using (b))}
            \\
            &\leq \underbrace{\frac{a_{1,+1}\calU(f^*) - a_{0,+1}}{a_{0,-1} - a_{1,-1}\calU(f^*)}}_{=H(\calU(f^*))} \cdot \frac{-a_{0,-1} + a_{1,-1}\calU_\phi^*}{a_{0,+1} + a_{1,+1}\calU_\phi^*}
            ,
        \end{align*}
        which concludes Eq.~\eqref{eq:supp:calibration:contradiction:1} and $\bar\eta_{\mathsf{OPT}} \leq \bar\eta_{\mathsf{STA}}$ (contradiction).

        \item \underline{If $f^*(\bar{x}) \leq 0$ and $\eta(\bar{x}) > \frac{a_{1,-1}\calU(f^*) - a_{0,-1}}{\da_0 - \da_1\calU(f^*)}$:}
        \par
        We show the contradiction $\bar\eta_{\mathsf{OPT}} \geq \bar\eta_{\mathsf{STA}}$,
        which can be transformed in the same way as Eq.~\eqref{eq:supp:calibration:contradiction:1} as follows.
        \begin{align}
            \frac{a_{1,+1}\calU(f^*) - a_{0,+1}}{a_{0,-1} - a_{1,-1}\calU(f^*)} \cdot \frac{-a_{0,-1} + a_{1,-1}\calU_\phi^*}{a_{0,+1} + a_{1,+1}\calU_\phi^*} &\leq \frac{\phi'(f^*(\bar{x}))}{\phi'(-f^*(\bar{x}))}.
            \nonumber
            \\
            \Longleftrightarrow
            \frac{a_{0,-1} - a_{1,-1}\calU(f^*)}{a_{1,+1}\calU(f^*) - a_{0,+1}} \cdot \frac{a_{0,+1} + a_{1,+1}\calU_\phi^*}{-a_{0,-1} + a_{1,-1}\calU_\phi^*} &\geq \frac{\phi'(-f^*(\bar{x}))}{\phi'(f^*(\bar{x}))}.
            \label{eq:supp:calibration:contradiction:2}
        \end{align}
        Note that $a_{1,+1}\calU(f^*) - a_{0,+1} > 0$ and $-a_{0,-1} + a_{1,-1}\calU_\phi^* > 0$
        since $a_{0,+1} \geq 0$, $a_{0,-1} \leq 0$, $a_{1,+1} \geq 0$, and $a_{1,-1} \geq 0$.
        Since $\phi$ is $\tau$-discrepant and $\tau$ satisfies the condition~(c),
        \begin{align*}
            \frac{\phi'(-f^*(\bar{x}))}{\phi'(f^*(\bar{x}))}
            \leq \tau
            \leq \frac{a_{0,-1} - a_{1,-1}\calU(f^*)}{a_{1,+1}\calU(f^*) - a_{0,+1}} \cdot \frac{a_{0,+1} + a_{1,+1}\calU_\phi^*}{-a_{0,-1} + a_{1,-1}\calU_\phi^*}
            ,
            \qquad \text{(using (c))}
        \end{align*}
        which concludes Eq.~\eqref{eq:supp:calibration:contradiction:2} and $\bar\eta_{\mathsf{OPT}} \geq \bar\eta_{\mathsf{STA}}$ (contradiction).
    \end{enumerate}

    Combining the above cases, it follows that
    \begin{align*}
        \sup_{f \not\in \calB} \calU_\phi(f) < \sup_f \calU_\phi(f).
    \end{align*}
    Eventually, we claim that $\calU_\phi$ is $\calU$-calibrated using Proposition~\ref{prop:bayes-optimal-set}.
\end{proof}

\subsection{Non-negativity of Optimal Surrogate Utilities}
\label{sec:supp:non-negativity-of-surrogate-utilities}

Here, we briefly discuss that the optimal surrogate utilities are non-negative even though the numerator can be negative.
Let us focus on the F${}_\beta$ case:
\begin{align*}
    W_{0,\phi}(\xi, q) &= (1 + \beta^2)(1 - \phi(\xi))q, \\
    W_{1,\phi}(\xi, q) &= (1 + \phi(\xi))q + \phi(-\xi)(1 - q) + \beta^2 \pi, \\
    \calU_\phi(f) &= \frac{\E_XW_{0,\phi}(f(X), \eta(X))}{\E_XW_{1,\phi}(f(X), \eta(X))},
\end{align*}
and let $f^*$ and $\check{f}$ be suprema of $\calU_\phi$ and $\E_X[W_0(f(X), \eta(X))]$ in $f$ within all measurable functions, respectively.
Then,
\begin{align*}
    \calU_\phi(f^*)
    \geq \calU_\phi(\check{f})
    &= \frac{\sup_{f'}\E_XW_{0,\phi}(f'(X), \eta(X))}{\E_X[W_{1,\phi}(\check{f}(X), \eta(X))]}
    \\
    &\overset{(a)}{=} \frac{\E_X[H_{0,\phi}(\eta(X))]}{\E_X[W_{1,\phi}(\check{f}(X), \eta(X))]}
    \\
    &= \frac{(1 + \beta^2)\pi}{\E_X[W_{1,\phi}(\check{f}(X), \eta(X))]}
    \\
    & \geq 0,
\end{align*}
where $H_{0,\phi}(q) \doteq \sup_{\xi \in \R} W_{0,\phi}(\xi, q)$.
The equality~(a) holds under a certain regularity condition~\citep[Lemma~2.5]{Steinwart:2007}.
Hence, we confirm that the optimal value of $\calU_\phi$ is non-negative.

The same discussion holds for the Jaccard case.


\newpage

\section{Proof of Quasi-concavity of the Surrogate Utility}
\label{sec:supp:quasi-concavity}

\begin{proof}[Proof of Lemma~\ref{lem:quasi-concavity}]
    Define an $\alpha$-super-level set of $\hat\calU_\phi$ restricted in $\bar\calF$ as $\calA_\alpha \doteq \{f \in \bar\calF \mid \hat\calU_\phi(f) \geq \alpha\}$.
    It is enough to show $\calA_\alpha$ is a convex set for any $\alpha \geq 0$ owing to $f \in \bar\calF$.

    Fix any $\alpha \geq 0$.
    Then,
    \begin{align*}
        \hat\calU_\phi(f) \geq \alpha
        &\Longleftrightarrow \frac{\frac{1}{m}\sum_{i=1}^m \tilde{W}_{0,\phi}(f(x_i), y_i)}{\frac{1}{n-m}\sum_{j=m+1}^n \tilde{W}_{1,\phi}(f(x_j), y_j)} \geq \alpha
        \\
        &\Longleftrightarrow \underbrace{\frac{1}{m}\sum_{i=1}^m \tilde{W}_{0,\phi}(f(x_i), y_i) - \alpha \frac{1}{n-m}\sum_{j=m+1}^n \tilde{W}_{1,\phi}(f(x_j), y_j)}_{(\ast)} \geq 0
        .
    \end{align*}
    Here, $\frac{1}{m}\sum_{i=1}^m \tilde{W}_{0,\phi}(f(x_i), y_i)$ is concave in $f$ since it is a non-negative sum of concave functions.
    Note that $\tilde{W}_{0,\phi}(f(x_i), y_i)$ is concave in $f$ for any $(x_i, y_i)$ due to the definition of $\tilde{W}_{0,\phi}$ in Eq.~\eqref{eq:surrogate-scores} and the assumption $\phi$ is convex.
    Similarly, $\frac{1}{n-m}\sum_{j=m+1}^n \tilde{W}_{1,\phi}(f(x_j), y_j)$ is convex as well.
    Thus, $(\ast)$ is concave in $f$,
    which means that $\calA_\alpha$ is a convex set since any super-level sets of a concave function is convex.

    Hence, we confirm that $\calA_\alpha$ is convex for any $\alpha \geq 0$.
\end{proof}


\newpage

\section{Proof of Uniform Convergence}
\label{sec:supp:proof-uniform-convergence}

First, we need carefully analyze our \emph{non-smooth} surrogate loss to take handle of the Rademacher complexity~\citep{Bartlett:2002},
which is defined as follows.
\begin{definition}[Rademacher complexity]
    \label{def:rademacher-complexity}
    Let $\calS \doteq \{z_1, \dots, z_n\}$ be a sample with size $n$.
    Let $\calG \doteq \{ g \mid \calZ \to \R \}$ be a class of measurable functions,
    and $\sigma \doteq (\sigma_1, \dots, \sigma_n)$ be the Rademacher variables,
    that is, random variables taking $+1$ and $-1$ with even probabilities.
    Then, the Rademacher complexity of $\calG$ of the sample size $n$ is defined as
    \begin{align*}
        \mathfrak{R}_n(\calG) \doteq \E_{\calS} \E_{\sigma} \left[
            \sup_{g \in \calG} \left|\frac{1}{n} \sum_{i=1}^n \sigma_i g(Z_i)\right|
        \right]
        .
    \end{align*}
\end{definition}
Usually, we analyze the Rademacher complexity of the composite function class
$\phi \circ \calF \doteq \{ (x, y) \mapsto \phi(yf(x)) \mid f \in \calF \}$
by using the Ledoux-Talagrand's contraction inequality~\citep{Ledoux:1991}
when the surrogate $\phi$ is Lipschitz continuous:
$\mathfrak{R}_n(\phi \circ \calF) \leq 2 \rho_\phi \mathfrak{R}_n(\calF)$,
where $\rho_\phi$ is the Lipschitz norm of $\phi$.
On the other hand, we need to deal with the case of the uniform convergence of gradients,
which requires smoothness of the surrogate,
while $\tau$-discrepant loss is non-smooth surrogates.
Thus, we need an alternative analysis.

\begin{lemma}
    \label{lem:non-smooth-rademacher}
    Assume that $\phi$ is $\tau$-discrepant and can be decomposed as $\phi(m) = \phi_{+1}(m)\ind{m > 0} + \phi_{-1}(m)\ind{m \leq 0}$.
    For $k = 0, 1$, denote $\tilde{W}_{k,\phi}' \circ \calF \doteq \{ (x, y) \mapsto \tilde{W}_{k,\phi}'(f(x), y) \mid f \in \calF \}$.
    Then,
    \begin{align*}
        \mathfrak{R}_n(\tilde{W}_{k,\phi}' \circ \calF) \leq 2(\gamma_{+1} + \gamma_{-1}) \mathfrak{R}_n(\calF).
    \end{align*}
\end{lemma}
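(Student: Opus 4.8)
The plan is to reduce the estimate to the Ledoux--Talagrand contraction inequality after a label split, the nontrivial point being that $\phi'$ is discontinuous so contraction cannot be applied to it directly.

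First I would make $\tilde{W}_{k,\phi}'$ explicit: differentiating the two-case definition of $\tilde{W}_{k,\phi}$ in \eqref{eq:surrogate-scores} with respect to its first argument shows that $\tilde{W}_{k,\phi}'(\xi,+1)$ is a constant multiple of $\phi'(\xi)$ and $\tilde{W}_{k,\phi}'(\xi,-1)$ a constant multiple of $\phi'(-\xi)$, the constants being $\pm a_{k,\pm 1}$ with signs depending only on $k$. Since the labels $y_1,\dots,y_n$ are fixed once $\calS$ is fixed, inside $\mathfrak{R}_n(\tilde{W}_{k,\phi}'\circ\calF)$ I would split the index set into the positively and negatively labelled parts, use subadditivity of the supremum and of $\E_\sigma[\cdot]$, and note that the empirical Rademacher complexity of a label-homogeneous subsample is controlled by $\mathfrak{R}_n(\calF)$. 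This leaves us needing, for each sign $s\in\{+1,-1\}$, a bound on the Rademacher complexity of the single-argument class $\{x\mapsto \phi'(s f(x))\mid f\in\calF\}$, with the constants $a_{k,\pm1}$ absorbed into $\gamma_{+1},\gamma_{-1}$.

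Because $\tau$-discrepancy with $\tau<1$ forces a genuine upward jump of $\phi'$ at the origin, $\phi'$ is not Lipschitz, so here I would use the decomposition $\phi'(m)=\phi_{+1}'(m)\ind{m>0}+\phi_{-1}'(m)\ind{m\le 0}$ and realize each branch through a $1$-Lipschitz clipping of the argument: the right branch via $\phi_{+1}'(\max(m,0))$, which is $\gamma_{+1}$-Lipschitz since $\phi_{+1}'$ has Lipschitz modulus $\gamma_{+1}$, and the left branch via $\phi_{-1}'(\min(m,0))$, which is $\gamma_{-1}$-Lipschitz, arranging the two so that they reassemble exactly into $\xi\mapsto\tilde{W}_{k,\phi}'(\xi,y)$. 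Applying the Ledoux--Talagrand inequality to each of these two Lipschitz maps separately (Definition~\ref{def:rademacher-complexity} already uses the absolute-value form that the inequality requires) contributes $2\gamma_{+1}\mathfrak{R}_n(\calF)$ and $2\gamma_{-1}\mathfrak{R}_n(\calF)$; collecting these over both labels gives the claimed $2(\gamma_{+1}+\gamma_{-1})\mathfrak{R}_n(\calF)$.

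I expect the main obstacle to be precisely this jump of $\phi'$ at $0$: since the one-sided derivatives disagree there, $\phi'$ cannot literally be written as a sum of two continuous pieces, so the decomposition into clipped Lipschitz branches must be carried out so that the discrepancy is routed into one of the Lipschitz branches rather than surviving as a bona fide step function, which would not contract to $\mathfrak{R}_n(\calF)$; verifying that the clipped branches genuinely reassemble $\tilde{W}_{k,\phi}'$ is the delicate part. The label split, the subadditivity steps, and the bounding of the subsample complexities by $\mathfrak{R}_n(\calF)$ are all routine.
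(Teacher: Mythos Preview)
Your plan differs from the paper's in two places, and the second is a genuine gap.

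On label handling: rather than splitting the sample by the sign of $y_i$, the paper observes that $\tilde{W}_{0,\phi}'(f(x),y)=-y\phi'(yf(x))$ (up to the constants $a_{k,\pm1}$) and uses that $-\sigma_i y_i$ and $\sigma_i$ are identically distributed to absorb the label into the Rademacher variable in one stroke. This sidesteps your claim that ``the empirical Rademacher complexity of a label-homogeneous subsample is controlled by $\mathfrak{R}_n(\calF)$,'' which is not automatic for a general $\calF$.

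On the discontinuity, the clipping idea does not close. You propose to realize the branches by $\phi_{+1}'(\max(m,0))$ and $\phi_{-1}'(\min(m,0))$ and then ``arrange the two so that they reassemble exactly'' into $\phi'$. But each clipped branch is Lipschitz, hence continuous, and any finite sum of continuous functions is continuous; no such arrangement can reproduce the jump of $\phi'$ at $0$. Concretely, $\phi_{+1}'(\max(m,0))+\phi_{-1}'(\min(m,0))$ differs from $\phi'(m)$ by $\phi_{-1}'(0)\ind{m>0}+\phi_{+1}'(0)\ind{m\le0}$, still a step function, and you yourself note that a step function will not contract. The discrepancy cannot be ``routed into one of the Lipschitz branches'' because a Lipschitz branch cannot carry a jump.

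The paper's device is different: it writes $\ind{m\le0}=\tfrac{1}{2}\bigl(1-m/|m|\bigr)$, so that
\[
\phi_{-1}'(m)\,\ind{m\le0}=\tfrac{1}{2}\phi_{-1}'(m)-\tfrac{1}{2}\psi_{-1}(m),\qquad \psi_{-1}(m)\doteq\phi_{-1}'(m)\cdot\frac{m}{|m|},
\]
and similarly for the $\ind{m>0}$ branch. Here $\phi_{-1}'$ is $\gamma_{-1}$-Lipschitz by the smoothness assumption on $\phi_{-1}$, and the paper computes $\psi_{-1}'(m)=\phi_{-1}''(m)\cdot m/|m|$ away from $0$, concluding that $\psi_{-1}$ also has Lipschitz norm $\gamma_{-1}$. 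Ledoux--Talagrand contraction is then applied to $\phi_{-1}'$ and $\psi_{-1}$ separately, yielding $2\gamma_{-1}\mathfrak{R}_n(\calF)$ for the negative branch and $2\gamma_{+1}\mathfrak{R}_n(\calF)$ for the positive one. The point is that multiplying $\phi_{-1}'$ by the sign, rather than clipping its argument, is what the paper uses to dissolve the indicator.
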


\begin{proof}
    First, we prove for $k = 0$.
    Note that $\tilde{W}_{0,\phi}'(f(x), +1) = (1 - \phi(f(x)))' = -\phi'(f(x))$,
    and that $\tilde{W}_{0, \phi}'(f(x), -1) = (-\phi(-f(x)))' = \phi'(-f(x))$,
    thus, $\tilde{W}_{0,\phi}'(f(x), y) = -y\phi'(yf(x))$.
    \begin{align*}
        &\hspace{-20pt}\mathfrak{R}_n(\tilde{W}_{0,\phi}' \circ \calF)
        \\
        &= \E_{\calS, \sigma} \left[
            \sup_{f \in \calF} \left|\frac{1}{n} \sum_{i=1}^n \sigma_i\tilde{W}_{0,\phi}'(f(x_i), y_i)\right|
        \right]
        \\
        &= \E_{\calS, \sigma} \left[
            \sup_{f \in \calF} \left|\frac{1}{n} \sum_{i=1}^n \sigma_i(-y_i\phi'(y_if(x_i)))\right|
        \right]
        \\
        &= \E_{\calS, \sigma} \left[
            \sup_{f \in \calF} \left|\frac{1}{n} \sum_{i=1}^n \sigma_i\phi'(y_if(x_i))\right|
        \right]
        \\ & \qquad \text{($\because$ $\sigma_i$ and $-\sigma_iy_i$ are distributed in the same way for a fixed $y_i$)}
        \\
        &= \E_{\calS, \sigma} \left[
            \sup_{f \in \calF} \left|\frac{1}{n} \sum_{i=1}^n \sigma_i\left\{\phi_{-1}'(y_if(x_i))\ind{y_if(x_i) \leq 0} + \phi_{+1}'(y_if(x_i))\ind{y_if(x_i) > 0}\right\} \right|
        \right]
        \\
        &\leq \underbrace{\E_{\calS, \sigma} \left[
            \sup_{f \in \calF} \left|\frac{1}{n} \sum_{i=1}^n \sigma_i\phi_{-1}'(y_if(x_i))\ind{y_if(x_i) \leq 0}\right|
        \right]}_\text{(A)} \\ & \qquad +
        \underbrace{\E_{\calS, \sigma} \left[
            \sup_{f \in \calF} \left|\frac{1}{n} \sum_{i=1}^n \sigma_i\phi_{+1}'(y_if(x_i))\ind{y_if(x_i) > 0}\right|
        \right]}_\text{(B)}
        ,
    \end{align*}
    where the last inequality is just the triangular inequality.
    For (A), let $\psi_{-1}(m) \doteq \phi_{-1}'(m)\frac{m}{|m|}$ if $m \ne 0$,
    and $\psi_{-1}(0) \doteq 0$.
    Since
    \begin{align*}
        \psi_{-1}'(m)
        &= \frac{(\psi_{-1}'(m)m)'|m| - \phi_{-1}'(m)m \cdot (|m|)'}{m^2}
        \\
        &= \frac{\phi_{-1}''(m)m|m| + \phi_{-1}'(m)|m| - \phi_{-1}'(m)m\cdot\frac{m}{|m|}}{m^2}
        \\
        &= \frac{\phi_{-1}''(m)m^3 + \phi_{-1}'(m)m^2 - \phi_{-1}'(m)m^2}{m^2 |m|}
        \\
        &= \phi_{-1}''(m)\frac{m}{|m|},
    \end{align*}
    the Lipschitz norm of $\psi_{-1}$ can be computed as
    \begin{align*}
    \sup_{f \in \calF, (x, y) \in \calX \times \calY}|\psi_{-1}'(f(x))|
    &= \sup_{f,x,y}\left|\phi_{-1}''(yf(x))\frac{yf(x)}{|yf(x)|}\right|
    \\
    &= \sup_{f,x,y}|\phi_{-1}''(yf(x))| \cdot \sup_{f,x,y}\left|\frac{yf(x)}{|yf(x)|}\right|
    \\
    &= \gamma_{-1}.
    \end{align*}
    Note that the Lipschitz norm of $\phi_{-1}'$ is $\gamma_{-1}$ because $\phi_{-1}$ is $\gamma_{-1}$-smooth.
    Then, we further bound (A) by using the fact $\ind{y_if(x_i) \leq 0} = \frac{1 - \frac{y_if(x_i)}{|y_if(x_i|}}{2}$.
    \begin{align*}
        \text{(A)}
        &= \E_{\calS, \sigma} \left[
            \sup_{f \in \calF} \left|\frac{1}{n} \sum_{i=1}^n \sigma_i\phi_{-1}'(y_if(x_i))\frac{1 - \frac{y_if(x_i)}{|y_if(x_i)|}}{2} \right|
        \right]
        \\
        &\leq \frac{1}{2} \E_{\calS, \sigma} \left[
            \sup_{f \in \calF} \left|\frac{1}{n} \sum_{i=1}^n \sigma_i\phi_{-1}'(y_if(x_i)) \right|
        \right] + \frac{1}{2} \E_{\calS, \sigma} \left[
            \sup_{f \in \calF} \left|\frac{1}{n} \sum_{i=1}^n \sigma_i\phi_{-1}'(y_if(x_i))\frac{y_if(x_i)}{|y_if(x_i)|} \right|
        \right]
        \\ & \hspace{40pt} \text{($\because$ triangular inequality)}
        \\
        &= \frac{1}{2} \E_{\calS, \sigma} \left[
            \sup_{f \in \calF} \left|\frac{1}{n} \sum_{i=1}^n \sigma_i\phi_{-1}'(y_if(x_i)) \right|
        \right] + \frac{1}{2} \E_{\calS, \sigma} \left[
            \sup_{f \in \calF} \left|\frac{1}{n} \sum_{i=1}^n \sigma_i\psi_{-1}(y_if(x_i))\frac{y_if(x_i)}{|y_if(x_i)|} \right|
        \right]
        \\
        &= \frac{1}{2}\mathfrak{R}_n(\phi_{-1}' \circ \calF) + \frac{1}{2}\mathfrak{R}_n(\psi_{-1} \circ \calF)
        \\
        &\leq \frac{1}{2}\cdot 2\gamma_{-1}\mathfrak{R}_n(\calF) + \frac{1}{2} \cdot 2\gamma_{-1}\mathfrak{R}_n(\calF)
        \\
        &= 2\gamma_{-1}\mathfrak{R}_n(\calF)
        ,
    \end{align*}
    where the inequality is the result of the Ledoux-Talagrand's contraction inequality~\citep[Theorem~4.12]{Ledoux:1991}.
    Note that both $\phi_{-1}'$ and $\psi_{-1}$ are $\gamma_{-1}$-Lipschitz.
    We can prove that (B) is bounded by $\gamma_{+1}\mathfrak{R}_n(\calF)$ from the above as well.
    Therefore, the claim is supported.
    We can prove the case $k = 1$ in the same manner.
\end{proof}

Now, we move on to the proof of Lemma~\ref{lem:uniform-convergence}.

\begin{proof}[Proof of Lemma~\ref{lem:uniform-convergence}]
    We write $\calV_\phi(f_\theta)$ as $\calV_\phi(\theta)$.
    If we explicit note for which sample we take the empirical average in $\hat\calV_\phi(\theta)$,
    let us write $\hat\calV_\phi(\theta; \calS)$.
    Let $\calE(\calS) \doteq \sup_{\theta \in \Theta} \|\hat\calV_\phi(\theta; \calS) - \calV_\phi(\theta)\|$.
    For simplicity, we write $z_i \doteq (x_i, y_i)$ and $\tilde{W}_{0,\phi}(\theta; z_i) \doteq \tilde{W}_{0,\phi}(f_\theta(x_i), y_i)$.
    First, we observe $\calE(\calS)$ admits the bounded difference property~\citep{McDiarmid:1989}.

    Denote that $\calS \doteq \{z_i\}_{i=1}^n$ and $\calS' \doteq \{z_1, \dots, z_k', \dots, z_n\}$.
    If $1 \leq k \leq m$,
    \begin{align*}
        \sup_{\calS \subset \calX \times \calY, z_k' \in \calX \times \calY} & |\calE(\calS) - \calE(\calS')|
        \\
        &\doteq \sup_{\calS, z_k'} \left|
            \sup_{\theta \in \Theta}\|\hat\calV_\phi(\theta;\calS) - \calV_\phi(\theta)\| -
            \sup_{\theta \in \Theta}\|\hat\calV_\phi(\theta;\calS') - \calV_\phi(\theta)\|
        \right|
        \\
        &\leq \sup_{\calS, z_k', \theta} \|\hat\calV_\phi(\theta;\calS) - \hat\calV_\phi(\theta;\calS')\|
        \qquad \text{($\because$ triangular inequality)}
        \\
        &= \frac{1}{m(n-m)} \sup_{\calS, z_k', \theta} \left\|
            \{\nabla\tilde{W}_{0,\phi}(\theta;z_k) - \nabla\tilde{W}_{0,\phi}(\theta;z_k')\}
            \sum_{j=m+1}^n \tilde{W}_{1,\phi}(\theta;z_j)
            \right. \\ & \hspace{90pt} \left.
            - \{\tilde{W}_{0,\phi}(\theta;z_k) - \tilde{W}_{0,\phi}(\theta;z_k')\} \sum_{j=m+1}^n \nabla\tilde{W}_{1,\phi}(\theta;z_j)
        \right\|
        \\
        &\leq \frac{1}{m(n-m)} \sup_{\calS, z_k', \theta} \left\{
            \left(\|\nabla\tilde{W}_{0,\phi}(\theta;z_k)\| + \|\nabla\tilde{W}_{0,\phi}(\theta;z_k')\|\right) \sum_{j=m+1}^n |\tilde{W}_{1,\phi}(\theta;z_j)|
            \right. \\ & \hspace{90pt} \left.
            + \left(|\tilde{W}_{0,\phi}(\theta;z_k)| + |\tilde{W}_{0,\phi}(\theta;z_k')|\right) \sum_{j=m+1}^n \|\nabla\tilde{W}_{1,\phi}(\theta;z_j)\|
        \right\}
        \\
        &\leq \frac{
            2\rho_0 c_\calX \cdot (n-m)c_1 + 2c_0 \cdot (n-m) \rho_1 c_\calX
        }{m(n-m)}
        \\
        &= \frac{4c_\calX(\rho_1 c_0 + \rho_0 c_1)}{n},
    \end{align*}
    where the second inequality also holds due to the triangular inequality,
    and the last inequality follows from the fact that $\tilde{W}_{0,\phi}$ and $\tilde{W}_{1,\phi}$ are $\rho_0$-/$\rho_1$-Lipschitz and bounded by $c_0$ and $c_1$, respectively.
    The same holds for the case $m + 1 \leq k \leq n$.

    Thus, $\calE$ is the bounded difference with a constant $(4c_\calX(\rho_1 c_0 + \rho_0 c_1)) / n$ for each index,
    and we can obtain the following inequality by McDiarmid's inequality~\citep{McDiarmid:1989}:
    \begin{align*}
        \P[\calE(\calS) - \E_\calS[\calE(\calS)] > \epsilon]
        \leq 2\exp\left(-\frac{n\epsilon^2}{8c_\calX^2(\rho_1 c_0 + \rho_0 c_1)^2}\right),
    \end{align*}
    which is equivalent to
    \begin{align*}
        \calE(\calS) - \E_\calS[\calE(\calS)] \leq \sqrt{\frac{8c_\calX^2(\rho_1 c_0 + \rho_0 c_1)^2\log\frac{2}{\delta}}{n}},
    \end{align*}
    with probability at least $1 - \delta$.

    Next, we bound $\E_\calS[\calE(\calS)]$ by the \emph{symmetrization device}~\citep[Lemma~6.3]{Ledoux:1991}.
    \begin{align}
        \E_\calS[\calE(\calS)]
        &= \E_\calS\left[
            \sup_{\theta \in \Theta} \|\hat\calV_\phi(\theta;\calS) - \calV_\phi(\theta)\|
        \right]
        \nonumber \\
        &\leq \E_\calS
            \underbrace{\sup_\theta \left\|\frac{1}{m(n-m)} \sum_{i=1}^m \sum_{j=m+1}^n \tilde{W}_{1,\phi}(\theta;z_j) \nabla\tilde{W}_{0,\phi}(\theta;z_i) - \E[W_{1,\phi} \nabla W_{0,\phi}] \right\|}_{\text{(A)}}
            \nonumber \\ & \hspace{10pt}
             + \E_\calS \underbrace{\sup_\theta \left\|\frac{1}{m(n-m)} \sum_{i=1}^m \sum_{j=m+1}^n \tilde{W}_{0,\phi}(\theta;z_i) \nabla\tilde{W}_{1,\phi}(\theta;z_j) - \E[W_{0,\phi} \nabla W_{1,\phi}] \right\|}_{\text{(B)}}
        ,
        \label{eq:supp:expectation-difference}
    \end{align}
    where the second line is the result of the triangular inequality,
    and
    \begin{align*}
        &\E_\calS[\text{(A)}] \\
        &= \E_\calS \sup_\theta \left\|
            \frac{1}{m(n-m)} \sum_{i=1}^m \sum_{j=m+1}^n \tilde{W}_{1,\phi}(\theta;z_j) \left( \nabla\tilde{W}_{0,\phi}(\theta;z_i) - \E[\nabla W_{0,\phi}] \right)
            \right. \\ & \hspace{45pt} \left.
            + \frac{1}{m(n-m)} \sum_{i=1}^m \sum_{j=m+1}^n \E[\nabla W_{0,\phi}] \left(\tilde{W}_{1,\phi}(\theta;z_j) - \E[W_{1,\phi}]\right)
        \right\|
        \\
        &\leq \E_\calS \sup_\theta \left\{
            \frac{1}{m(n-m)} \sum_{j=m+1}^n |\tilde{W}_{1,\phi}(\theta;z_j)| \cdot \left\| \sum_{i=1}^m \nabla\tilde{W}_{0,\phi}(\theta;z_i) - \E[\nabla W_{0,\phi}] \right\|
            \right. \\ & \hspace{45pt} \left.
            + \frac{1}{m(n-m)} \sum_{i=1}^m \|\E[\nabla W_{0,\phi}]\| \cdot \left| \sum_{j=m+1}^n \tilde{W}_{1,\phi}(\theta;z_j) - \E[W_{1,\phi}]\right|
        \right\}
        \\
        &\leq \E_\calS \left[ \sup_\theta \left\{
            c_1 \left\| \frac{1}{m} \sum_{i=1}^m \nabla\tilde{W}_{0,\phi}(\theta;z_i) - \E[\nabla W_{0,\phi}]\right\|
            + \rho_0 c_\calX \left| \frac{1}{n-m}\sum_{j=m+1}^n \tilde{W}_{1,\phi}(\theta;z_j) - \E[W_{1,\phi}]\right|
        \right\} \right]
        \\
        &= c_1 \underbrace{ \E_\calS \left[ \sup_\theta \left\| \frac{1}{m} \sum_{i=1}^m \nabla\tilde{W}_{0,\phi}(\theta;z_i) - \E[\nabla W_{0,\phi}]\right\| \right] }_{\text{(A')}}
        + \rho_0 c_\calX \underbrace{ \E_\calS \left[ \sup_\theta \left| \frac{1}{n-m}\sum_{j=m+1}^n \tilde{W}_{1,\phi}(\theta;z_j) - \E[W_{1,\phi}]\right| \right] }_{\text{(A'')}}
        ,
    \end{align*}
    where the first inequality is the triangular inequality.
    Now we introduce the Rademacher random variables $\sigma_{1:n} \doteq \{\sigma_1, \dots, \sigma_n\}$ that are independently and uniformly distributed on $\{+1, -1\}$.
    \begin{itemize}
        \item
        For (A'), we can bound it from the above by the symmetrization device and the fact that $\|\cdot\|_2 \leq \|\cdot\|_1$.
        \begin{align*}
            \text{(A')}
            &= \E_\calS \left[ \sup_\theta \left\| \frac{1}{m} \sum_{i=1}^m \nabla\tilde{W}_{0,\phi}(\theta;z_i) - \E[\nabla W_{0,\phi}]\right\| \right]
            \\
            &\leq \E_{\calS, \sigma_{1:m}} \left[
                \sup_\theta \sum_{l=1}^d \left| \frac{1}{n} \sum_{i=1}^m \nabla_{\theta_l} \tilde{W}_{0,\phi}(\theta;z_i) - \E[\nabla_{\theta_l} W_{0,\phi}]\right|
            \right]
            && \text{($\|\cdot\|_2 \leq \|\cdot\|_1$)}
            \\
            &\leq \sum_{l=1}^d \E_{\calS, \sigma_{1:m}} \left[\sup_\theta \left|\frac{2}{m} \sum_{i=1}^m\sigma_i\nabla_{\theta_l}\tilde{W}_{0,\phi}(\theta;z_i) \right| \right]
            && \text{(symmetrization device)}
            \\
            &= \sum_{l=1}^d 2 \E_{\calS, \sigma_{1:m}} \left[ \sup_\theta \left|\frac{1}{m} \sum_{i=1}^m \sigma_i\tilde{W}_{0,\phi}'(\theta;z_i) \cdot x_l\right| \right]
            && \text{($f_\theta(x) = \theta^\top x$)}
            \\
            &\leq \sum_{l=1}^d 2 \E_{\calS, \sigma_{1:m}} \left[ \sup_\theta \left|\frac{1}{m} \sum_{i=1}^m \sigma_i\tilde{W}_{0,\phi}'(\theta;z_i)\right| \cdot c_\calX \right]
            && \text{($|x_l| \leq \|x\| \leq c_\calX$ $\forall x \in \calX$)}
            \\
            &\leq 4 d c_\calX (\gamma_{+1} + \gamma_{-1}) \mathfrak{R}_m(\calF)
            \\
            &= 4 d c_\calX (\gamma_{+1} + \gamma_{-1}) \mathfrak{R}_{n/2}(\calF),
        \end{align*}
        where the last inequality uses Lemma~\ref{lem:non-smooth-rademacher}.

        \item
        For (A''), we can bound it from the above by the symmetrization device.
        \begin{align*}
            \text{(A'')}
            &= \E_\calS \left[ \sup_\theta \left| \frac{1}{n-m}\sum_{j=m+1}^n \tilde{W}_{1,\phi}(\theta;z_j) - \E[W_{1,\phi}]\right| \right]
            \\
            &\leq \E_{\calS, \sigma_{m:n-m}} \left[ \sup_\theta \left|\frac{2}{n-m} \sum_{j=m+1}^n \sigma_j\tilde{W}_{1,\phi}(\theta;z_i) \right| \right]
            && \text{(symmetrization device)}
            \\
            &\leq 4\rho_1 \mathfrak{R}_{n-m}(\calF)
            && \text{(contraction inequality)}
            \\
            &= 4\rho_1 \mathfrak{R}_{n/2}(\calF),
        \end{align*}
        where the second inequality uses the Ledoux-Talagrand's contraction inequality~\citep[Theorem~4.12]{Ledoux:1991},
        together with the fact that $\tilde{W}_{1,\phi}$ is $\rho_1$-Lipschitz continuous.
    \end{itemize}

    Thus, Eq.~\eqref{eq:supp:expectation-difference} can be bounded as follows.
    \begin{align*}
        &\E_\calS[\calE(\calS)]
        \\
        &\leq c_1 \text{(A')} + \rho_0 c_\calX \text{(A'')} + \E_\calS[\text{(B)}]
        \\
        &\leq 4dc_\calX c_1(\gamma_{+1} + \gamma_{-1}) \mathfrak{R}_{n/2}(\calF) + 4\rho_0 \rho_1 c_\calX \mathfrak{R}_{n/2}(\calF) + \underbrace{
            4dc_\calX c_0(\gamma_{+1} + \gamma_{-1}) \mathfrak{R}_{n/2}(\calF) + 4\rho_1 \rho_0 c_\calX \mathfrak{R}_{n/2}(\calF)
        }_\text{can be proven in the same manner as (A)}
        \\
        &= (4c_\calX c_0 d \gamma + 4c_\calX c_1 d \gamma + 8\rho_0\rho_1c_\calX)\mathfrak{R}_{n/2}(\calF)
        \hspace{40pt} (\gamma \doteq \gamma_{+1} + \gamma_{-1})
        \\
        &\leq (4c_\calX c_0 d \gamma + 4c_\calX c_1 d \gamma + 8\rho_0\rho_1c_\calX)\frac{\sqrt{2}c_\calX c_\Theta}{\sqrt{n}}
        ,
    \end{align*}
    where the last inequality comes from \citet[Theorem~4.3]{Mohri:2012},
    which results in $\mathfrak{R}_n(\calF) = \mathfrak{R}_n(\{x \mapsto \theta^\top x \mid \theta \in \Theta\}) \leq c_\calX c_\Theta / \sqrt{n}$.

    After all, we obtain the desired uniform bound:
    with probability at least $1 - \delta$,
    \begin{align*}
        \sup_{\theta \in \Theta} & \|\hat\calV_\phi(\theta;\calS) - \calV_\phi(\theta)\|
        = \calE(\calS)
        \\
        &\leq \E_\calS[\calE(\calS)] + \frac{\sqrt{8}c_\calX(\rho_1 c_0 + \rho_0 c_1)\sqrt{\log\frac{2}{\delta}}}{\sqrt{n}}
        \\
        &\leq \frac{(4c_\calX c_0 d \gamma + 4c_\calX c_1 d \gamma + 8\rho_0\rho_1c_\calX) + \sqrt{8}c_\calX(\rho_1 c_0 + \rho_0 c_1)\sqrt{\log\frac{2}{\delta}}}{\sqrt{n}}
        .
    \end{align*}
\end{proof}


\newpage

\section{Experimental Results}
\label{sec:supp:experiment}

\subsection{Details of Datasets}

Datasets that we use throughout this section are obtained from the \emph{UCI Machine Learning Repository}~\citep{Lichman:2013} and the \emph{LIBSVM}~\citep{CC01a}.
For those which have independent training data, validation data, and test data, all of them are merged into one dataset.
We randomly split the original data with the ratio $8:2$,
and the former is used for training while the latter is used for evaluation.
Each feature value is scaled between zero and one.

\begin{table*}[h]
    \centering
    \caption{
        Details of datasets.
    }
    \label{tab:supp:datasets}
    \scalebox{0.8}{
    \begin{tabular}{cccc} \hline
        Dataset & dimension & sample size & class-prior
        \\ \hline
        adult & 123 & 48842 & 0.239 \\
        australian & 14 & 690 & 0.445 \\
        breast-cancer & 10 & 683 & 0.350 \\
        cod-rna & 8 & 331152 & 0.333 \\
        diabetes & 8 & 768 & 0.651 \\
        german.numer & 24 & 1000 & 0.300 \\
        heart & 13 & 270 & 0.444 \\
        ionosphere & 34 & 351 & 0.641 \\
        mushrooms & 112 & 8124 & 0.482 \\
        phishing & 68 & 11055 & 0.557 \\
        phoneme & 5 & 5404 & 0.293 \\
        skin\_nonskin & 3 & 245057 & 0.208 \\
        sonar & 60 & 208 & 0.394 \\
        spambase & 57 & 4601 & 0.394 \\
        splice & 60 & 1000 & 0.517 \\
        w8a & 300 & 64700 & 0.030 \\
        \hline
    \end{tabular}
    }
\end{table*}

\subsection{Details of Baseline Methods}
\label{sec:supp:baseline}

We describe the details of baseline methods.
Baselines~2 and~3 are also mentioned in Sec.~\ref{sec:related}.

\textbf{Baseline 1 (ERM):}
The first baseline is the vanilla empirical risk minimization, which does not optimize the metric of our interest but accuracy.
The hinge loss and $\ell_2$-regularization are employed with the regularization parameter $10^{-2}$.

\textbf{Baseline 2 (W-ERM):}
Weighted empirical risk minimization, or cost-sensitive empirical risk minimization,
is often used to optimize non-linear performance metrics~\citep{Koyejo:2014,Narasimhan:2014,Parambath:2014}.
Here, we applied a simple approach:
Find a cost parameter from a given cost parameter space,
which gives the maximum validation performance of a classifier trained by the cost-sensitive empirical risk minimization~\citep{Scott:2012}.
The training dataset is split to 4 to 1 at random, and the latter is saved for validation of a regularization parameter.
The former set is further split to 9 to 1 at random,
and the former 90\% is used for training the base classifier,
while the latter 10\% is used for the validation.
As the base cost-sensitive learner, we use the hinge loss minimizer with $\ell_2$-regularization (a regularization parameter is chosen from $\{10^{-1}, 10^{-3}, 10^{-5}\}$ by cross validation).
The cost parameter is chosen from the range $[10^{-3}, 1 - 10^{-3}]$ evenly split to 20 ranges,
that is, $\left\{10^{-3} + \frac{1 - 2 \cdot 10^{-3}}{20}i \mid i = 1, \dots, 20\right\}$.

\textbf{Baseline 3 (Plug-in):}
Plug-in estimator is one of the other common methods to optimize the non-linear performance metrics~\citep{Koyejo:2014,Yan:2018},
which is the two-step method:
To estimate the class posterior probability $\hat\eta(x) = p(y=+1|x)$ first,
and then to decide the optimal threshold $\hat\delta$.
The classifier is constructed as $x \mapsto \sign(\hat\eta(x) - \hat\delta)$.
The training dataset is split to 4 to 1 at random, and the latter is saved for validation of a regularization parameter.
The former set is further split to 9 to 1 at random,
and they are independently used for the first and second step.
For estimating $p(y=+1|x)$ (the first step), the logistic regression is used~\citep{Reid:2009},
with $\ell_2$-regularization (a regularization parameter is chosen from $\{10^{-1}, 10^{-3}, 10^{-5}\}$ by cross validation).
For deciding $\hat\delta$, we pick a threshold with the highest validation metric
from $\left\{10^{-3} + \frac{1 - 2 \cdot 10^{-3}}{20}i \mid i = 1, \dots, 20\right\}$.

\subsection{Convergence Comparison}
\label{sec:supp:convergence-comparison}

Figures~\ref{fig:supp:f1-itertest} and~\ref{fig:supp:jac-itertest} are the full version of the convergence comparison of U-GD and U-BFGS.
Figure~\ref{fig:supp:f1-itertest} shows the result of F${}_1$-measure,
and Figure~\ref{fig:supp:jac-itertest} shows the result of Jaccard index.
The vertical axes show test metric values, where the higher the better.
Note that both F${}_1$-measure and Jaccard index ranges over zero to one.
The horizontal axes show the number of iterations.
For each dataset, metric, and method, we ran 300 iterations to see their convergence behaviors.

Overall, U-BFGS shows faster convergence than U-GD in terms of the number of iterations.
In almost all cases, U-BFGS converges within 30 iterations, except german.numer and mushrooms in Jaccard case.
Moreover, it usually achieves higher performance than U-GD.
U-GD convergences require at least around 100 iterations (mushrooms and phishing in F${}_1$ case),
and sometimes it does not converge even within 300 iterations such as heart and ionosphere in F${}_1$ and Jaccard cases.

\begin{figure}[h]
    \centering
    \begin{minipage}{0.32\columnwidth}
        \includegraphics[width=\columnwidth]{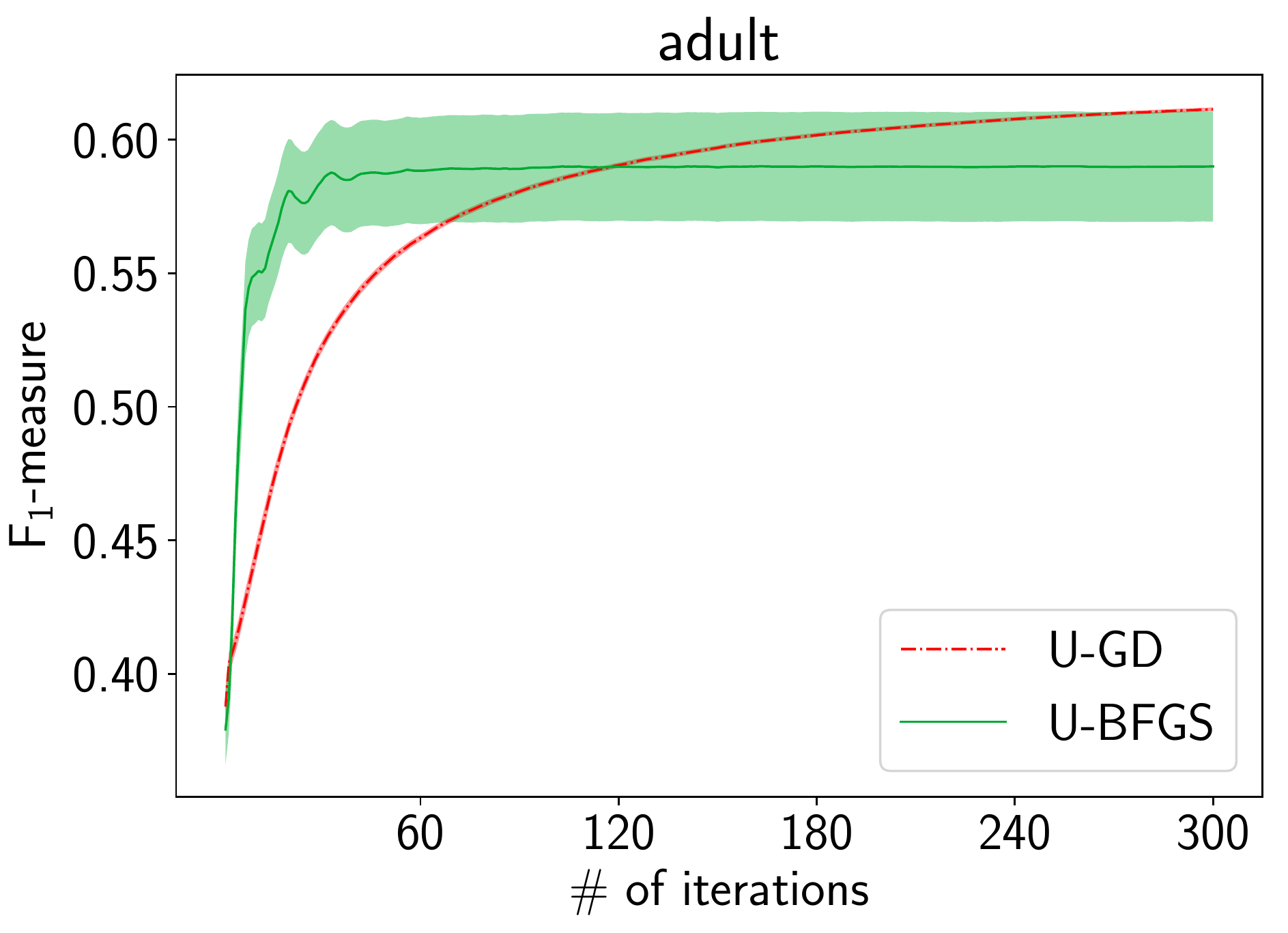}
    \end{minipage}
    \begin{minipage}{0.32\columnwidth}
        \includegraphics[width=\columnwidth]{figures/itertest/itertest_f1_australian.pdf}
    \end{minipage}
    \begin{minipage}{0.32\columnwidth}
        \includegraphics[width=\columnwidth]{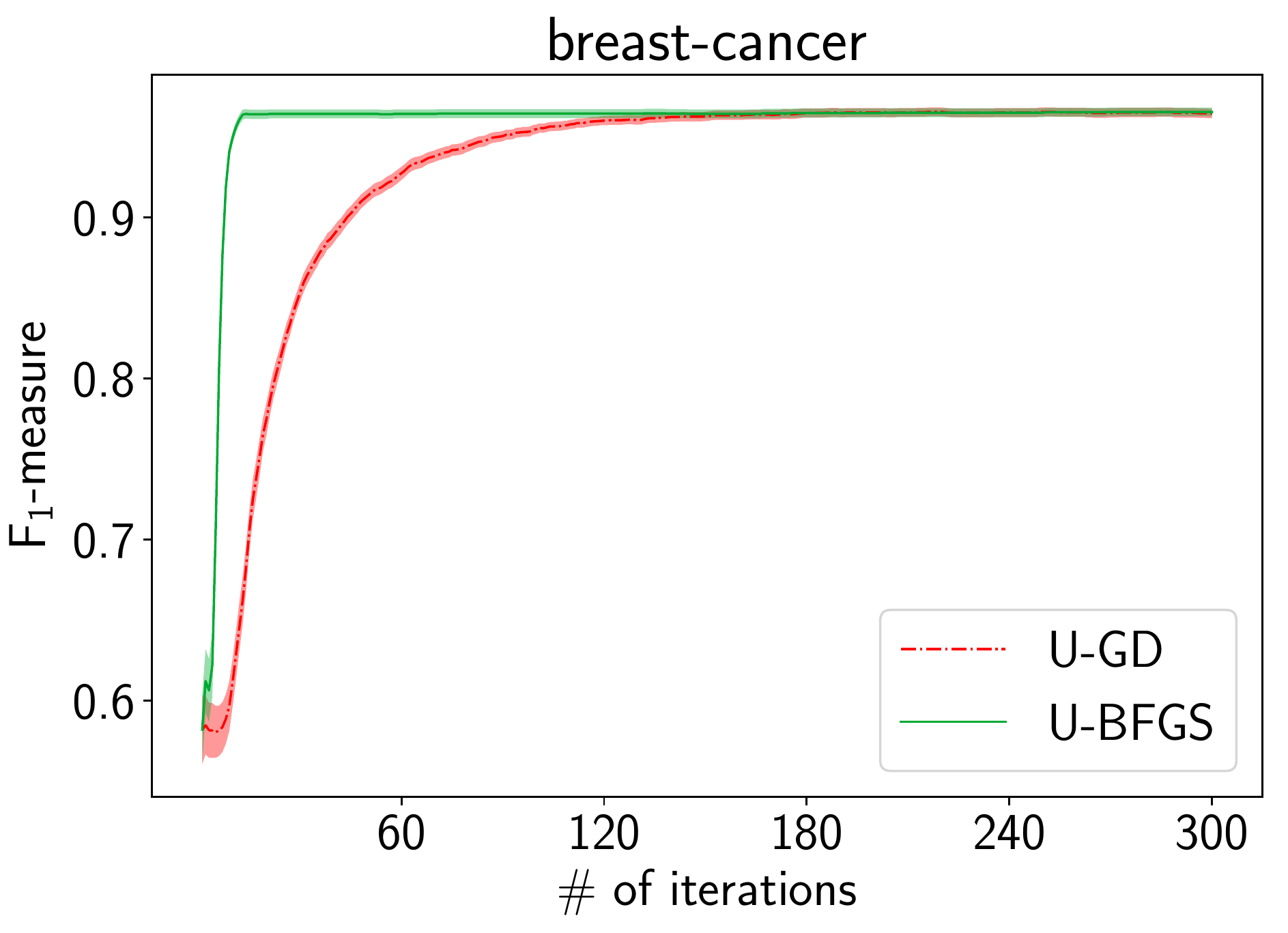}
    \end{minipage}
    \begin{minipage}{0.32\columnwidth}
        \includegraphics[width=\columnwidth]{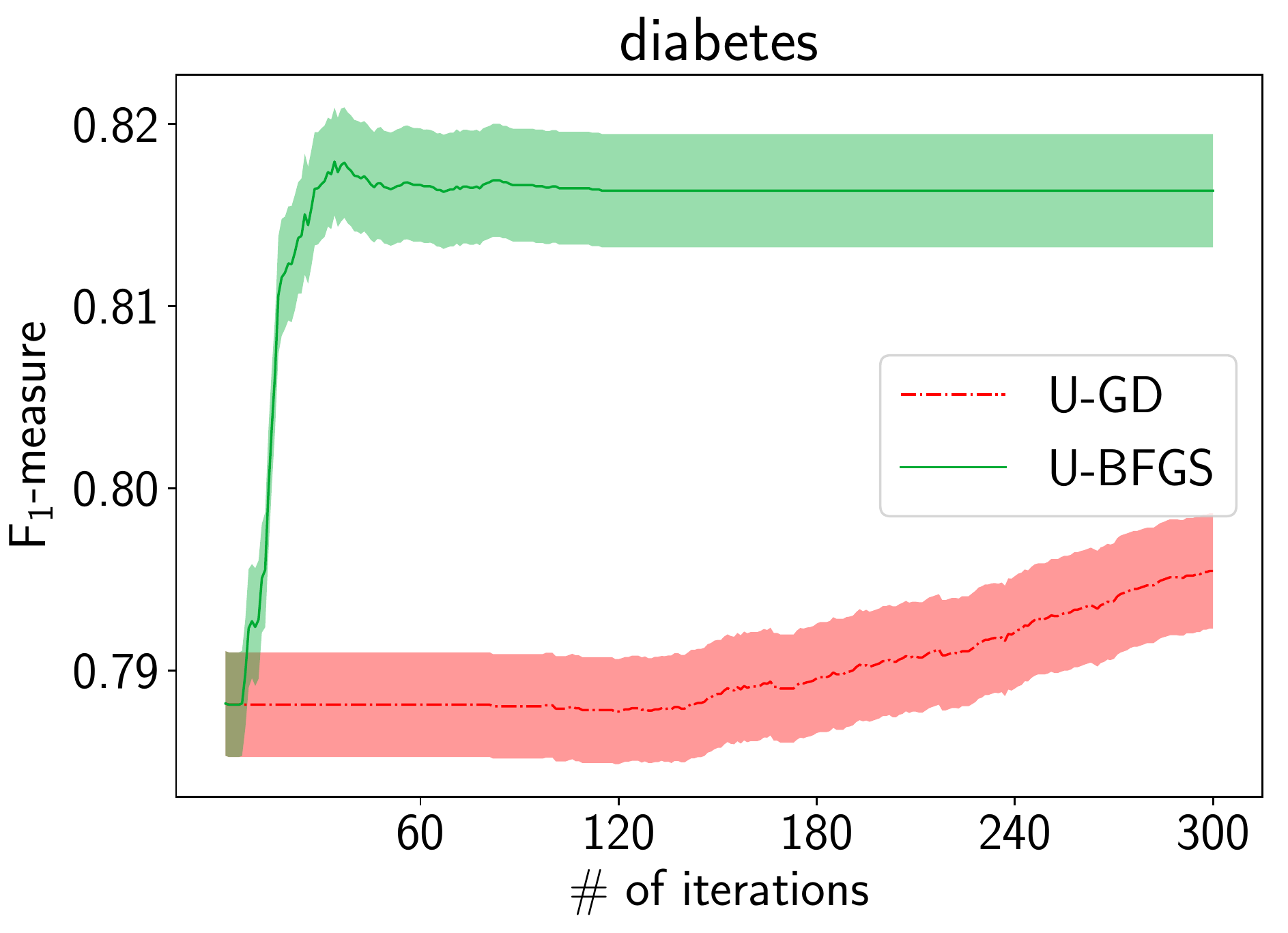}
    \end{minipage}
    \begin{minipage}{0.32\columnwidth}
        \includegraphics[width=\columnwidth]{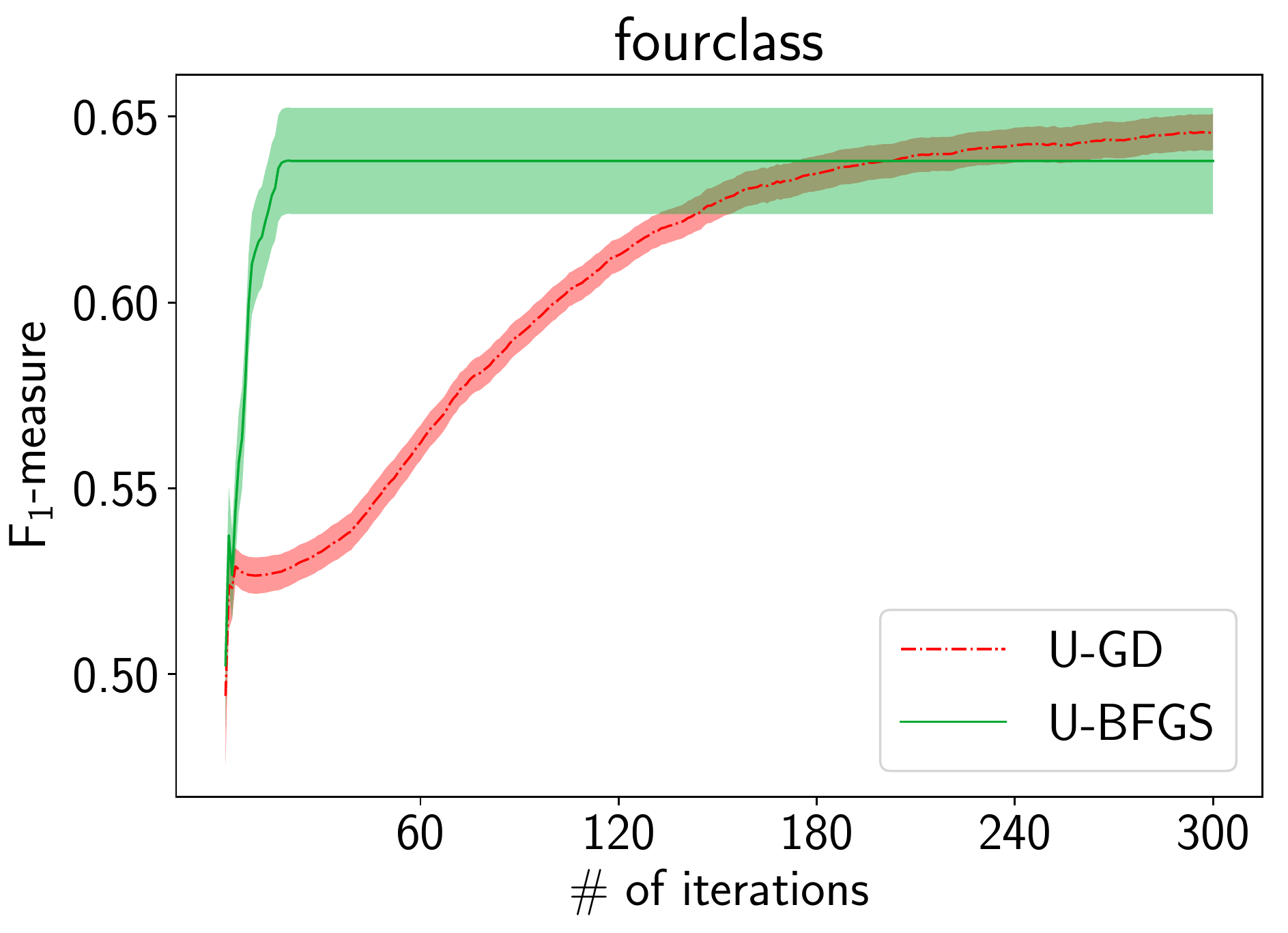}
    \end{minipage}
    \begin{minipage}{0.32\columnwidth}
        \includegraphics[width=\columnwidth]{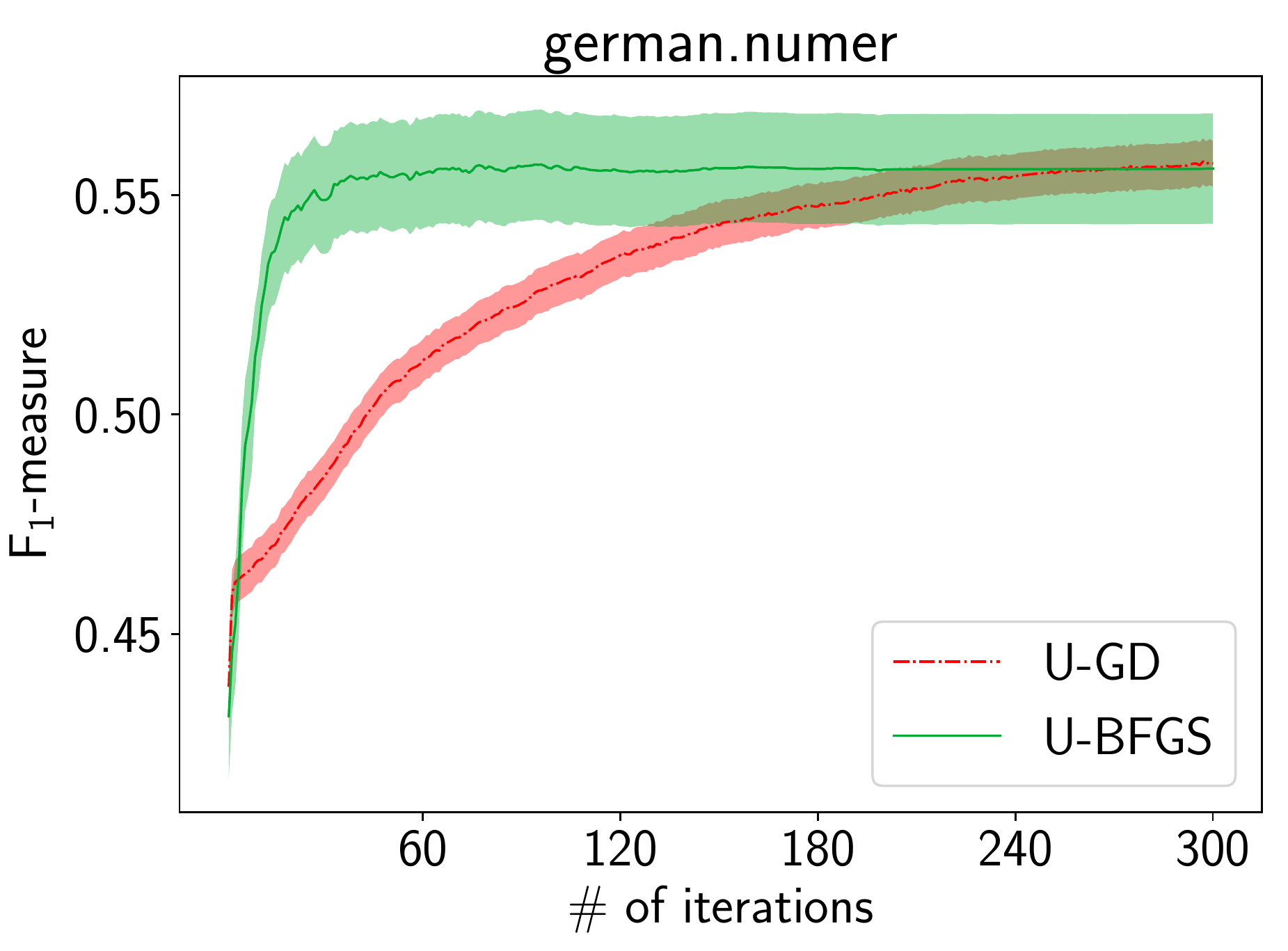}
    \end{minipage}
    \begin{minipage}{0.32\columnwidth}
        \includegraphics[width=\columnwidth]{figures/itertest/itertest_f1_heart.pdf}
    \end{minipage}
    \begin{minipage}{0.32\columnwidth}
        \includegraphics[width=\columnwidth]{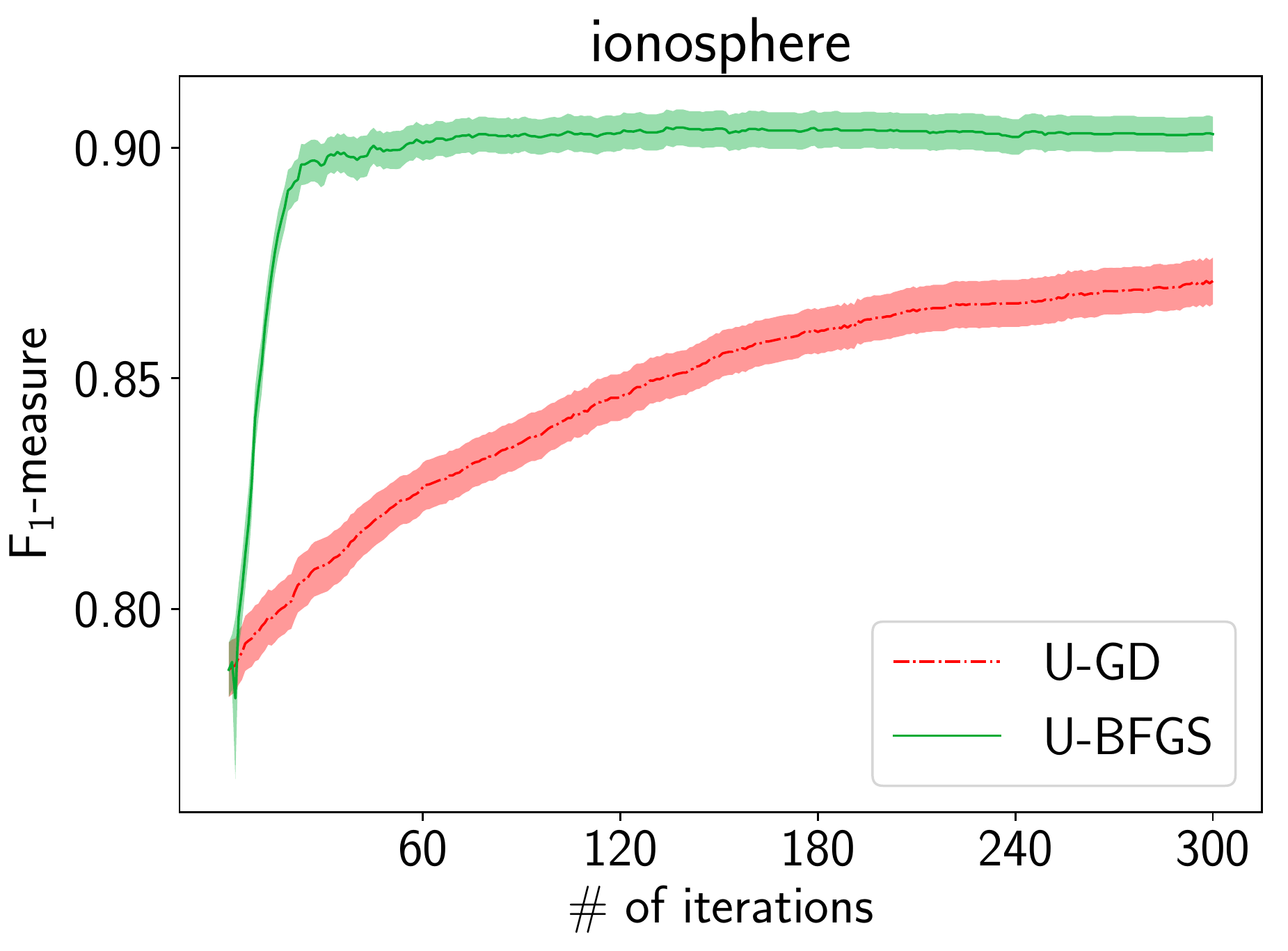}
    \end{minipage}
    \begin{minipage}{0.32\columnwidth}
        \includegraphics[width=\columnwidth]{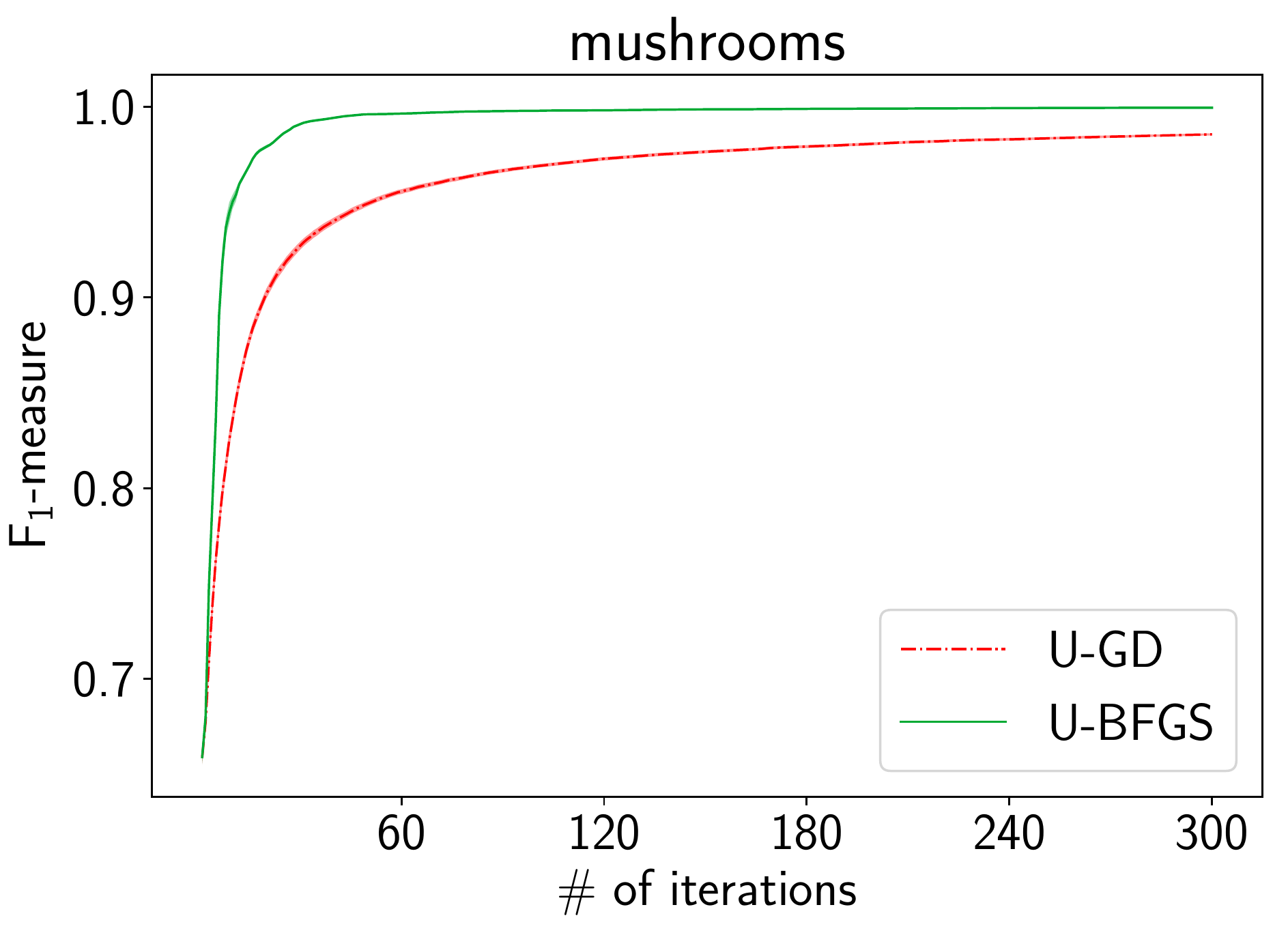}
    \end{minipage}
    \begin{minipage}{0.32\columnwidth}
        \includegraphics[width=\columnwidth]{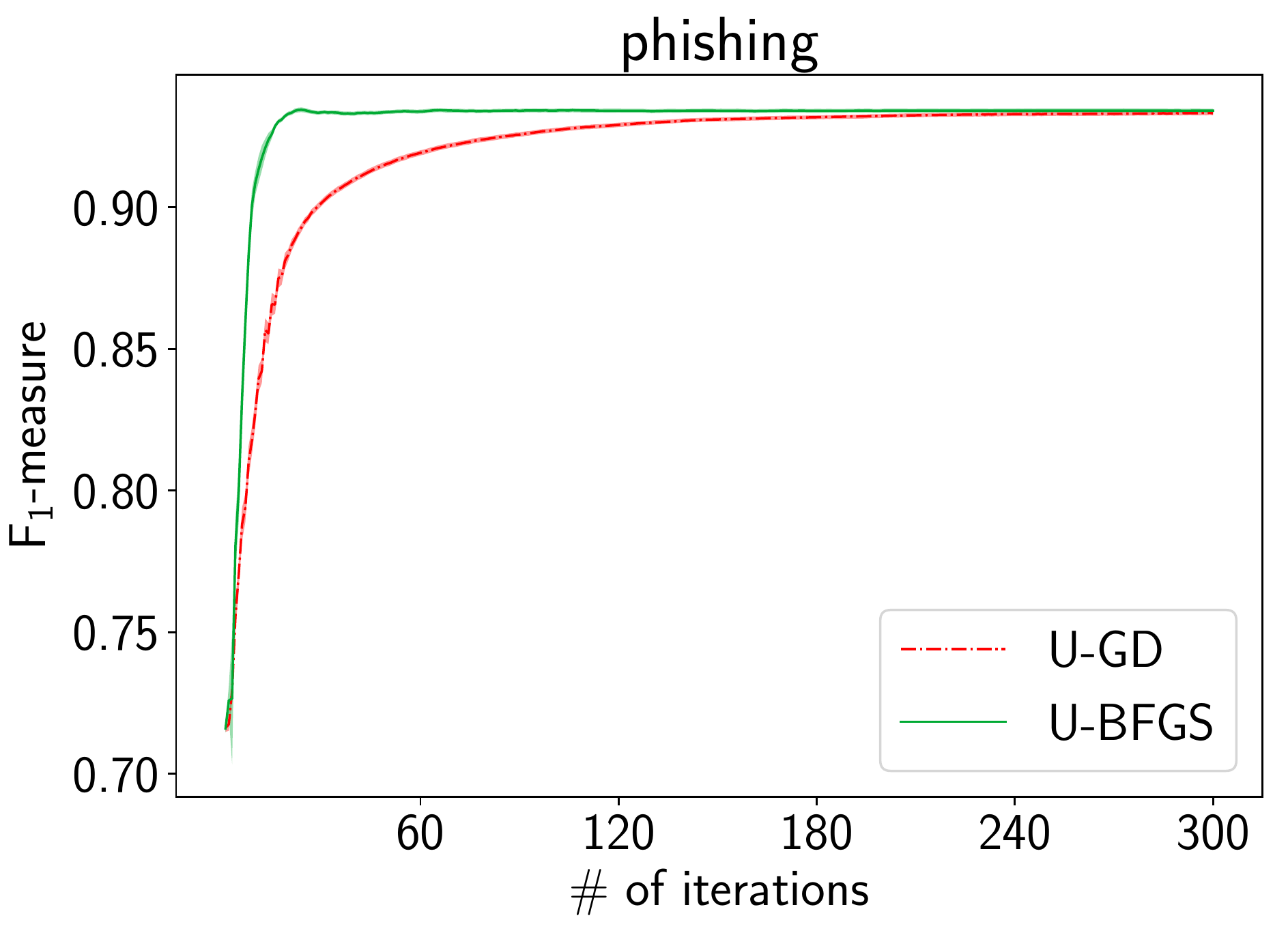}
    \end{minipage}
    \begin{minipage}{0.32\columnwidth}
        \includegraphics[width=\columnwidth]{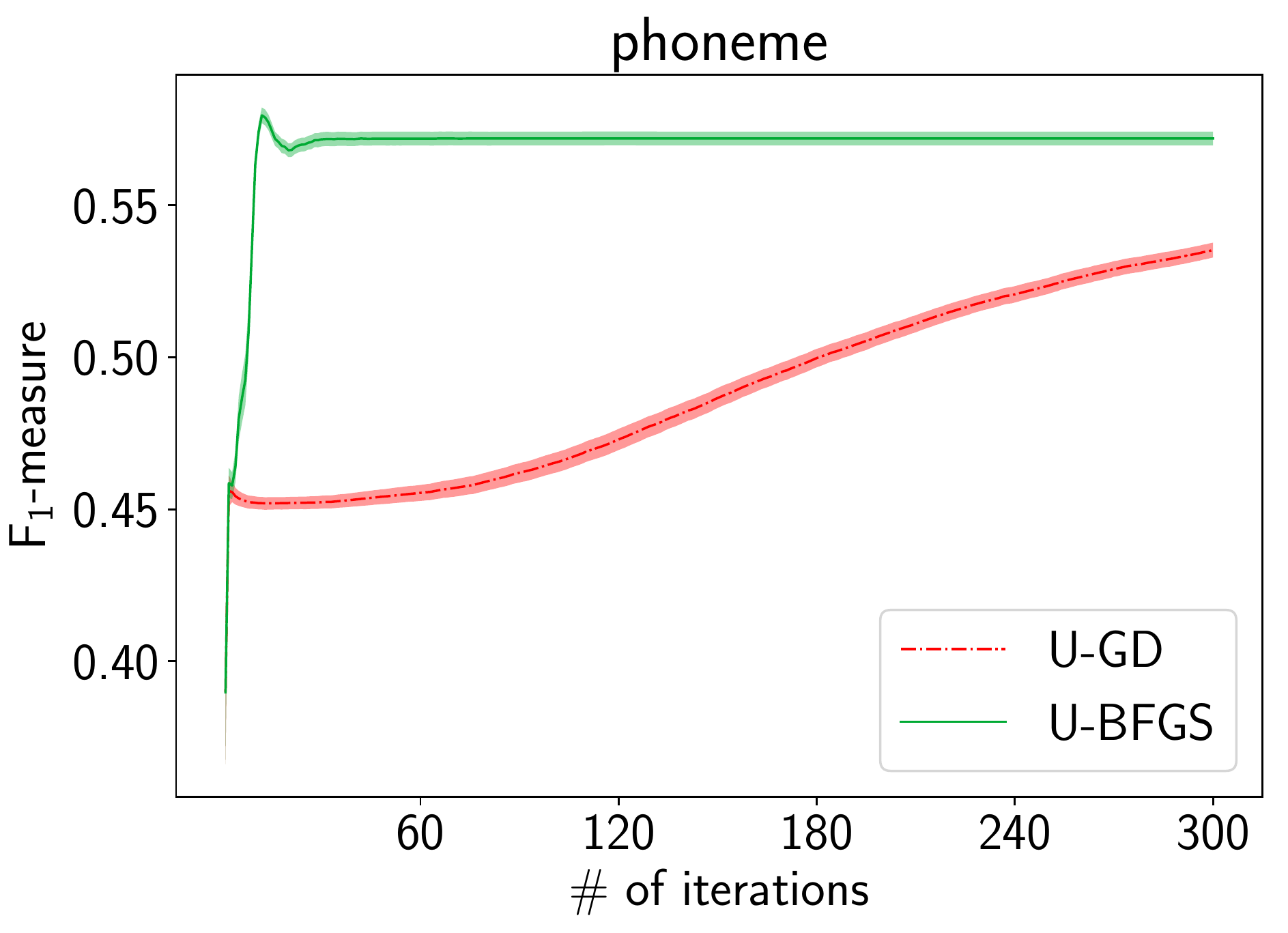}
    \end{minipage}
    \begin{minipage}{0.32\columnwidth}
        \includegraphics[width=\columnwidth]{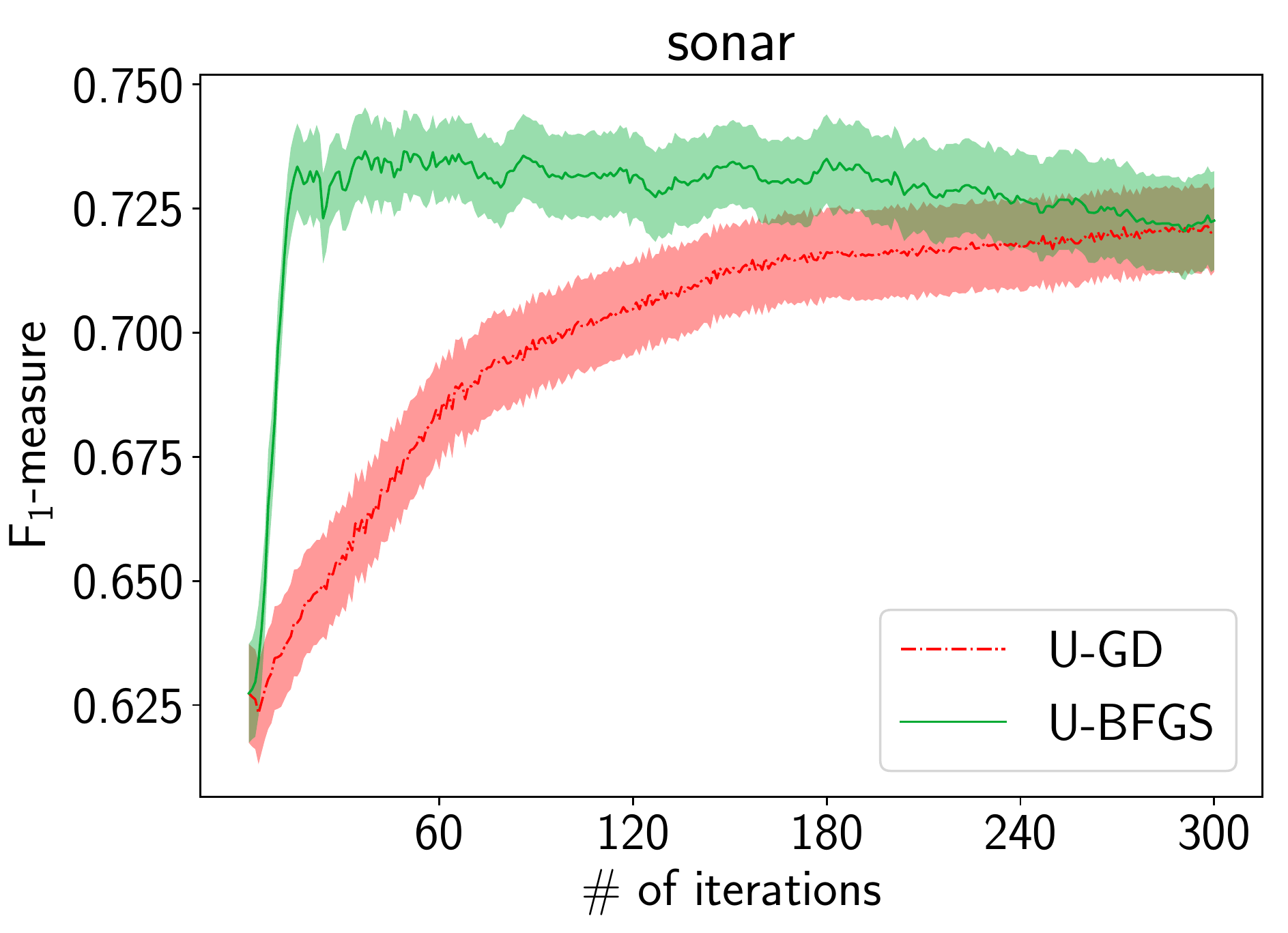}
    \end{minipage}
    \begin{minipage}{0.32\columnwidth}
        \includegraphics[width=\columnwidth]{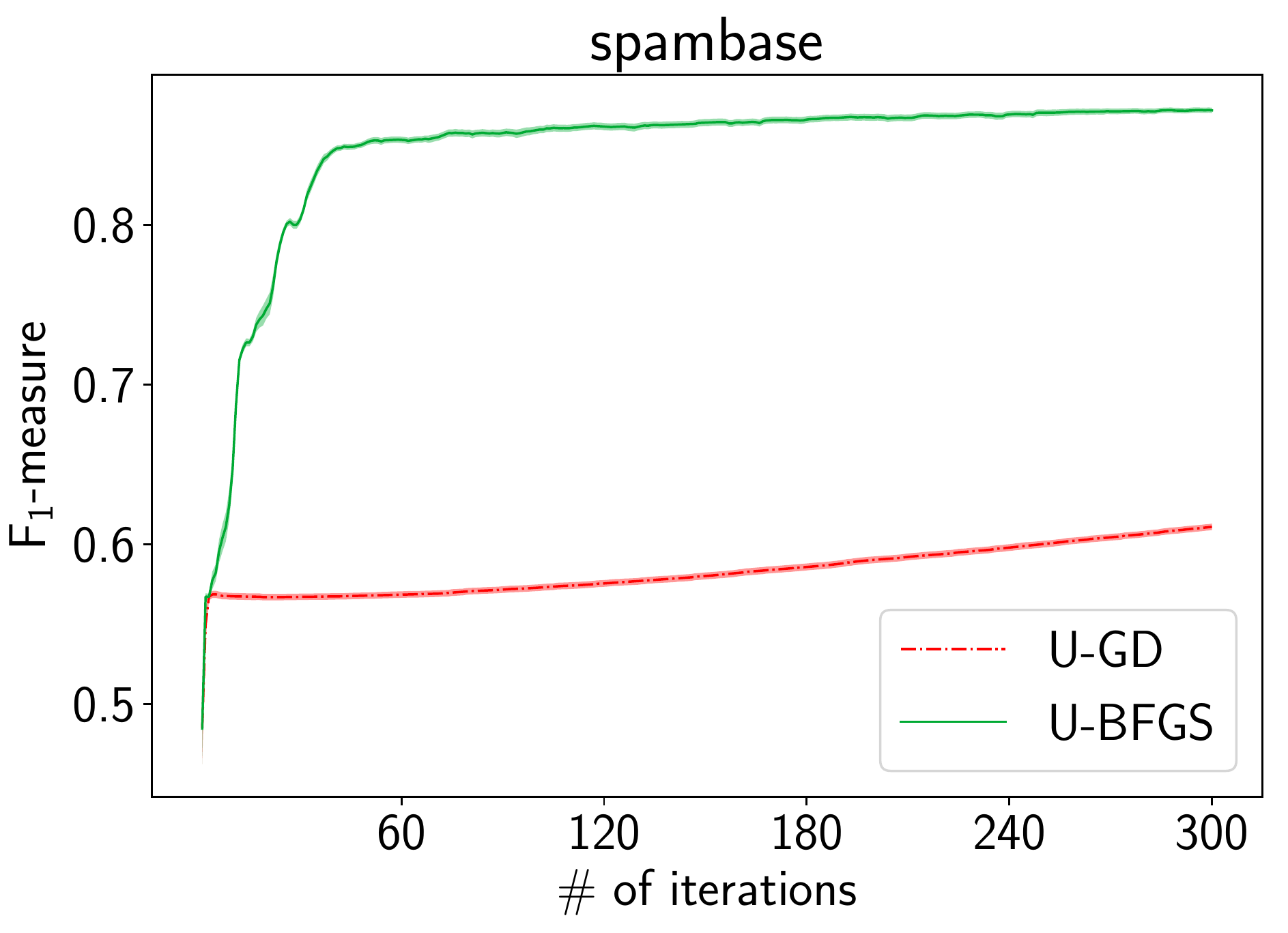}
    \end{minipage}
    \begin{minipage}{0.32\columnwidth}
        \includegraphics[width=\columnwidth]{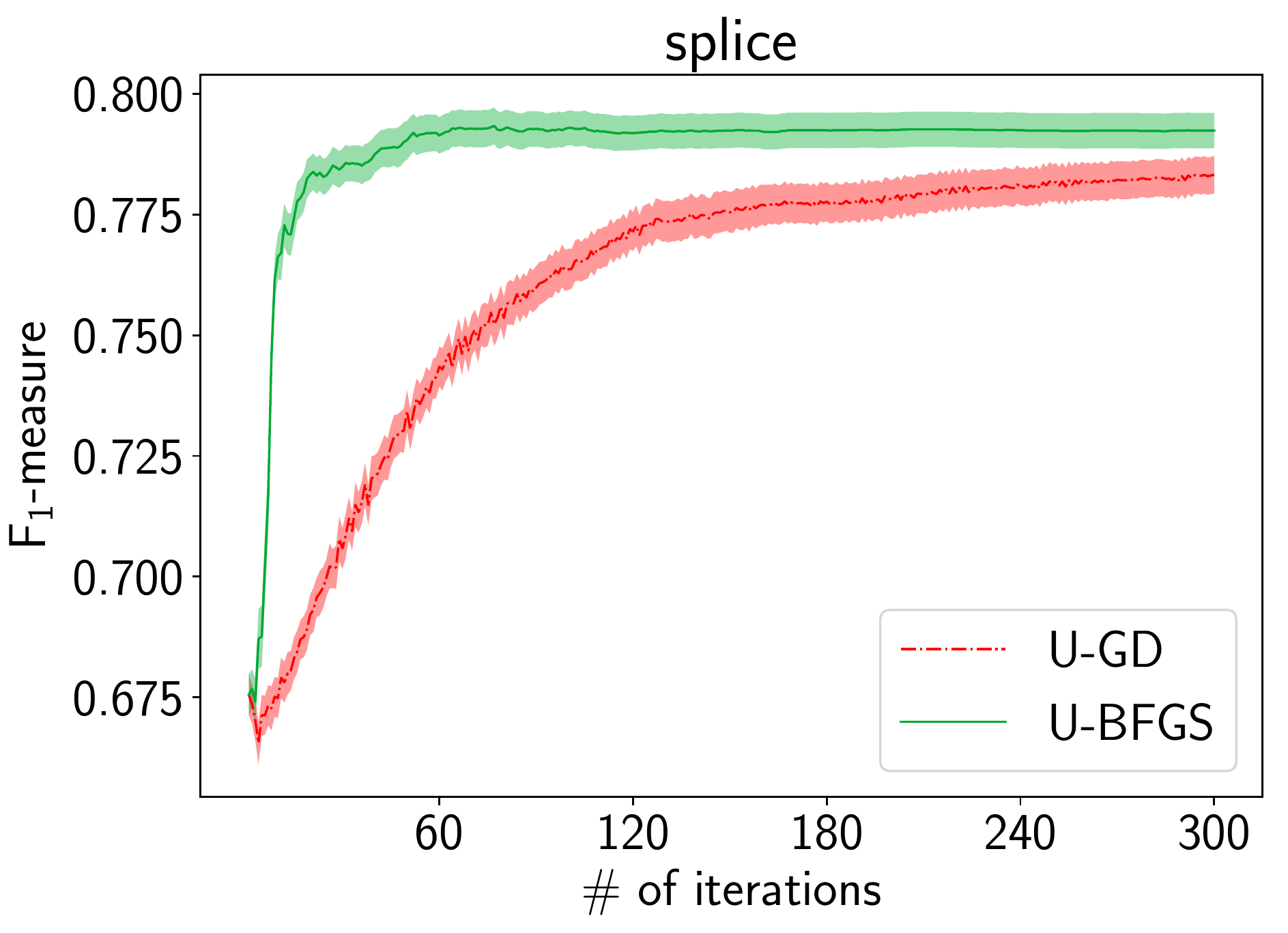}
    \end{minipage}
    \caption{
        Convergence comparison of the F${}_1$-measure (vertical axes).
        Standard errors of 50 trials are shown as shaded areas.
    }
    \label{fig:supp:f1-itertest}
\end{figure}

\begin{figure}[h]
    \centering
    \begin{minipage}{0.32\columnwidth}
        \includegraphics[width=\columnwidth]{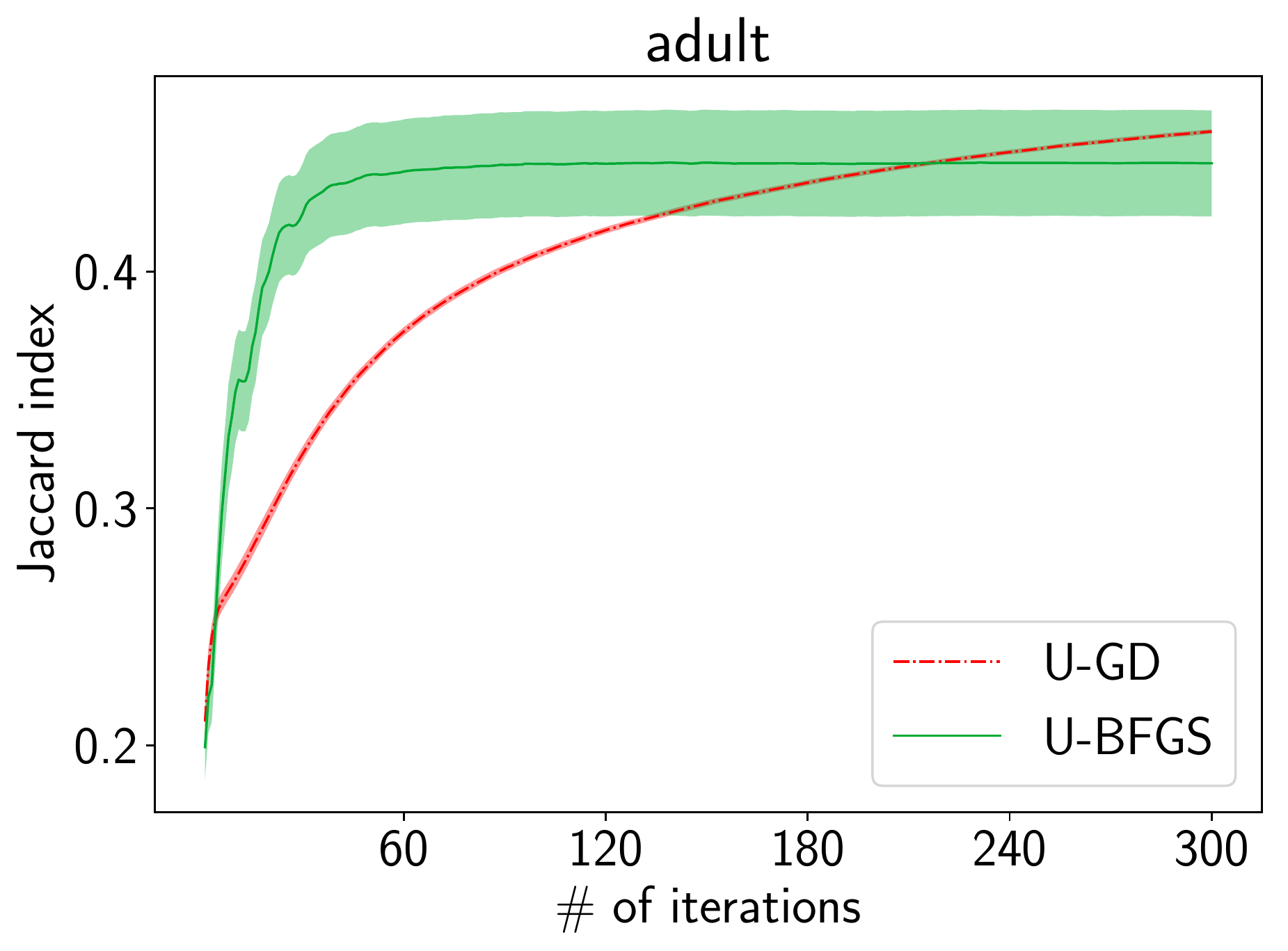}
    \end{minipage}
    \begin{minipage}{0.32\columnwidth}
        \includegraphics[width=\columnwidth]{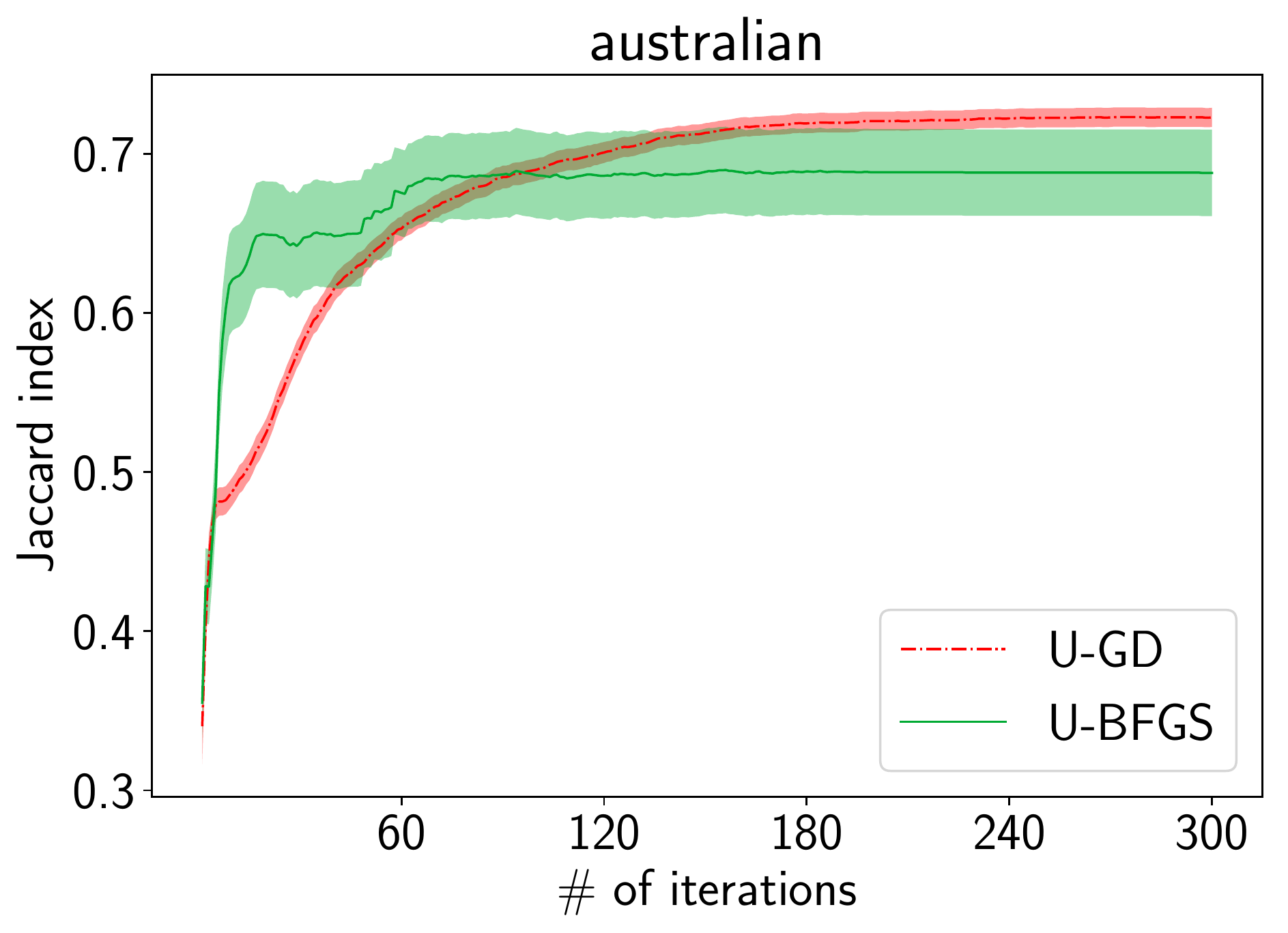}
    \end{minipage}
    \begin{minipage}{0.32\columnwidth}
        \includegraphics[width=\columnwidth]{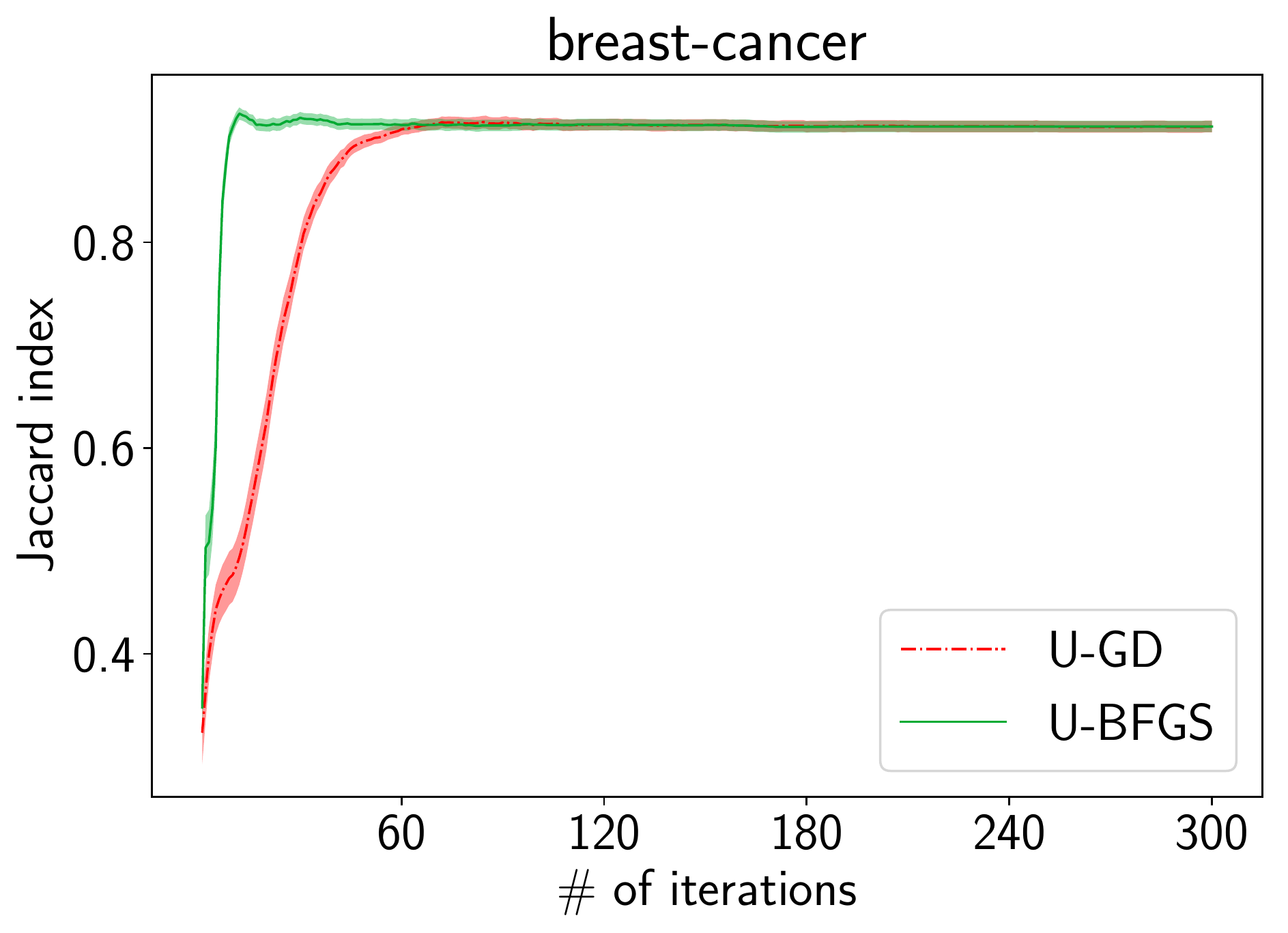}
    \end{minipage}
    \begin{minipage}{0.32\columnwidth}
        \includegraphics[width=\columnwidth]{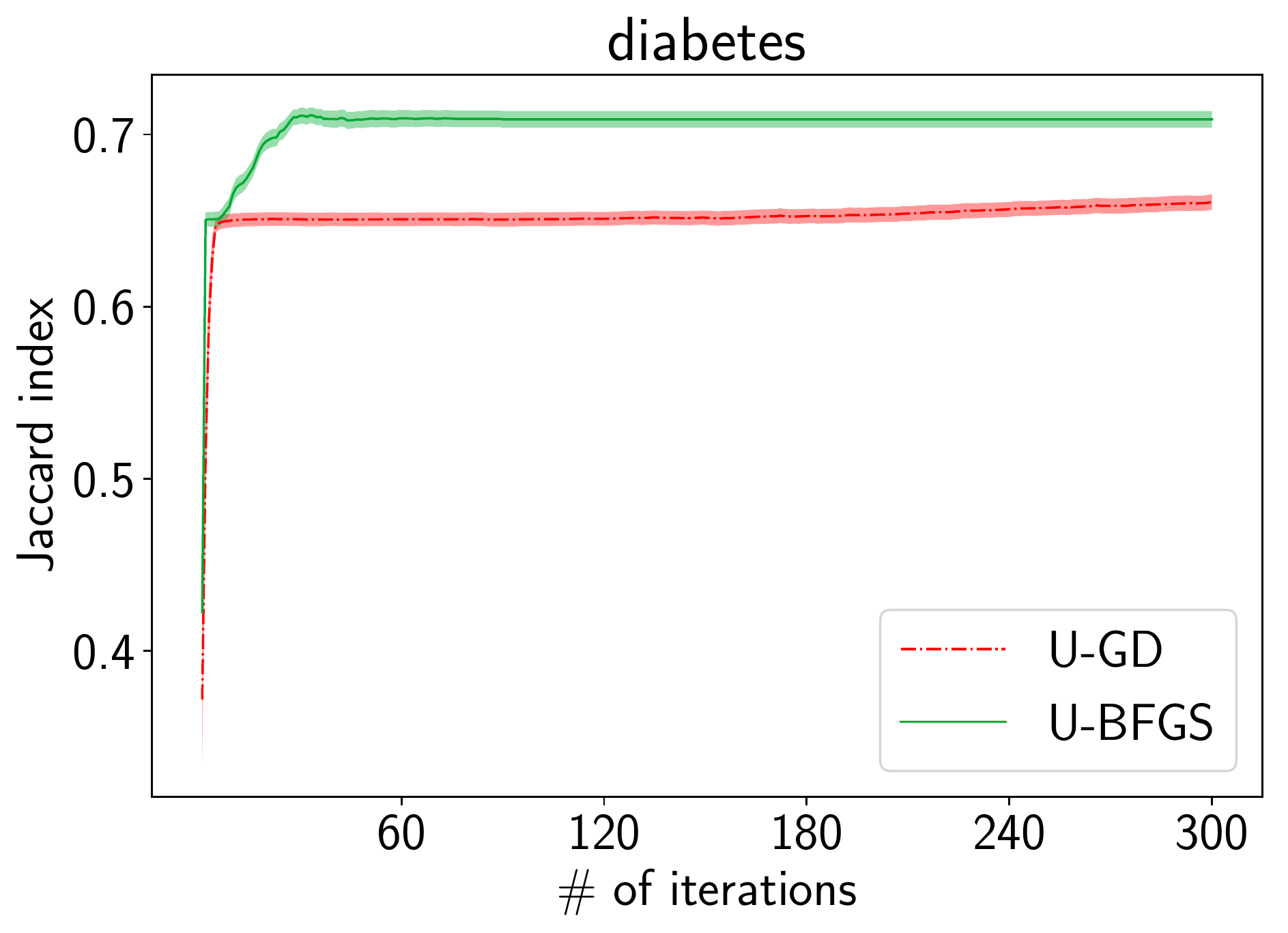}
    \end{minipage}
    \begin{minipage}{0.32\columnwidth}
        \includegraphics[width=\columnwidth]{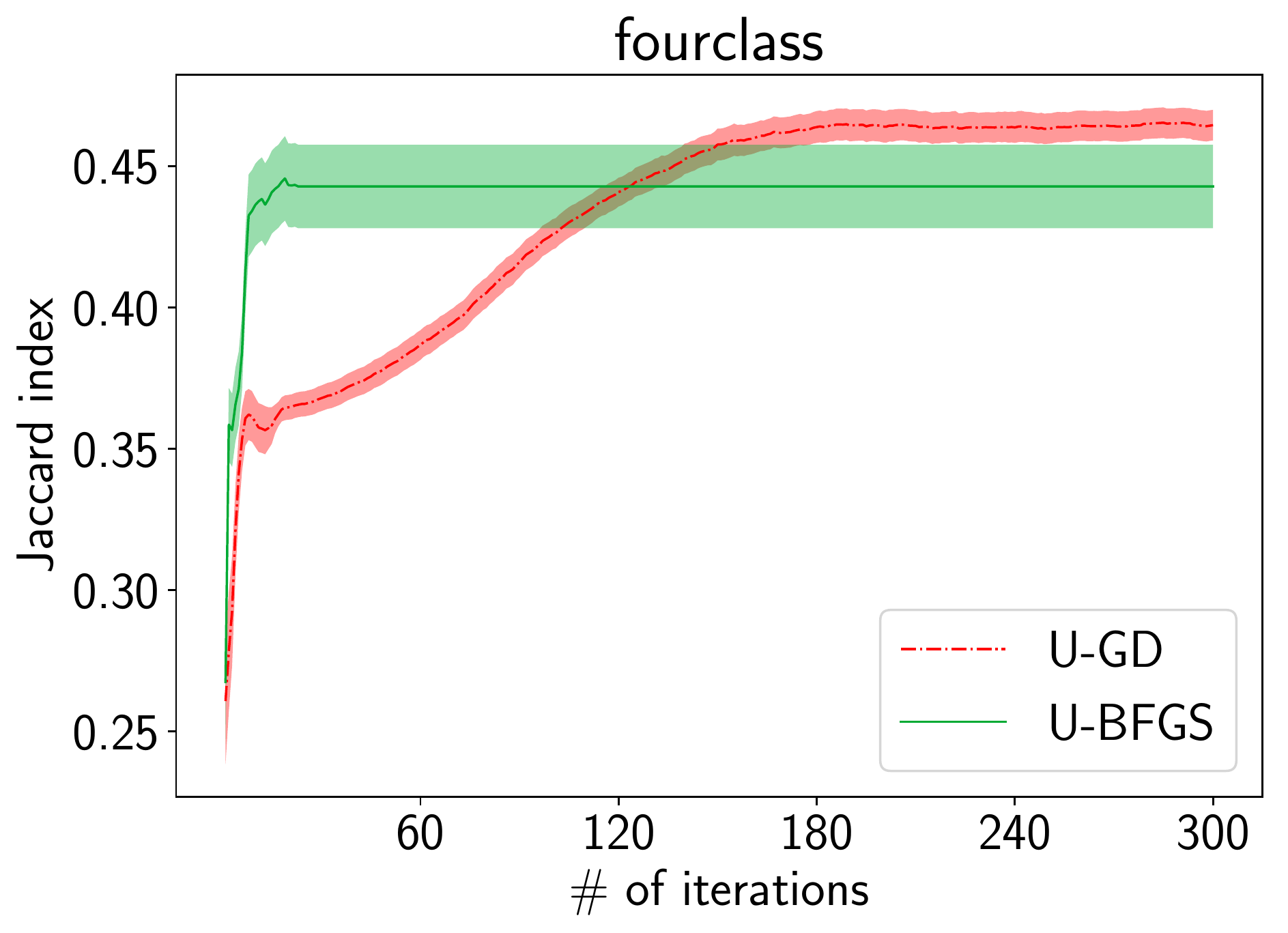}
    \end{minipage}
    \begin{minipage}{0.32\columnwidth}
        \includegraphics[width=\columnwidth]{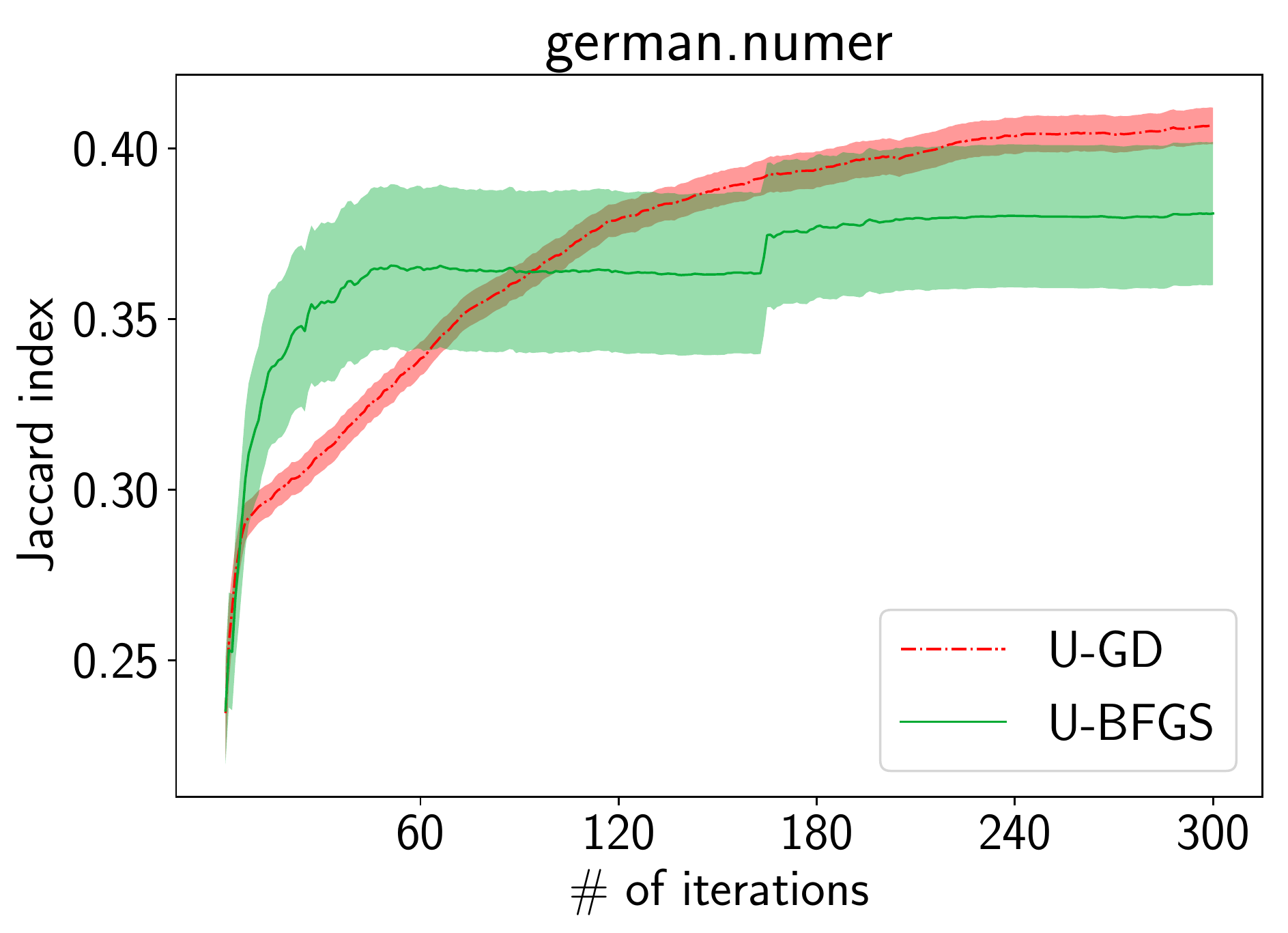}
    \end{minipage}
    \begin{minipage}{0.32\columnwidth}
        \includegraphics[width=\columnwidth]{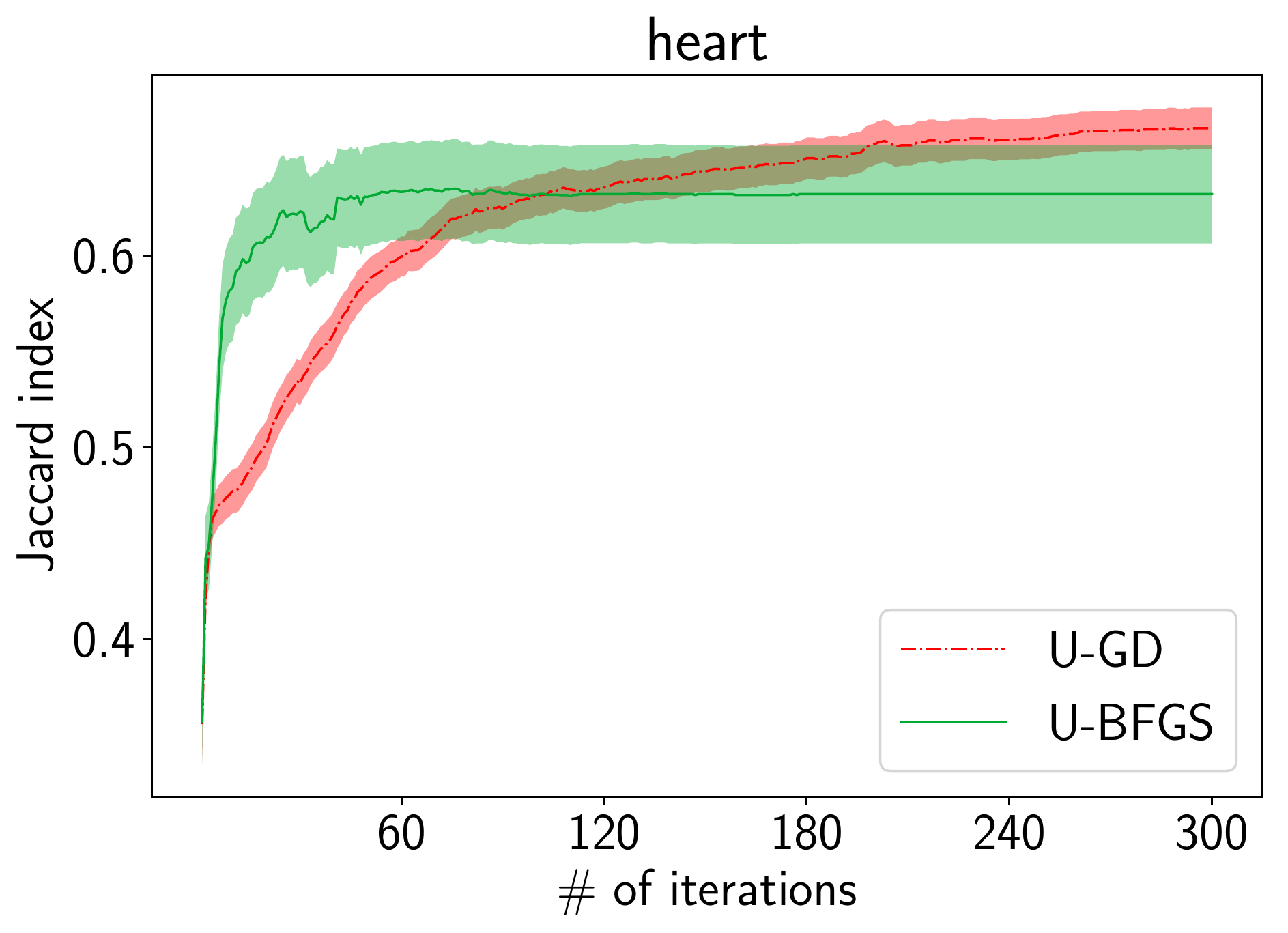}
    \end{minipage}
    \begin{minipage}{0.32\columnwidth}
        \includegraphics[width=\columnwidth]{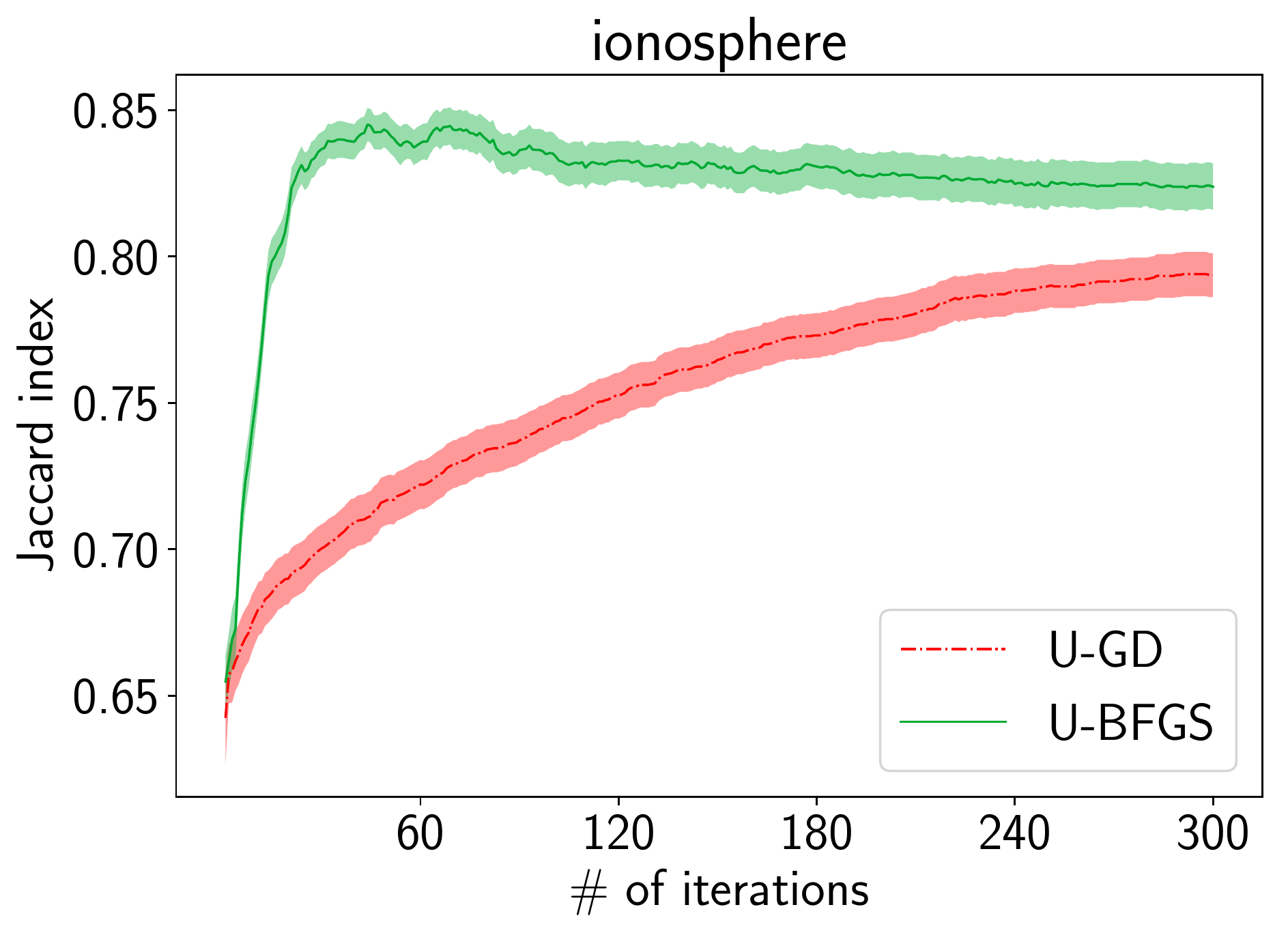}
    \end{minipage}
    \begin{minipage}{0.32\columnwidth}
        \includegraphics[width=\columnwidth]{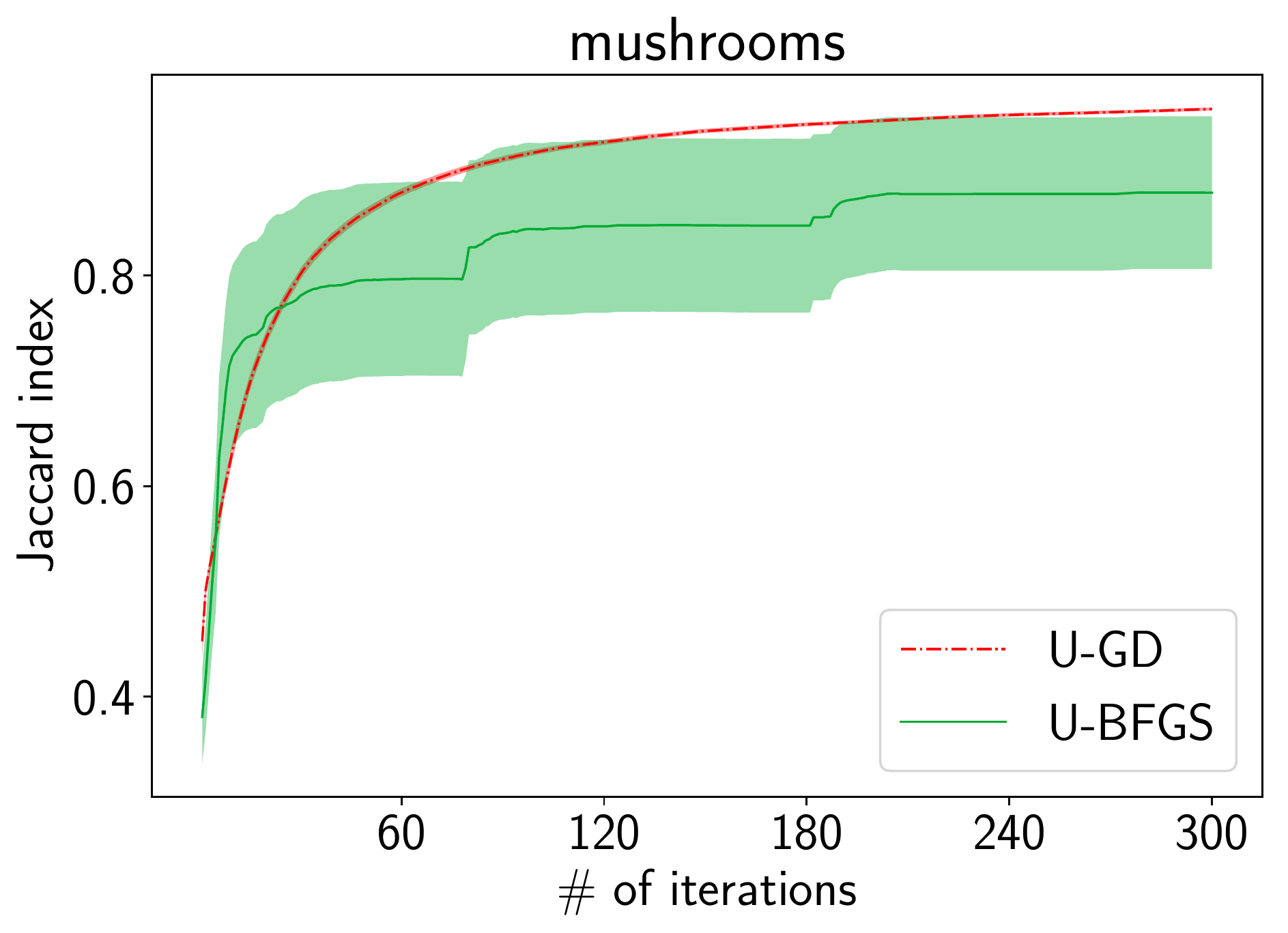}
    \end{minipage}
    \begin{minipage}{0.32\columnwidth}
        \includegraphics[width=\columnwidth]{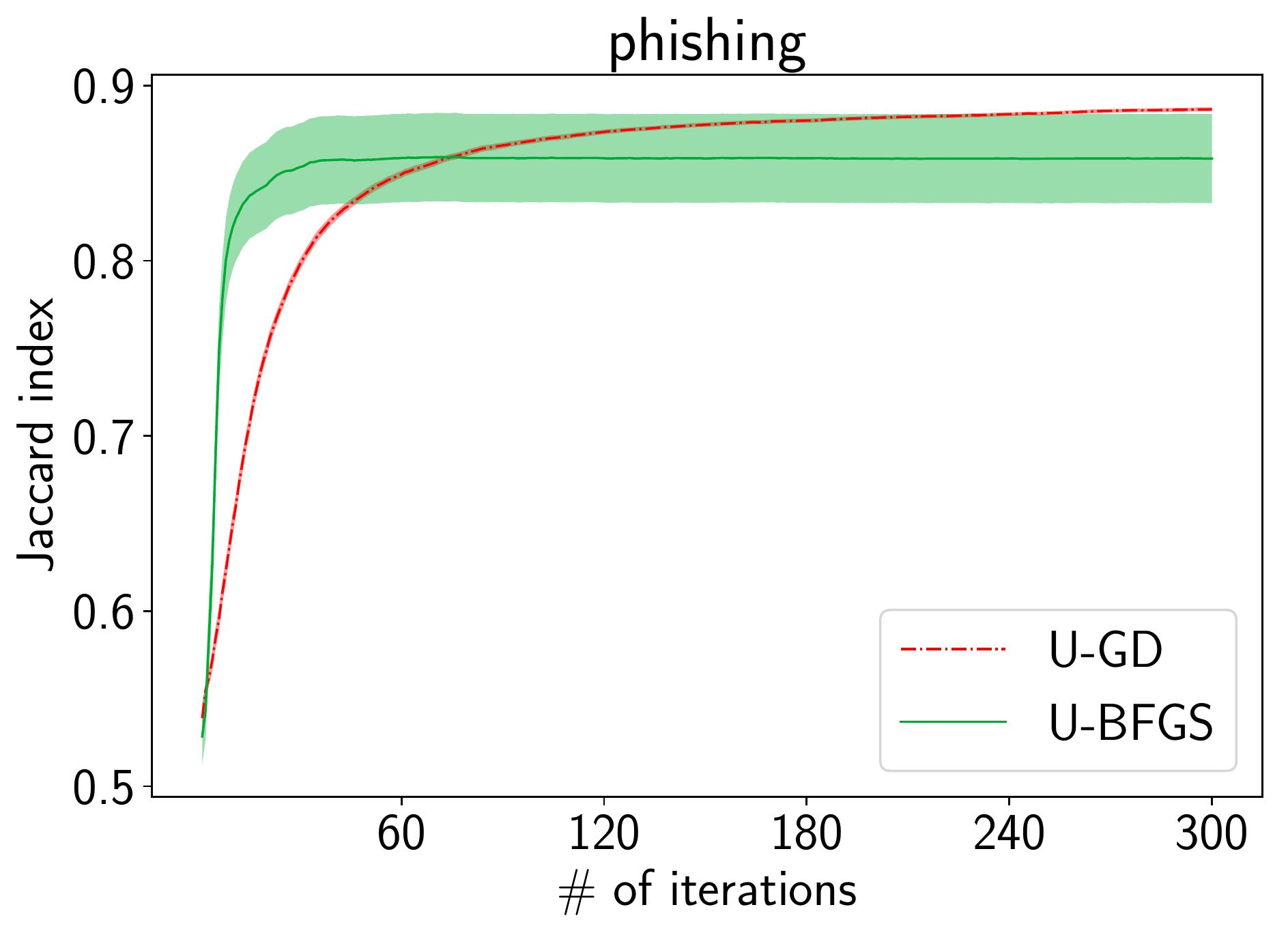}
    \end{minipage}
    \begin{minipage}{0.32\columnwidth}
        \includegraphics[width=\columnwidth]{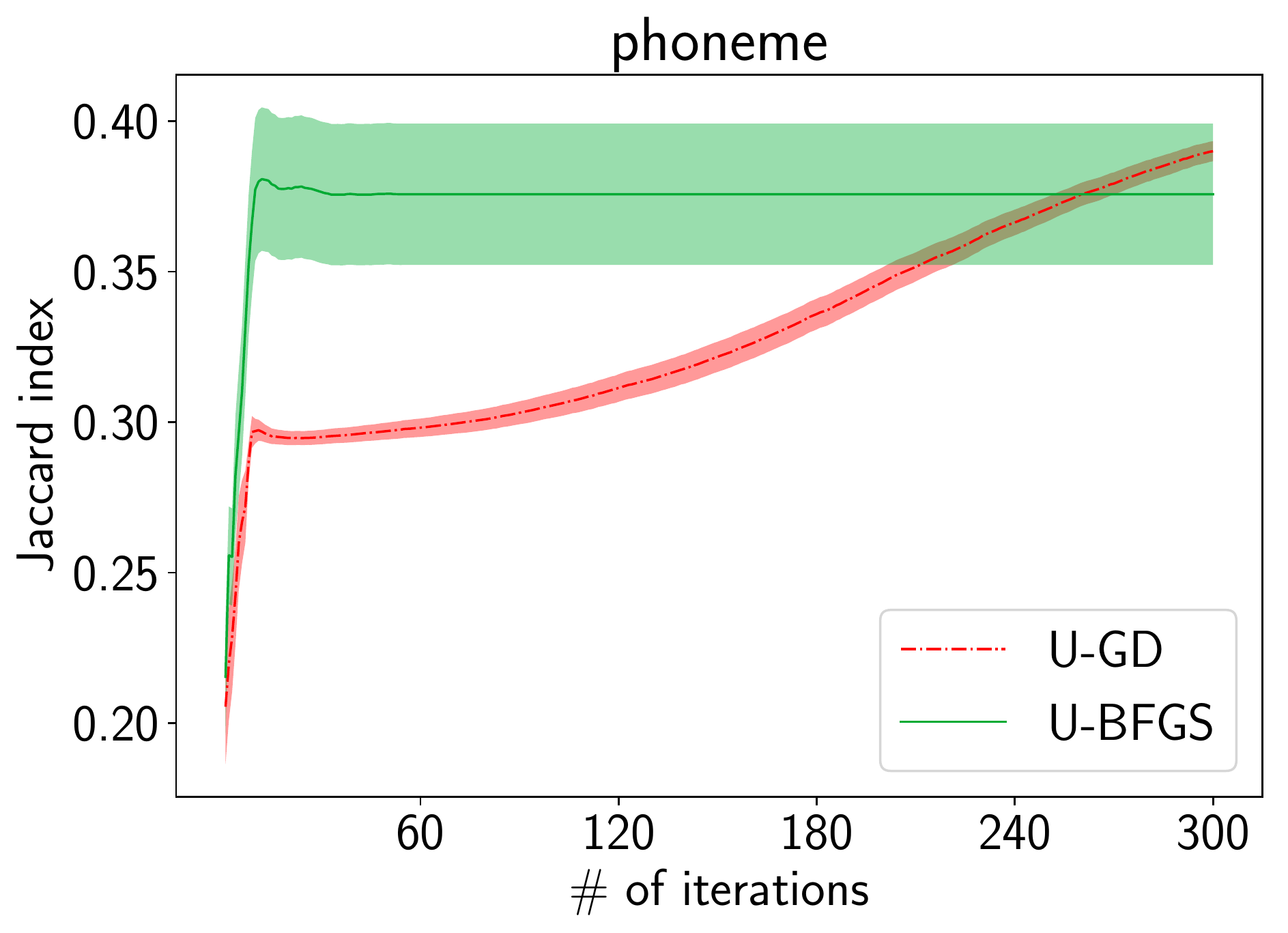}
    \end{minipage}
    \begin{minipage}{0.32\columnwidth}
        \includegraphics[width=\columnwidth]{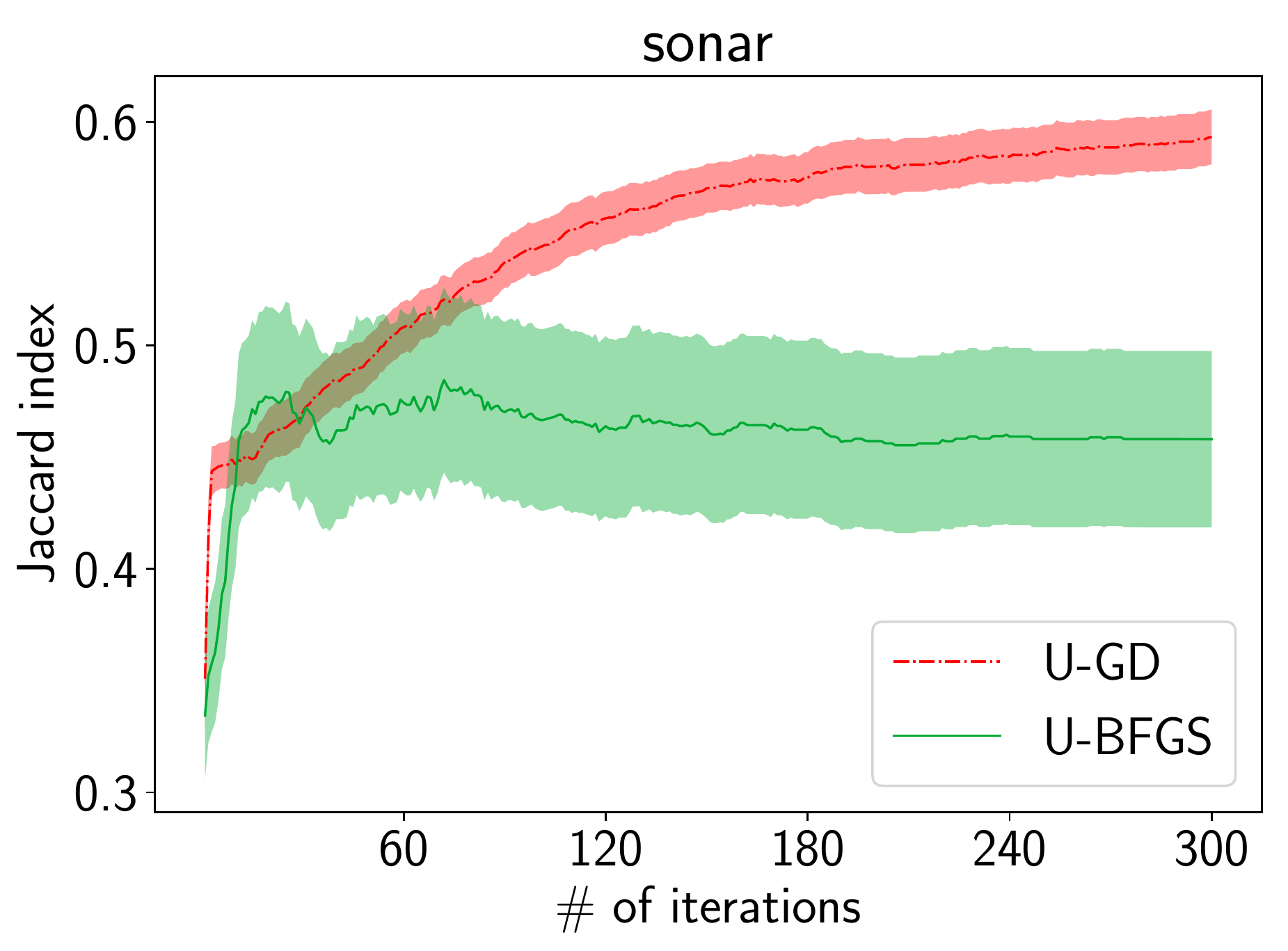}
    \end{minipage}
    \begin{minipage}{0.32\columnwidth}
        \includegraphics[width=\columnwidth]{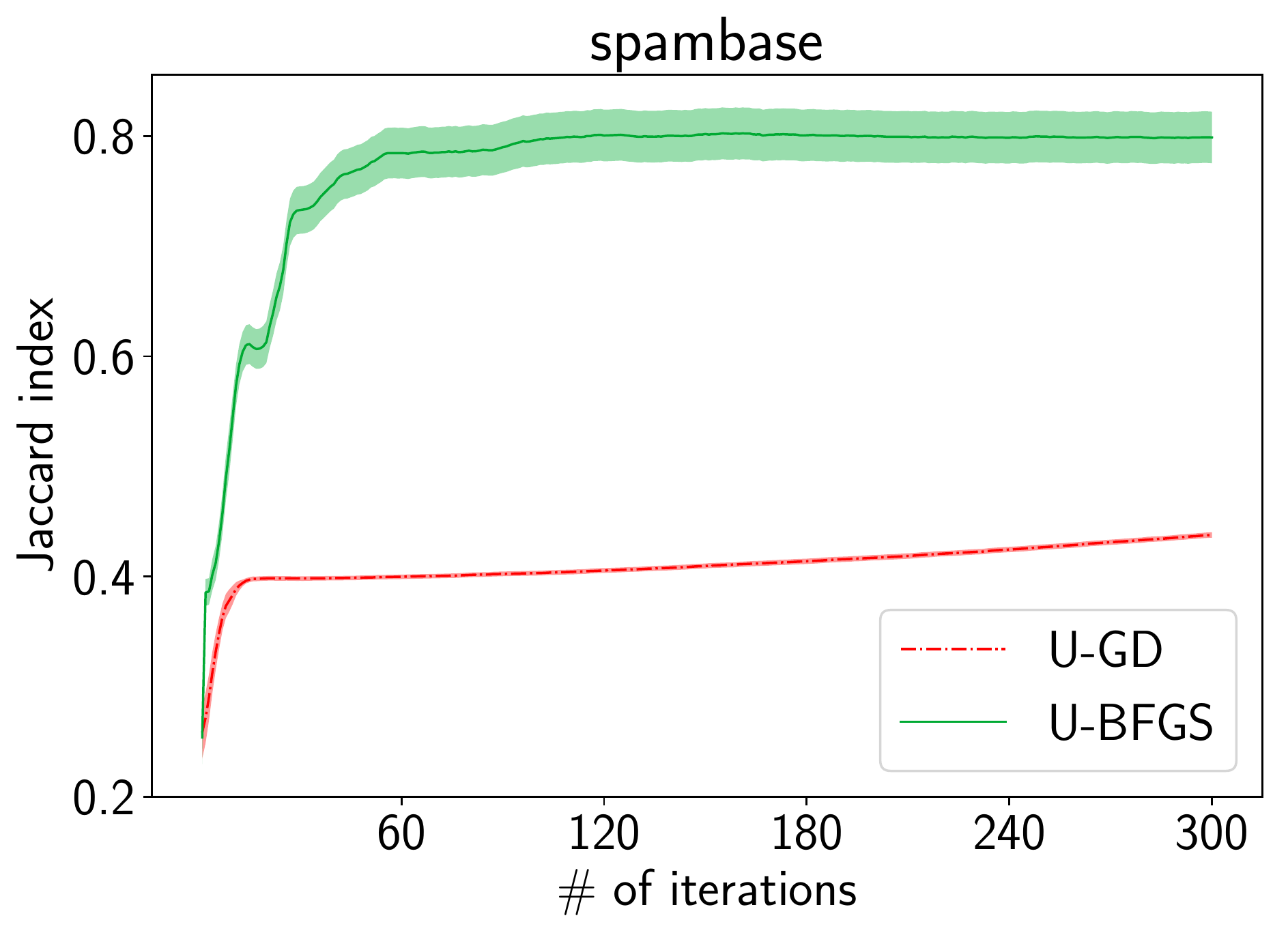}
    \end{minipage}
    \begin{minipage}{0.32\columnwidth}
        \includegraphics[width=\columnwidth]{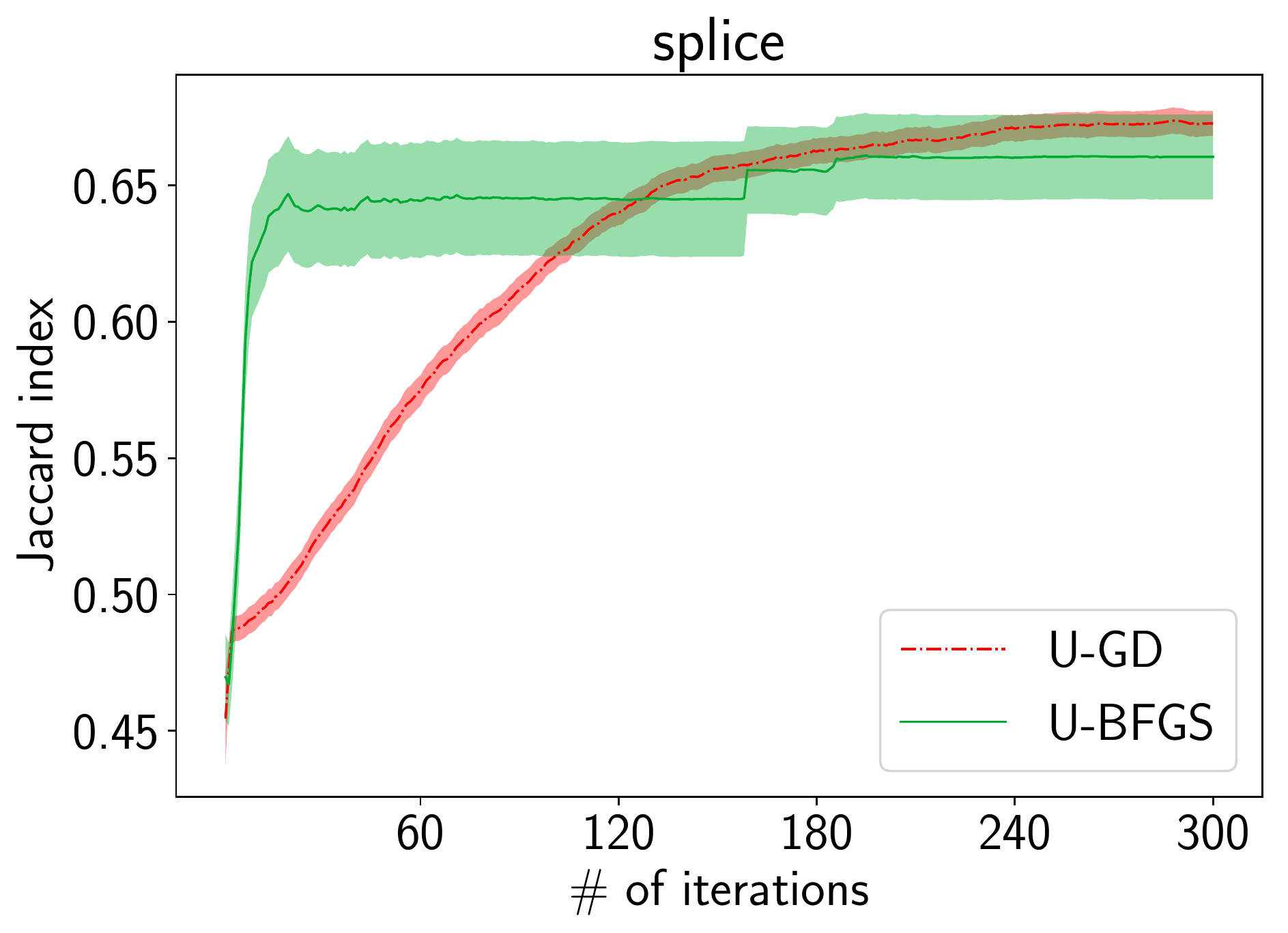}
    \end{minipage}
    \caption{
        Convergence comparison of the Jaccard index (vertical axes).
        Standard errors of 50 trials are shown as shaded areas.
    }
    \label{fig:supp:jac-itertest}
\end{figure}

\subsection{Performance Comparison with Benchmark Data}
\label{sec:supp:benchmark}

Benchmark results are shown in Tabs.~\ref{tab:supp:result-f1} and~\ref{tab:supp:result-jac}.
Each entry shows its final metric value for either F${}_1$-measure or Jaccard index.
For each dataset, we first picked the method with the highest test performance as a outperforming method within that dataset,
then conducted one-sided t-test with the significant level 5\%,
and they are also regarded as outperforming methods if the performance differences are not significant as a result of hypothesis tests.
Outperforming methods are indicated in bold-faces.

As general tendencies, we observe that U-BFGS and Plug-in work well for both F${}_1$-measure and Jaccard index.
As for F${}_1$-measure, their performances are competitive, while U-BFGS is better as for Jaccard index.
In practice, both U-BFGS and Plug-in are worth being tried.

As for other methods:
ERM does not work good as we expect, because it does not optimize the metrics of our interests, F${}_1$-measure and Jaccard index, at all.
W-ERM does not work as well as Plug-in even though both of them are known to be consistent to the linear-fractional utilities.
We may need more finer split of the threshold search space, or try a binary-search-type algorithm provided by recent work~\citep{Yan:2018}.
U-GD does not work as well as U-BFGS contrary to our expectation.
We may need more iterations to make U-GD converge, as we see in Figures~\ref{fig:supp:f1-itertest} and~\ref{fig:supp:jac-itertest}.
Note that we ran 100 iterations for both U-GD and U-BFGS for the results shown in Tabs.~\ref{tab:supp:result-f1} and~\ref{tab:supp:result-jac}.

\begin{table*}[h]
    \centering
    \caption{
        Results of the F$_1$-measure:
        50 trials are conducted for each pair of a method and dataset.
        Standard errors (multiplied by $10^4$) are shown in parentheses.
        Bold-faces indicate outperforming methods, chosen by one-sided t-test with the significant level 5\%.
    }
    \label{tab:supp:result-f1}
    \scalebox{0.8}{
    \begin{tabular}{cccccc} \hline
        (F${}_1$-measure) & \multicolumn{2}{c}{Proposed} & \multicolumn{3}{c}{Baselines} \\ \cmidrule(lr){2-3} \cmidrule(lr){4-6}
        Dataset & U-GD & U-BFGS & ERM & W-ERM & Plug-in
        \\ \hline
        adult & 0.617 (101)  & 0.660 (11)  & 0.639 (51)  & 0.676 (18)  & \textbf{0.681 (9)} \\
        australian & \textbf{0.843 (41)} & \textbf{0.844 (45)} & 0.820 (123)  & 0.814 (116)  & 0.827 (51)  \\
        breast-cancer & \textbf{0.963 (31)} & \textbf{0.960 (32)} & 0.950 (37)  & 0.948 (44)  & 0.953 (40)  \\
        cod-rna & 0.802 (231)  & 0.594 (4)  & 0.927 (7)  & 0.927 (6)  & \textbf{0.930 (2)} \\
        diabetes & \textbf{0.834 (32)} & \textbf{0.828 (31)} & 0.817 (50)  & 0.821 (40)  & 0.820 (42)  \\
        fourclass & \textbf{0.638 (70)} & \textbf{0.638 (64)} & 0.601 (124)  & 0.591 (212)  & 0.618 (64)  \\
        german.numer & 0.561 (102)  & \textbf{0.580 (74)} & 0.492 (188)  & 0.560 (107)  & \textbf{0.589 (73)} \\
        heart & \textbf{0.796 (101)} & \textbf{0.802 (99)} & \textbf{0.792 (80)} & 0.764 (151)  & 0.764 (137)  \\
        ionosphere & \textbf{0.908 (49)} & \textbf{0.901 (43)} & 0.883 (104)  & 0.842 (217)  & \textbf{0.897 (54)} \\
        madelon & \textbf{0.666 (19)} & 0.632 (67)  & 0.491 (293)  & 0.639 (110)  & \textbf{0.663 (24)} \\
        mushrooms & 1.000 (1)  & 0.997 (7)  & \textbf{1.000 (1)} & 1.000 (2)  & 0.999 (4)  \\
        phishing & 0.937 (29)  & \textbf{0.943 (7)} & \textbf{0.944 (8)} & 0.940 (12)  & \textbf{0.944 (8)} \\
        phoneme & \textbf{0.648 (27)} & 0.559 (22)  & 0.530 (201)  & 0.616 (135)  & 0.633 (35)  \\
        skin\_nonskin & 0.870 (3)  & 0.856 (4)  & 0.854 (7)  & \textbf{0.877 (8)} & 0.838 (5)  \\
        sonar & \textbf{0.735 (95)} & \textbf{0.740 (91)} & 0.706 (121)  & 0.655 (189)  & \textbf{0.721 (113)} \\
        spambase & 0.876 (27)  & 0.756 (61)  & 0.887 (42)  & 0.881 (58)  & \textbf{0.903 (18)} \\
        splice & 0.785 (49)  & \textbf{0.799 (46)} & 0.785 (55)  & 0.771 (67)  & \textbf{0.801 (45)} \\
        w8a & 0.297 (80)  & 0.284 (96)  & 0.735 (35)  & \textbf{0.742 (29)} & \textbf{0.745 (26)} \\
        \hline
    \end{tabular}
    }
\end{table*}

\begin{table*}[h]
    \centering
    \caption{
        Results of the Jaccard index:
        50 trials are conducted for each pair of a method and dataset.
        Standard errors (multiplied by $10^4$) are shown in parentheses.
        Bold-faces indicate outperforming methods, chosen by one-sided t-test with the significant level 5\%.
    }
    \label{tab:supp:result-jac}
    \scalebox{0.8}{
    \begin{tabular}{cccccc} \hline
        (Jaccard index) & \multicolumn{2}{c}{Proposed} & \multicolumn{3}{c}{Baselines} \\ \cmidrule(lr){2-3} \cmidrule(lr){4-6}
        Dataset & U-GD & U-BFGS & ERM & W-ERM & Plug-in
        \\ \hline
        adult & 0.499 (44)  & 0.498 (11)  & 0.471 (51)  & 0.510 (20)  & \textbf{0.516 (10)} \\
        australian & \textbf{0.735 (63)} & \textbf{0.733 (59)} & 0.702 (144)  & 0.693 (143)  & 0.707 (76)  \\
        breast-cancer & \textbf{0.921 (54)} & \textbf{0.918 (55)} & 0.905 (66)  & 0.903 (78)  & \textbf{0.913 (69)} \\
        cod-rna & 0.854 (3)  & 0.785 (8)  & 0.864 (11)  & 0.865 (9)  & \textbf{0.869 (3)} \\
        diabetes & \textbf{0.714 (44)} & 0.702 (50)  & 0.692 (70)  & 0.698 (56)  & 0.695 (60)  \\
        fourclass & \textbf{0.469 (69)} & \textbf{0.457 (68)} & 0.436 (112)  & 0.434 (171)  & 0.449 (66)  \\
        german.numer & \textbf{0.433 (64)} & \textbf{0.429 (69)} & 0.335 (153)  & 0.391 (98)  & \textbf{0.418 (71)} \\
        heart & \textbf{0.665 (135)} & \textbf{0.675 (135)} & \textbf{0.664 (102)} & 0.629 (178)  & 0.626 (163)  \\
        ionosphere & \textbf{0.826 (76)} & \textbf{0.829 (65)} & 0.796 (134)  & 0.749 (245)  & \textbf{0.815 (87)} \\
        madelon & \textbf{0.495 (31)} & 0.459 (69)  & 0.346 (225)  & 0.474 (100)  & \textbf{0.496 (27)} \\
        mushrooms & 0.999 (2)  & 0.995 (4)  & \textbf{1.000 (1)} & 0.999 (4)  & 0.997 (7)  \\
        phishing & 0.883 (43)  & \textbf{0.893 (11)} & \textbf{0.894 (14)} & 0.888 (22)  & \textbf{0.894 (15)} \\
        phoneme & 0.435 (51)  & 0.436 (24)  & 0.371 (160)  & \textbf{0.450 (104)} & \textbf{0.461 (34)} \\
        skin\_nonskin & 0.744 (5)  & 0.751 (5)  & 0.746 (10)  & \textbf{0.780 (13)} & 0.722 (7)  \\
        sonar & \textbf{0.600 (125)} & \textbf{0.600 (111)} & 0.552 (147)  & 0.495 (202)  & \textbf{0.572 (134)} \\
        spambase & \textbf{0.827 (22)} & 0.708 (22)  & 0.798 (67)  & 0.790 (86)  & \textbf{0.824 (31)} \\
        splice & \textbf{0.670 (60)} & \textbf{0.672 (56)} & 0.646 (71)  & 0.629 (84)  & \textbf{0.672 (57)} \\
        w8a & 0.496 (151)  & 0.452 (28)  & 0.580 (44)  & \textbf{0.590 (35)} & \textbf{0.595 (33)} \\
        \hline
    \end{tabular}
    }
\end{table*}

\subsection{Sample Complexity}
\label{sec:supp:sample-complexity}

It is interesting to study the relationship between the metric performances and the size of samples,
because we expect Plug-in, which requires to estimate probabilities accurately, does not work well when the size of samples is quite small.
Figures~\ref{fig:supp:f1-sample-complexity} and~\ref{fig:supp:jac-sample-complexity} show the sample complexity results.
Even though learning is not stable for small samples (e.g., heart and w8a),
we can observe clear differences in some cases such as
cod-rna, diabetes, german.numer, ionosphere, sonar, and splice in F${}_1$-measure, and
australian, cod-rna, diabetes, ionosphere, phishing, sonar, and spambase in Jaccard index,
where either U-GD or U-BFGS works better than Plug-in even if sample sizes are quite small around 20 to 40.
In addition, Plug-in seldom works significantly better than the gradient-based methods in the cases where sample sizes range around 100 to 400 as investigated in this section.
This is contrary to the behavior shown in Tabs.~\ref{tab:supp:result-f1} and~\ref{tab:supp:result-jac},
where the full-size datasets are used to train classifiers.

As a conclusion, it can be a good option to consider using the gradient-based methods where sample sizes are very small.

\begin{figure}[h]
    \centering
    \begin{minipage}{0.32\columnwidth}
        \includegraphics[width=\columnwidth]{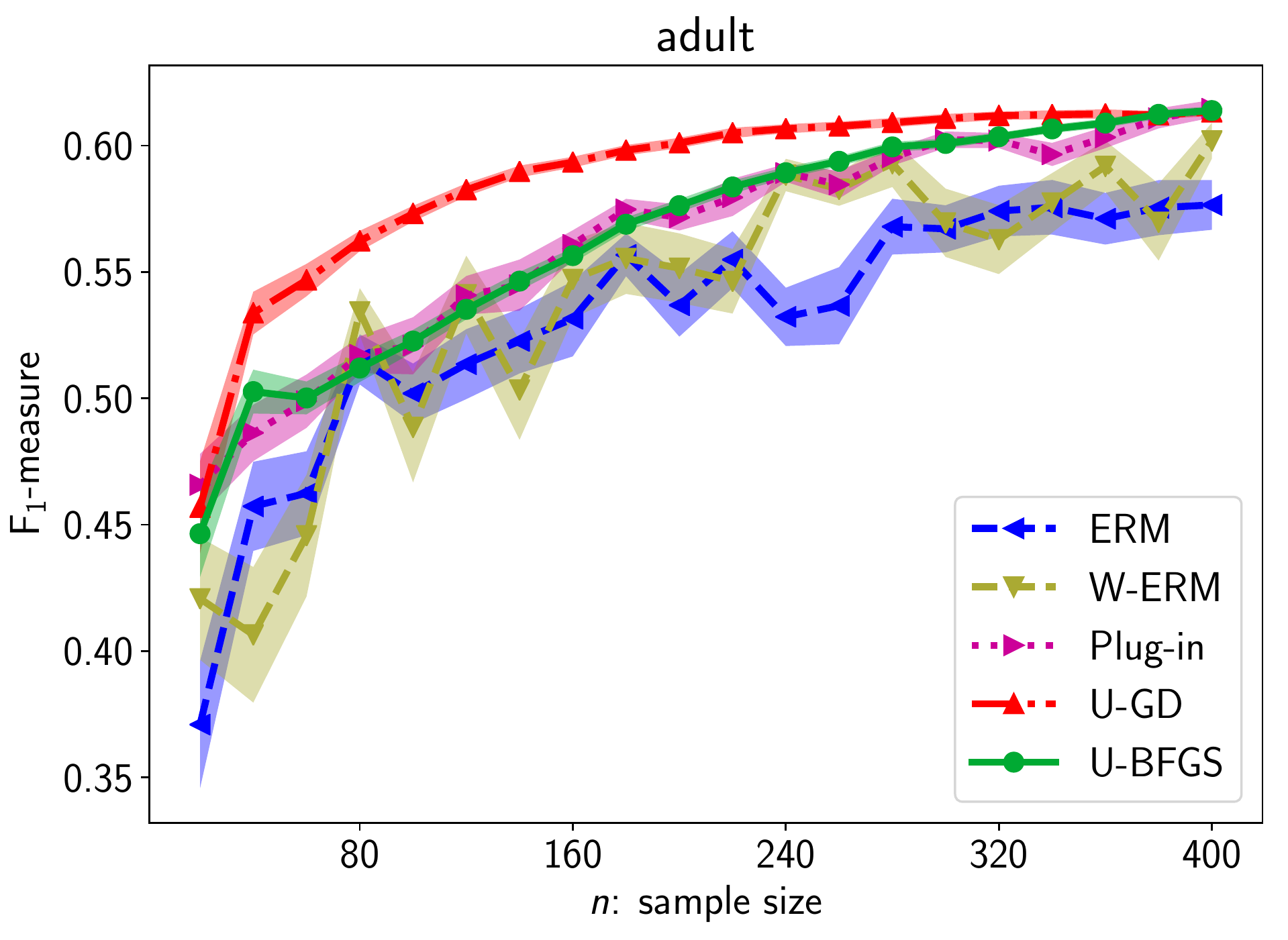}
    \end{minipage}
    \begin{minipage}{0.32\columnwidth}
        \includegraphics[width=\columnwidth]{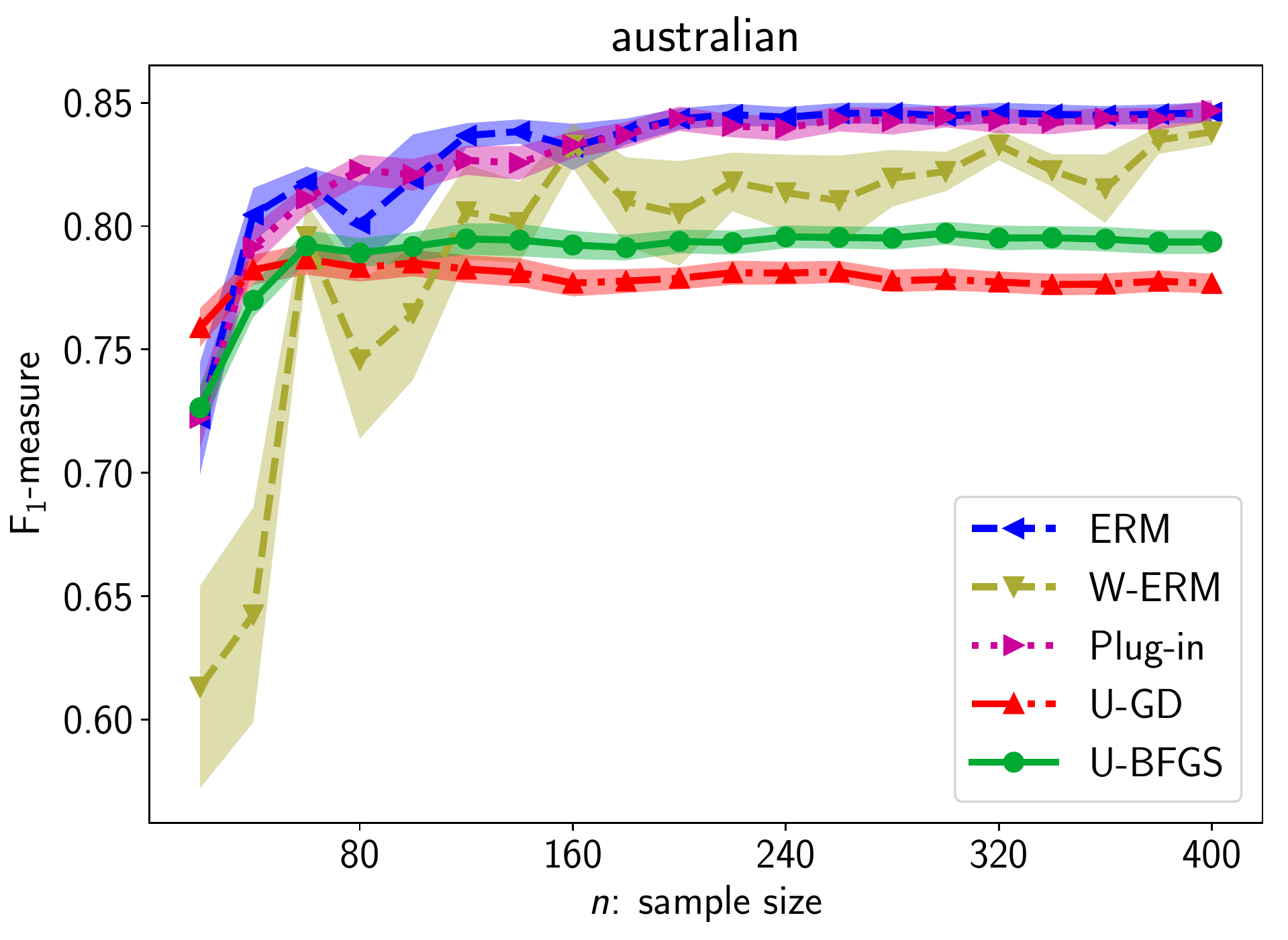}
    \end{minipage}
    \begin{minipage}{0.32\columnwidth}
        \includegraphics[width=\columnwidth]{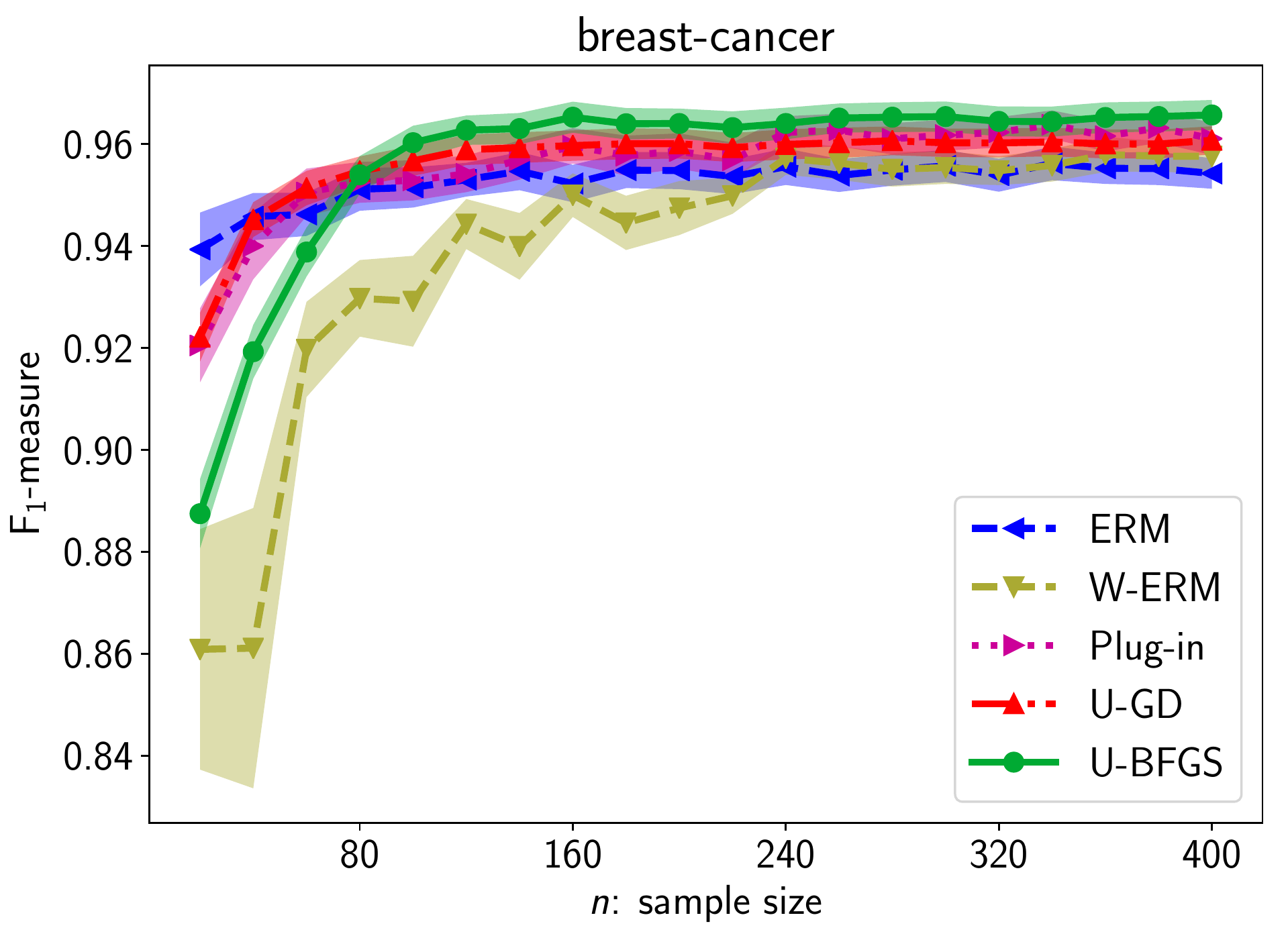}
    \end{minipage}
    \begin{minipage}{0.32\columnwidth}
        \includegraphics[width=\columnwidth]{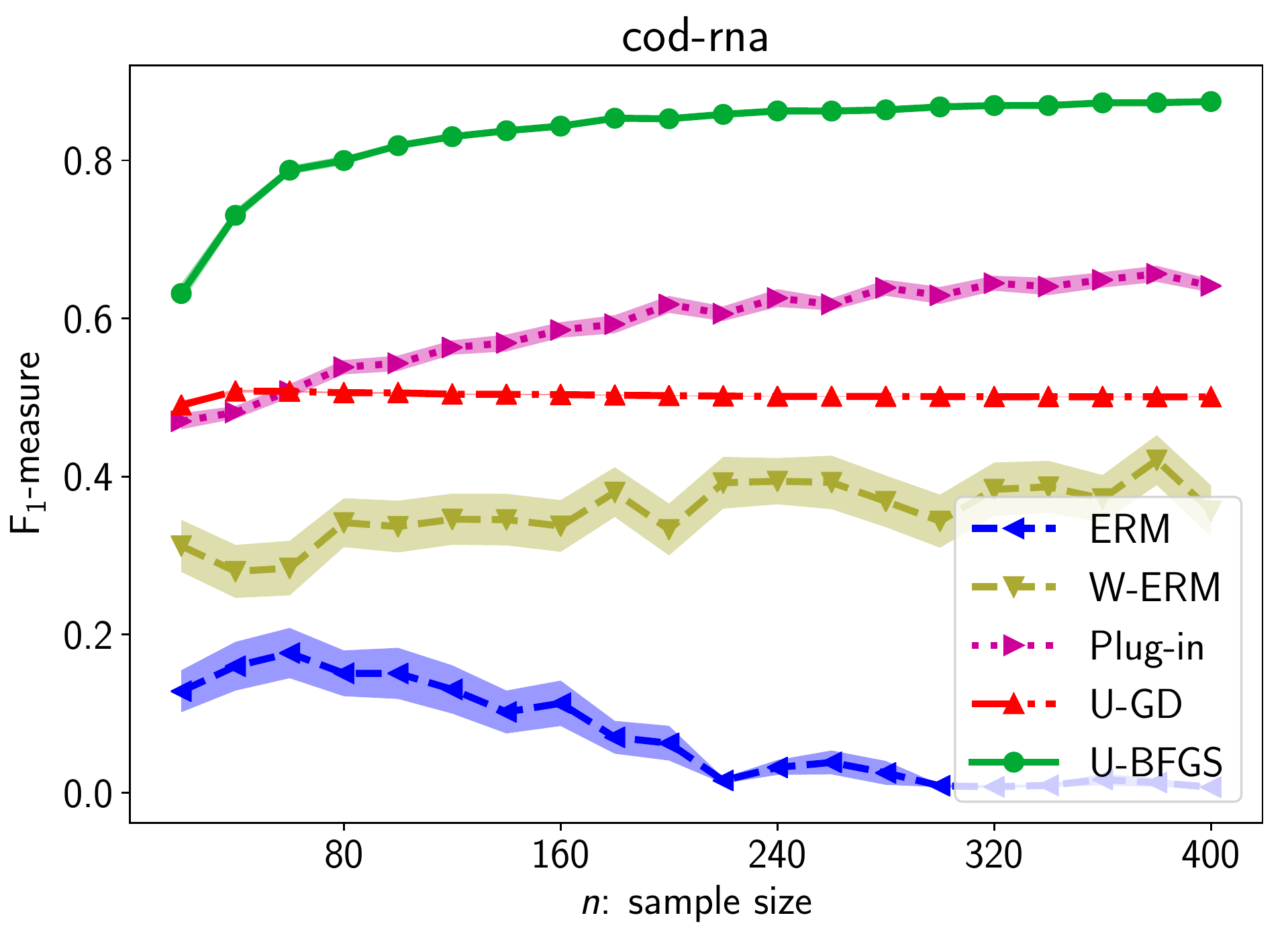}
    \end{minipage}
    \begin{minipage}{0.32\columnwidth}
        \includegraphics[width=\columnwidth]{figures/sample_complexity/sample_comp_f1_diabetes.pdf}
    \end{minipage}
    \begin{minipage}{0.32\columnwidth}
        \includegraphics[width=\columnwidth]{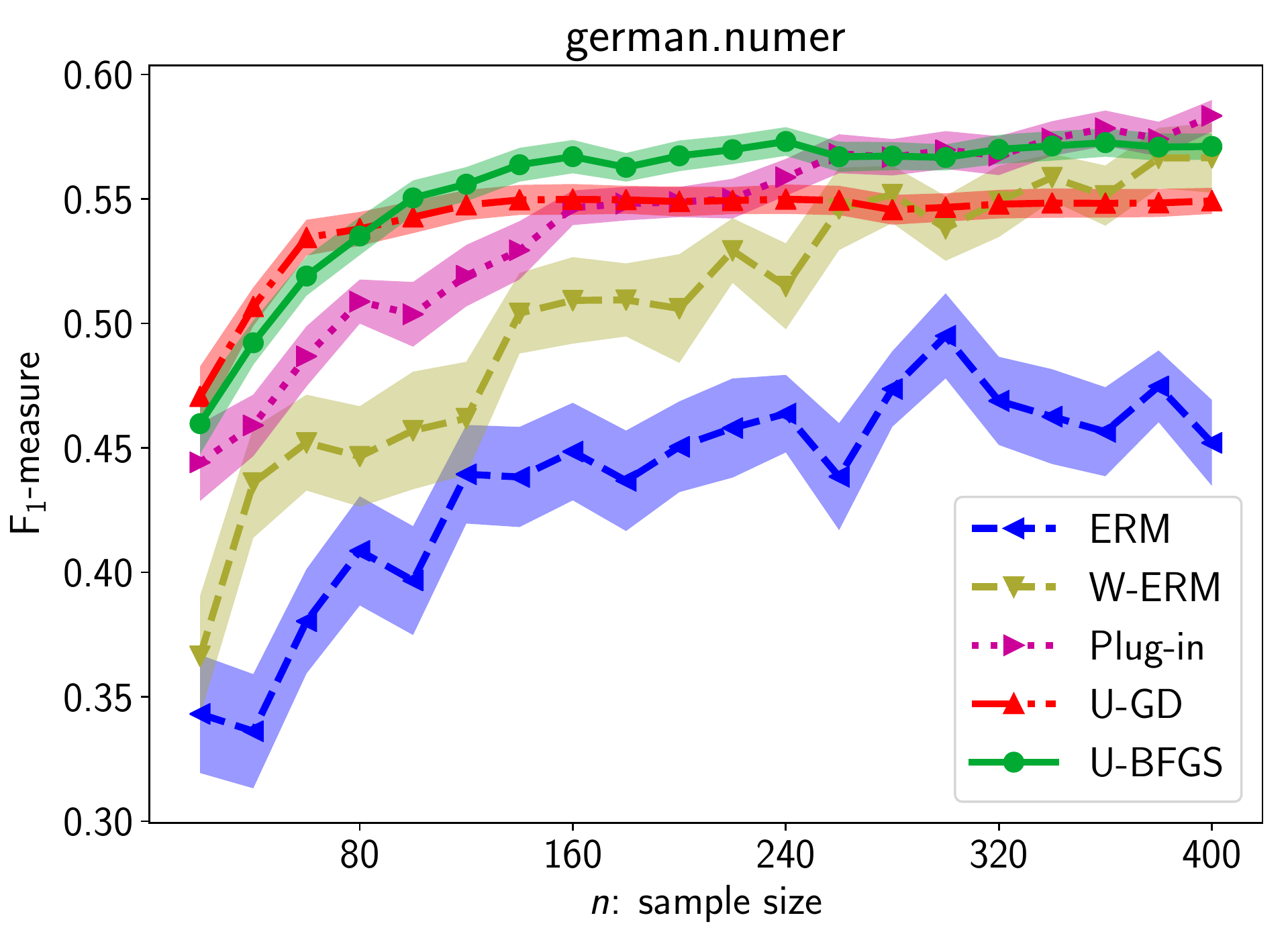}
    \end{minipage}
    \begin{minipage}{0.32\columnwidth}
        \includegraphics[width=\columnwidth]{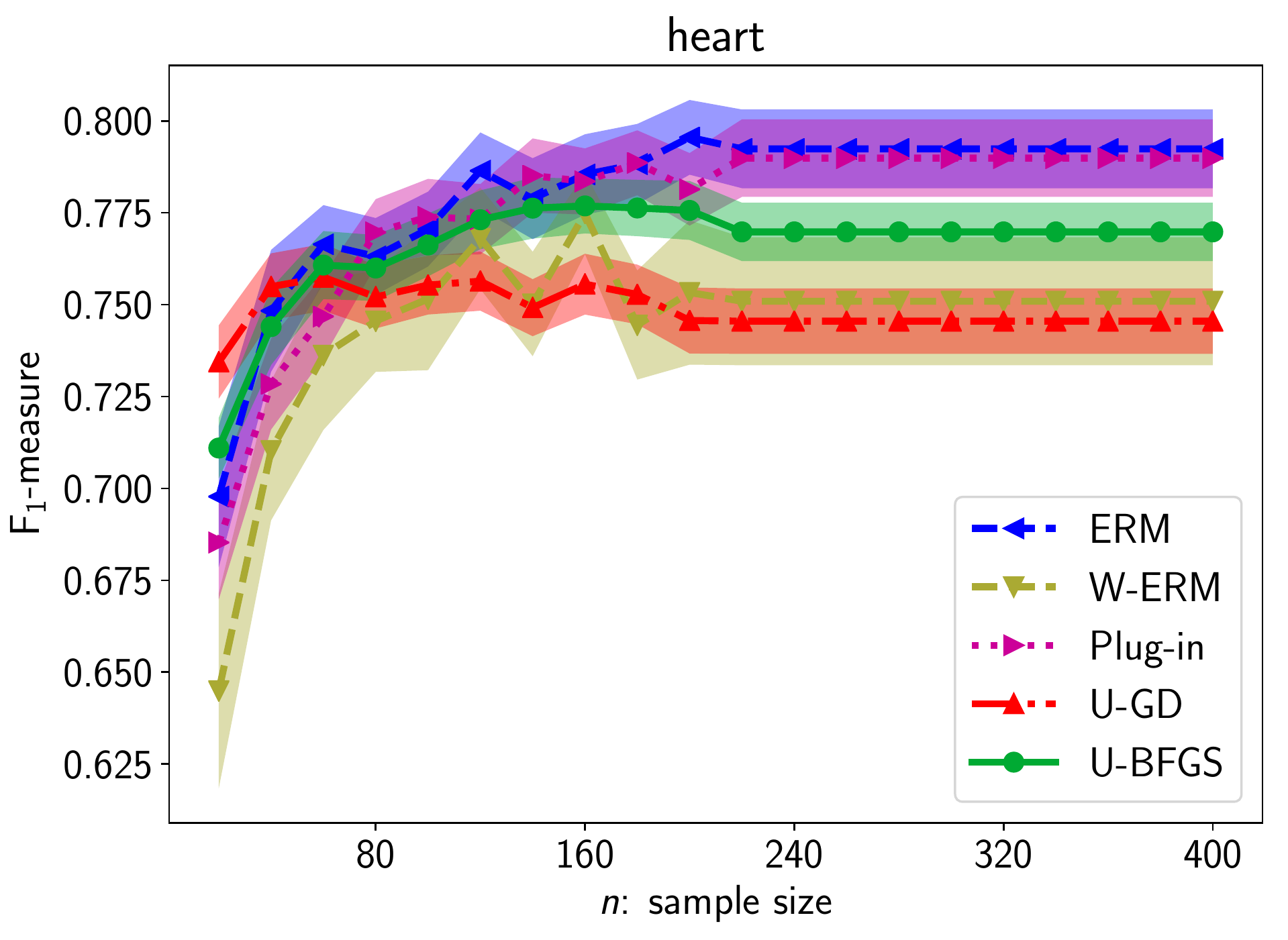}
    \end{minipage}
    \begin{minipage}{0.32\columnwidth}
        \includegraphics[width=\columnwidth]{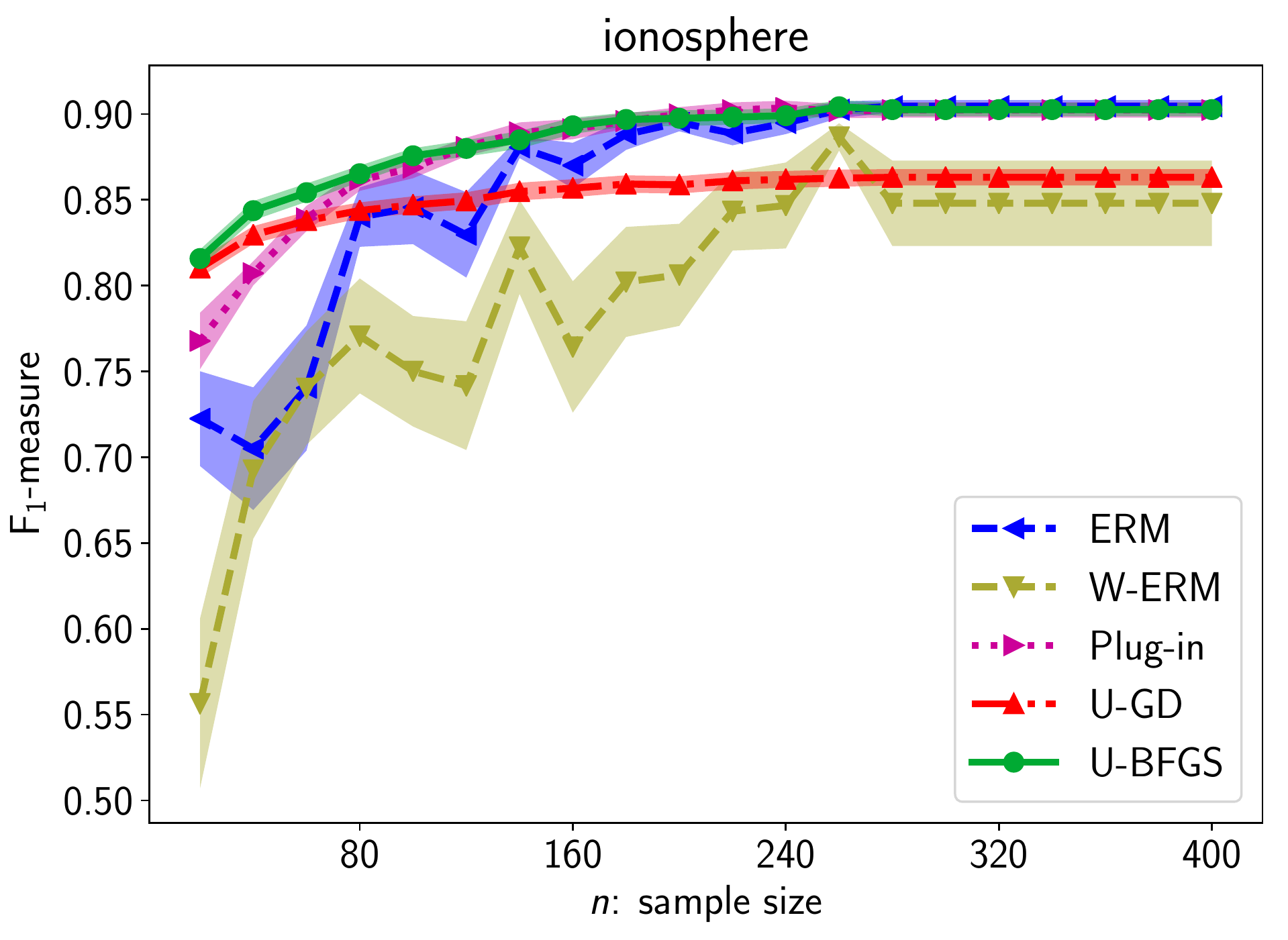}
    \end{minipage}
    \begin{minipage}{0.32\columnwidth}
        \includegraphics[width=\columnwidth]{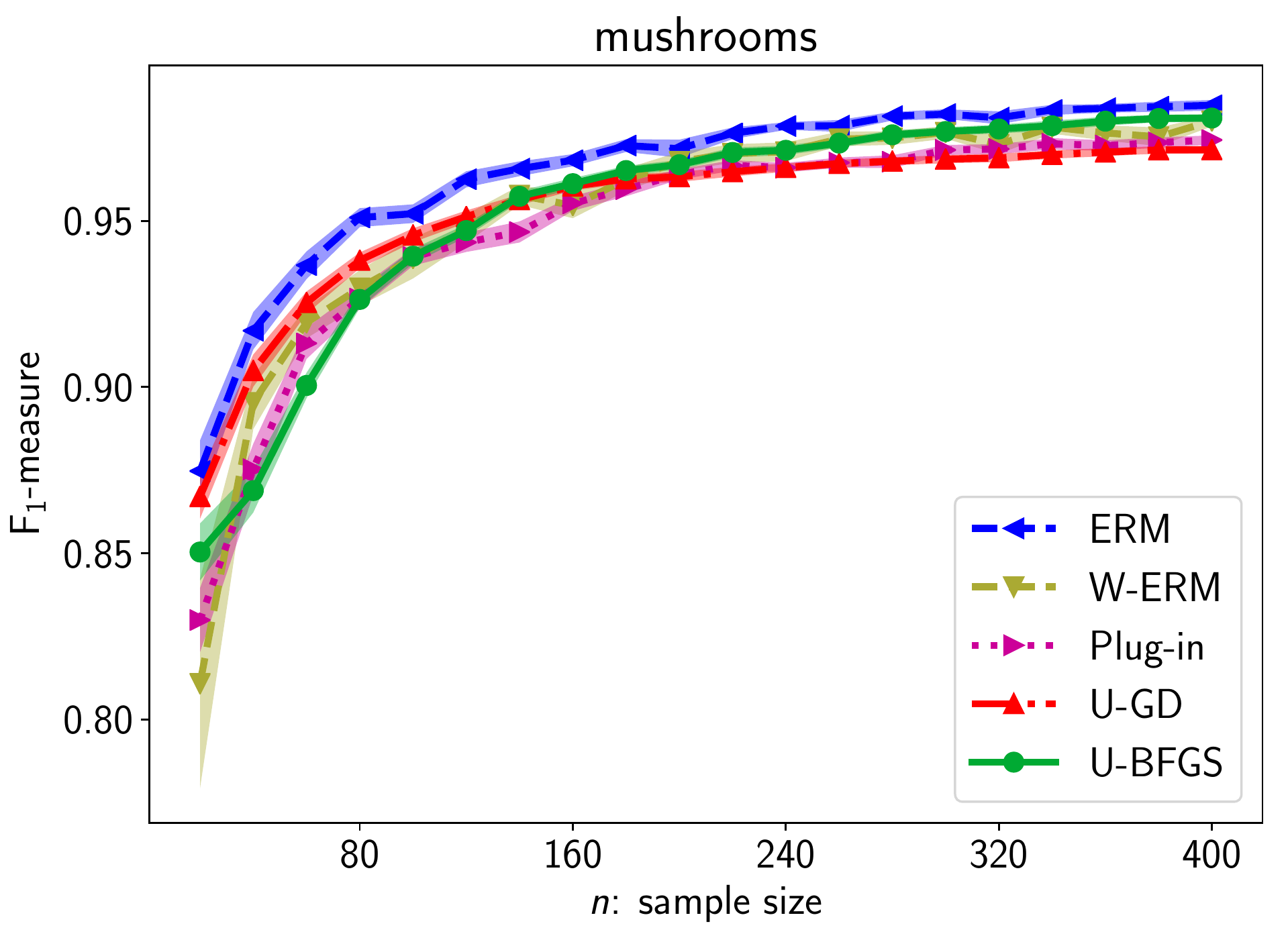}
    \end{minipage}
    \begin{minipage}{0.32\columnwidth}
        \includegraphics[width=\columnwidth]{figures/sample_complexity/sample_comp_f1_phishing.pdf}
    \end{minipage}
    \begin{minipage}{0.32\columnwidth}
        \includegraphics[width=\columnwidth]{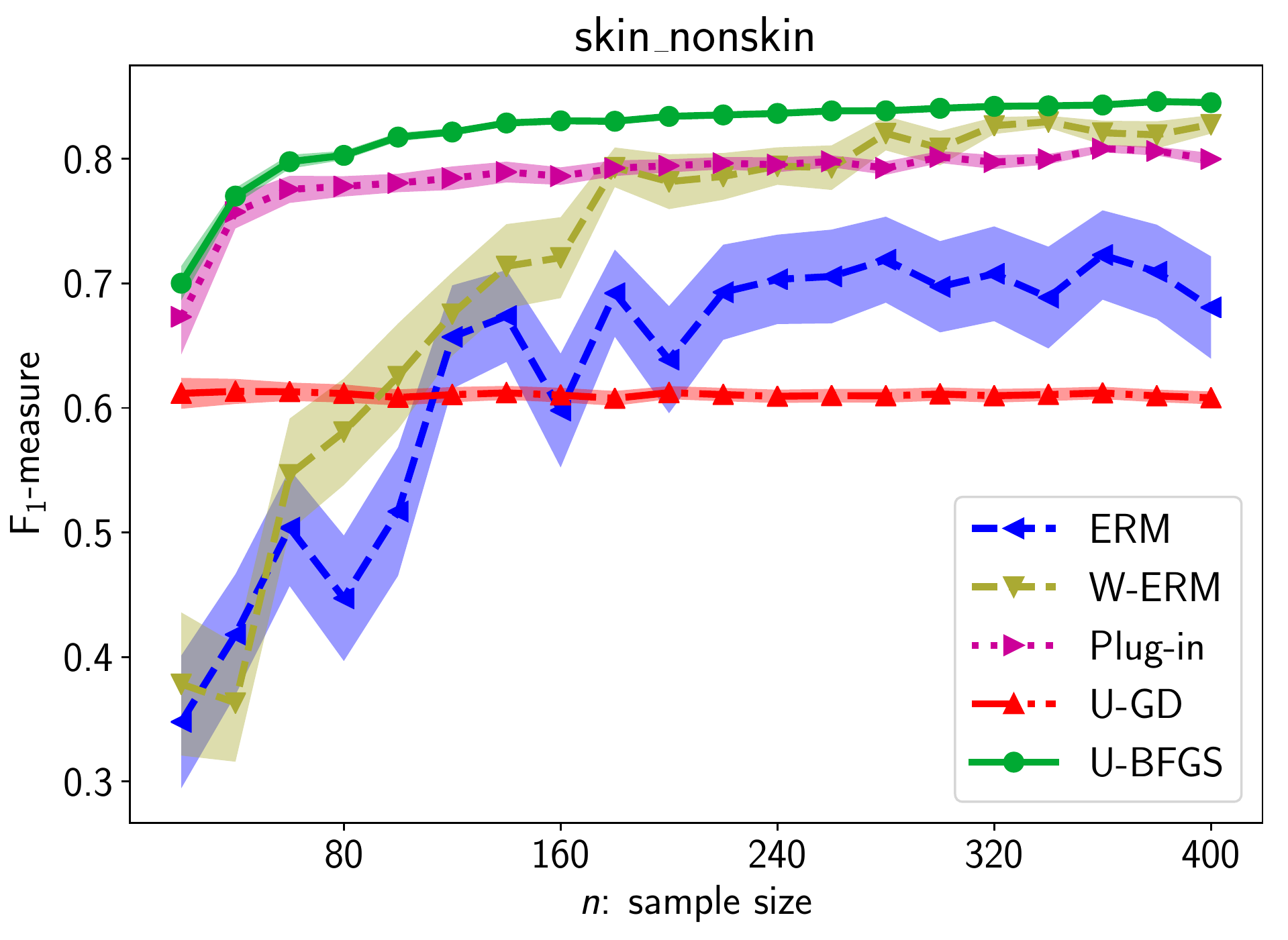}
    \end{minipage}
    \begin{minipage}{0.32\columnwidth}
        \includegraphics[width=\columnwidth]{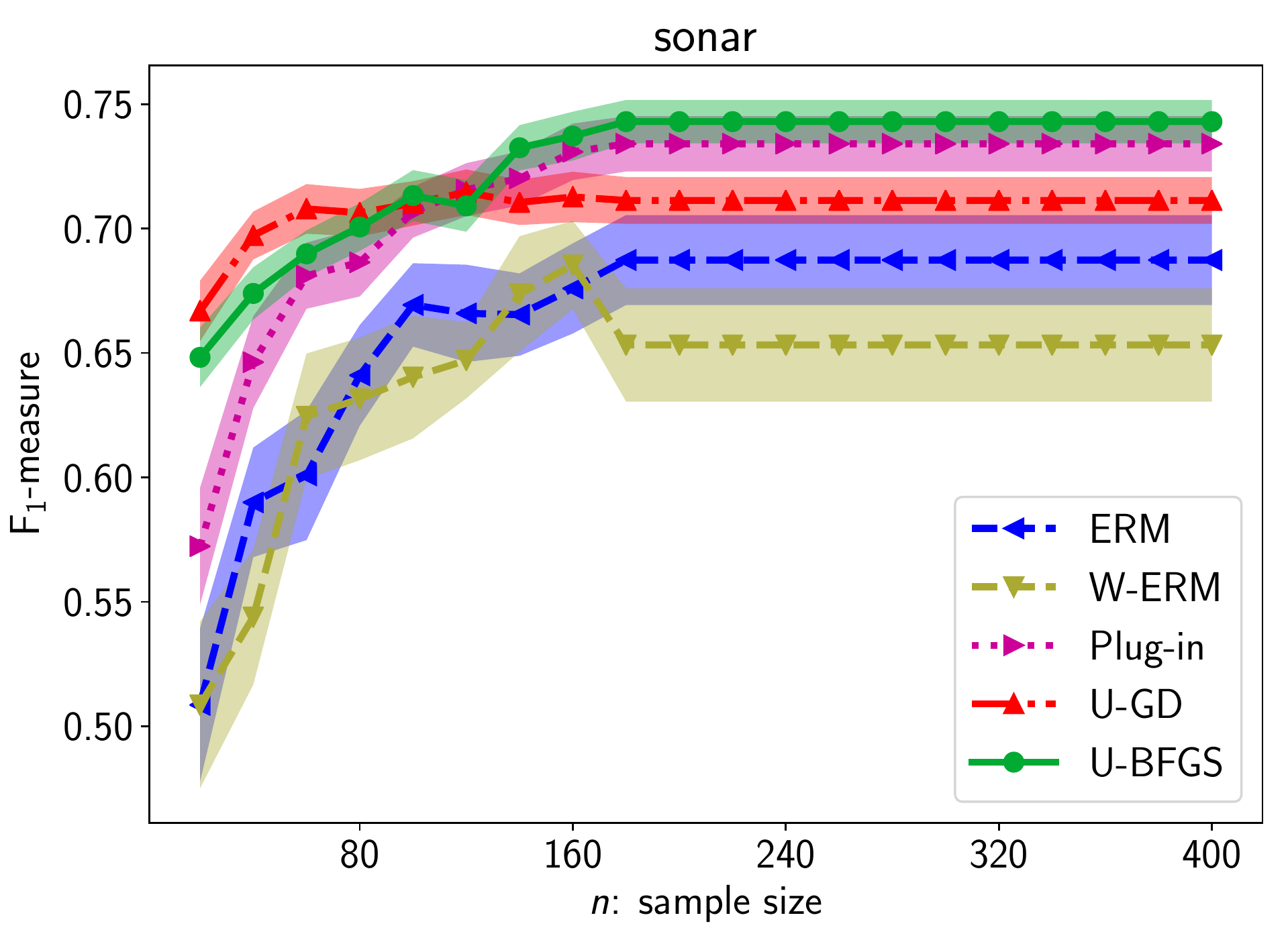}
    \end{minipage}
    \begin{minipage}{0.32\columnwidth}
        \includegraphics[width=\columnwidth]{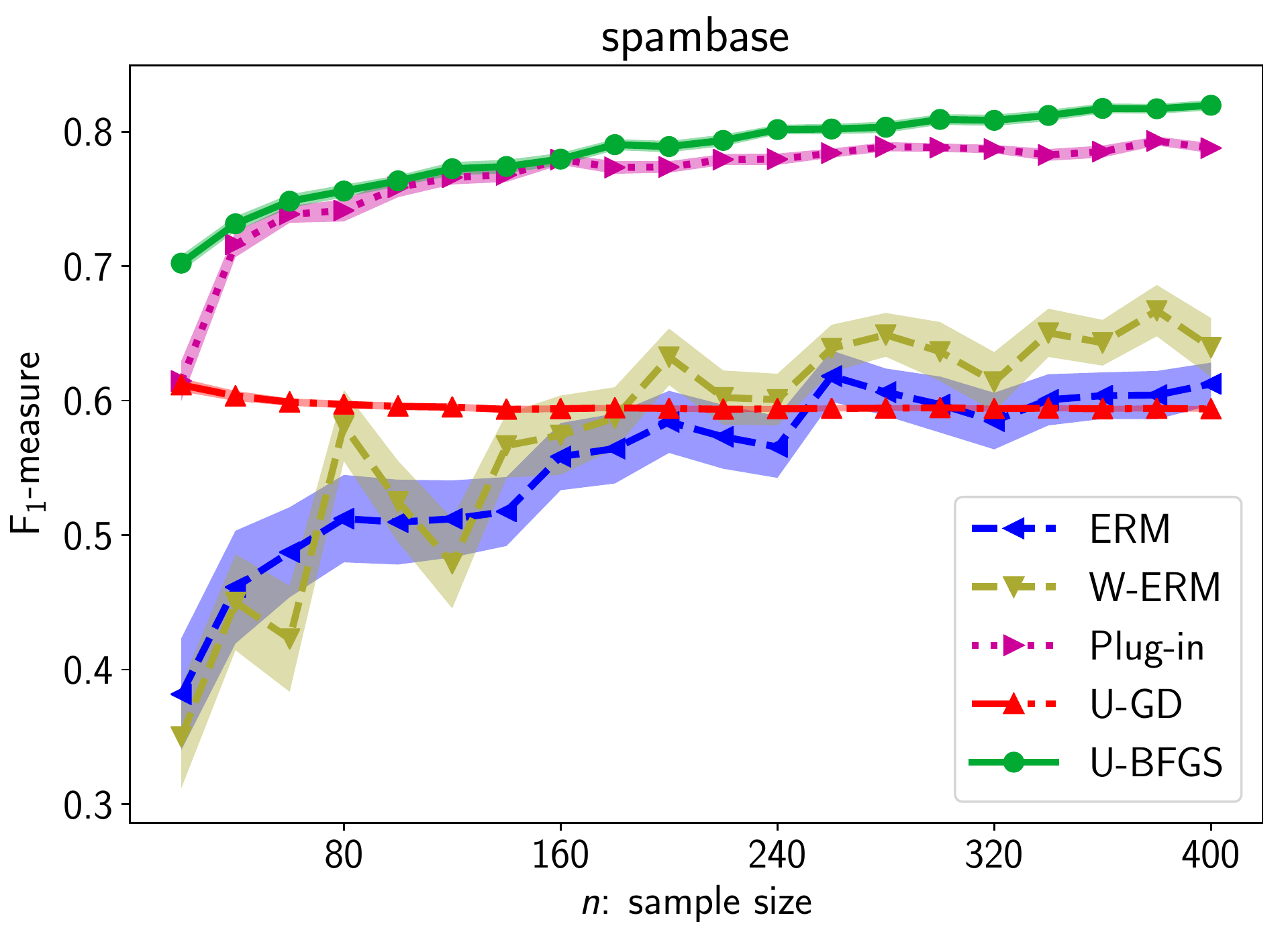}
    \end{minipage}
    \begin{minipage}{0.32\columnwidth}
        \includegraphics[width=\columnwidth]{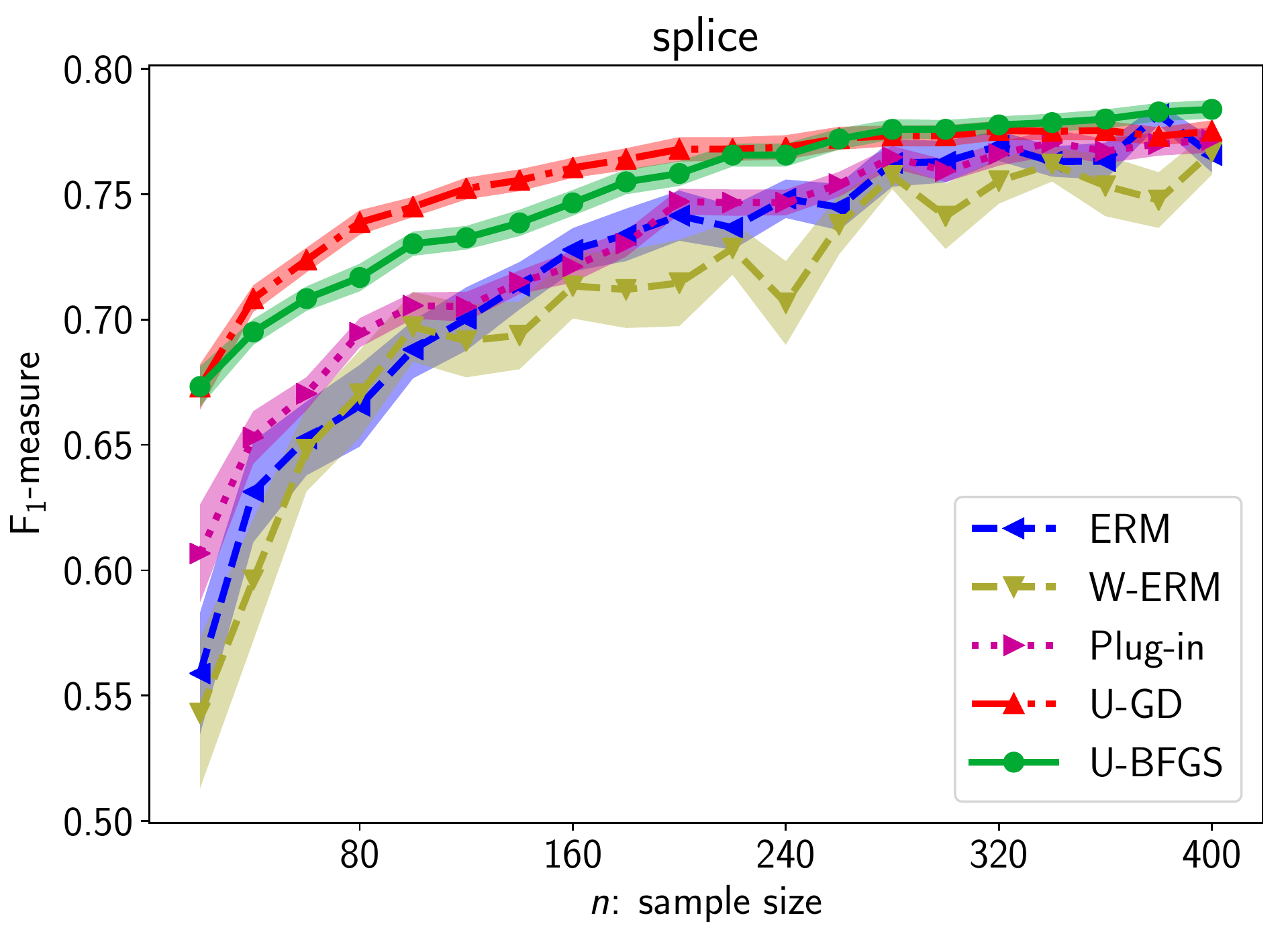}
    \end{minipage}
    \begin{minipage}{0.32\columnwidth}
        \includegraphics[width=\columnwidth]{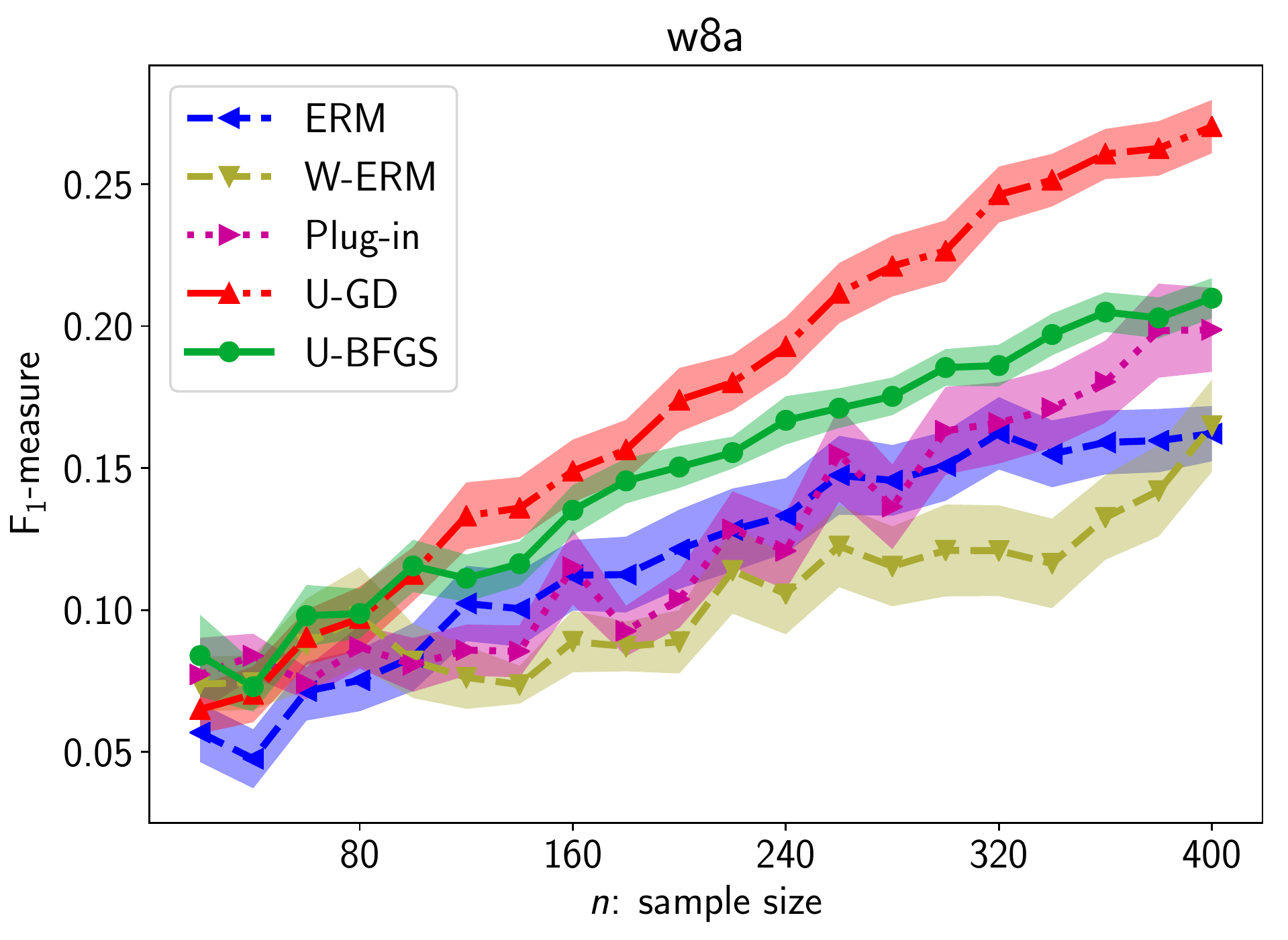}
    \end{minipage}
    \caption{
        The relationship of the test F${}_1$-measure (vertical axes) and sample size (horizontal axes).
        Standard errors of 50 trials are shown as shaded areas.
    }
    \label{fig:supp:f1-sample-complexity}
\end{figure}

\begin{figure}[h]
    \centering
    \begin{minipage}{0.32\columnwidth}
        \includegraphics[width=\columnwidth]{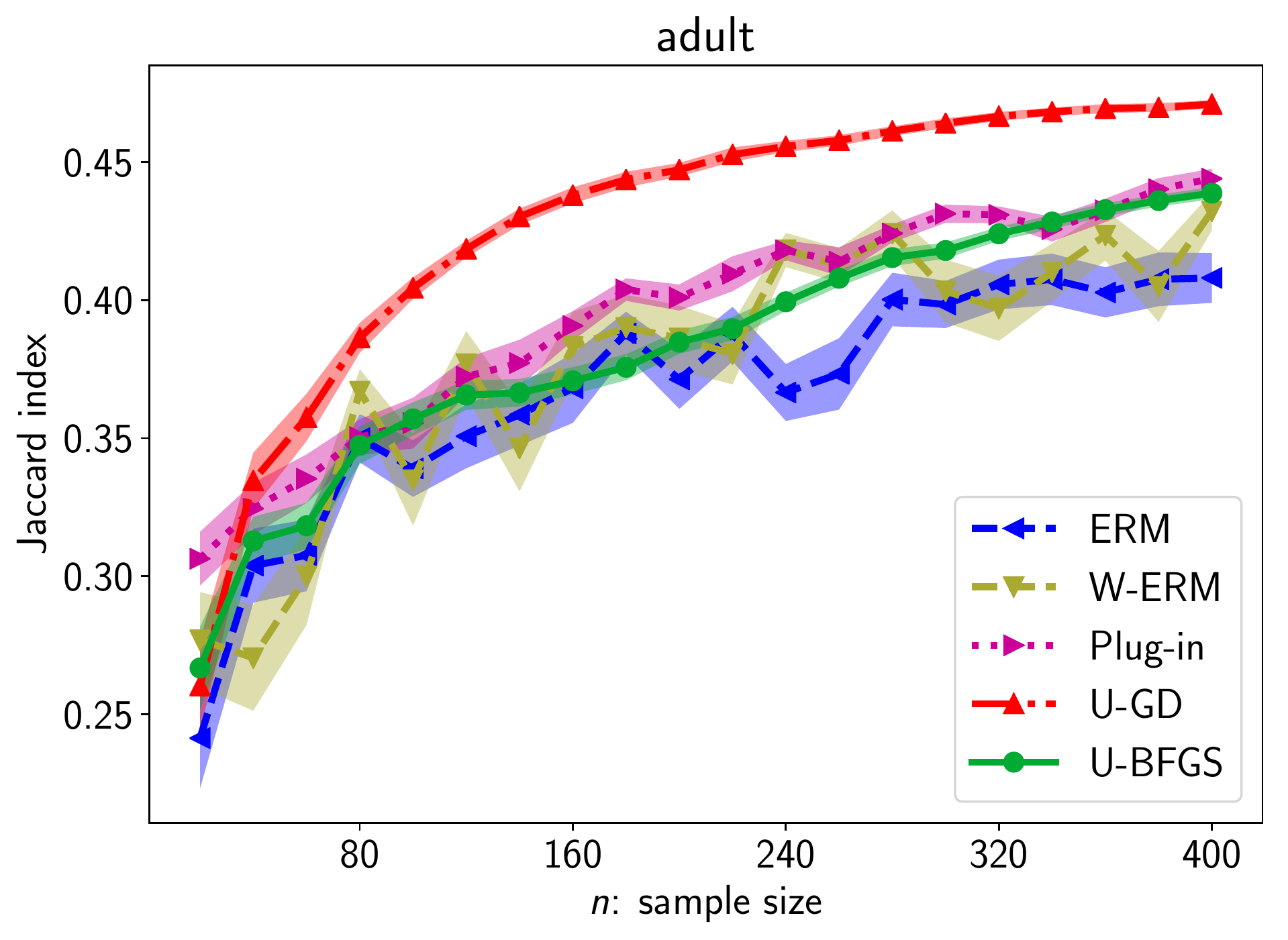}
    \end{minipage}
    \begin{minipage}{0.32\columnwidth}
        \includegraphics[width=\columnwidth]{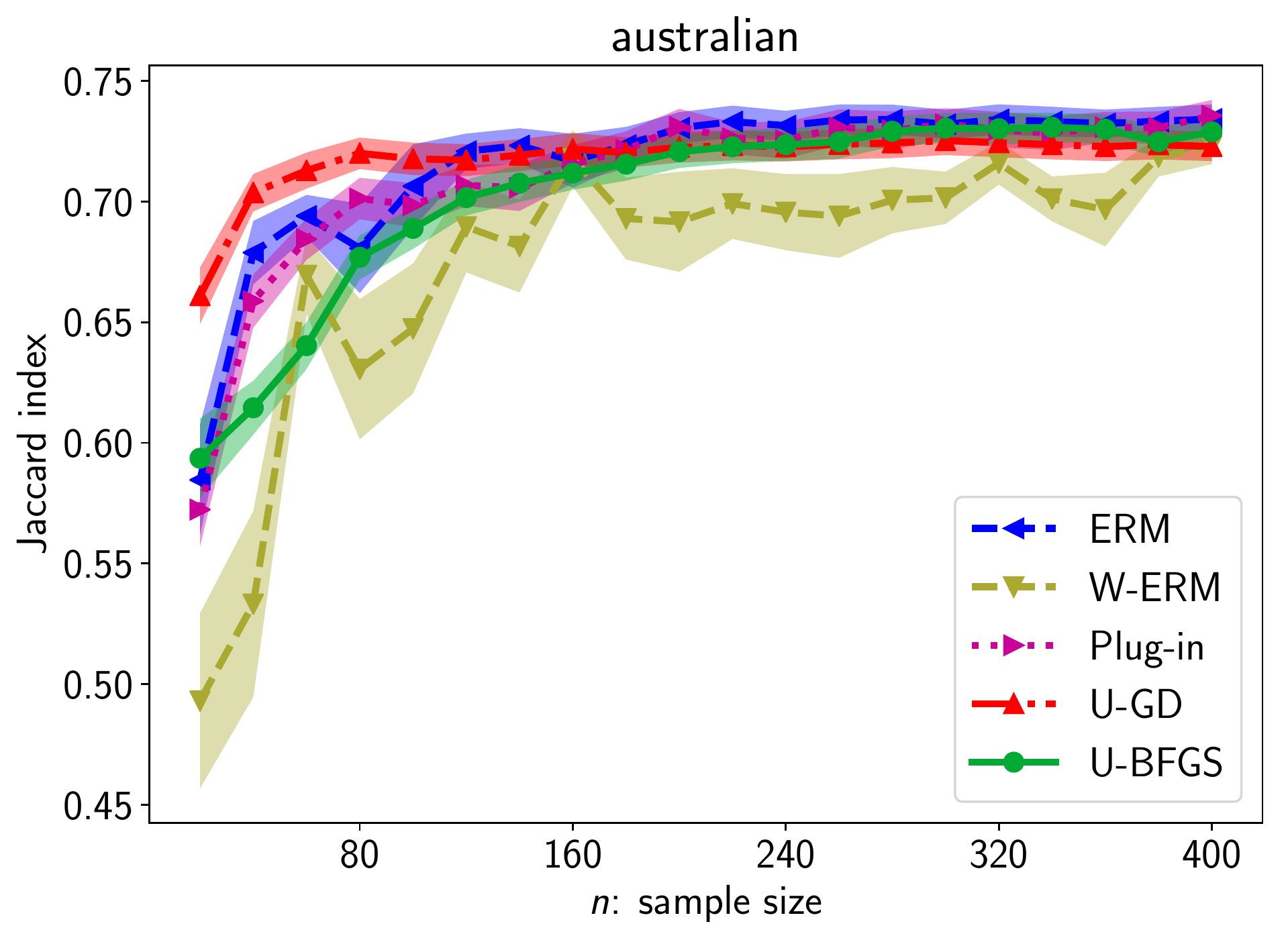}
    \end{minipage}
    \begin{minipage}{0.32\columnwidth}
        \includegraphics[width=\columnwidth]{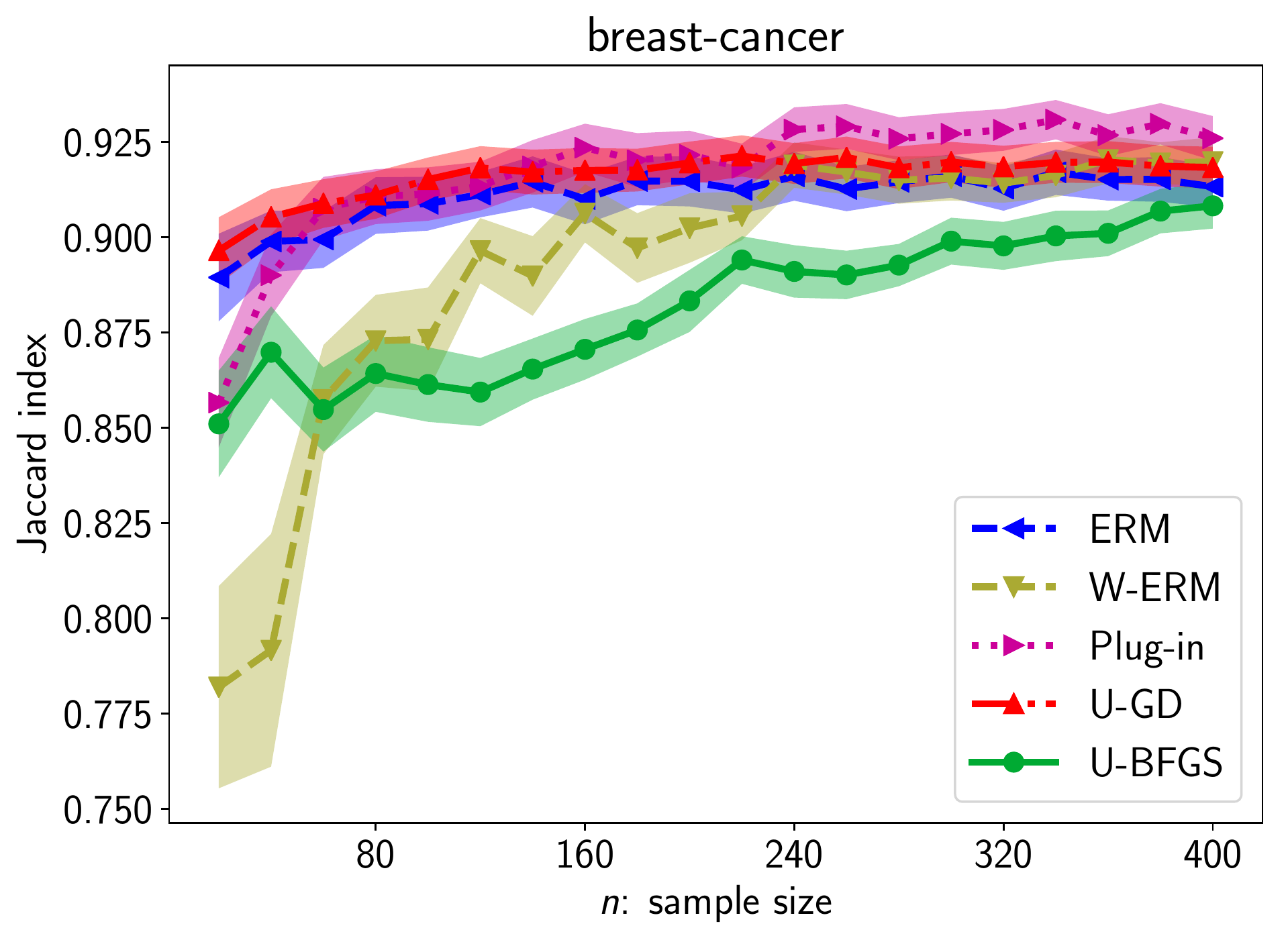}
    \end{minipage}
    \begin{minipage}{0.32\columnwidth}
        \includegraphics[width=\columnwidth]{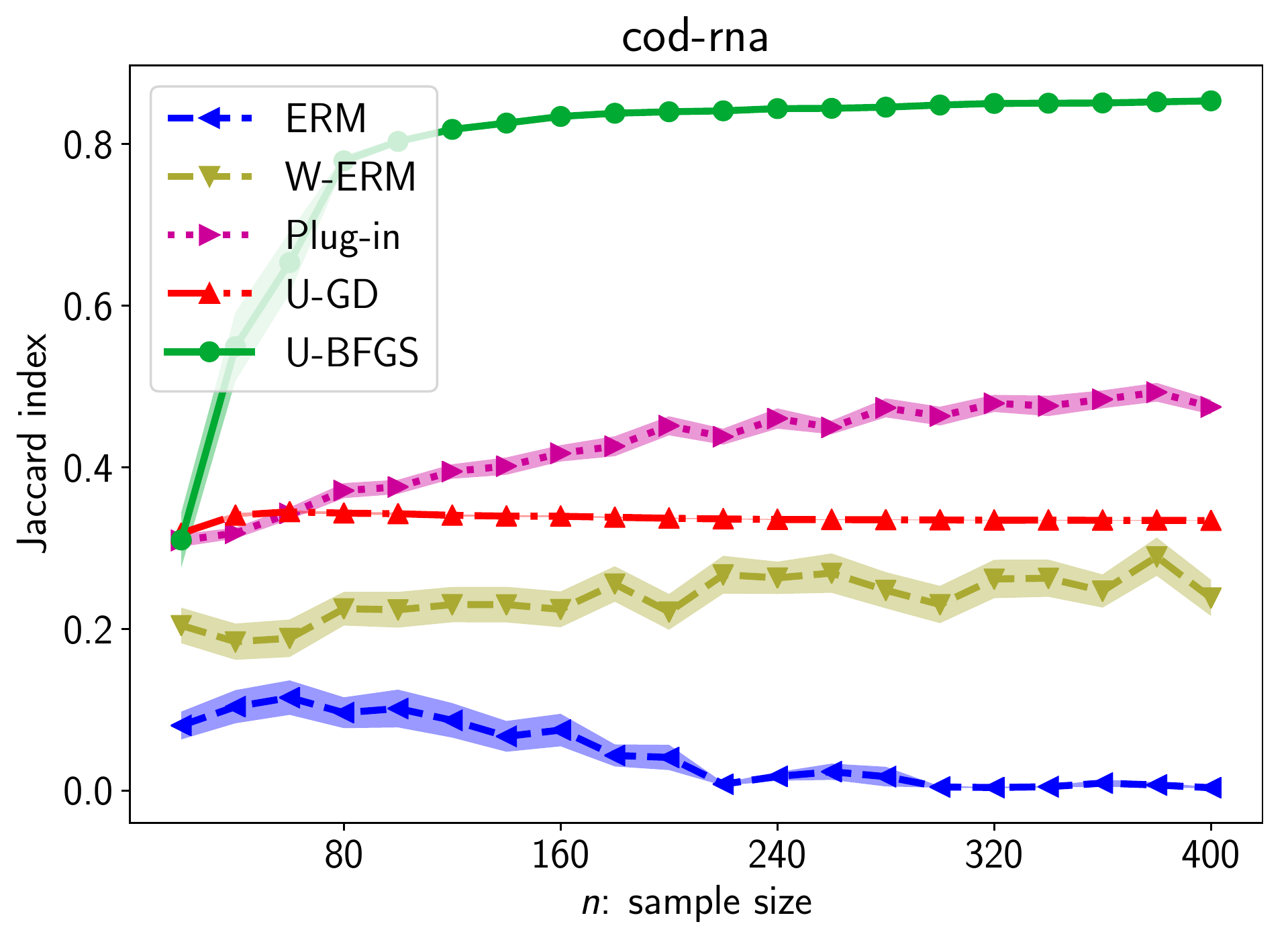}
    \end{minipage}
    \begin{minipage}{0.32\columnwidth}
        \includegraphics[width=\columnwidth]{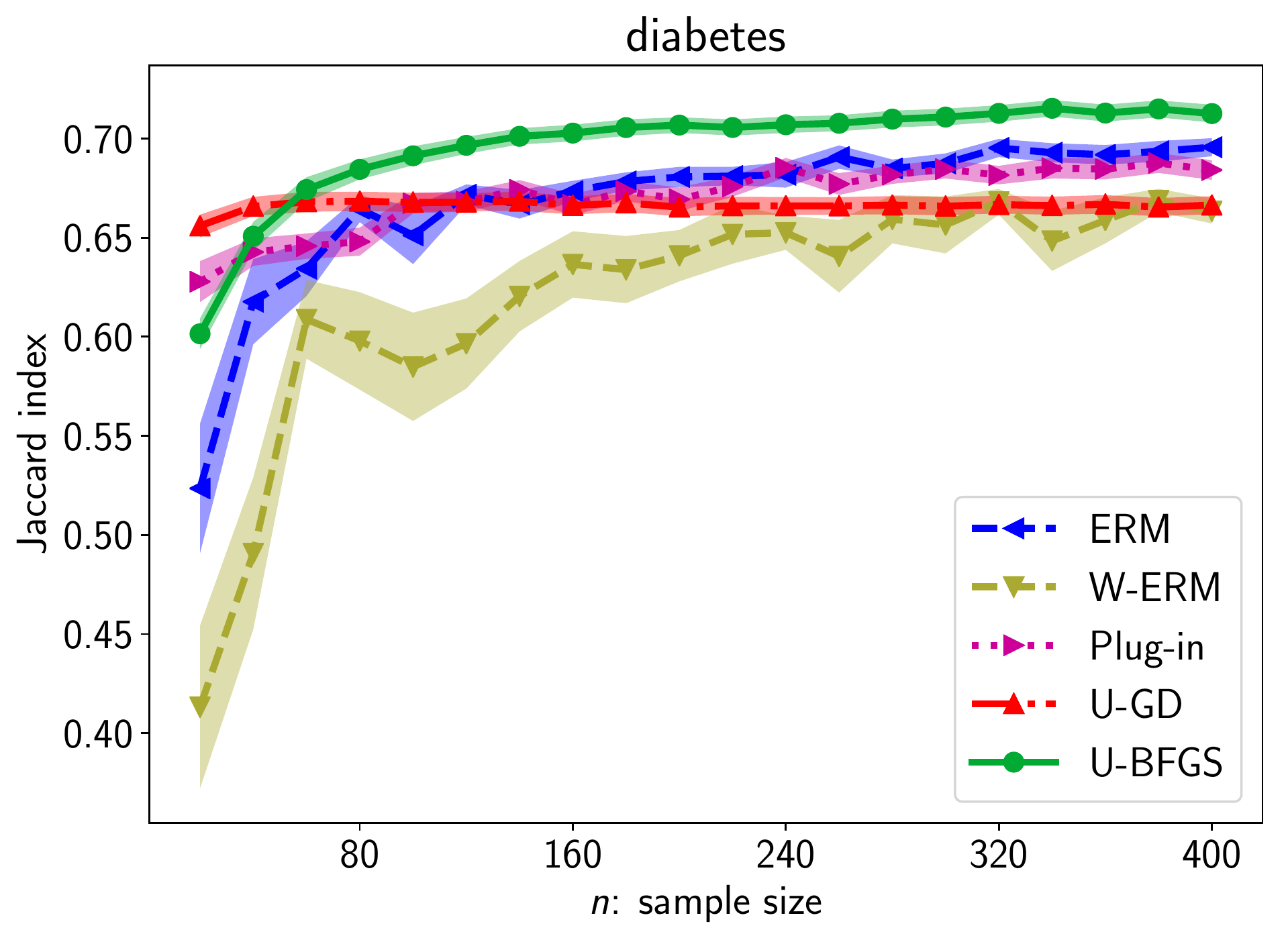}
    \end{minipage}
    \begin{minipage}{0.32\columnwidth}
        \includegraphics[width=\columnwidth]{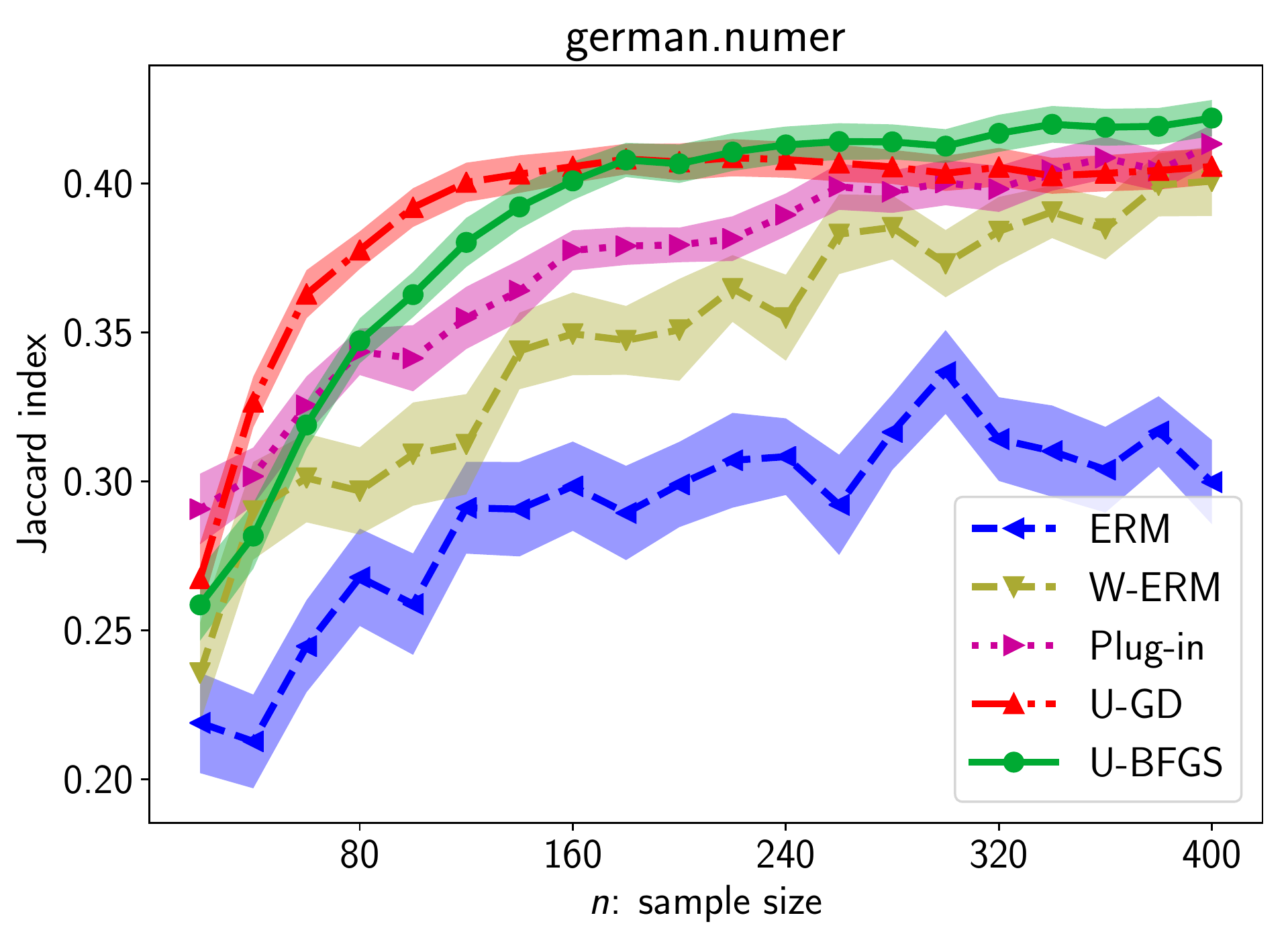}
    \end{minipage}
    \begin{minipage}{0.32\columnwidth}
        \includegraphics[width=\columnwidth]{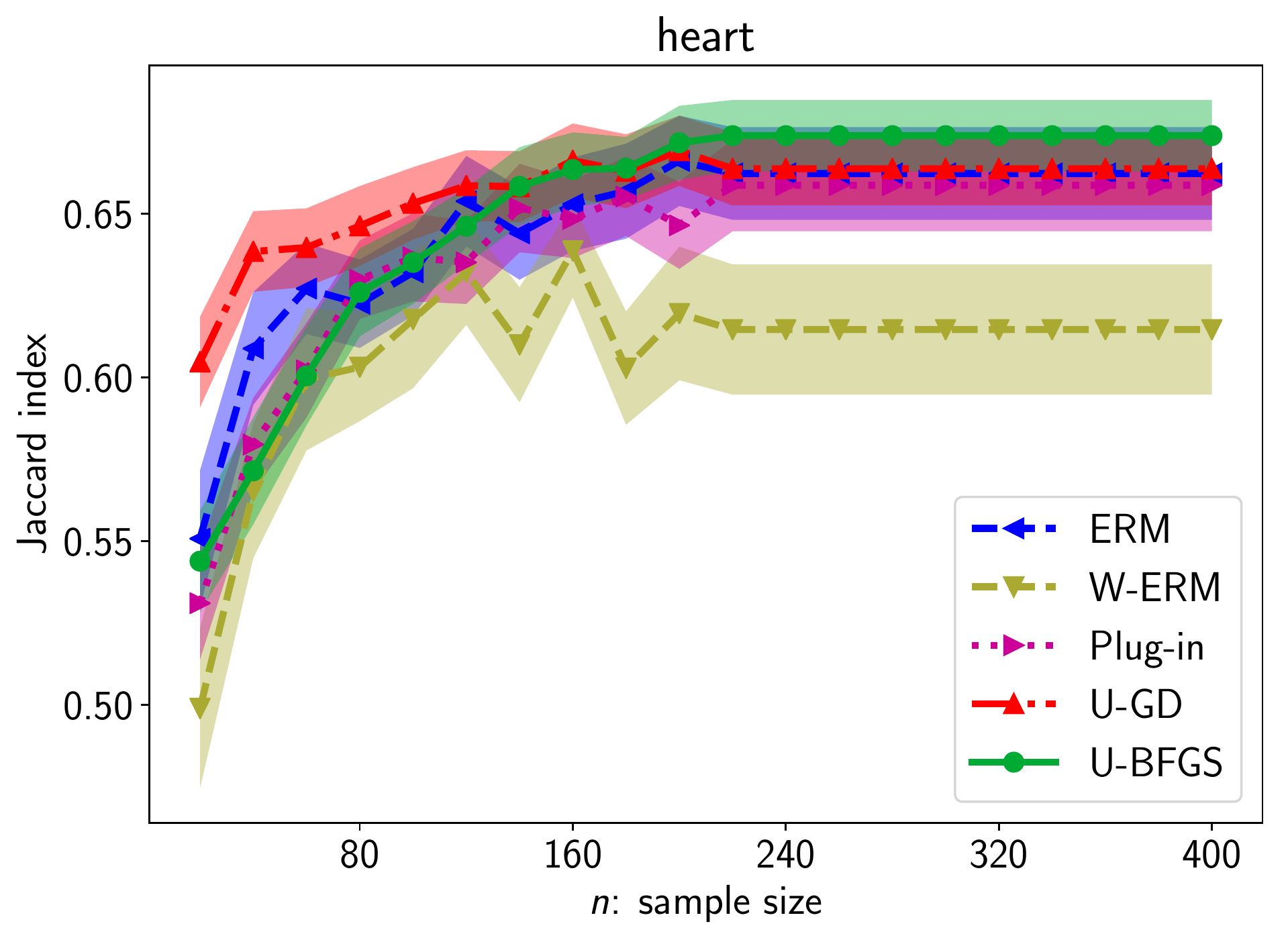}
    \end{minipage}
    \begin{minipage}{0.32\columnwidth}
        \includegraphics[width=\columnwidth]{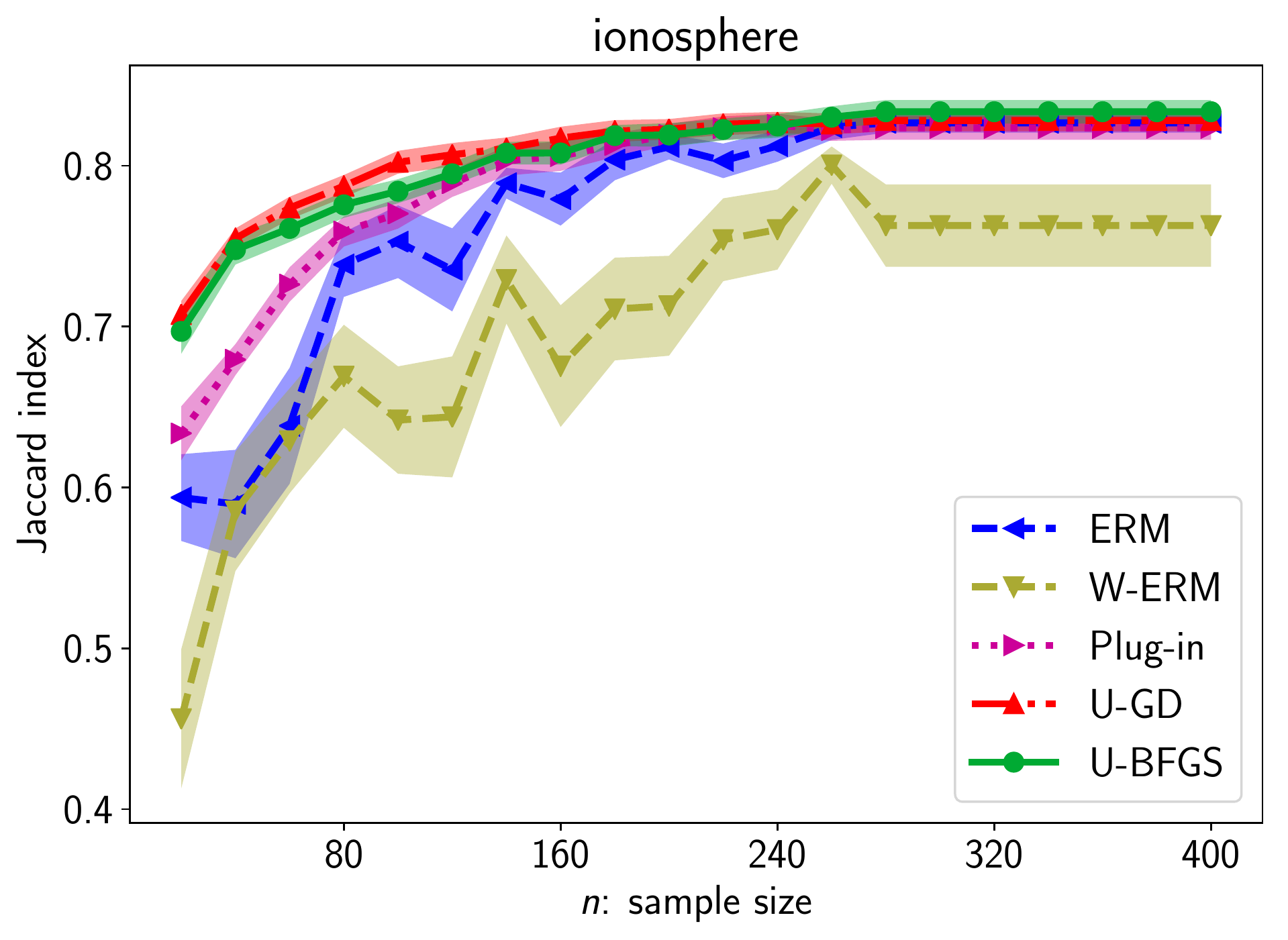}
    \end{minipage}
    \begin{minipage}{0.32\columnwidth}
        \includegraphics[width=\columnwidth]{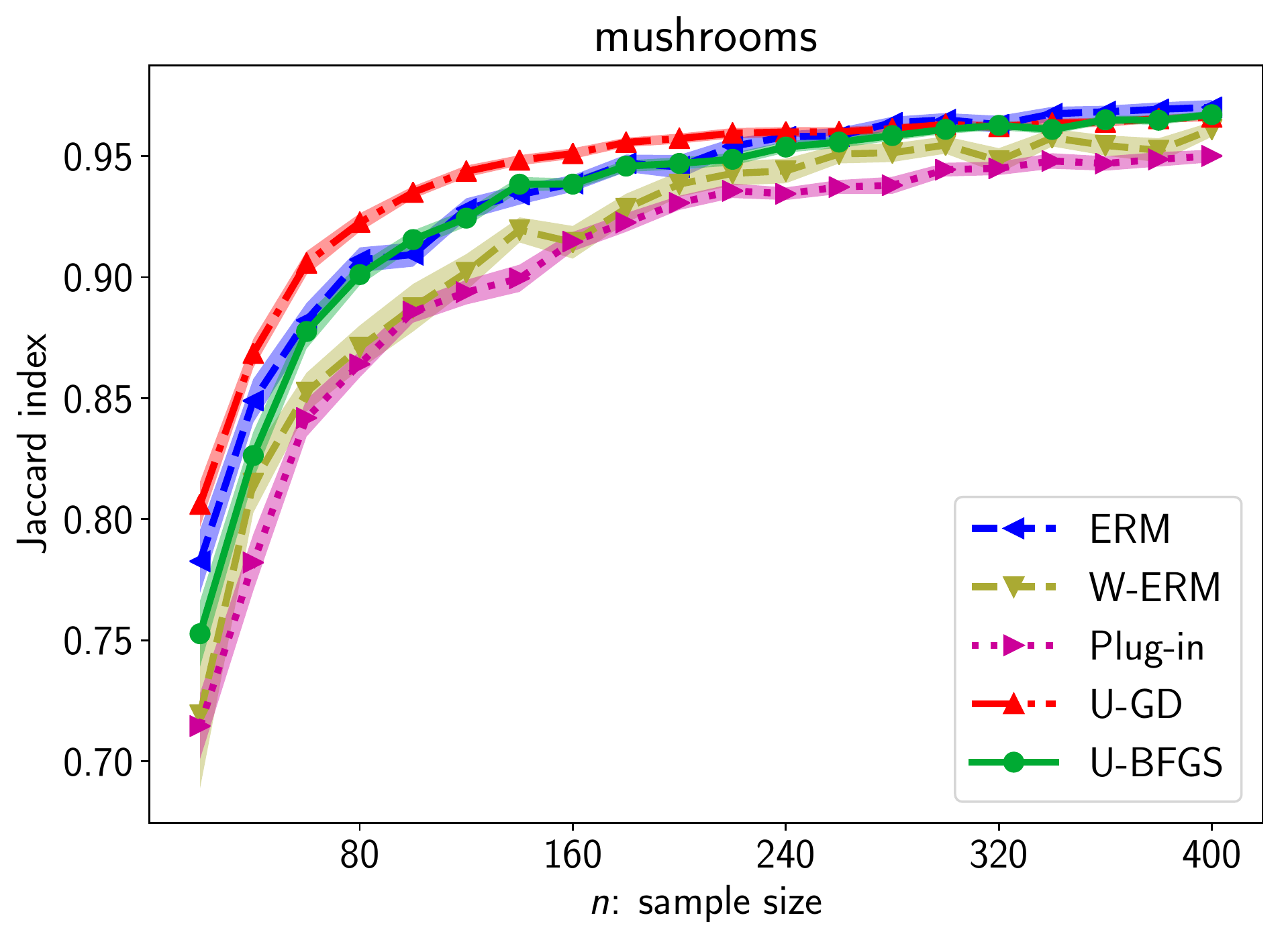}
    \end{minipage}
    \begin{minipage}{0.32\columnwidth}
        \includegraphics[width=\columnwidth]{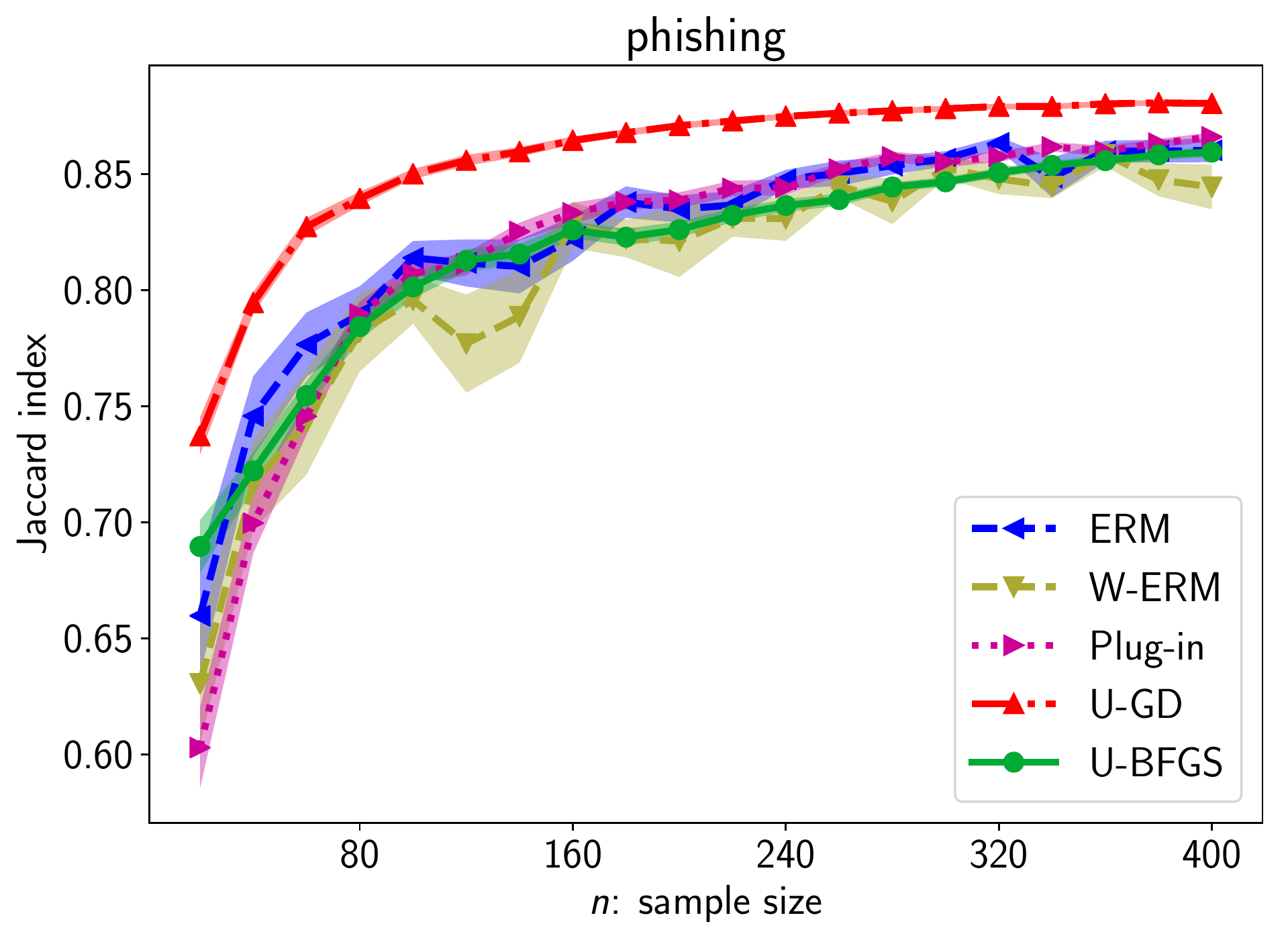}
    \end{minipage}
    \begin{minipage}{0.32\columnwidth}
        \includegraphics[width=\columnwidth]{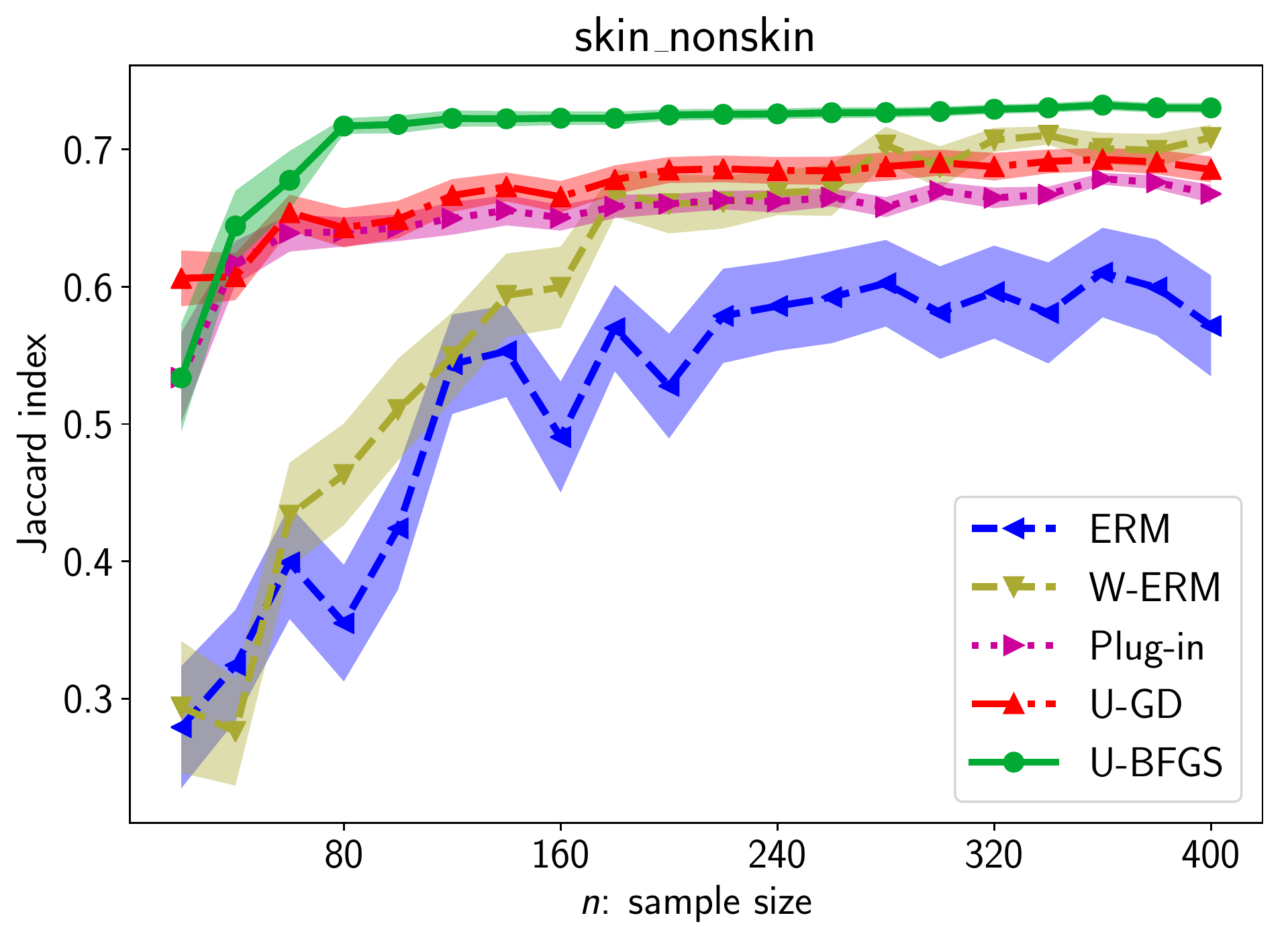}
    \end{minipage}
    \begin{minipage}{0.32\columnwidth}
        \includegraphics[width=\columnwidth]{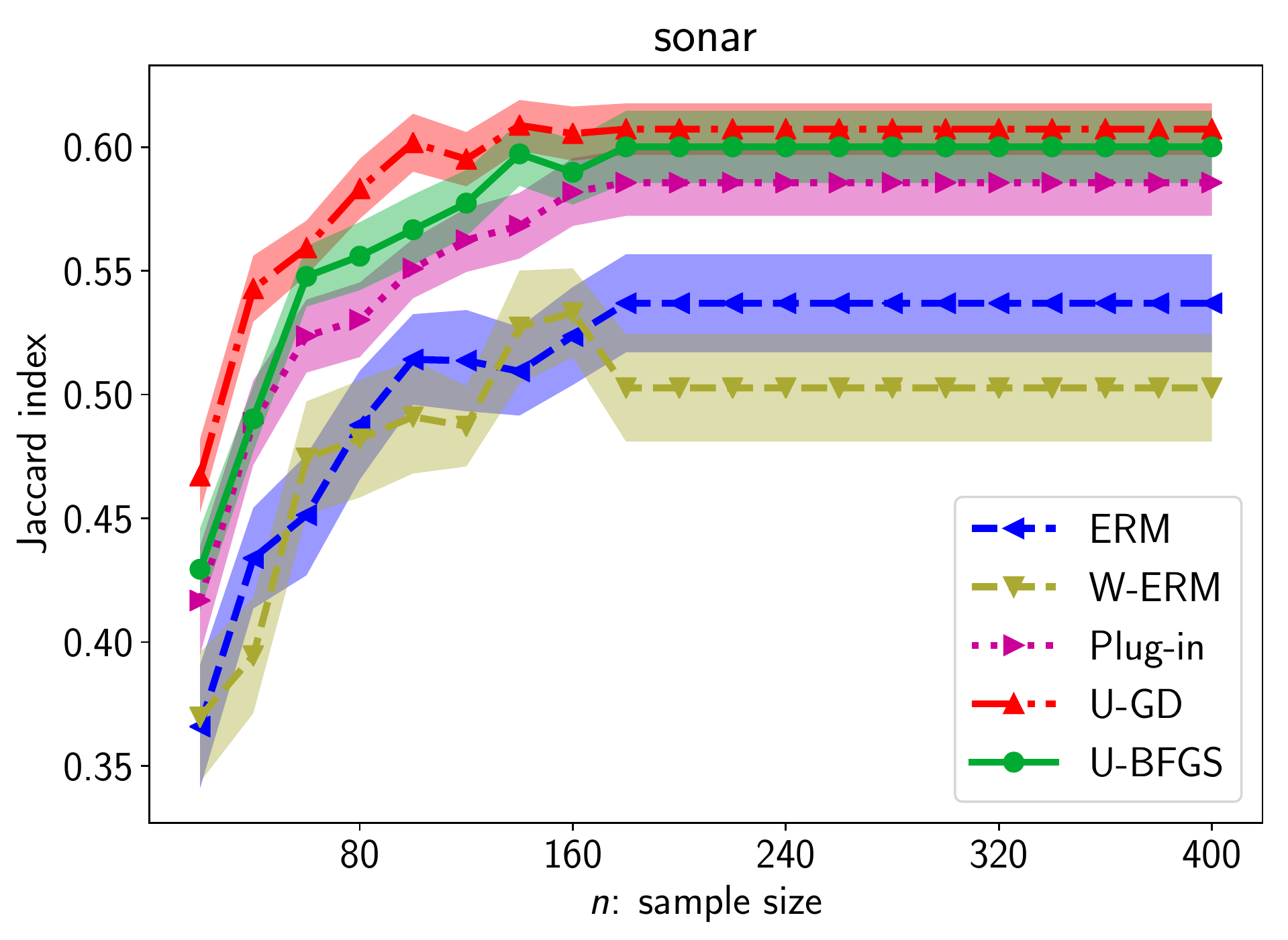}
    \end{minipage}
    \begin{minipage}{0.32\columnwidth}
        \includegraphics[width=\columnwidth]{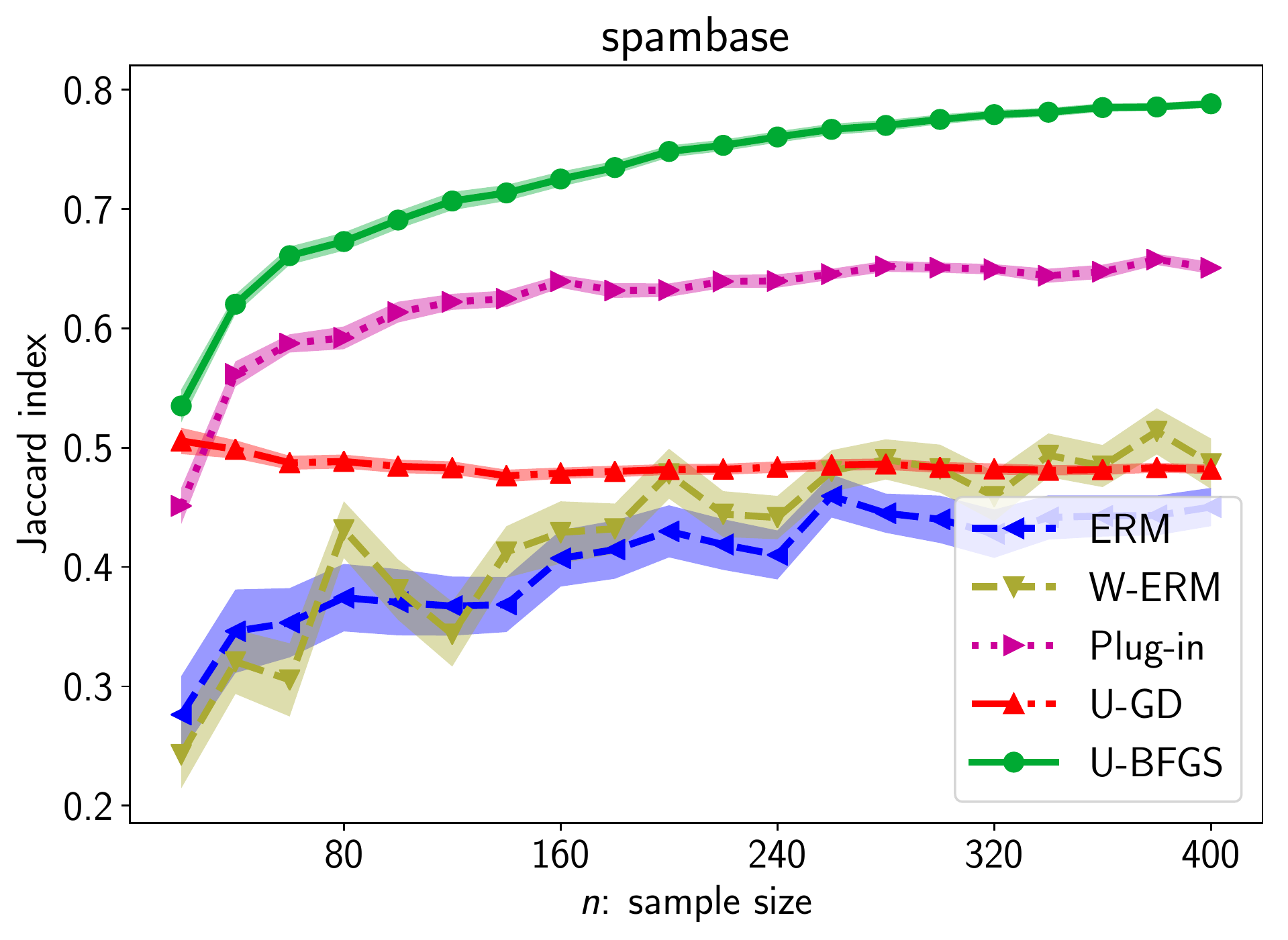}
    \end{minipage}
    \begin{minipage}{0.32\columnwidth}
        \includegraphics[width=\columnwidth]{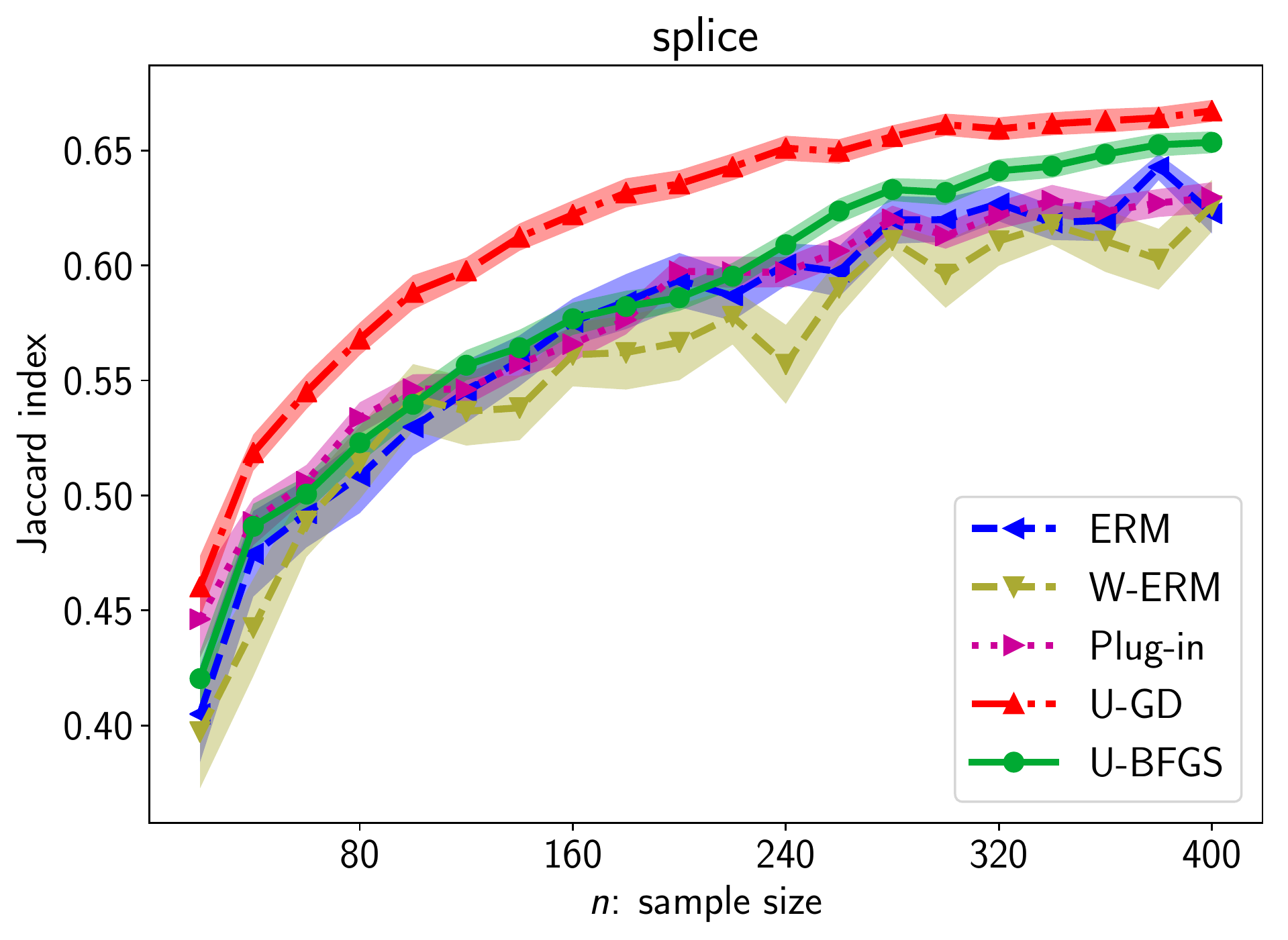}
    \end{minipage}
    \begin{minipage}{0.32\columnwidth}
        \includegraphics[width=\columnwidth]{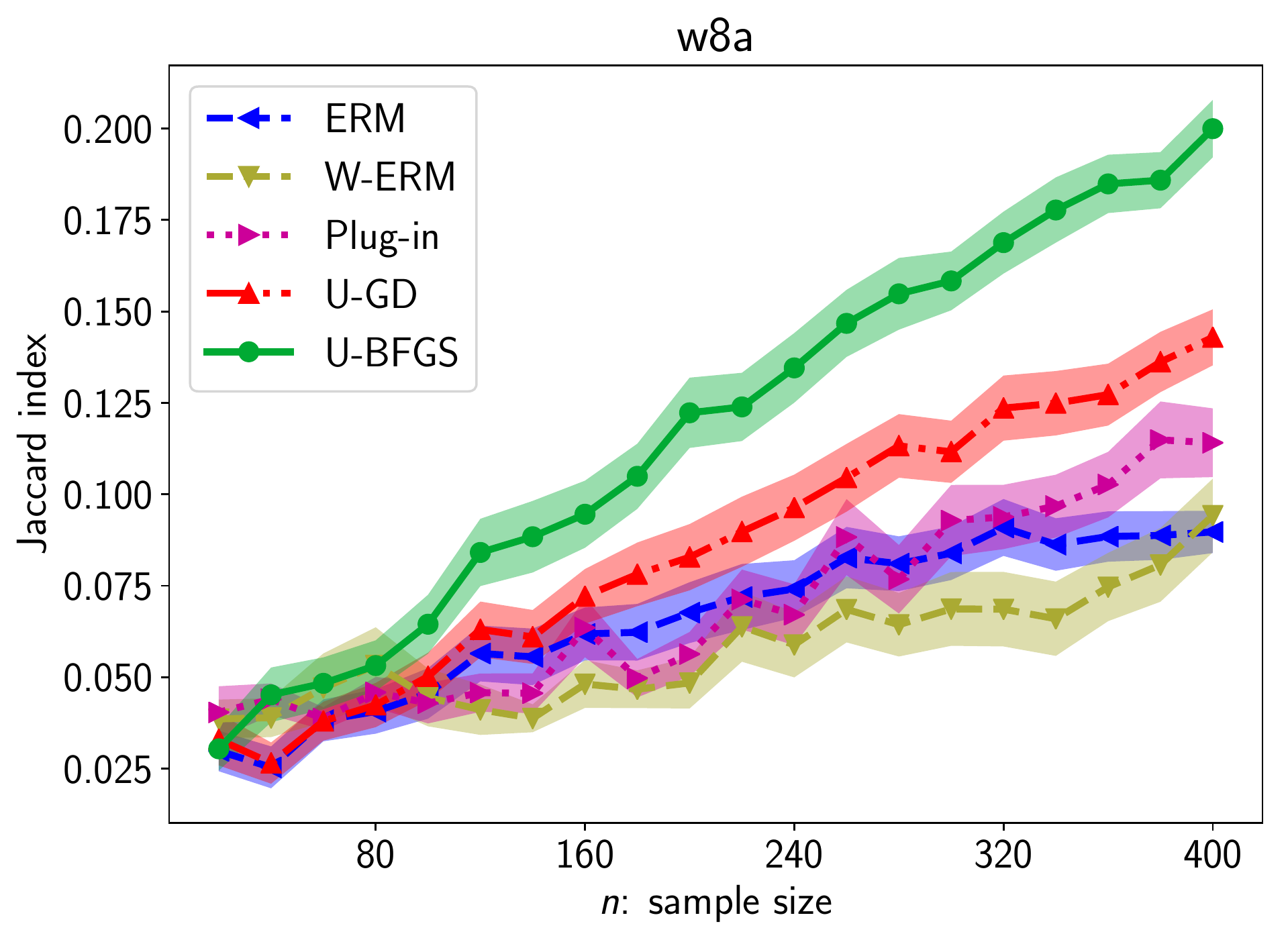}
    \end{minipage}
    \caption{
        The relationship of the test Jaccard (vertical axes) and sample size (horizontal axes).
        Standard errors of 50 trials are shown as shaded areas.
    }
    \label{fig:supp:jac-sample-complexity}
\end{figure}

\subsection{Performance Sensitivity on \texorpdfstring{$\tau$}{tau}}
\label{sec:supp:tau-sensitivity}

Lastly, we see the performance sensitivity on the choices of $\tau$.
We change $\tau \in \{0.1, 0.2, \dots, 0.9\}$ and run U-GD and U-BFGS for both the F${}_1$-measure and Jaccard index.
The results are summarized in Figures~\ref{fig:supp:f1-sensitivity} and~\ref{fig:supp:jac-sensitivity}.
From these figure, we can say there is a tendency that the performance becomes better as $\tau$ becomes closer to $1$.
For example, the below combinations of the datasets and metrics have such a tendency.
\begin{itemize}
    \item australian, breast-cancer, german.numer, heart, ionosphere, mushrooms, phishing, and splice in the F${}_\beta$-measure,
    \item australian, mushrooms, phishing, and splice in the Jaccard index.
\end{itemize}
However, there are also other cases where there exist extrema of the performance with respect to the choices of $\tau$.
For example, the below combinations of the datasets and metrics have such a tendency.
\begin{itemize}
    \item german.numer and sonar in the F${}_\beta$-measure,
    \item breast-cancer, heart, ionosphere and sonar in the Jaccard index.
\end{itemize}
From our theoretical results in Theorems~\ref{thm:f-measure-calibration} and~\ref{thm:jaccard-calibration},
we cannot determine whether the surrogate utility is calibrated or not
if $\tau$ exceeds about $0.33$ for the F${}_\beta$-measure, and becomes closer to $1.0$ for the Jaccard index.
These thresholds are not so clear in Figures~\ref{fig:supp:f1-sensitivity} and~\ref{fig:supp:jac-sensitivity} since the conditions on $\tau$ is merely sufficient conditions,
as we explain in Sec.~\ref{sec:calibration}.
Further analyses on the discrepancy parameter are left for future work.

\begin{figure}[h]
    \centering
    \begin{minipage}{0.32\columnwidth}
        \includegraphics[width=\columnwidth]{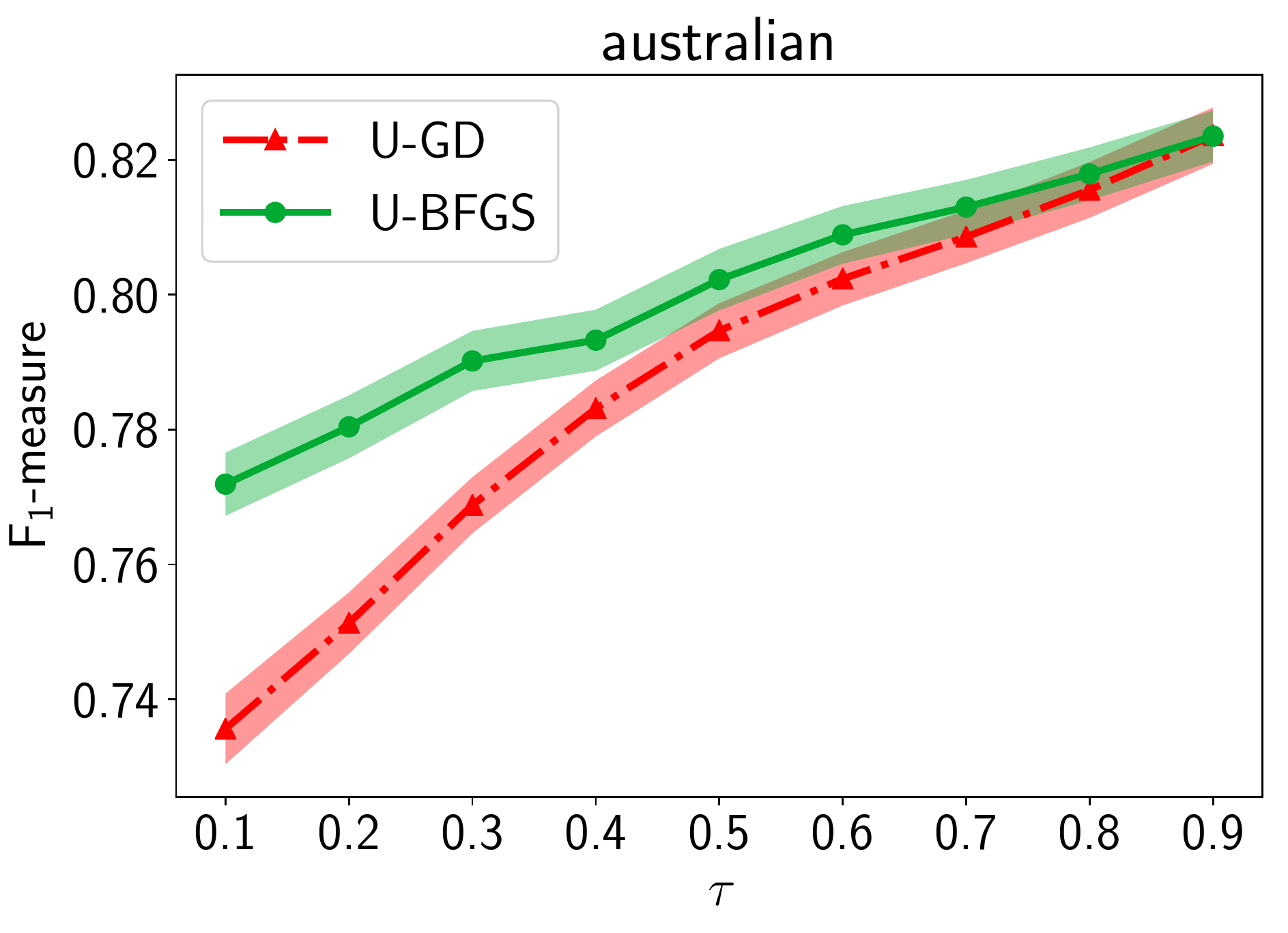}
    \end{minipage}
    \begin{minipage}{0.32\columnwidth}
        \includegraphics[width=\columnwidth]{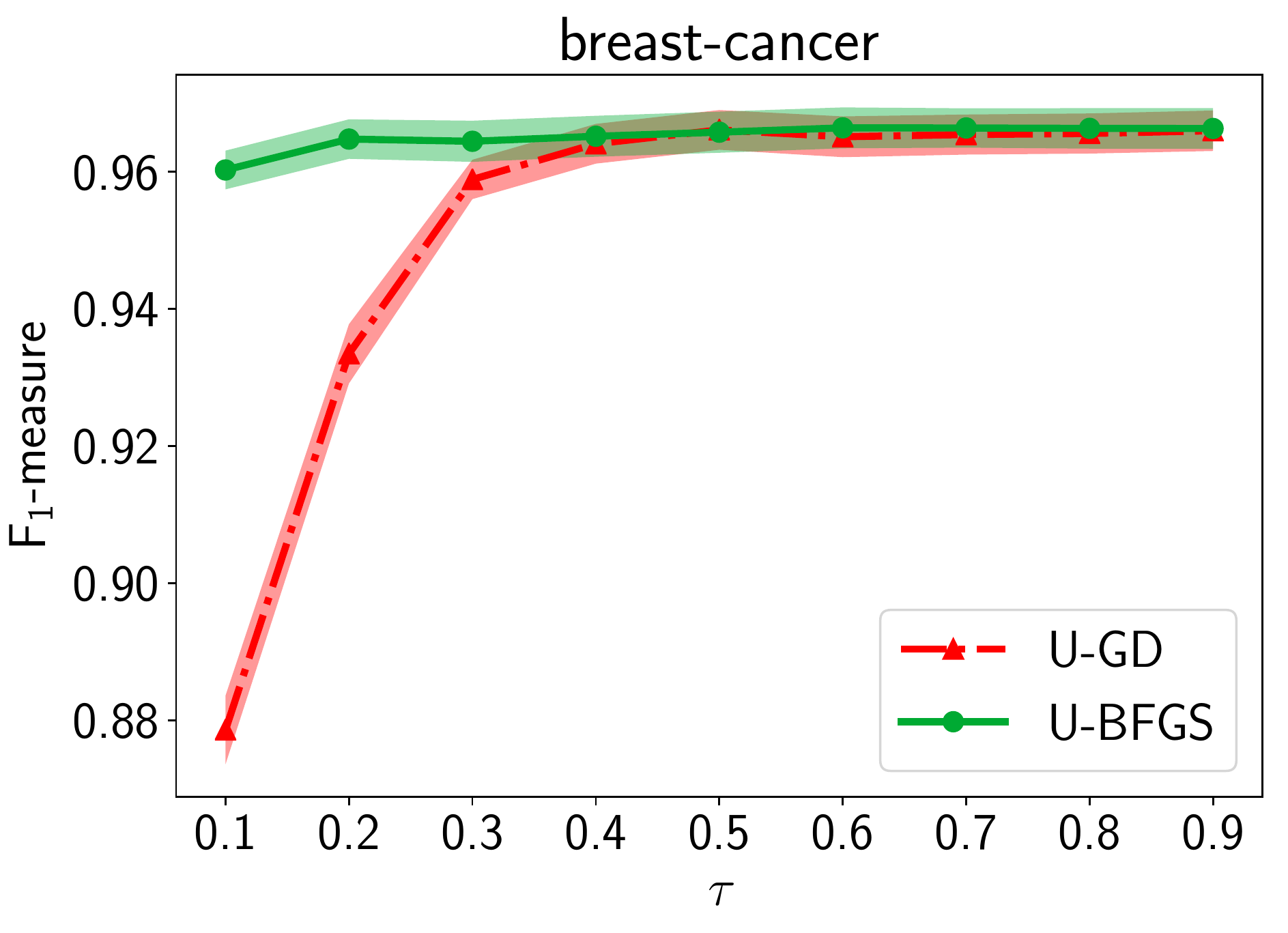}
    \end{minipage}
    \begin{minipage}{0.32\columnwidth}
        \includegraphics[width=\columnwidth]{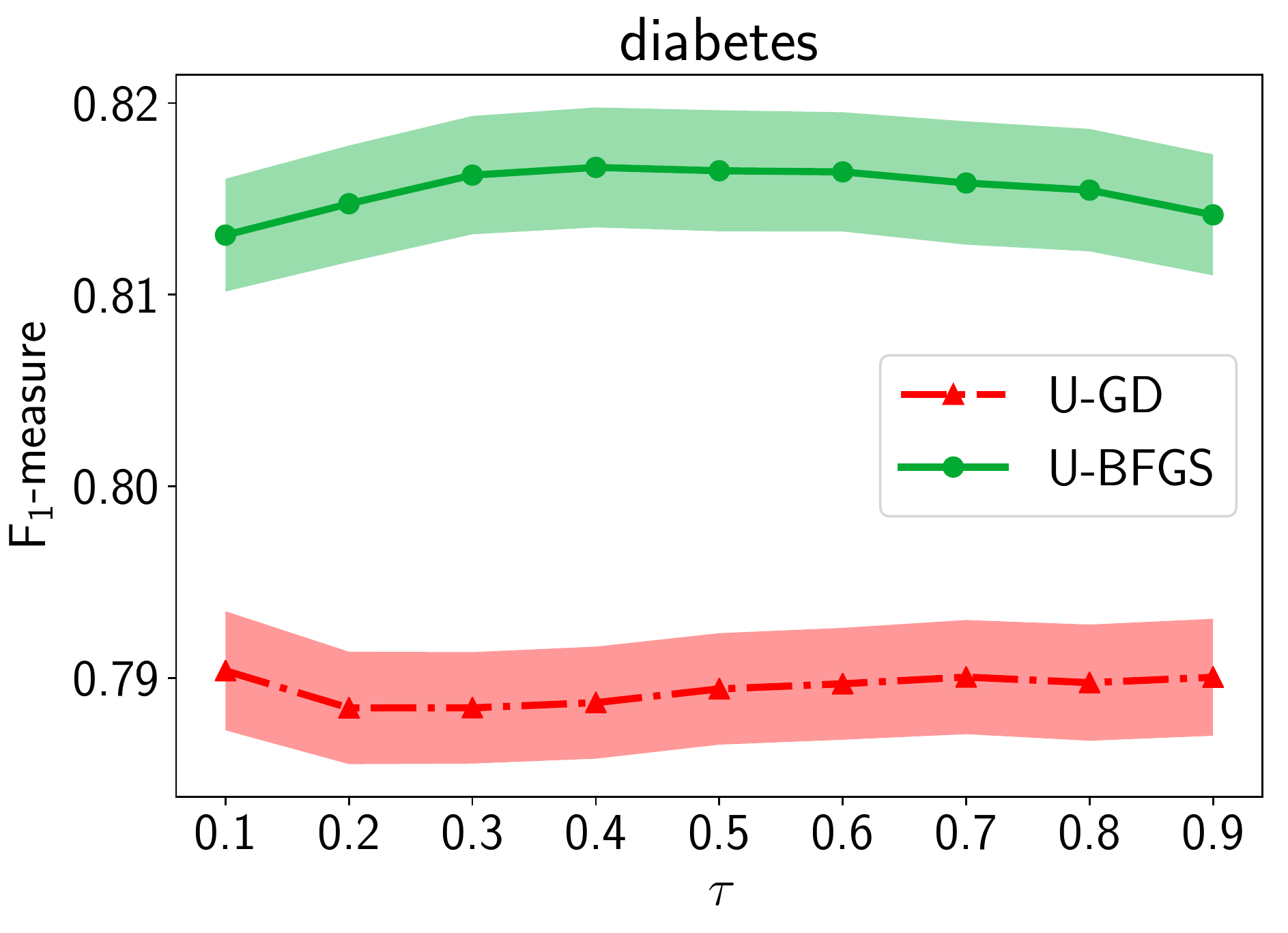}
    \end{minipage}
    \begin{minipage}{0.32\columnwidth}
        \includegraphics[width=\columnwidth]{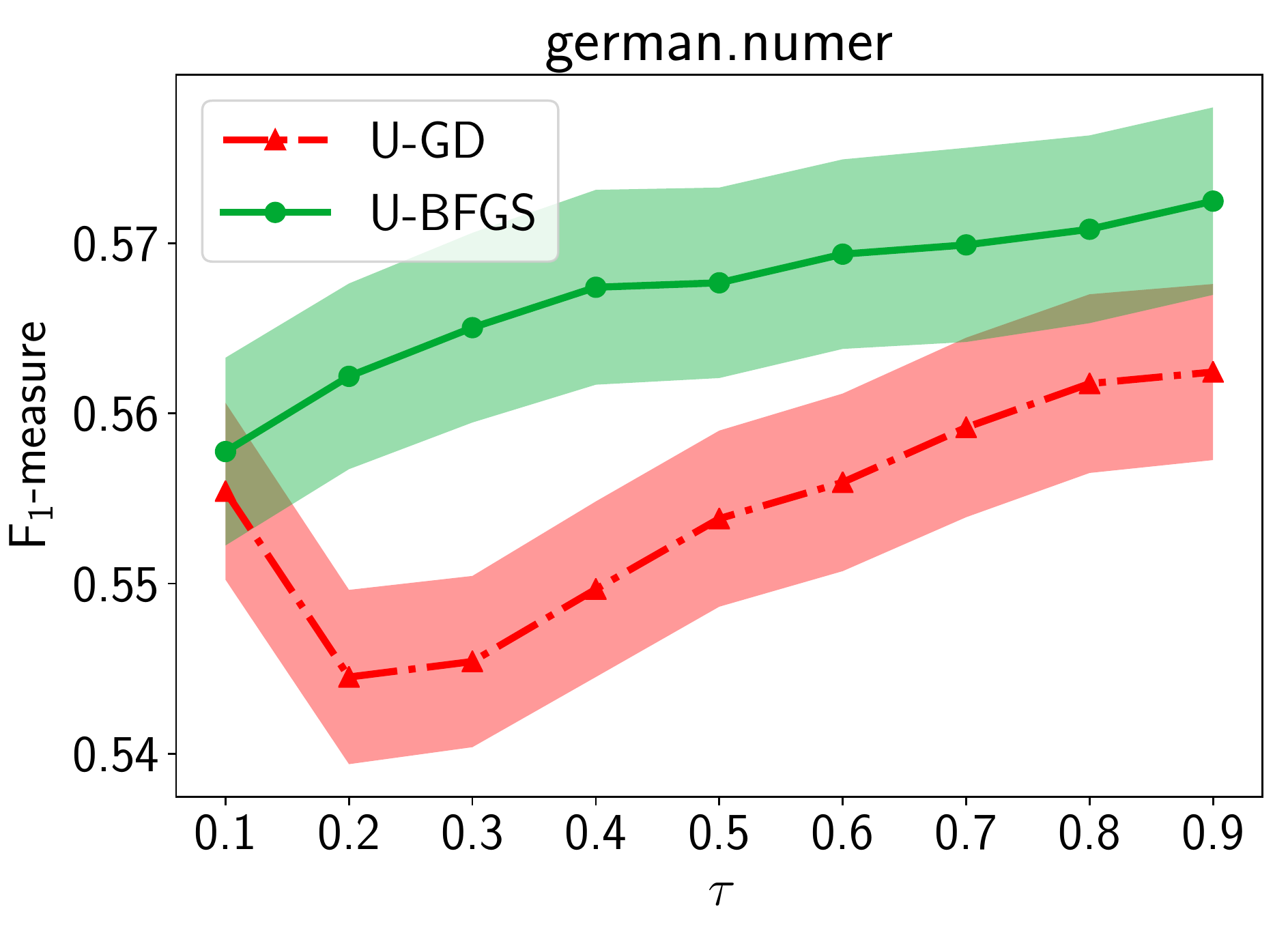}
    \end{minipage}
    \begin{minipage}{0.32\columnwidth}
        \includegraphics[width=\columnwidth]{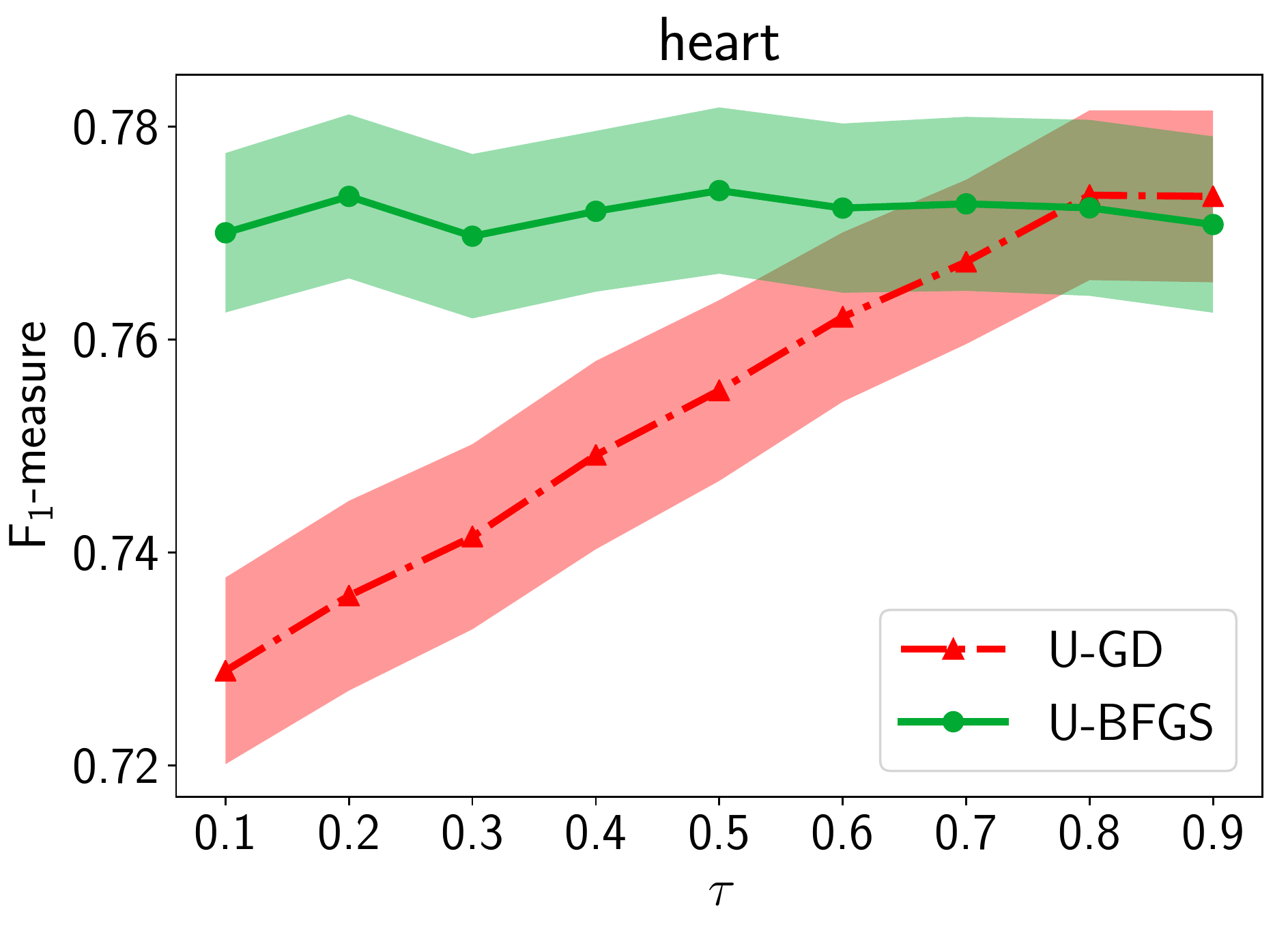}
    \end{minipage}
    \begin{minipage}{0.32\columnwidth}
        \includegraphics[width=\columnwidth]{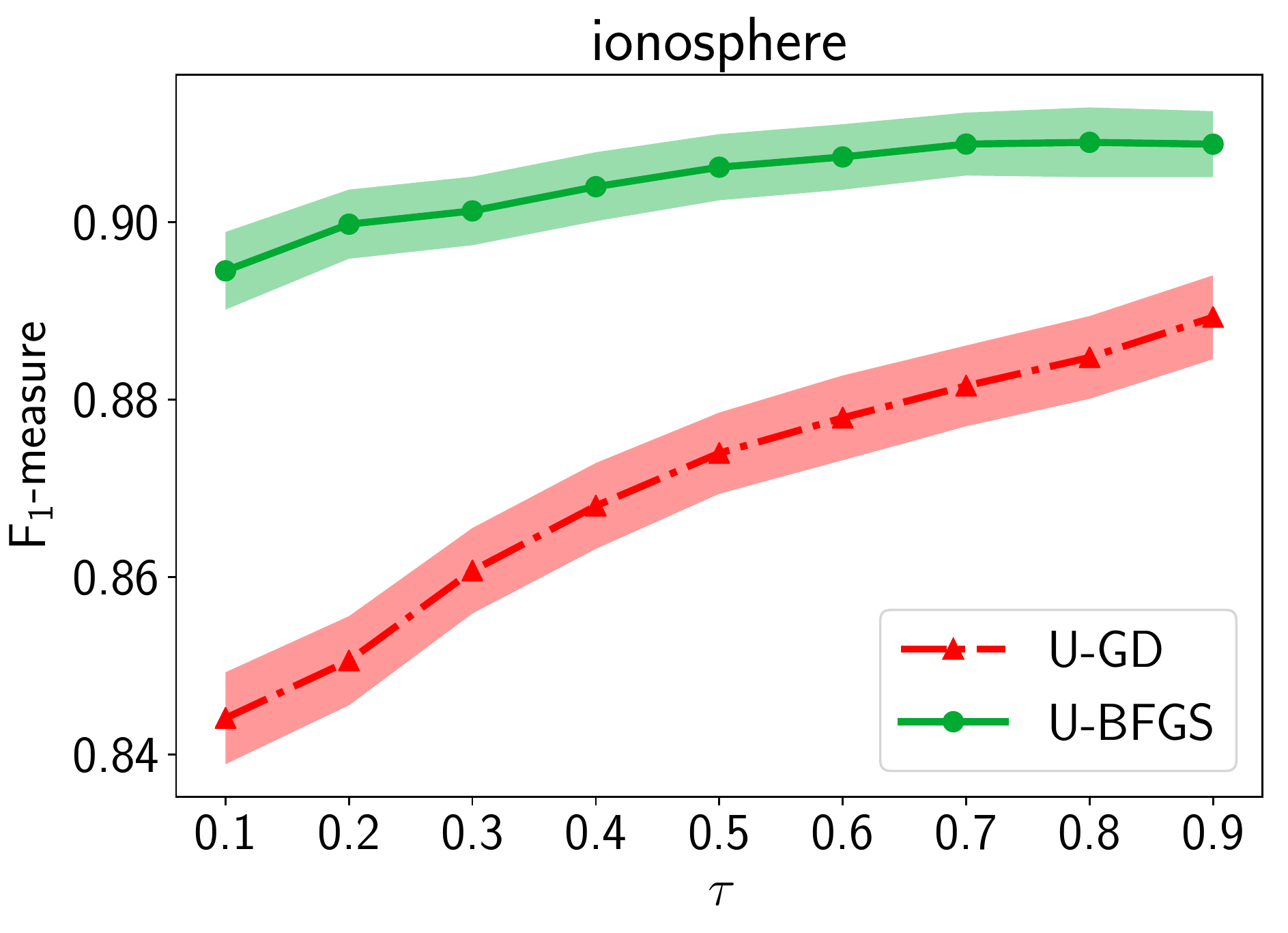}
    \end{minipage}
    \begin{minipage}{0.32\columnwidth}
        \includegraphics[width=\columnwidth]{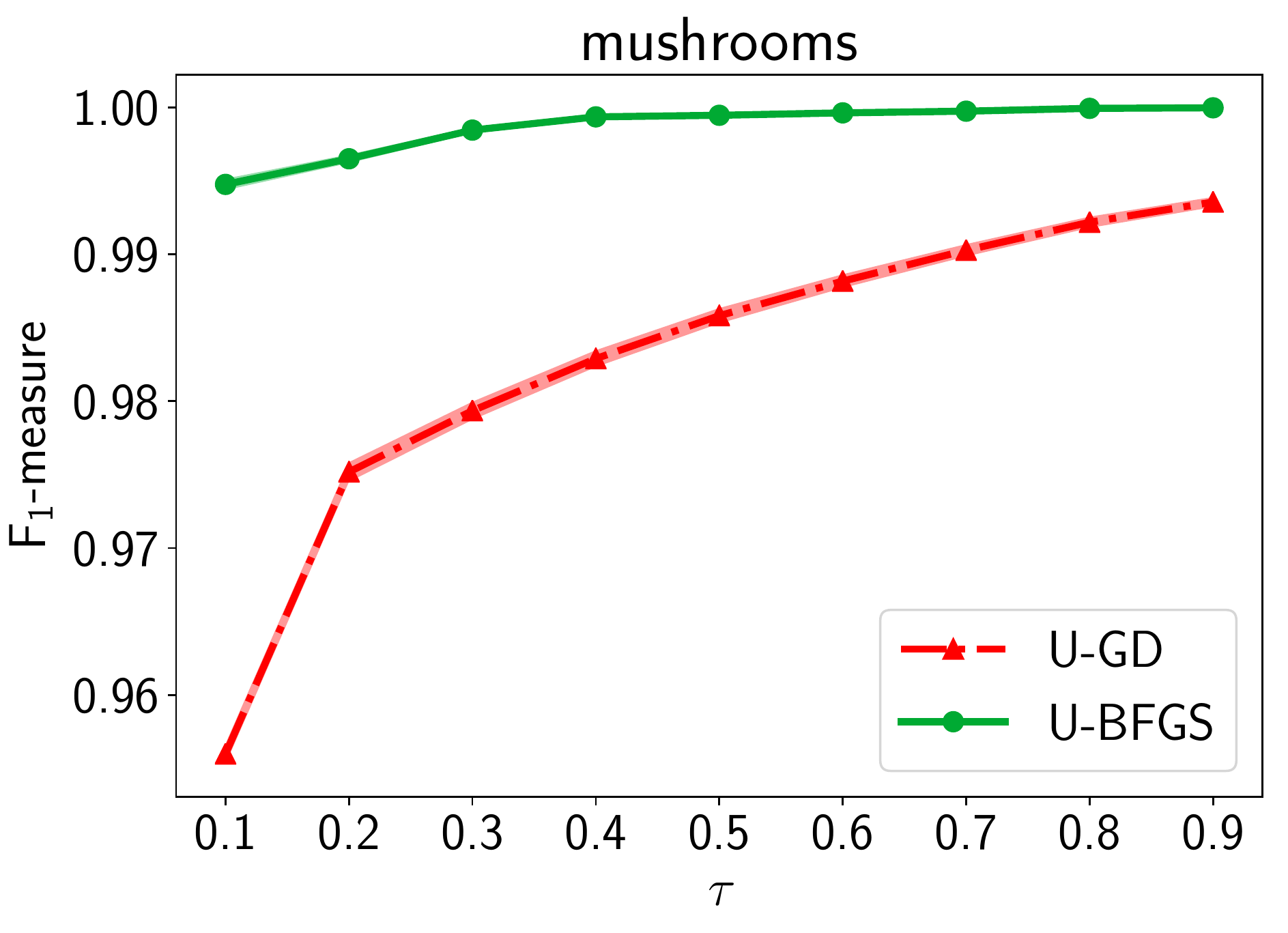}
    \end{minipage}
    \begin{minipage}{0.32\columnwidth}
        \includegraphics[width=\columnwidth]{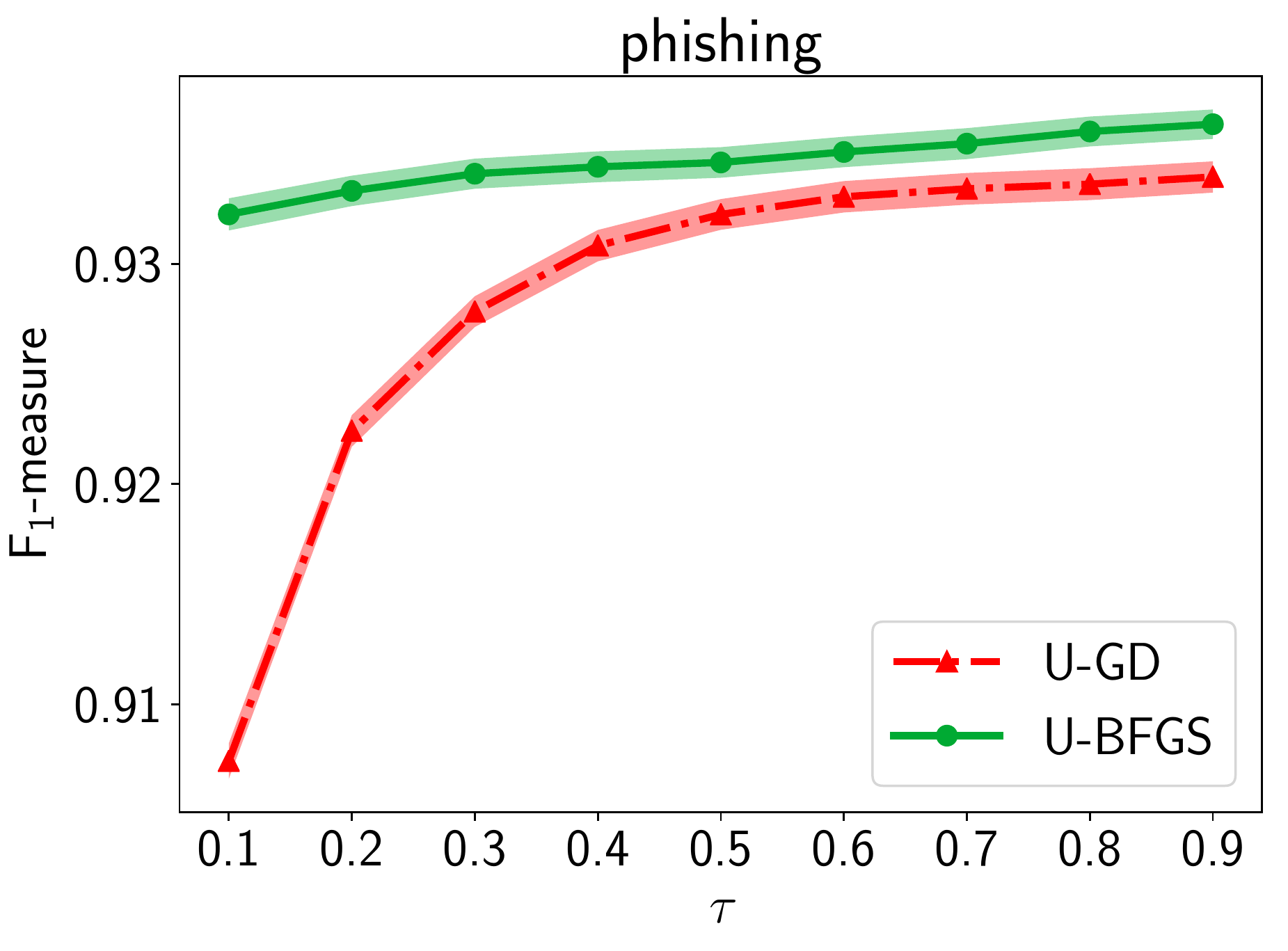}
    \end{minipage}
    \begin{minipage}{0.32\columnwidth}
        \includegraphics[width=\columnwidth]{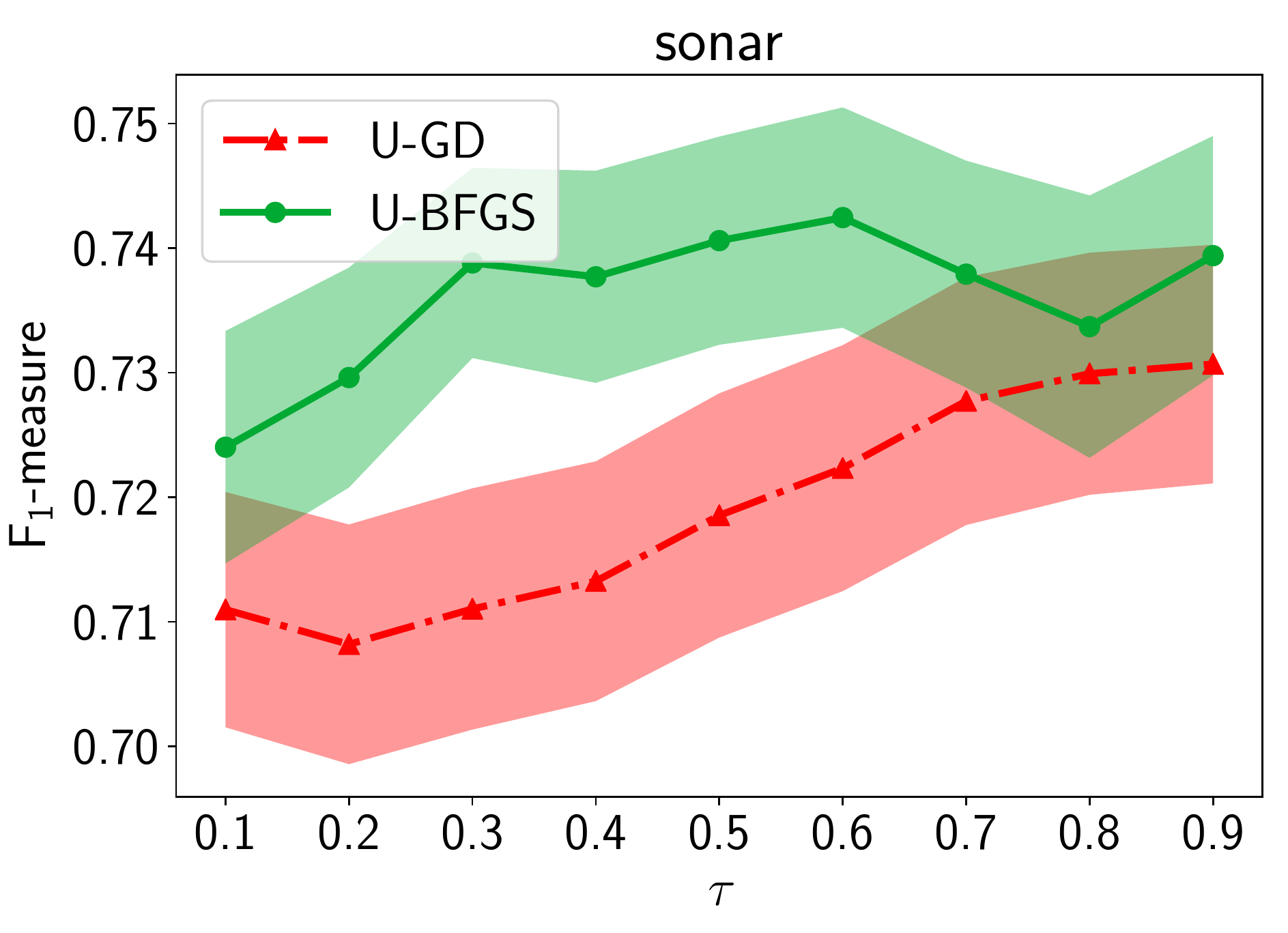}
    \end{minipage}
    \begin{minipage}{0.32\columnwidth}
        \includegraphics[width=\columnwidth]{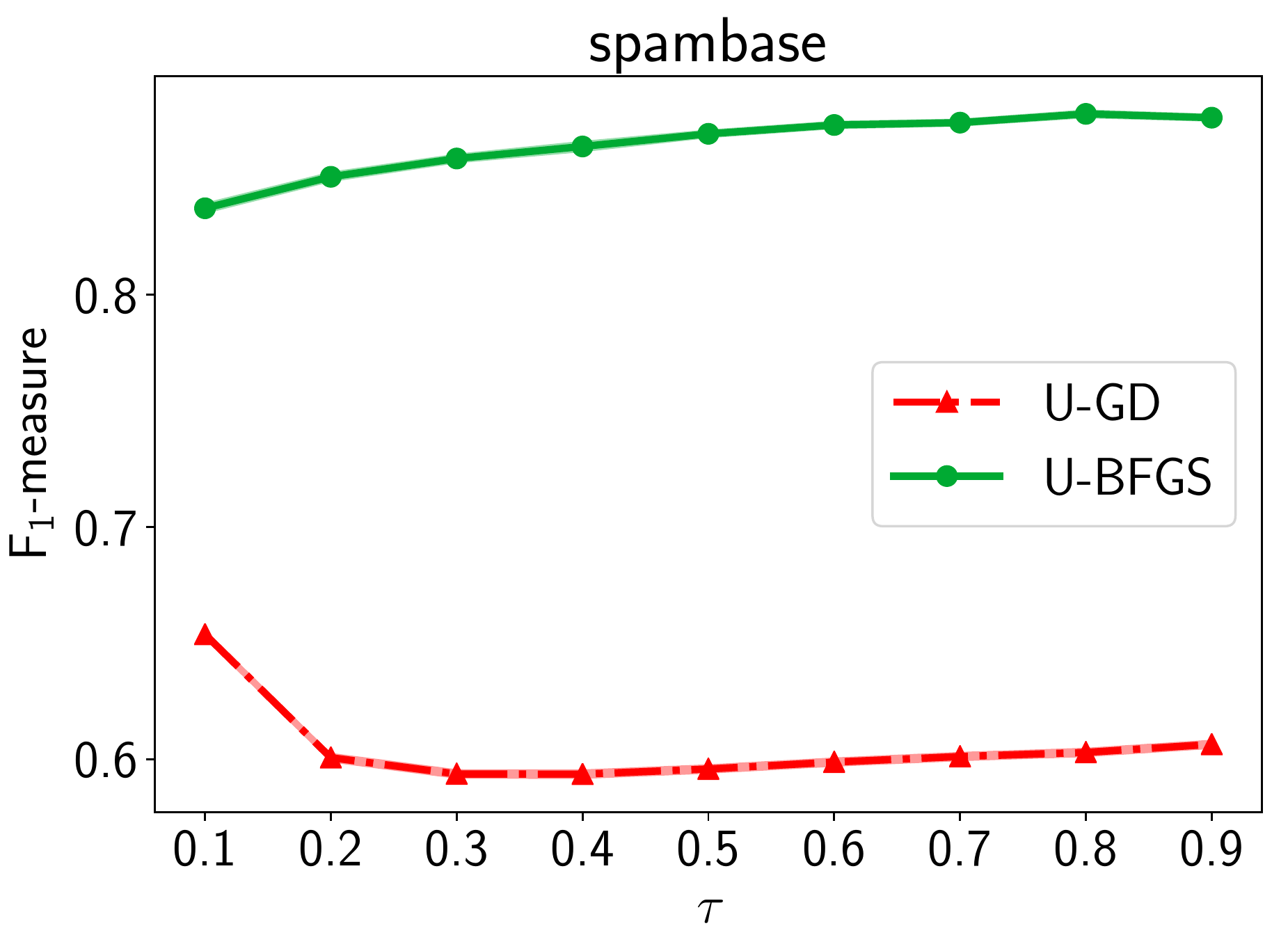}
    \end{minipage}
    \begin{minipage}{0.32\columnwidth}
        \includegraphics[width=\columnwidth]{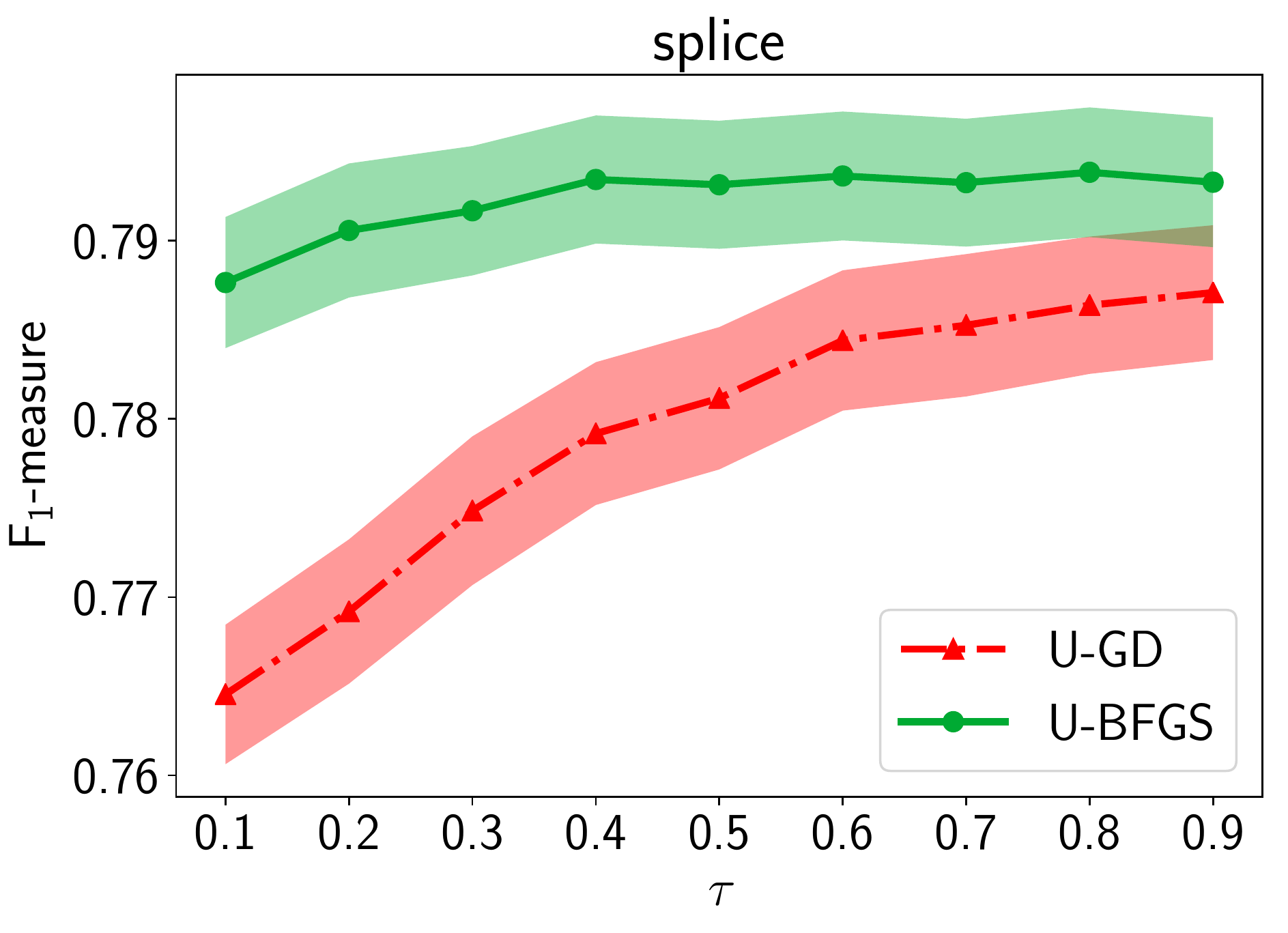}
    \end{minipage}
    \caption{
        The relationship of the test F${}_1$-measure (vertical axes) and the choices of $\tau$ (horizontal axes).
        Standard errors of 50 trials are shown as shaded areas.
    }
    \label{fig:supp:f1-sensitivity}
\end{figure}

\begin{figure}[h]
    \centering
    \begin{minipage}{0.32\columnwidth}
        \includegraphics[width=\columnwidth]{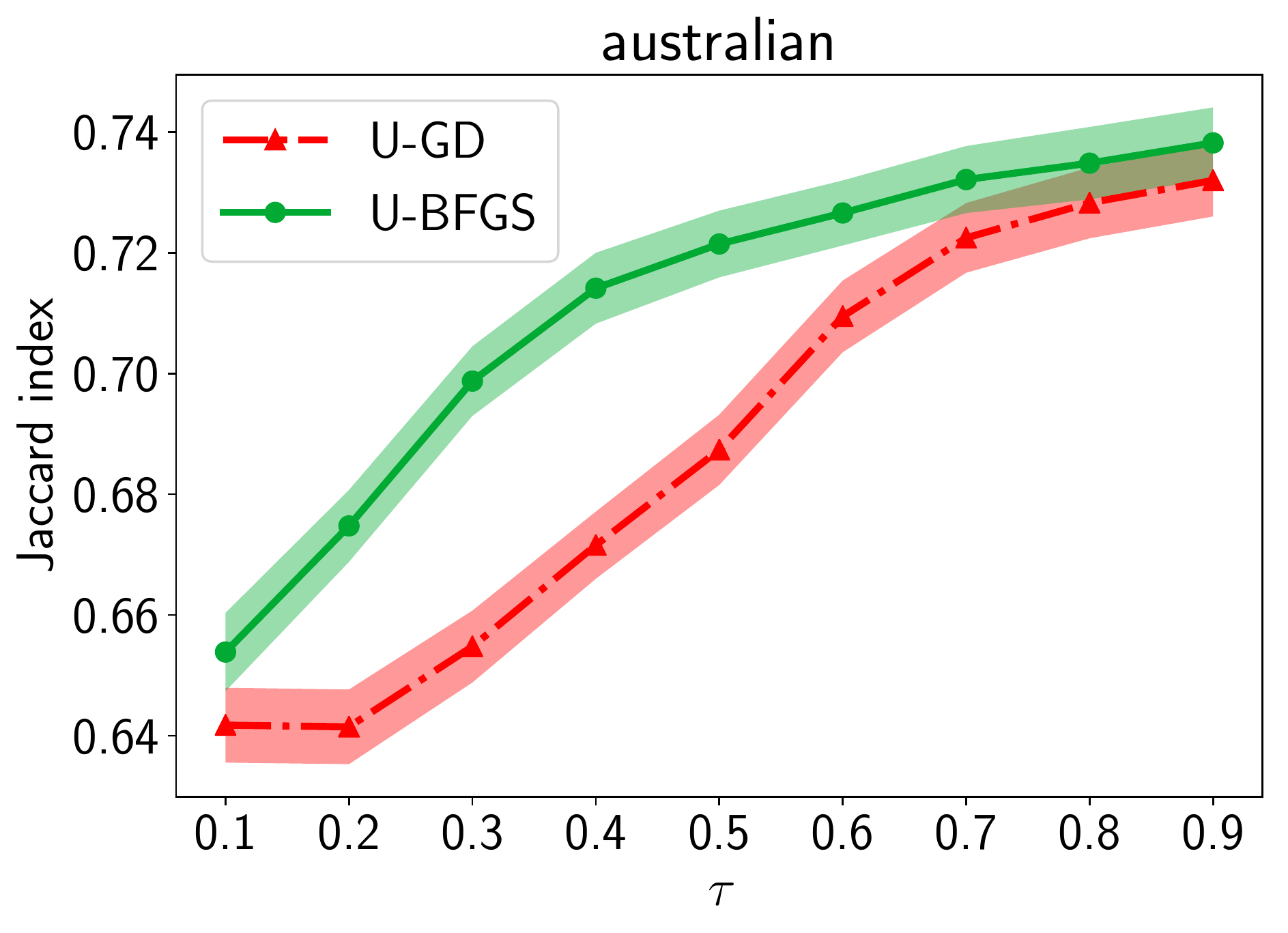}
    \end{minipage}
    \begin{minipage}{0.32\columnwidth}
        \includegraphics[width=\columnwidth]{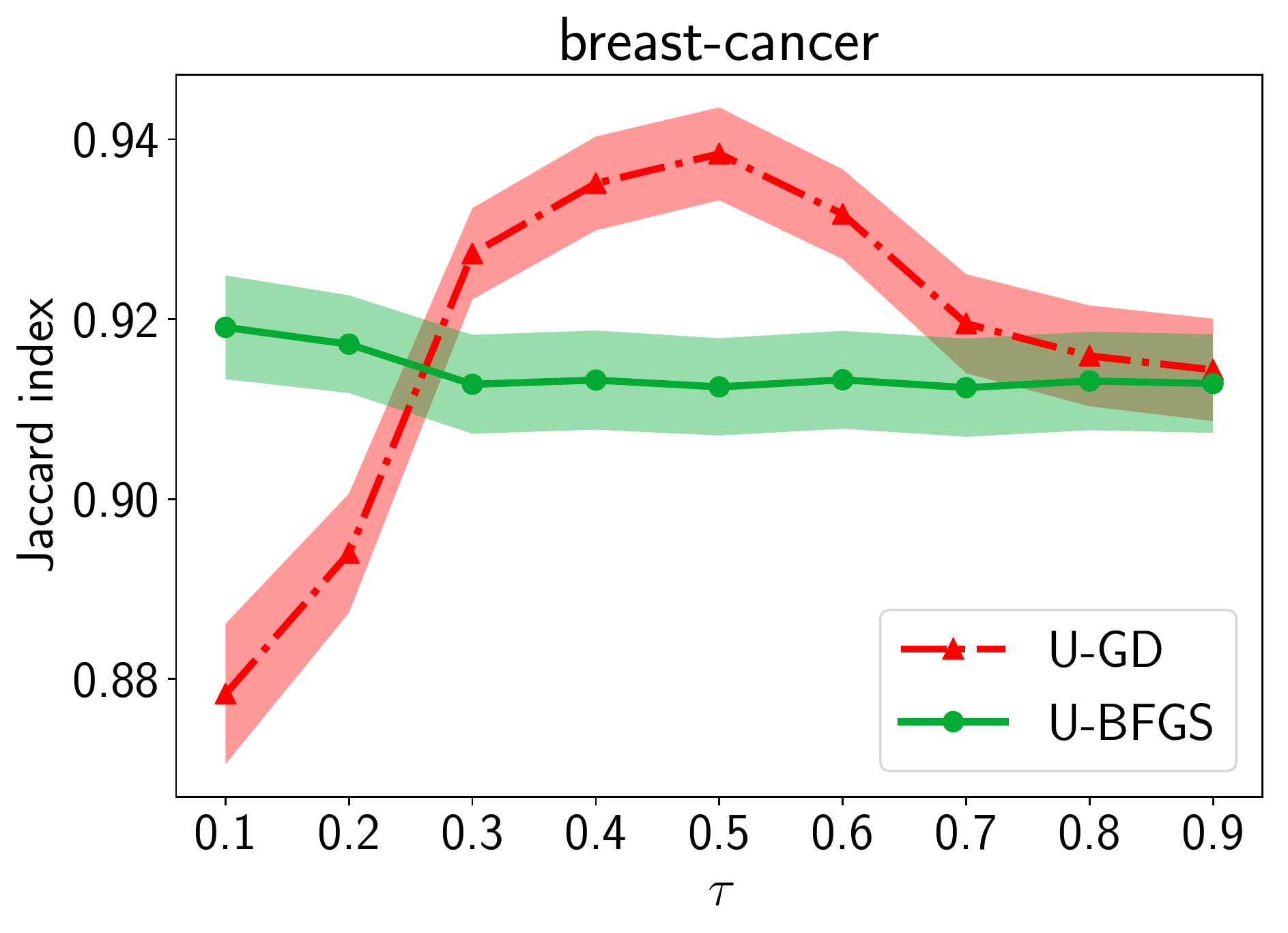}
    \end{minipage}
    \begin{minipage}{0.32\columnwidth}
        \includegraphics[width=\columnwidth]{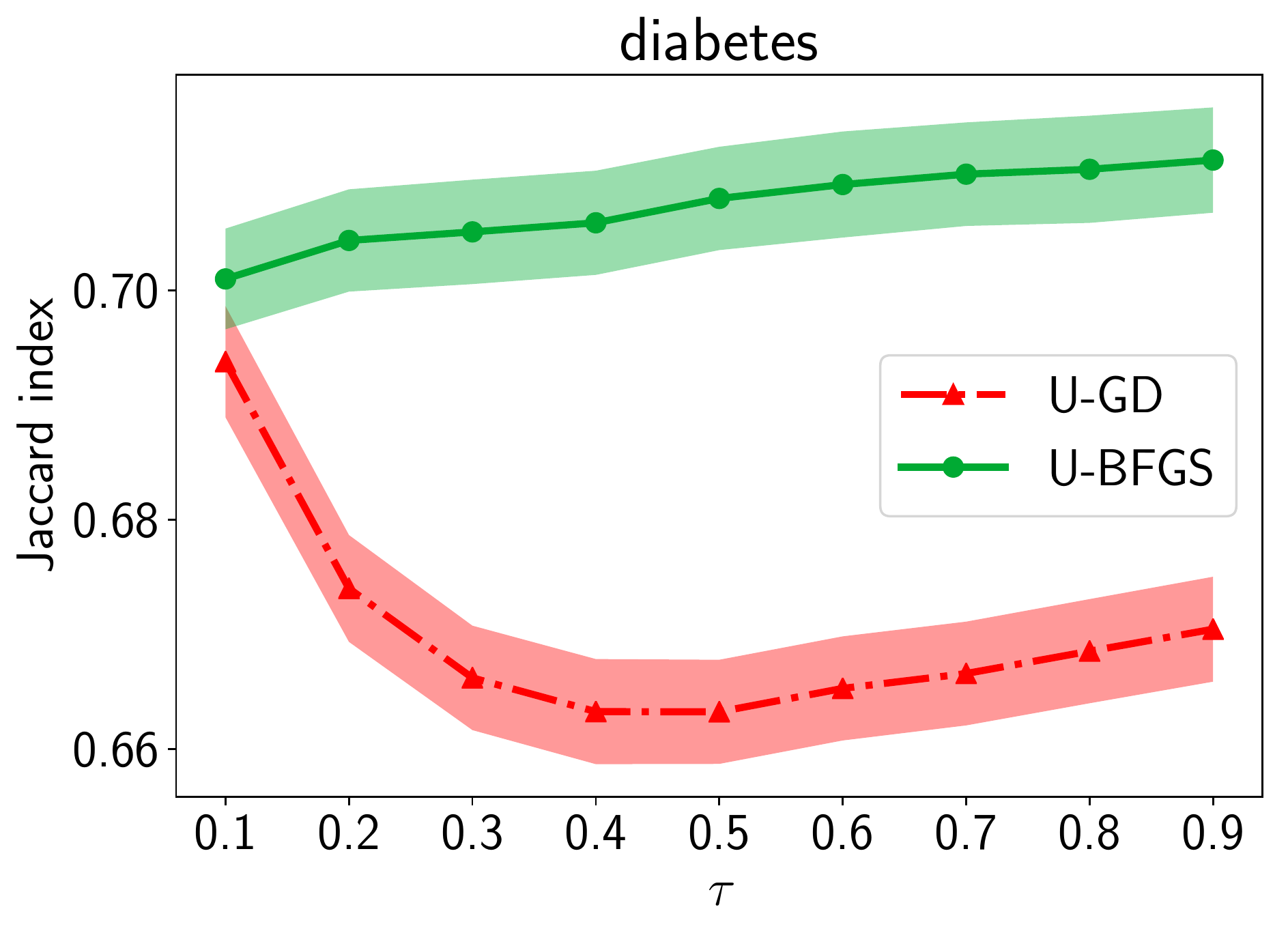}
    \end{minipage}
    \begin{minipage}{0.32\columnwidth}
        \includegraphics[width=\columnwidth]{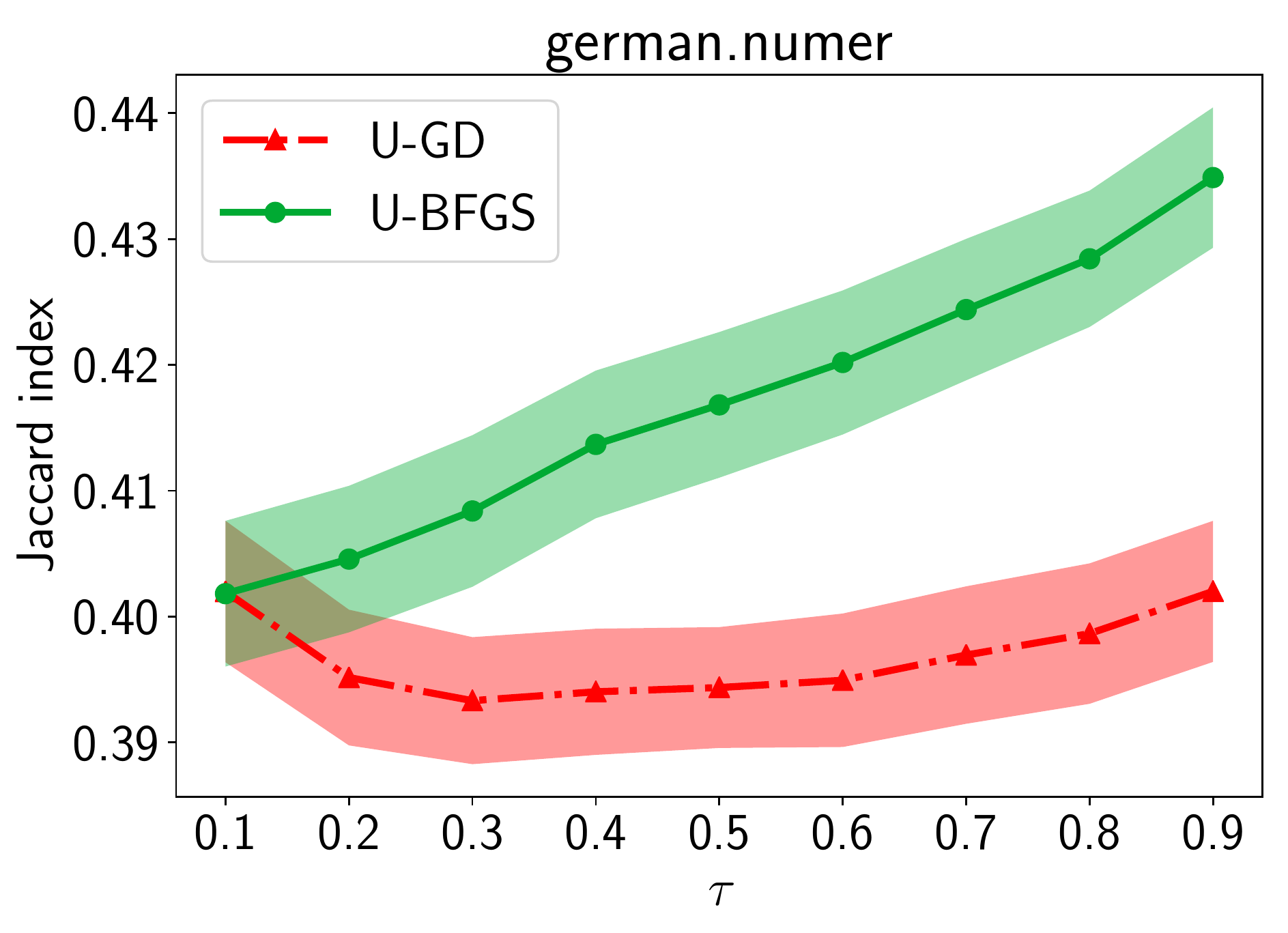}
    \end{minipage}
    \begin{minipage}{0.32\columnwidth}
        \includegraphics[width=\columnwidth]{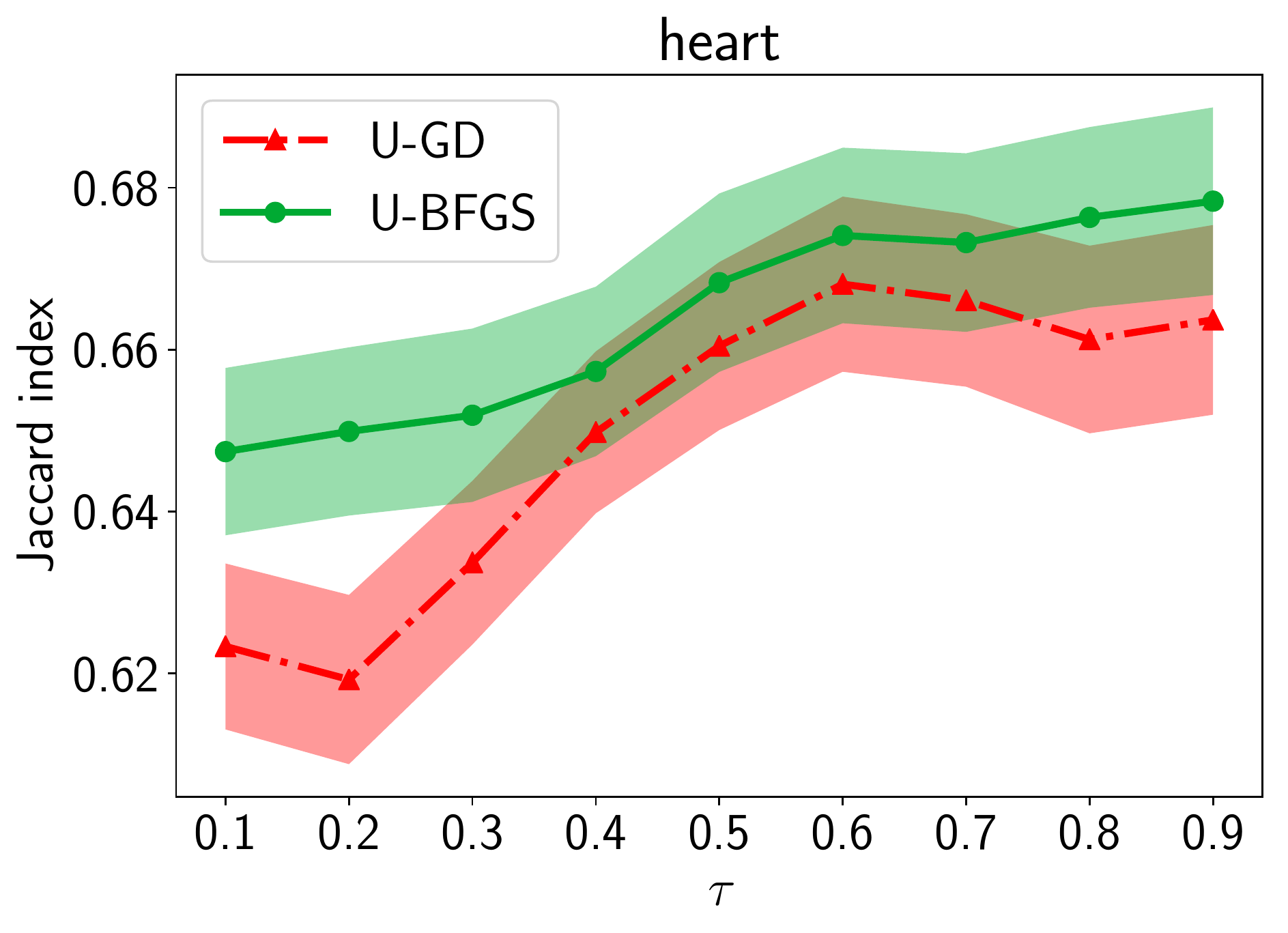}
    \end{minipage}
    \begin{minipage}{0.32\columnwidth}
        \includegraphics[width=\columnwidth]{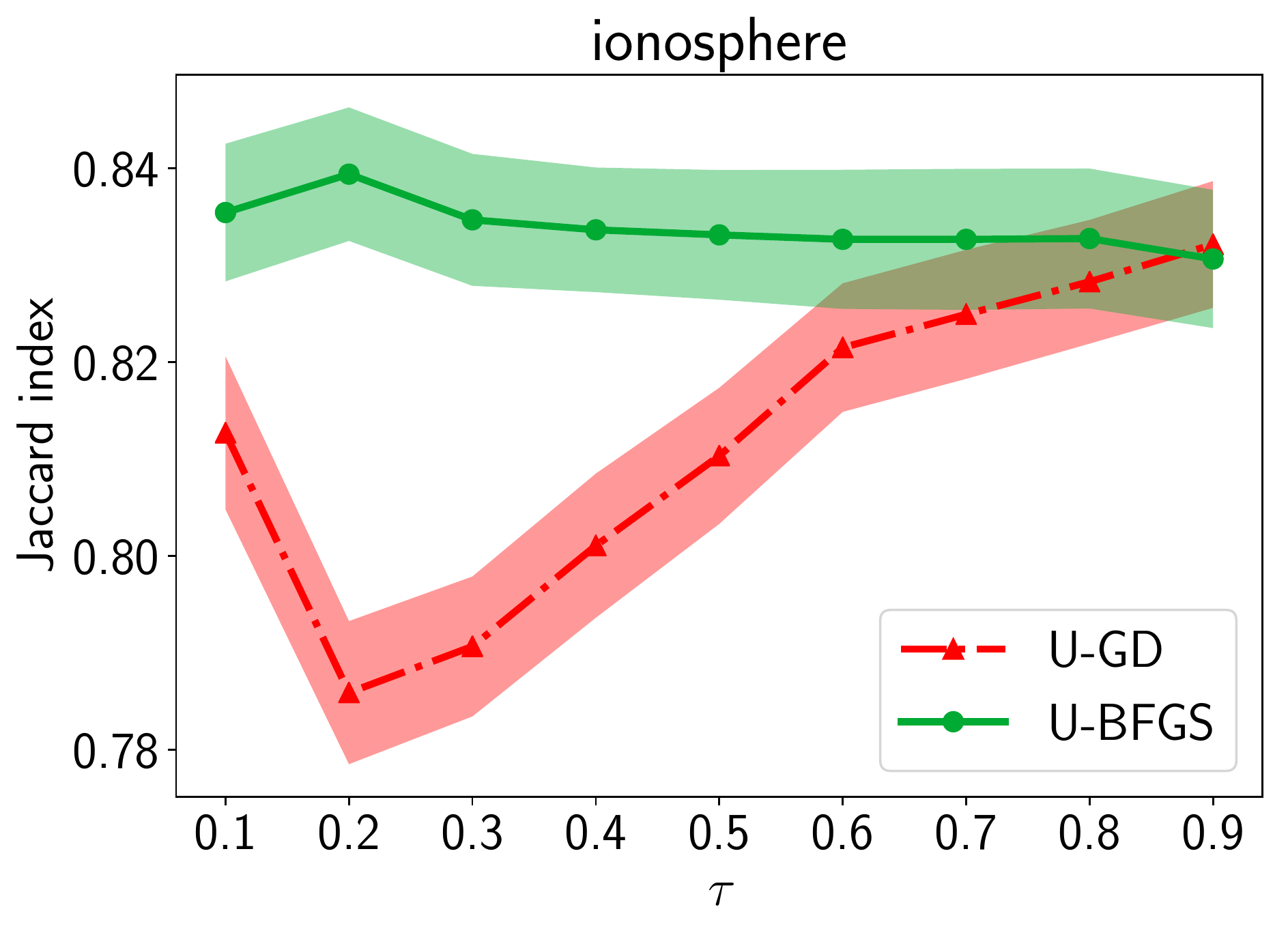}
    \end{minipage}
    \begin{minipage}{0.32\columnwidth}
        \includegraphics[width=\columnwidth]{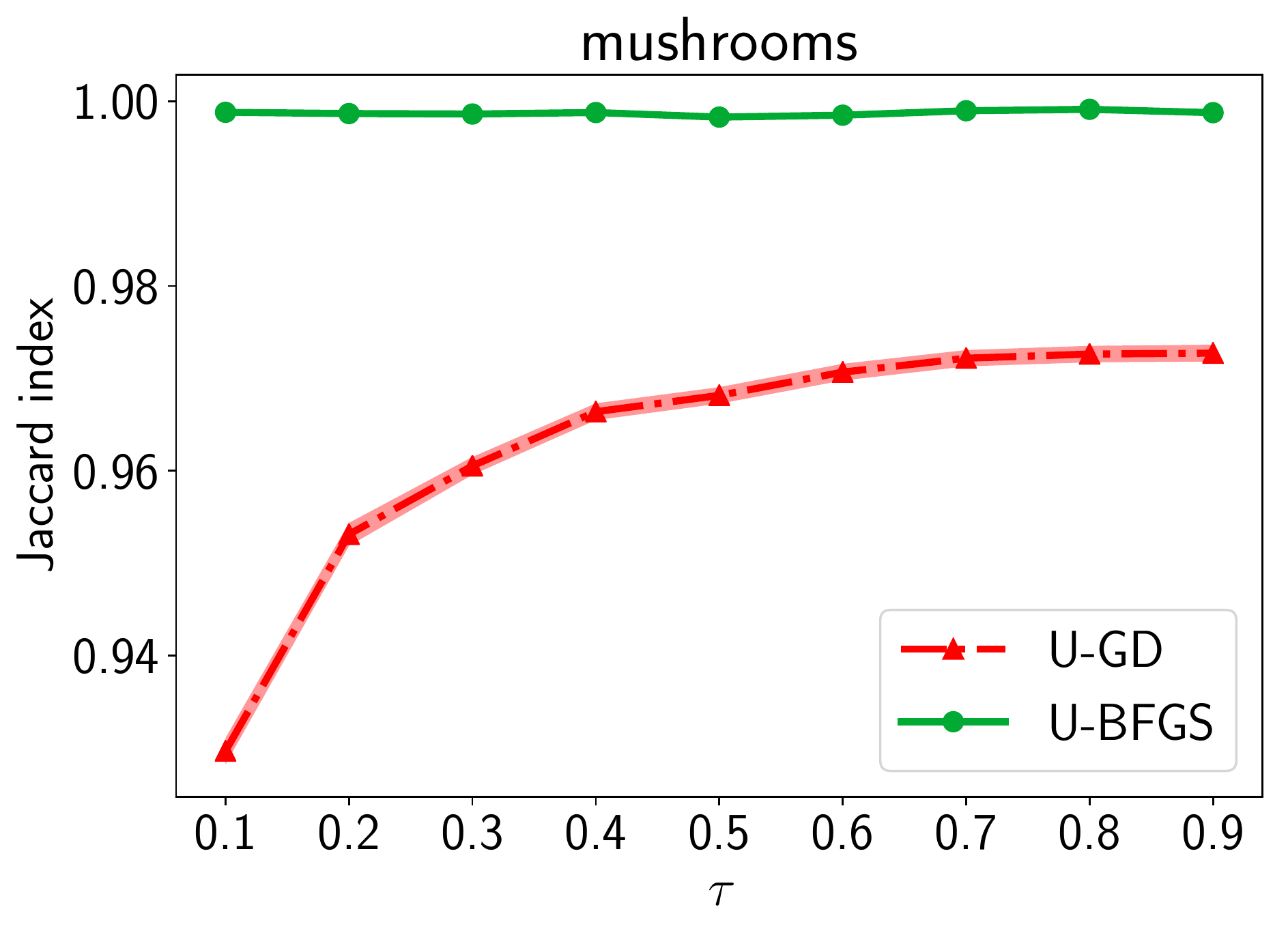}
    \end{minipage}
    \begin{minipage}{0.32\columnwidth}
        \includegraphics[width=\columnwidth]{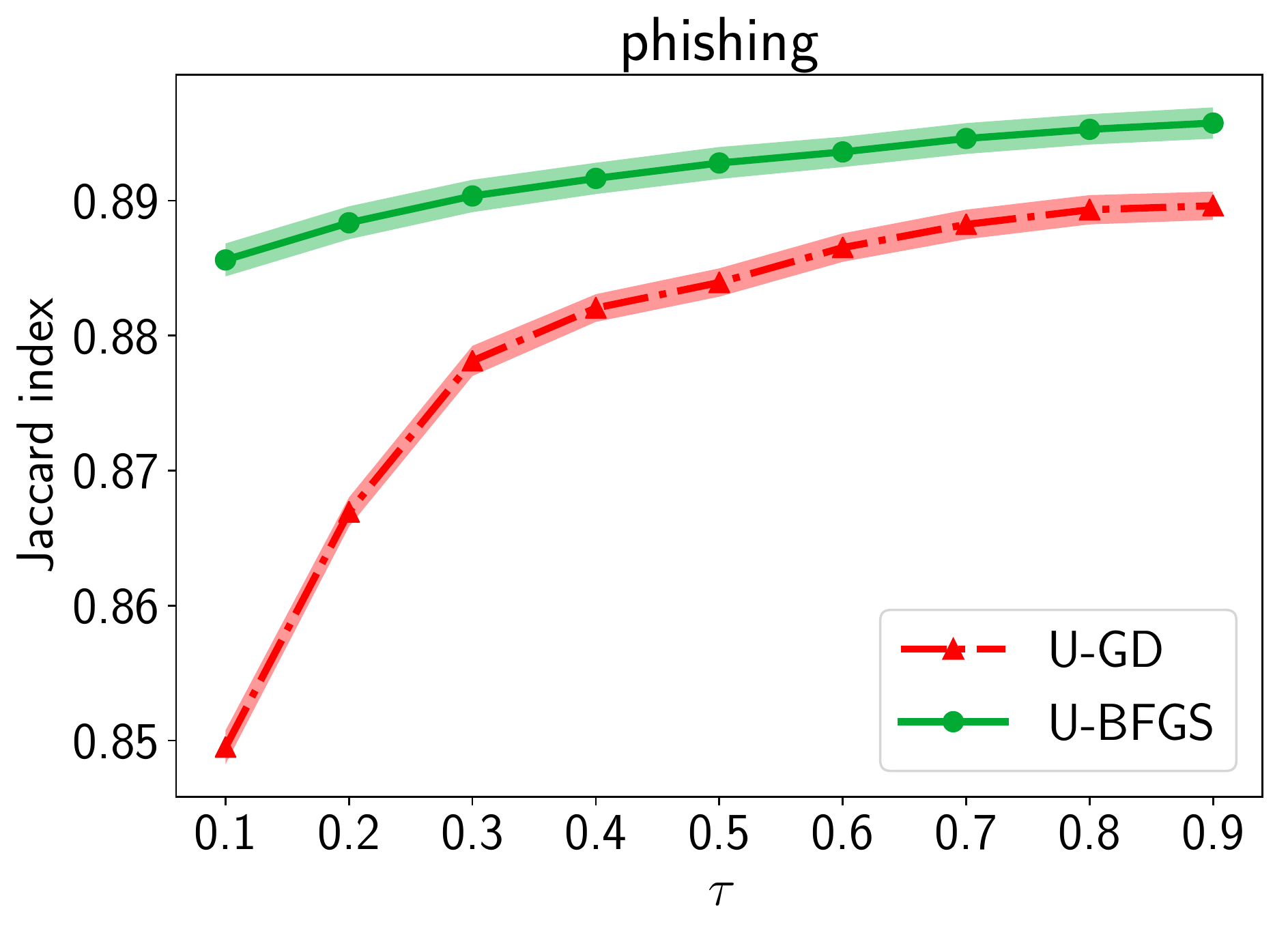}
    \end{minipage}
    \begin{minipage}{0.32\columnwidth}
        \includegraphics[width=\columnwidth]{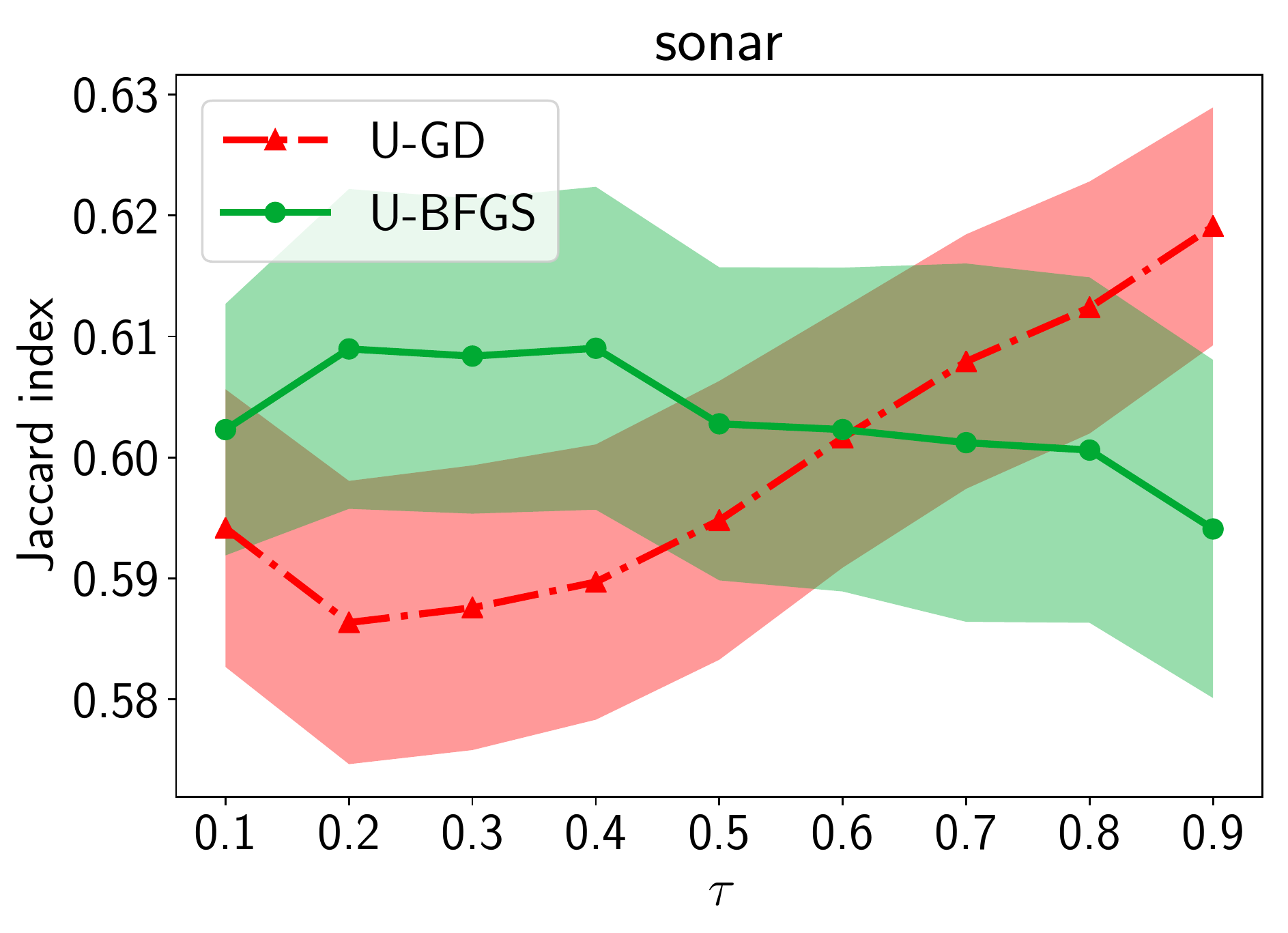}
    \end{minipage}
    \begin{minipage}{0.32\columnwidth}
        \includegraphics[width=\columnwidth]{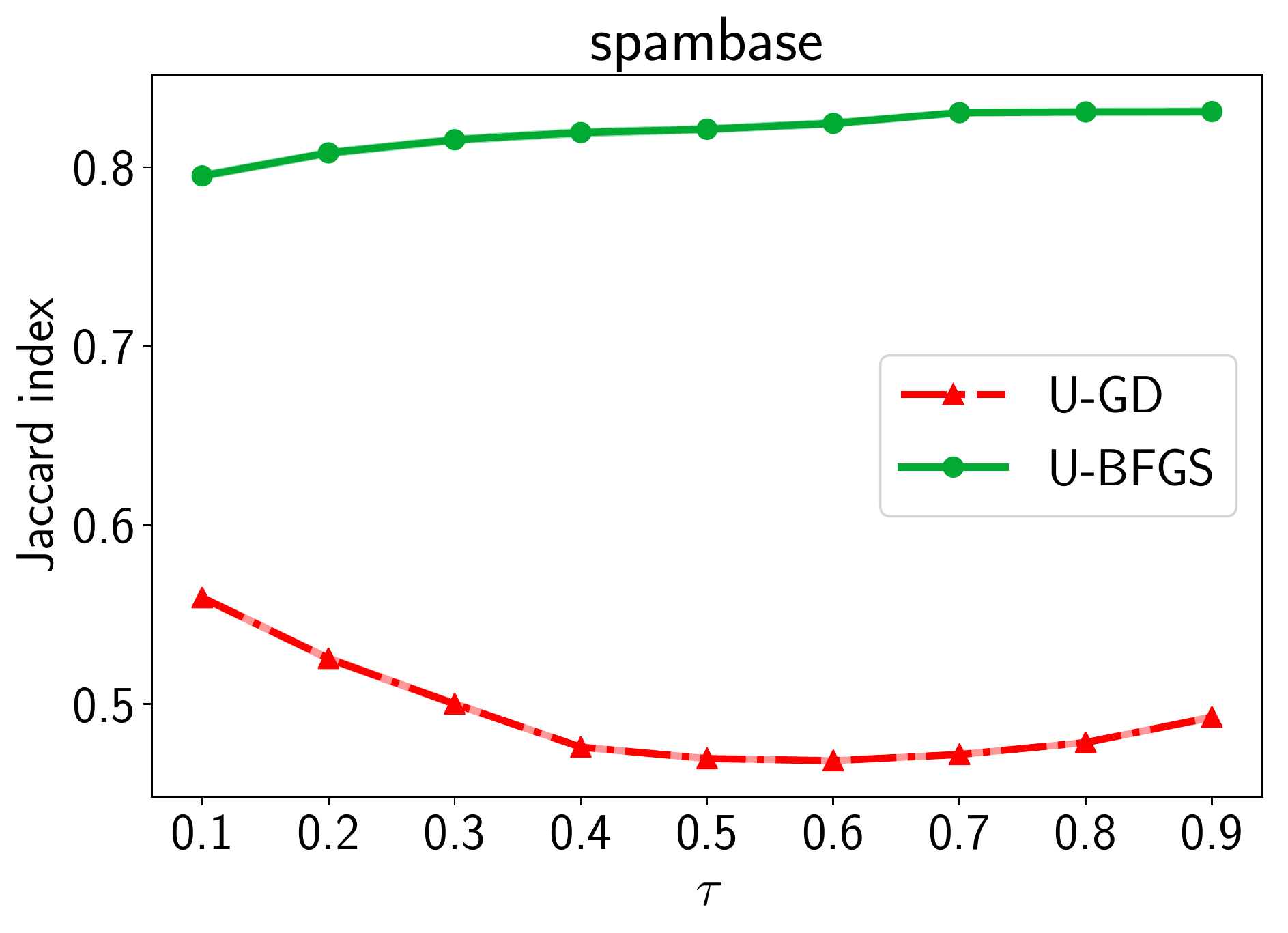}
    \end{minipage}
    \begin{minipage}{0.32\columnwidth}
        \includegraphics[width=\columnwidth]{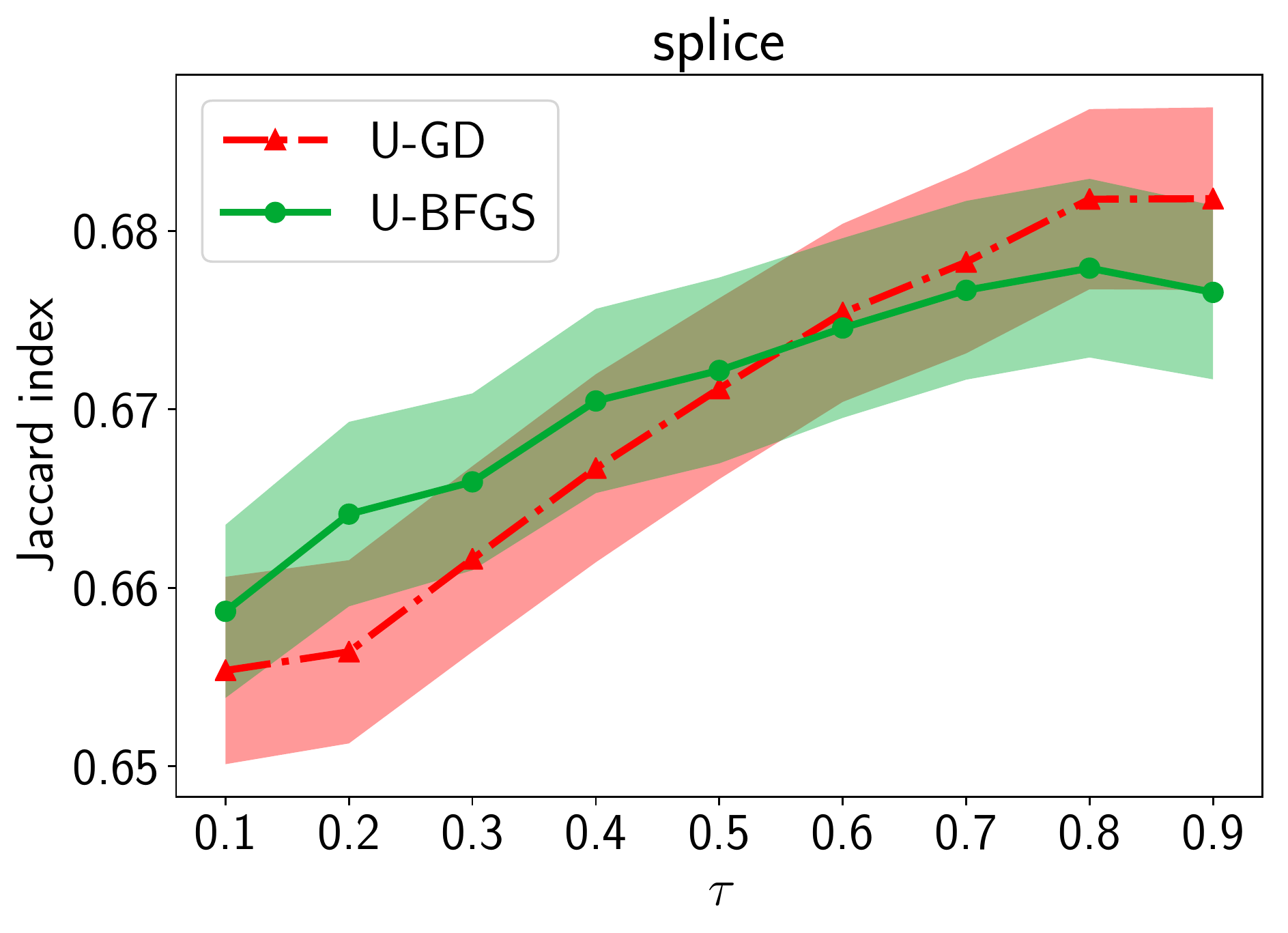}
    \end{minipage}
    \caption{
        The relationship of the test Jaccard (vertical axes) and the choices of $\tau$ (horizontal axes).
        Standard errors of 50 trials are shown as shaded areas.
    }
    \label{fig:supp:jac-sensitivity}
\end{figure}

\end{document}